\definecolor{violet}{RGB}{207, 159, 255}
\definecolor{lightblue}{RGB}{173, 216, 230}
\definecolor{citecolor}{HTML}{0071bc}
\newtheorem{lemma}{Lemma}
\newtheorem{theorem}{Theorem}
\newtheorem{definition}{Definition}
\newtheorem{corollary}{Corollary}
\title{The Underappreciated Power of Vision Models for Graph Structural Understanding}
\author{Xinjian Zhao\textsuperscript{$\S$ \P *}, 
Wei Pang\textsuperscript{$\S$ *}, 
Zhongkai Xue\textsuperscript{$\S$*}, 
Xiangru Jian\textsuperscript{$\diamondsuit$ *},\\
\textbf{Lei Zhang}\textsuperscript{$\S$},
\textbf{Yaoyao Xu\textsuperscript{$\S$}, 
Xiaozhuang Song\textsuperscript{$\S$}, 
Shu Wu\textsuperscript{\P \dag}, 
Tianshu Yu\textsuperscript{$\S$\dag}
}
\\
\textsuperscript{$\S$} School of Data Science, The Chinese University of Hong Kong, Shenzhen \\
\textsuperscript{\P} Institute of Automation, Chinese Academy of Sciences\\
\textsuperscript{$\diamondsuit$}  Cheriton School of Computer Science, University of Waterloo\\
 \texttt{\{xinjianzhao1,weipang,zhongkaixue\}@link.cuhk.edu.cn}, \texttt{xiangru.jian@uwaterloo.ca}
 \\
 \texttt{\{leizhang1,yaoyaoxu,xiaozhuangsong1\}@link.cuhk.edu.cn}
 \\
 \texttt{shu.wu@nlpr.ia.ac.cn, yutianshu@cuhk.edu.cn} 
}
\begin{document}

\maketitle
\def\thefootnote{*}\footnotetext{\ Equal Contribution}\def\thefootnote{\arabic{footnote}}
\def\thefootnote{\dag}\footnotetext{\ Corresponding authors}\def\thefootnote{\arabic{footnote}}

\begin{abstract}
Graph Neural Networks operate through bottom-up message-passing, fundamentally differing from human visual perception, which intuitively captures global structures first. We investigate the underappreciated potential of vision models for graph understanding, finding they achieve performance comparable to GNNs on established benchmarks while exhibiting distinctly different learning patterns.
These divergent behaviors, combined with limitations of existing benchmarks that conflate domain features with topological understanding, motivate our introduction of \textbf{GraphAbstract}. This benchmark evaluates models' ability to perceive global graph properties as humans do: recognizing organizational archetypes, detecting symmetry, sensing connectivity strength, and identifying critical elements. Our results reveal that vision models significantly outperform GNNs on tasks requiring holistic structural understanding and 
maintain generalizability across varying graph scales, while GNNs struggle with global pattern abstraction and degrade with increasing graph size.
This work demonstrates that vision models possess remarkable yet underutilized capabilities for graph structural understanding, particularly for problems requiring global topological awareness and scale-invariant reasoning. These findings open new avenues to leverage this underappreciated potential for developing more effective graph foundation models for tasks dominated by holistic pattern recognition. The code is available at \url{https://github.com/LOGO-CUHKSZ/GraphAbstract}
\end{abstract}

\section{Introduction}

Graphs are powerful abstractions to represent complex relationships across entities like social networks~\citep{wasserman1994social}  and molecular structures~\citep{gilmer2017neural,hu2020open}. Learning effective representations of these graphs is crucial for node classification, graph classification, and link prediction tasks~\citep{rong2019dropedge,wu2022nodeformer,lin2022spectral,jian2024rethinking,shomer2024lpformer}. Over the past decade, Graph Neural Networks (GNNs) have become the dominant paradigm for graph representation learning, demonstrating impressive performance through their message-passing mechanism that iteratively aggregates local neighborhood information~\citep{kipf2016semi,gilmer2017neural,hamilton2017inductive,velivckovic2017graph,xu2018powerful}. Despite its success, message passing faces several key limitations, such as limited expressiveness~\citep{xu2018powerful,sato2020survey,zhang2024expressive} and difficulty in capturing long-range dependencies~\citep{alon2020bottleneck}.
Numerous efforts have been trying to address different aspects of the message passing mechanism limitations, graph transformer architectures have enriched this paradigm by incorporating long-range interactions across broader graph contexts~\citep{dwivedi2020generalization,kreuzer2021rethinking,ying2021transformers,wu2021representing,rampavsek2022recipe}, positional encodings~\citep{you2019position,dwivedi2023benchmarking,huang2023stability} to inject structural priors to improve global topological awareness and graph rewiring techniques mitigate structural bottlenecks in graphs by modifying the topology to enhance information diffusion~\citep{topping2021understanding,nguyen2023revisiting,black2023understanding}.

However, these advances remain fundamentally constrained by a cognitive discrepancy: While humans intuitively perceive global structures through visual Gestalt principles - understanding the whole before analyzing individual parts~\citep{todorovic2008gestalt}, GNNs and their variants operate through bottom-up processing. Humans instantly recognize ring structures, symmetric structures, and critical bridges in graph visualizations, yet even advanced architectures struggle with such basic topological understanding~\citep{horn2021topological,zhang2023rethinking}. 
Therefore, although the existing graph learning architecture innovations have made significant progress from both theoretical and practical perspectives, the essence of these efforts is still to identify and repair different aspects of the inherent defects in the message passing mechanism, with limited exploration of vision-based approaches for graph learning despite their natural alignment with how humans perceive network structures.%rather than embracing the inherently visual nature of human graph cognition.

The flourishing field of computer vision has produced a rich ecosystem of powerful models and techniques that excel at holistic pattern recognition and global structure understanding~\citep{marcel2010torchvision,chen2019mmdetection,khan2022transformers,kirillov2023segany}. This vibrant research landscape provides fertile ground for re-imagining graph learning through a visual lens. Inspired by this potential, we explore a fundamentally different approach: leveraging powerful vision models by translating graph topology into the visual domain. By simply rendering graphs as images using standard layout algorithms~\citep{gibson2013survey,di2024evaluating}, we apply vision encoders to graph-level tasks without any graph-specific architectural modifications. This vision-based approach mirrors how humans perceive graph structures visually rather than through explicit message-passing operations.

Our evaluations demonstrate that pure vision encoders perform comparably to specialized GNNs on established graph benchmarks, despite having no graph-specific inductive biases or architectural design. This finding is remarkable considering that these vision models operate solely on graph layouts without access to node features or explicit connectivity information. However, these traditional benchmarks tightly couple domain knowledge with topology, making it difficult to quantitatively study their separate impacts. As shown by~\citep{bechler2024graph,wilson2024cayley}, models using fixed-structure expander graphs rather than the true molecular topology can match or exceed performance on multiple molecular benchmarks, suggesting that node features often dominate over structural information in standard datasets, potentially masking differences in how models perceive graph topology.

To rigorously evaluate topological understanding, we introduce \textbf{GraphAbstract}, a benchmark designed to evaluate how well models perceive graph structures in ways that mirror human visual cognition. We present four carefully crafted tasks that challenge models to abstract critical global graph properties in ways humans naturally do: recognizing organizational archetypes, detecting symmetry patterns, sensing connectivity strength, and identifying critical structural elements. For each task, we meticulously design diverse graph families with well-controlled topological properties. Our evaluation protocol directly tests out-of-distribution (OOD) generalization by systematically increasing graph sizes from training to testing, evaluating whether models can recognize the same structural patterns regardless of scale, a fundamental capability of human cognition.

Our experiments yield several key findings: On tasks requiring the abstraction of global graph properties, vision models demonstrate significant advantages and superior generalization capabilities across different graph scales compared to GNNs. For GNNs, we find that positional encoding methods that inject structural priors achieve substantial improvements over architectural innovations in message passing. These complementary findings point to a unified insight: successful graph understanding stems from accessing global topological information, either through structural priors or visual perception. This ``global-first'' approach aligns more closely with human cognitive processes for graph understanding, suggesting that future advances in graph representation learning and foundation models may benefit from prioritizing global structural perception over refining local message passing mechanisms. These results open new perspectives for developing graph learning systems that are more effective, interpretable, and capable of generalizing to complex topological structures.

Our main contributions include: \textbf{(1)} demonstrating that vision-based models, which better align with human cognition, achieve strong performance without any graph-specific architectural modifications; \textbf{(2)} introducing a rigorously designed benchmark with diverse graph families that evaluates models' ability to understand graph structures and generalize across varying scales, which is a fundamental capability of human cognition; and \textbf{(3)} revealing that ``global-first'' approaches significantly outperform traditional message-passing innovations, suggesting promising directions for visual-centric graph understanding.

\section{Related Work}

\textbf{Graph visualization and graph learning}. Graph visualization and graph learning have traditionally advanced along separate paths. Visualization focuses on creating human-interpretable spatial representations that produce graph layouts through techniques like spectral methods and force-directed algorithms, while graph learning focuses on feature extraction with GNNs. 
Recent studies indicate a convergence between these domains~\citep{wei2024gita,zhao2025graph,das2024modality,li2024visiongraph,wei2025open}. For example,  GITA~\citep{wei2024gita}, GraphTMI~\citep{das2024modality}, and DPR~\citep{li2024visiongraph} incorporate visual graph layouts into vision-language models to improve graph reasoning tasks, DEL~\citep{zhao2025graph} introduces probabilistic layout sampling to enhance GNN expressivity, and domain-specific approaches explore visual representations for molecular graphs~\citep{zeng2022accurate,xiang2024image}. However, these works directly introduced graph layout into existing architectures such as VLMs and GNNs, treating graph layouts and vision models as black boxes without exploring their fundamental cognitive differences in graph understanding.

\textbf{Graph learning benchmarks.} 
Graph representation learning has evolved through two complementary benchmark traditions. The first utilizes real-world datasets from citation networks \citep{yang2016revisiting}, molecules \citep{morris2020tudataset}, and comprehensive collections like OGB \citep{hu2020open} to evaluate practical performance. The second employs synthetic expressivity tests to probe theoretical limitations through challenges like graph isomorphism \citep{abboud2020surprising,balcilar2021breaking,wang2024empirical}, substructure counting~\citep{chen2020can}. Recent LLM-oriented graph benchmarks focus on structural analysis tasks such as node counting, cycle detection, and path finding~\citep{wang2023can,chai2023graphllm,luo2024graphinstruct}. However, these benchmarks typically rely on a limited set of random graph generators (e.g., Erdős–Rényi, Watts-Strogatz) that inadequately capture topological diversity~\citep{wang2023can,li2024visiongraph,fatemi2024talklike,luo2024graphinstruct,chen2024graphwiz,xu2025graphomni}. The critical gap in existing benchmarks is their inability to assess human-like graph cognition, specifically, the intuitive ability to recognize and abstract connectivity patterns across varying scales and contexts. 

For brevity, we provide extended discussions of GNN architectures and graph positional encoding methods in Appendix~\ref{app:related_works}.

% We provide extended discussions of GNN architectures and graph positional encodings in Appendix~\ref{app:related_works}.

\section{Visual vs. Message-Passing: Distinct Cognitive Patterns}\label{sec:preliminary}

We analyze the differences between two model families (GNNs and vision models) in graph tasks from three perspectives: prediction overlap, interpretability case study, and training dynamics \& prediction confidence. All models are trained on five graph classification benchmarks~\citep{morris2020tudataset}, with implementation details provided in Appendix~\ref{app:Preliminary:imple}.

\textbf{Prediction Overlap Analysis.}
Our experiments across five datasets show that vision models achieve performance competitive with GNNs, as shown in Table~\ref{tab:all_datasets}. However, Figure~\ref{fig:overlap_analysis} reveals that while GNN variants behave similarly to each other, GNNs and vision models show distinct prediction patterns: they not only differ in their correct predictions, but often succeed on samples where the other fails. This consistent pattern across all datasets suggests that these two model families develop fundamentally different strategies for processing graph information. The stark contrast in their prediction patterns indicates that GNNs and vision models might be capturing different aspects of graph structure, motivating a deeper investigation into their respective strengths and limitations.
\begin{table*}[h]
\centering
\caption{Performance comparison on different datasets. Results show the accuracy (\%) of different models, reported as mean $\pm$ std over $5$ runs.}
\label{tab:all_datasets}
\resizebox{0.8\textwidth}{!}{
\begin{tabular}{l|ccccc}
\toprule
Model & NCI1 & IMDB-MULTI & ENZYMES & IMDB-BINARY & PROTEINS \\
\midrule
\multicolumn{6}{c}{\textbf{GNN Models}} \\
\midrule
GAT & 67.0 $\pm$ 0.5 & 49.7 $\pm$ 1.0 & 27.0 $\pm$ 5.9 & 69.4 $\pm$ 2.2 & 74.1 $\pm$ 1.5 \\
GIN & 70.4 $\pm$ 1.3 & 50.9 $\pm$ 1.7 & 26.3 $\pm$ 3.9 & 70.0 $\pm$ 3.0 & 75.0 $\pm$ 1.1 \\
GCN & 65.4 $\pm$ 1.1 & 48.4 $\pm$ 1.8 & 25.3 $\pm$ 2.2 & 71.0 $\pm$ 1.3 & 74.8 $\pm$ 1.1 \\
GPS & \textbf{76.3 $\pm$ 3.4} & 50.1 $\pm$ 2.2 & 34.3 $\pm$ 5.6 & 69.8 $\pm$ 2.1 & 76.0 $\pm$ 1.2 \\
\midrule
\multicolumn{6}{c}{\textbf{Vision Models}} \\
\midrule
RESNET & 67.7 $\pm$ 1.0 & 51.2 $\pm$ 1.1 & 36.3 $\pm$ 1.6 & 72.2 $\pm$ 2.2 & 79.6 $\pm$ 2.5 \\
VIT & 63.5 $\pm$ 1.4 & 50.8 $\pm$ 3.3 & 27.3 $\pm$ 3.9 & 71.8 $\pm$ 2.8 & \textbf{83.1 $\pm$ 0.9} \\
SWIN & 69.0 $\pm$ 0.5 & 52.0 $\pm$ 2.0 & 40.3 $\pm$ 3.9 & 68.8 $\pm$ 3.5 & 81.8 $\pm$ 1.8 \\
ConvNeXt & 70.2 $\pm$ 2.2 & \textbf{53.5 $\pm$ 1.3} & \textbf{41.7 $\pm$ 3.2} & \textbf{73.8 $\pm$ 1.6} & 80.7 $\pm$ 2.6 \\
\bottomrule
\end{tabular}}
\end{table*}

\begin{figure}[h]
    \centering
    % First row
    \begin{subfigure}[b]{0.32\textwidth}
        \includegraphics[width=\textwidth]{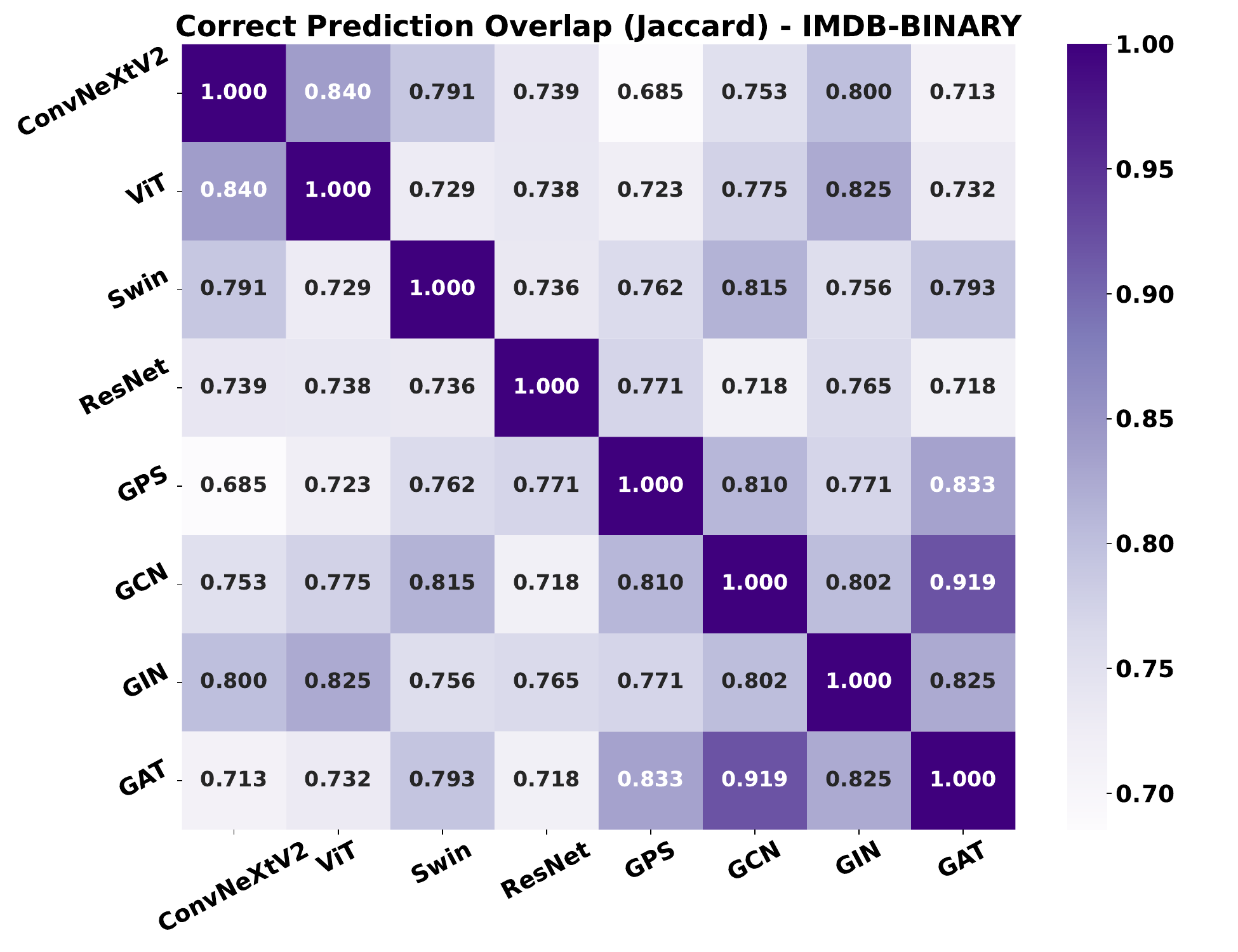}
    \end{subfigure}
    \hfill
    \begin{subfigure}[b]{0.32\textwidth}
        \includegraphics[width=\textwidth]{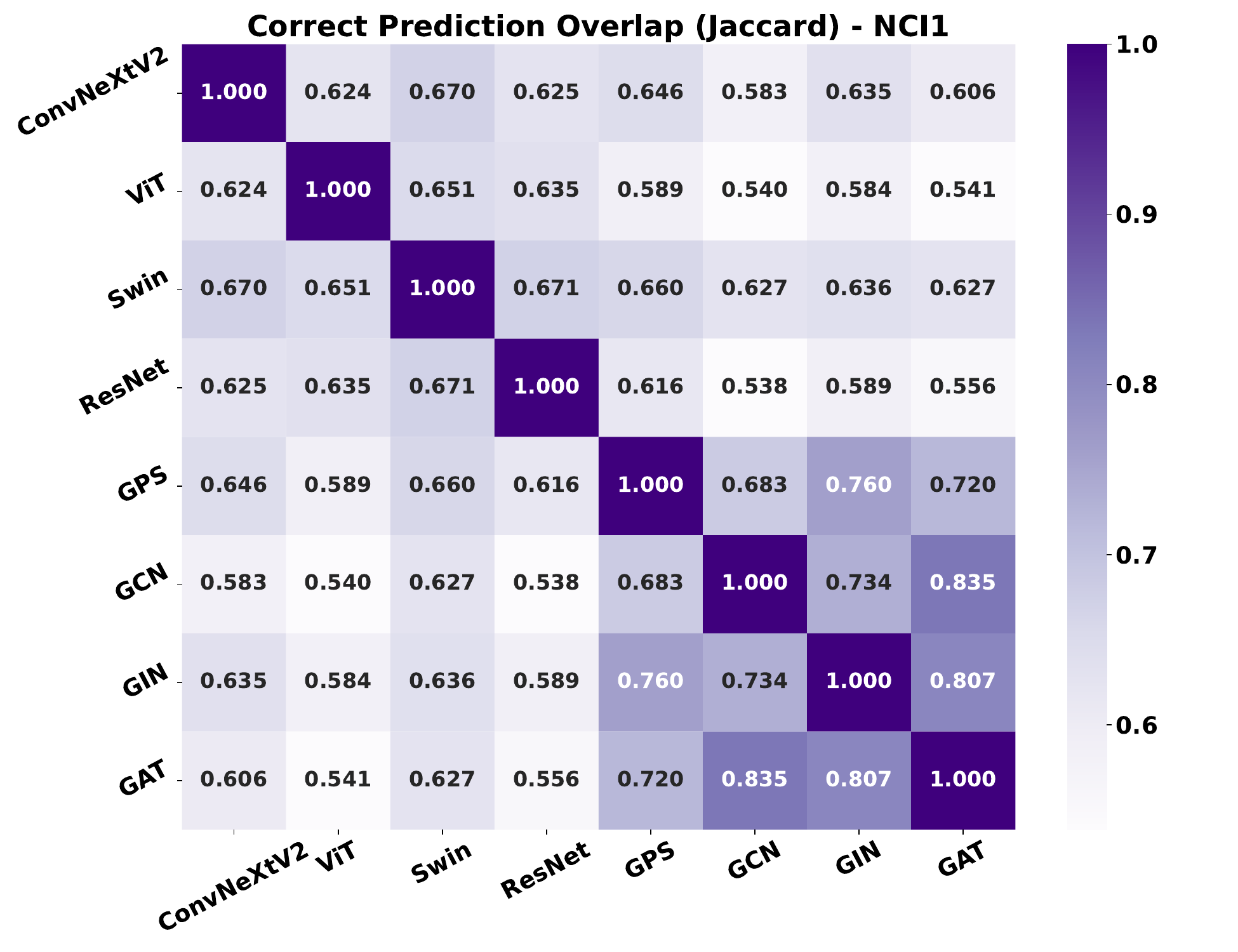}
    \end{subfigure}
    \hfill
    \begin{subfigure}[b]{0.32\textwidth}
        \includegraphics[width=\textwidth]{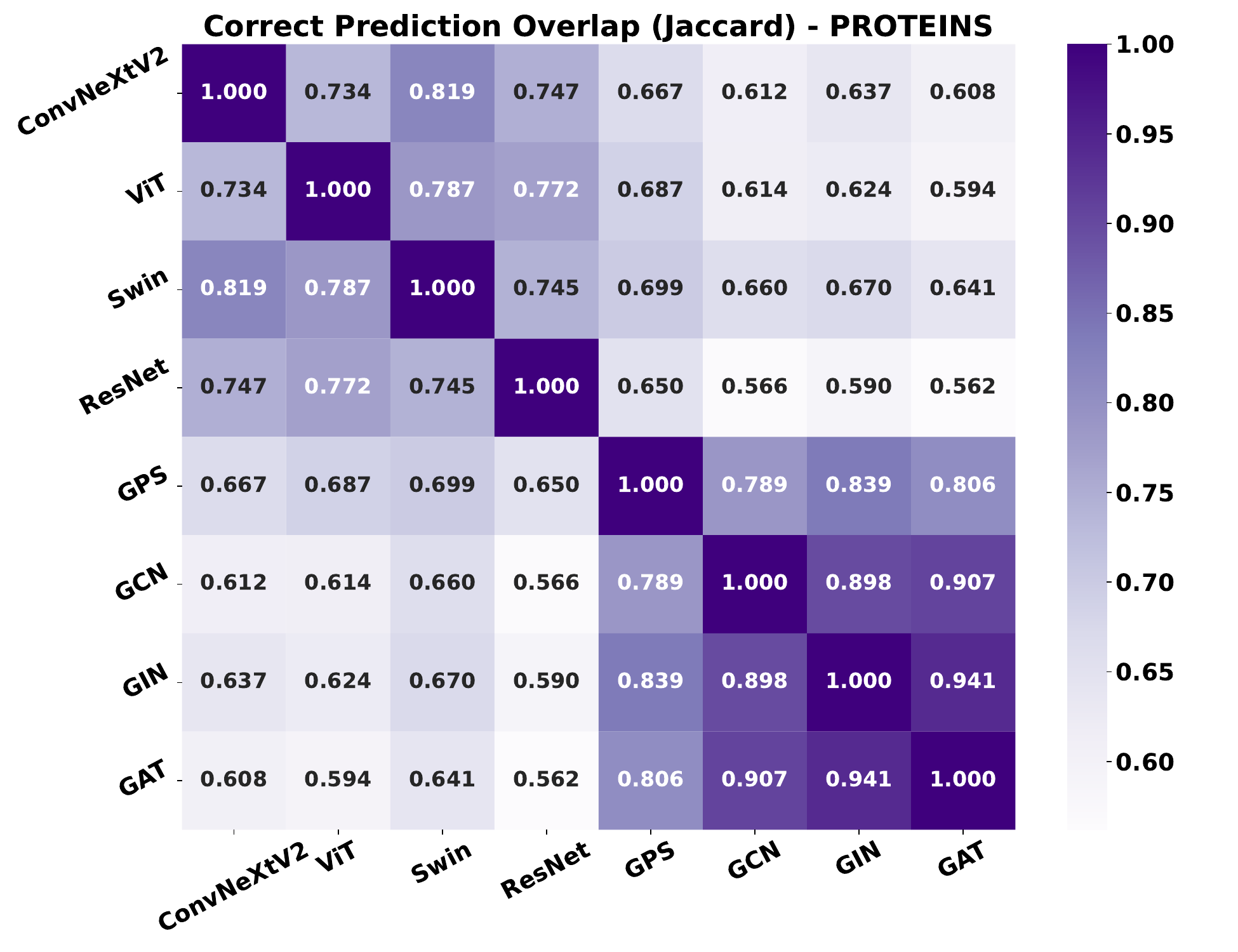}
    \end{subfigure}
    
    \vspace{1em}  % 添加行间距
    
    % Second row
    \begin{subfigure}[b]{0.32\textwidth}
        \includegraphics[width=\textwidth]{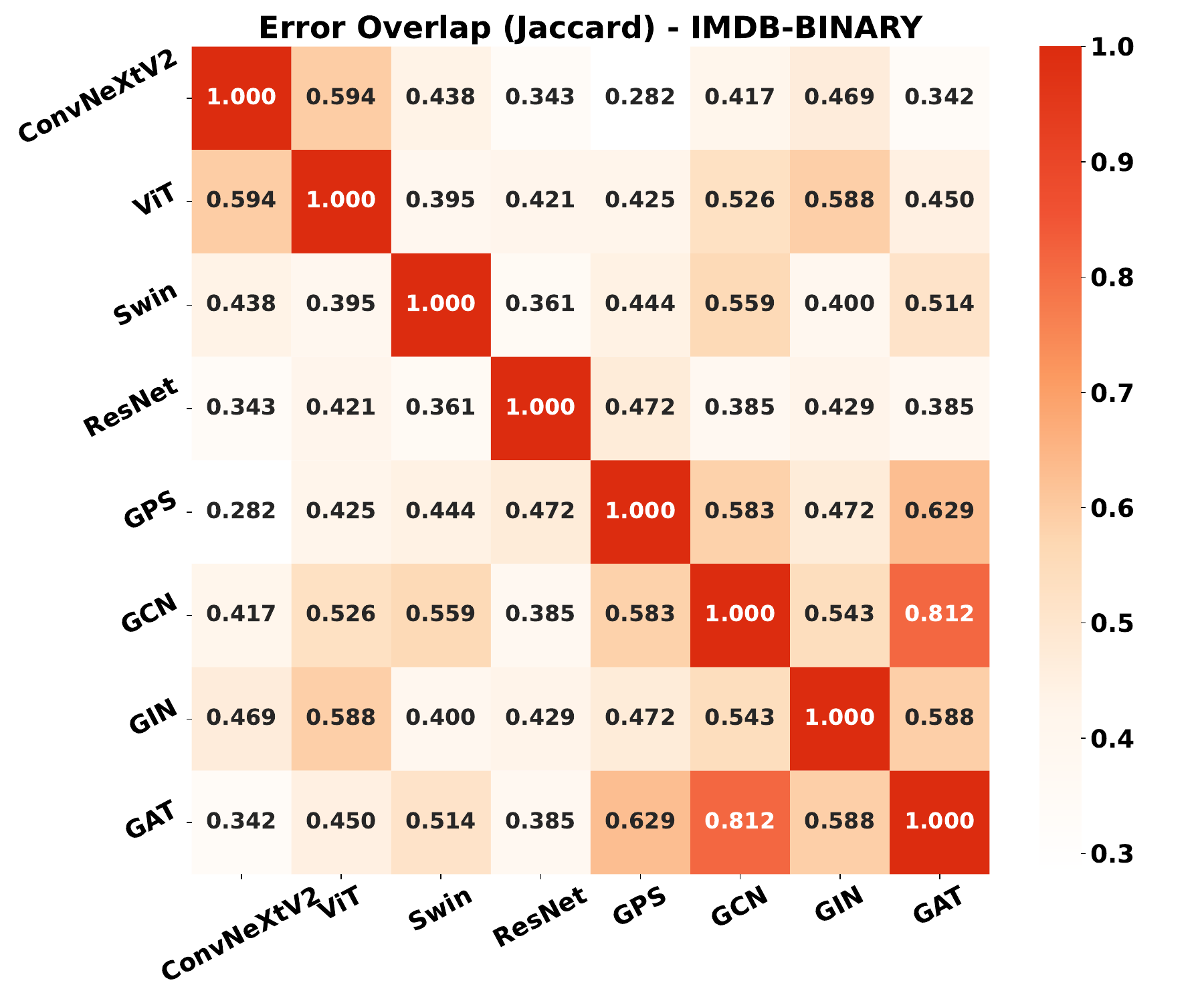}
    \end{subfigure}
    \hfill
    \begin{subfigure}[b]{0.32\textwidth}
        \includegraphics[width=\textwidth]{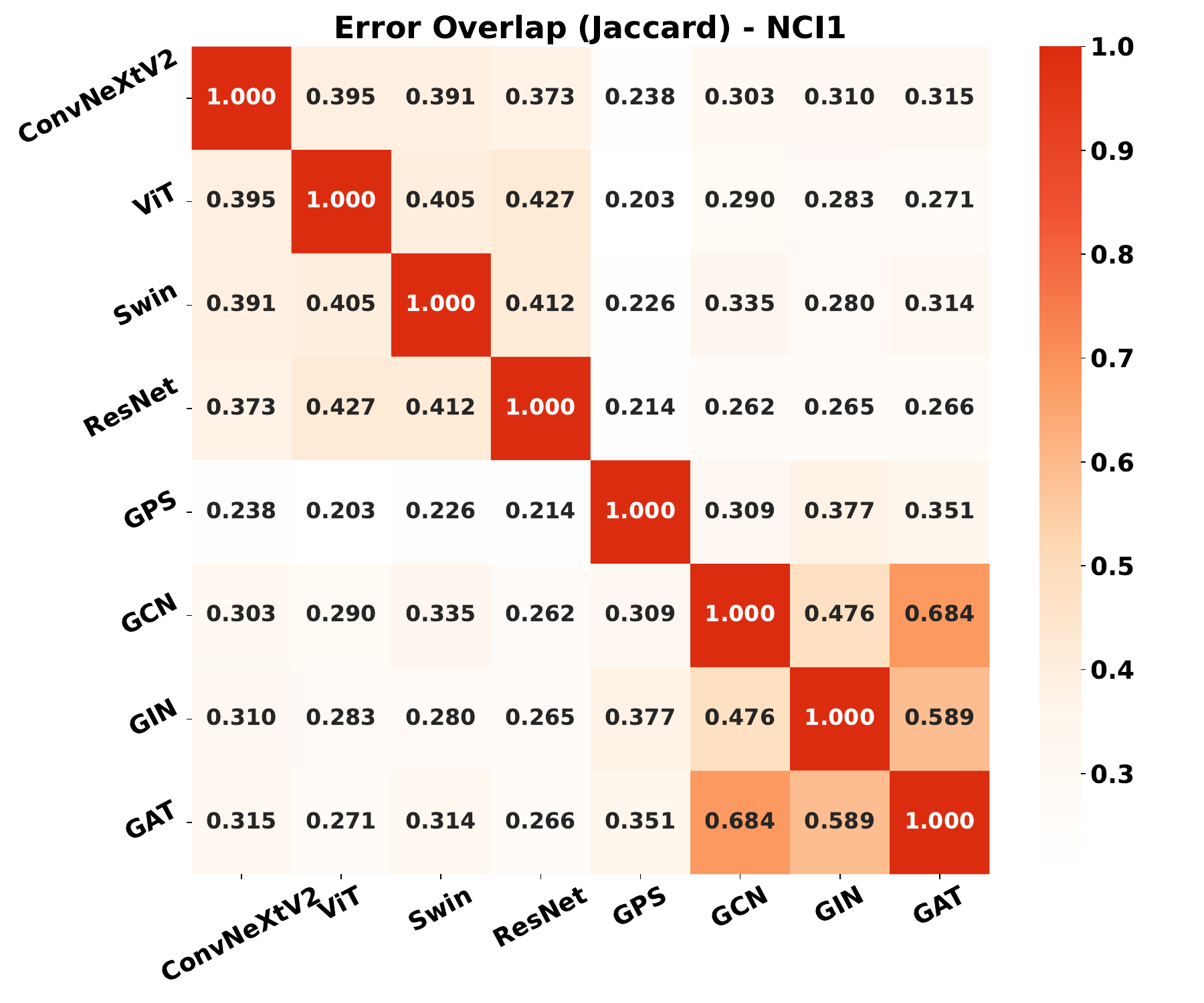}
    \end{subfigure}
    \hfill
    \begin{subfigure}[b]{0.32\textwidth}
        \includegraphics[width=\textwidth]{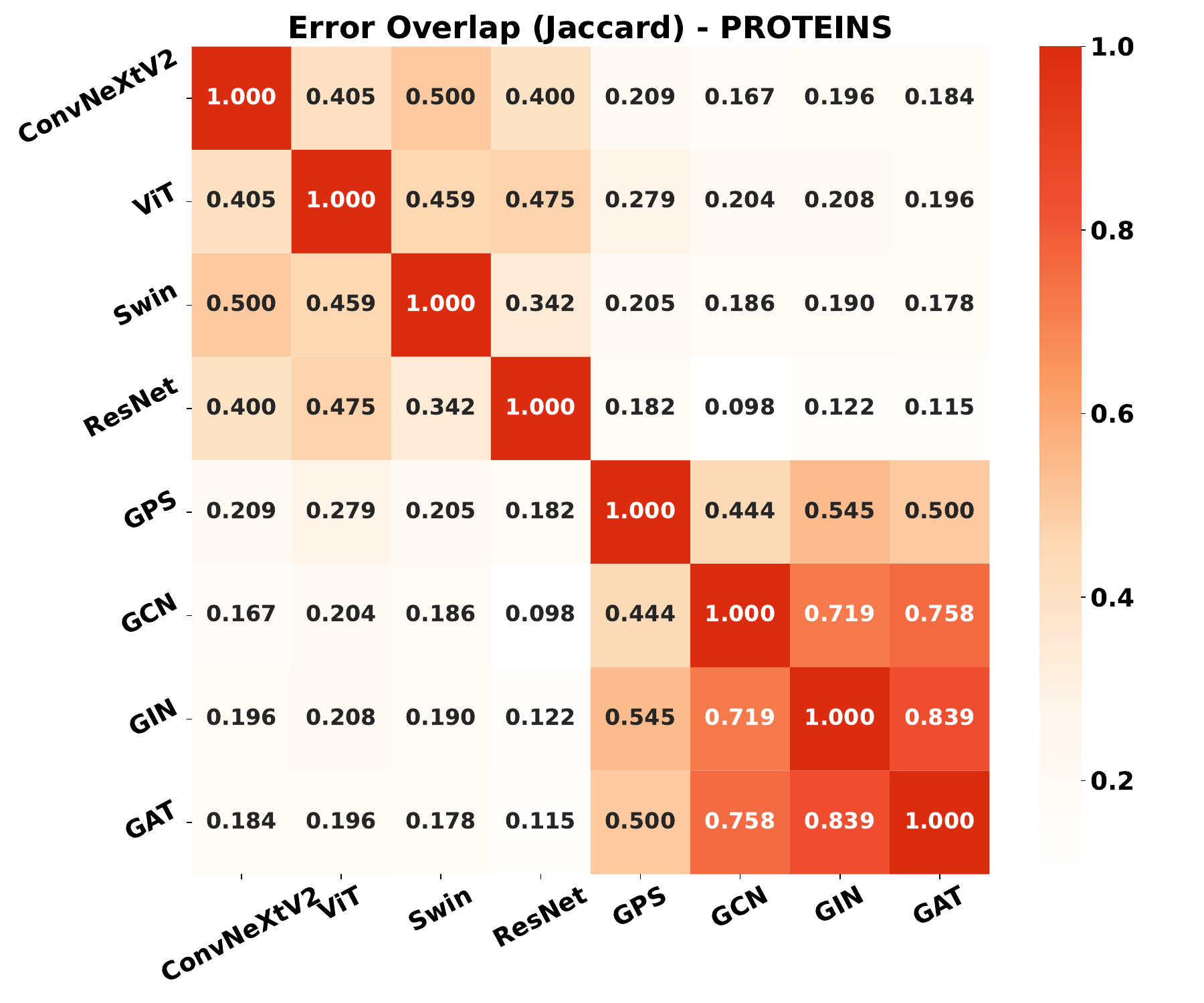}
    \end{subfigure}
    
    \caption{Prediction overlap analysis across different datasets. Top row: correct prediction overlap; Bottom row: error pattern overlap. GNN variants show high internal consistency, suggesting homogeneous learning behavior, while GNN and Vision models exhibit distinct prediction patterns.}
    \label{fig:overlap_analysis}
\end{figure}

\textbf{Case Studies}. We conduct case studies across multiple benchmark datasets, with detailed analysis presented here using the first three samples from the test set of PROTEINS dataset in Figure~\ref{case:PROTEINS}. We leverage GNN Explainer~\citep{ying2019gnnexplainer} for graph models and Grad-CAM~\citep{selvaraju2017grad} for vision models, generating visualizations across multiple network layers ($2$/$3$/$4$-layer for GNNs, and low/mid/high-level features for vision models). The three cases represent distinct structural scenarios commonly found in protein graphs: hierarchical local-to-global patterns, critical bridge structures, and chain-like configurations. Through these visualizations, we observe vision models' remarkable adaptability: exhibiting progressive focus from local to regional features in hierarchical structures, maintaining consistent attention on critical bridges across all levels, and adopting a global-centric strategy for chain-like patterns. This flexible processing strategy stands in sharp contrast to GNNs, where GNN Explainer shows relatively uniform attention patterns constrained by local message passing. Notably, our studies on ENZYMES dataset (see Appendix~\ref{app:case_studies}) reveal that vision models' attention regions align better with previously identified discriminative patterns~\citep{dai2024large} compared to GNN-based approaches.

\begin{figure}[htbp]
    \centering
    \includegraphics[width=0.45\textwidth]{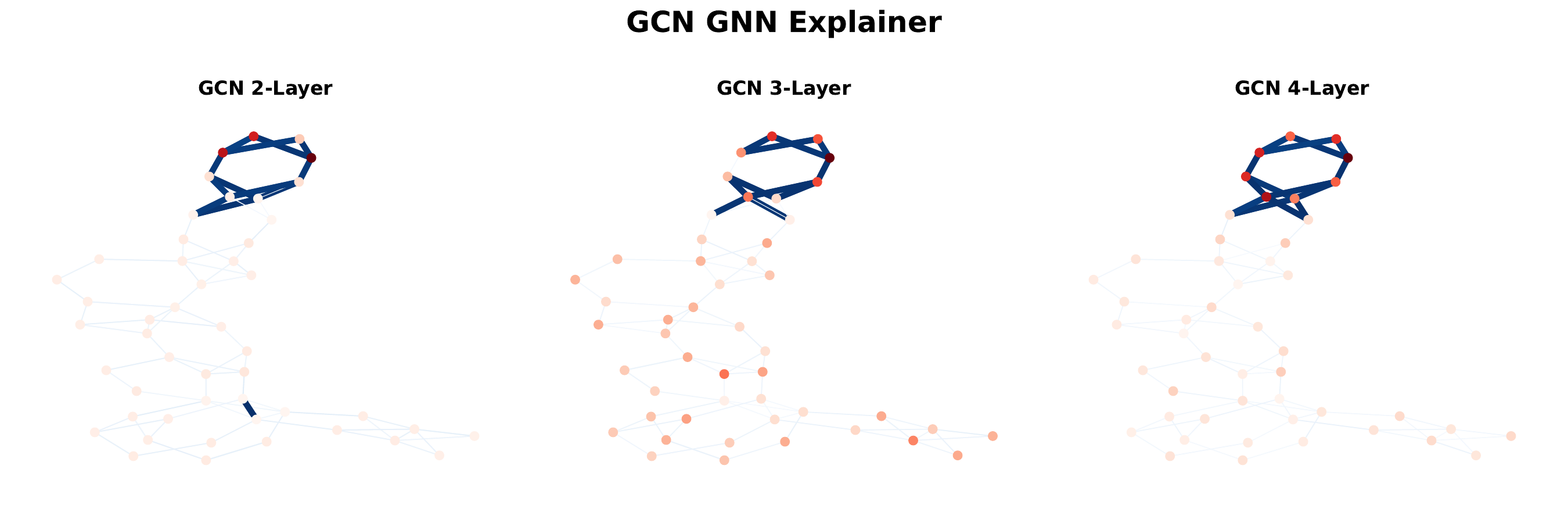}
    \hspace{0.05\textwidth}
    \includegraphics[width=0.45\textwidth]{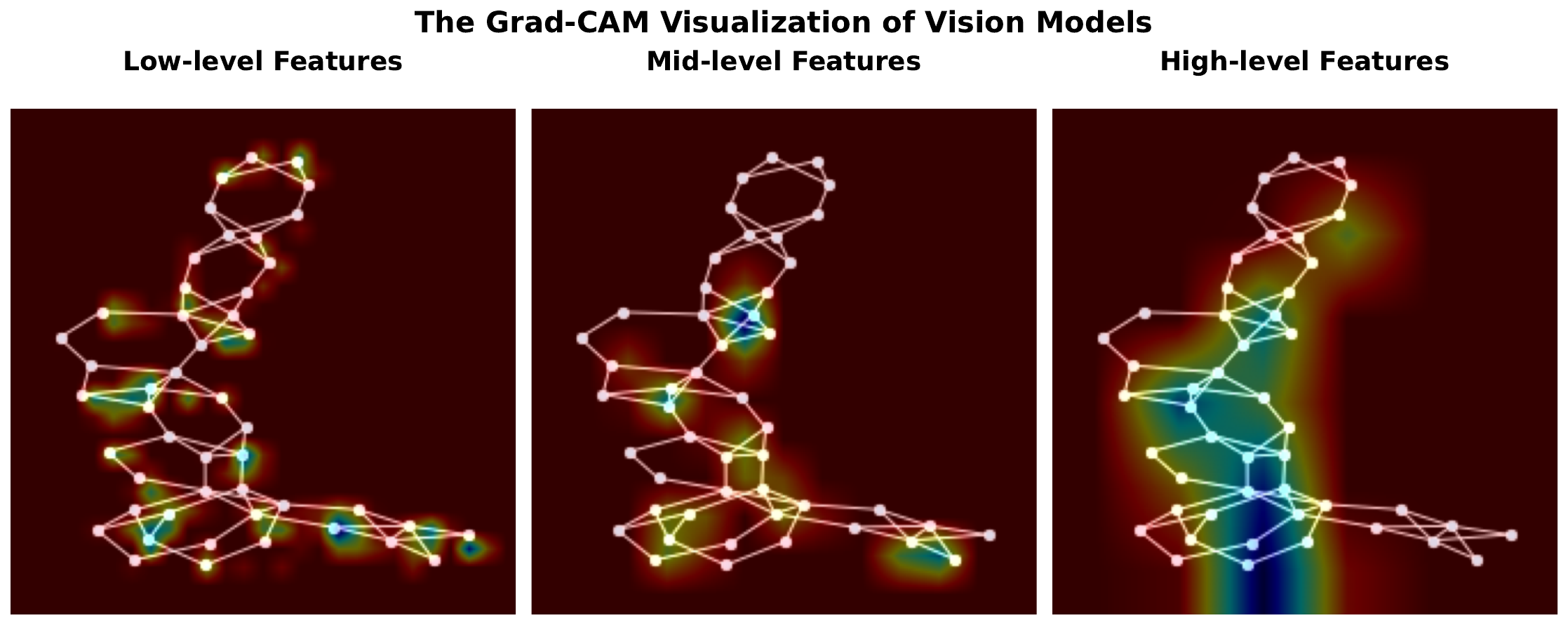}
    \includegraphics[width=0.45\textwidth]{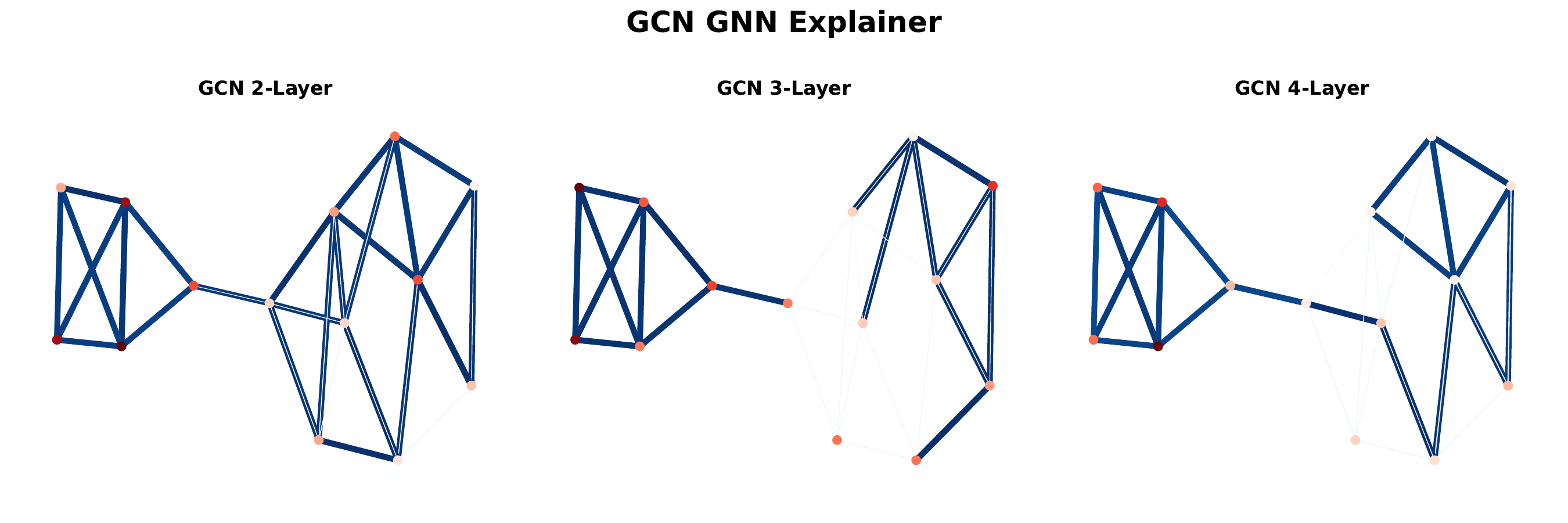}
    \hspace{0.05\textwidth}
    \includegraphics[width=0.45\textwidth]{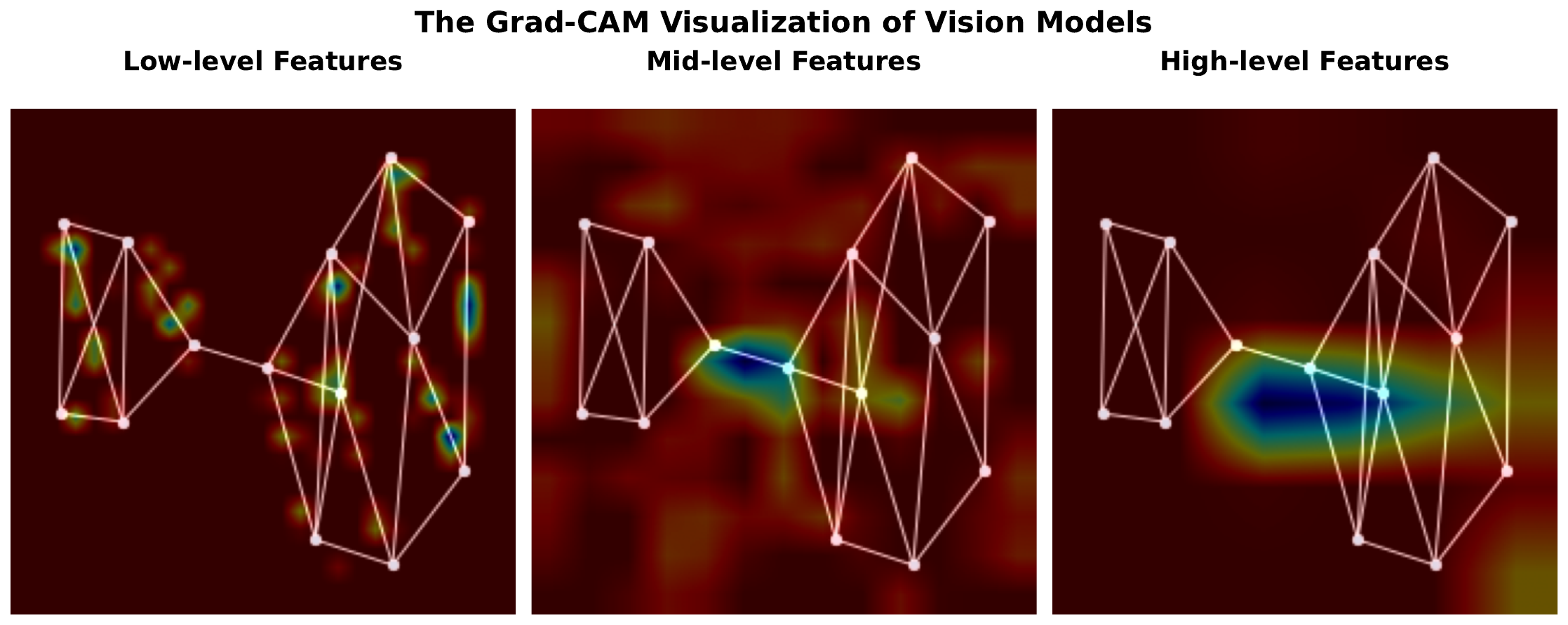}
    \includegraphics[width=0.45\textwidth]{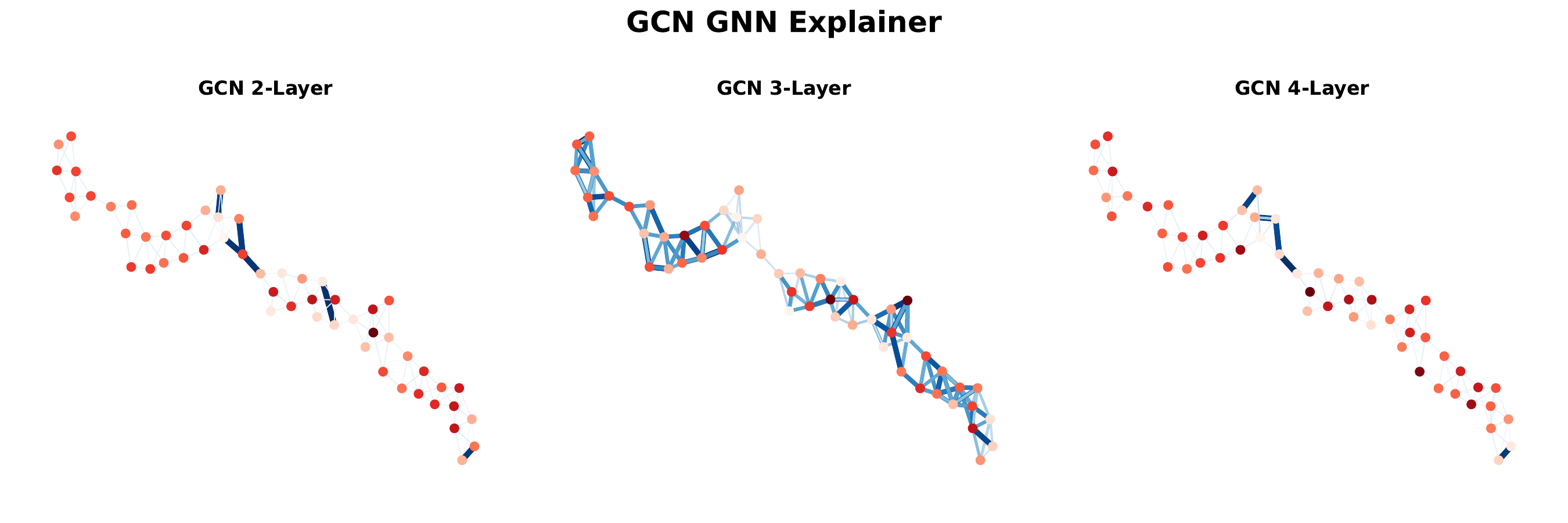}
    \hspace{0.05\textwidth}
    \includegraphics[width=0.45\textwidth]{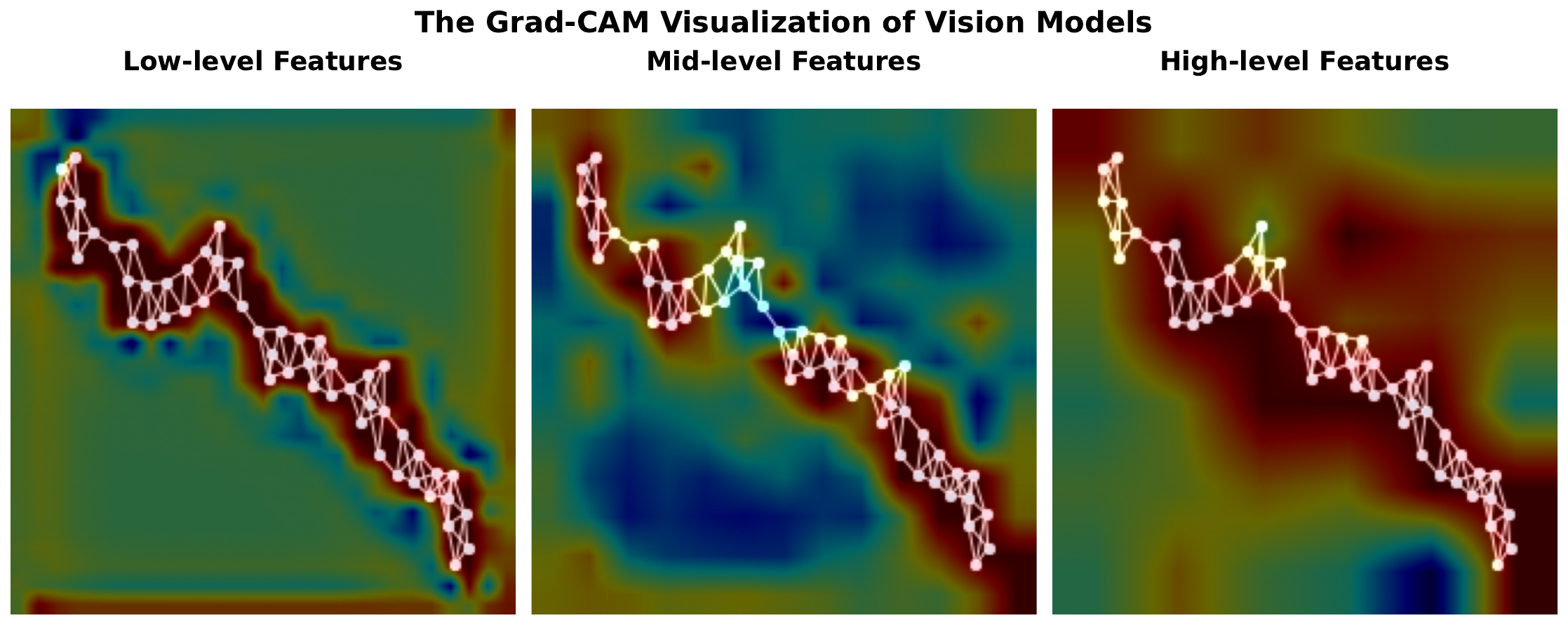}
    \caption{Case Studies for PROTEINS dataset.}
\end{figure}\label{case:PROTEINS}

\textbf{Training dynamics \& confidence}. We report the training curves in Figure~\ref{fig:Training-dynamics} and provide detailed results for all datasets in Appendix~\ref{app:training_dynamics}. The training dynamics reveal striking differences in learning behaviors across architectures. Vision models, regardless of their specific architectures, demonstrate strong memorization capabilities but with different learning speeds. CNN-based models show aggressive training dynamics, rapidly achieving near-perfect training accuracy while their training loss approaches zero. Transformer-based models exhibit somewhat slower learning progression, but ultimately reach similar training performance levels. However, all vision models suffer from substantial generalization gaps, with validation accuracy significantly lower than their training performance.
GNN variants display notably different learning patterns. Traditional GNNs show more modest training performance, with both training and validation metrics evolving gradually and plateauing at lower levels. The GPS model, featuring additional global processing capabilities, achieves higher training accuracy but still exhibits limited validation performance similar to other GNN variants. These empirical observations suggest that the key challenge in vision-based graph learning lies not in improving models' pattern recognition capabilities, which are already remarkably strong, but in developing effective mechanisms to bridge the substantial memorization-generalization gap. This consistent pattern across all vision architectures points to a fundamental challenge that may require solutions beyond architectural modifications alone, such as specialized pre-training strategies or graph-specific data augmentation techniques.

\textbf{Why an intuitive topology benchmark is essential.}
 Our analysis shows that different types of models process graphs differently, but existing benchmarks fail to measure how closely these approaches align with human visual perception of graph structures. Current evaluations mix structural understanding with domain-specific features, leading to cases where models perform well even with random graph topologies~\citep{bechler2024graph}. This gap between intuitive human understanding and current evaluation practices highlights the need for a dedicated benchmark to specifically assess intuitive topological perception.

\begin{figure}[htbp]
    \centering
    \includegraphics[width=0.245\textwidth]{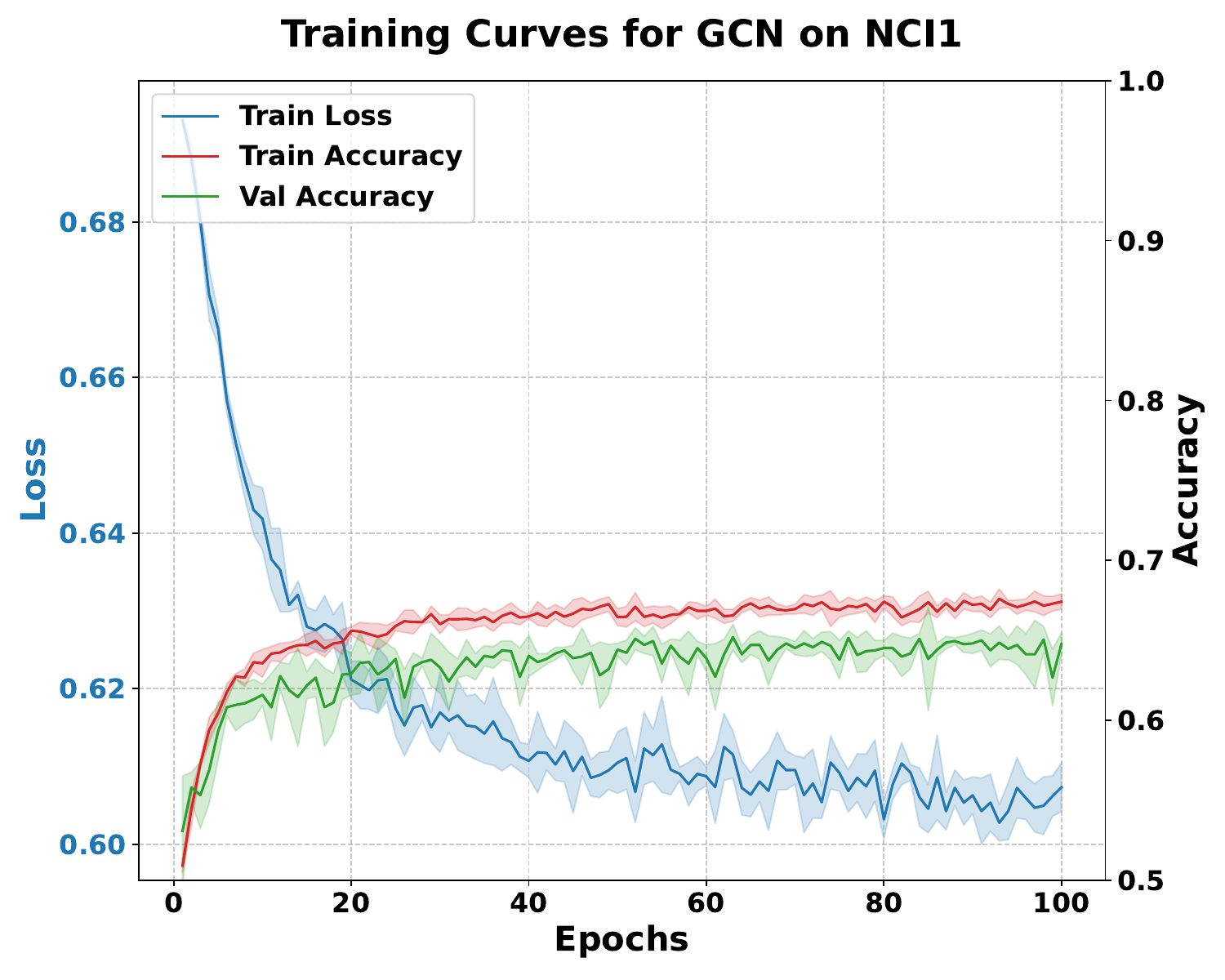}
    \includegraphics[width=0.245\textwidth]{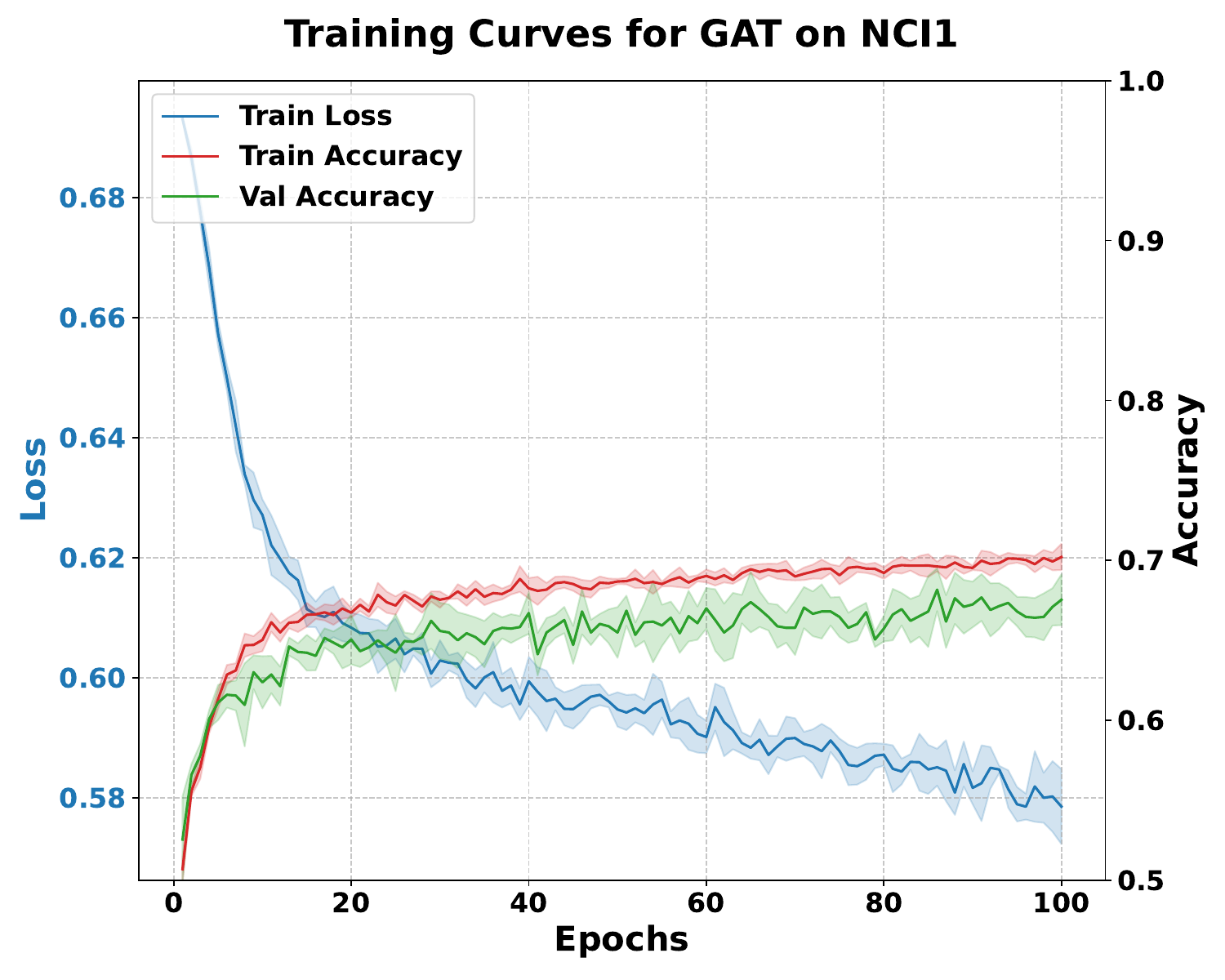}
    \includegraphics[width=0.245\textwidth]{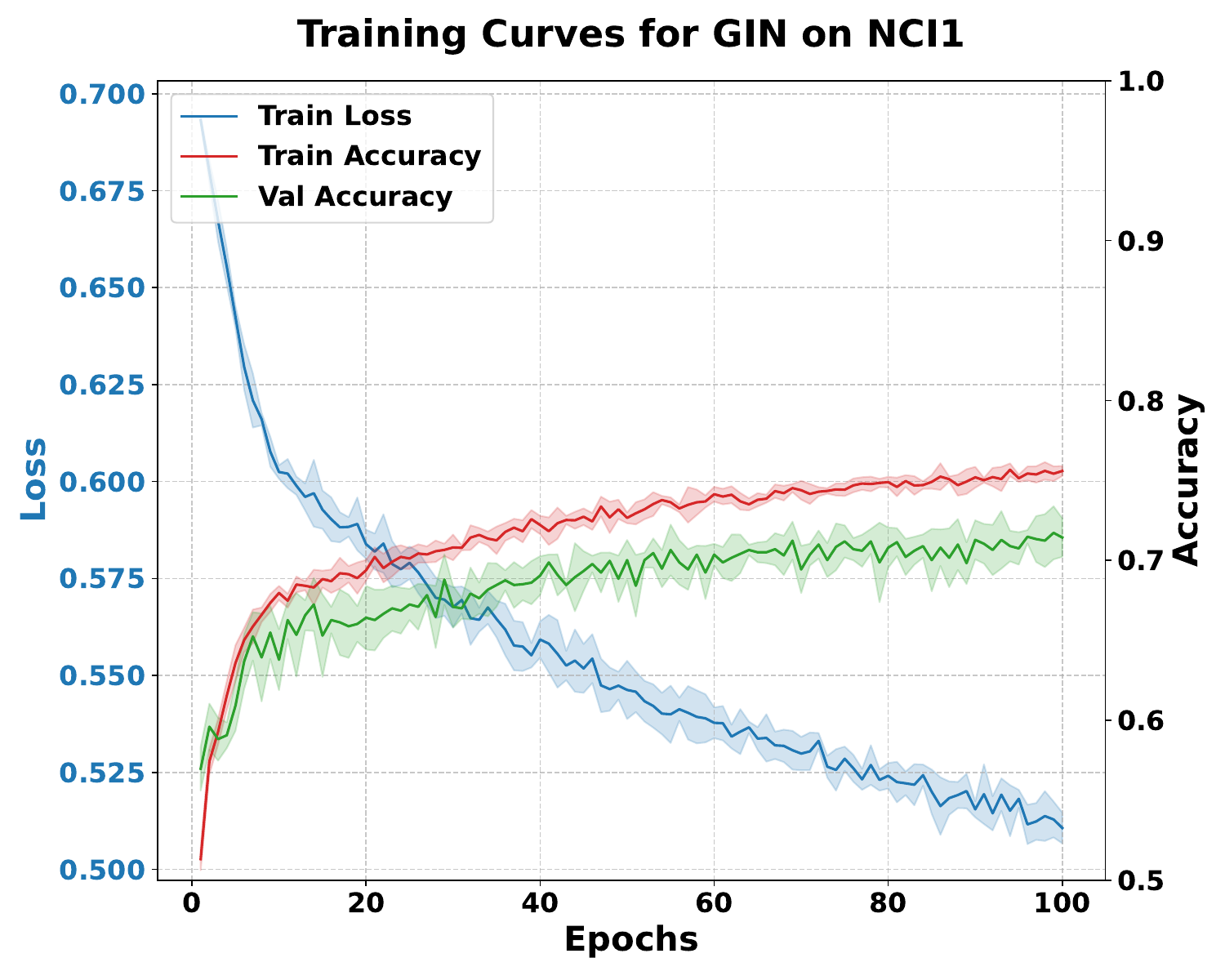}
    \includegraphics[width=0.245\textwidth]{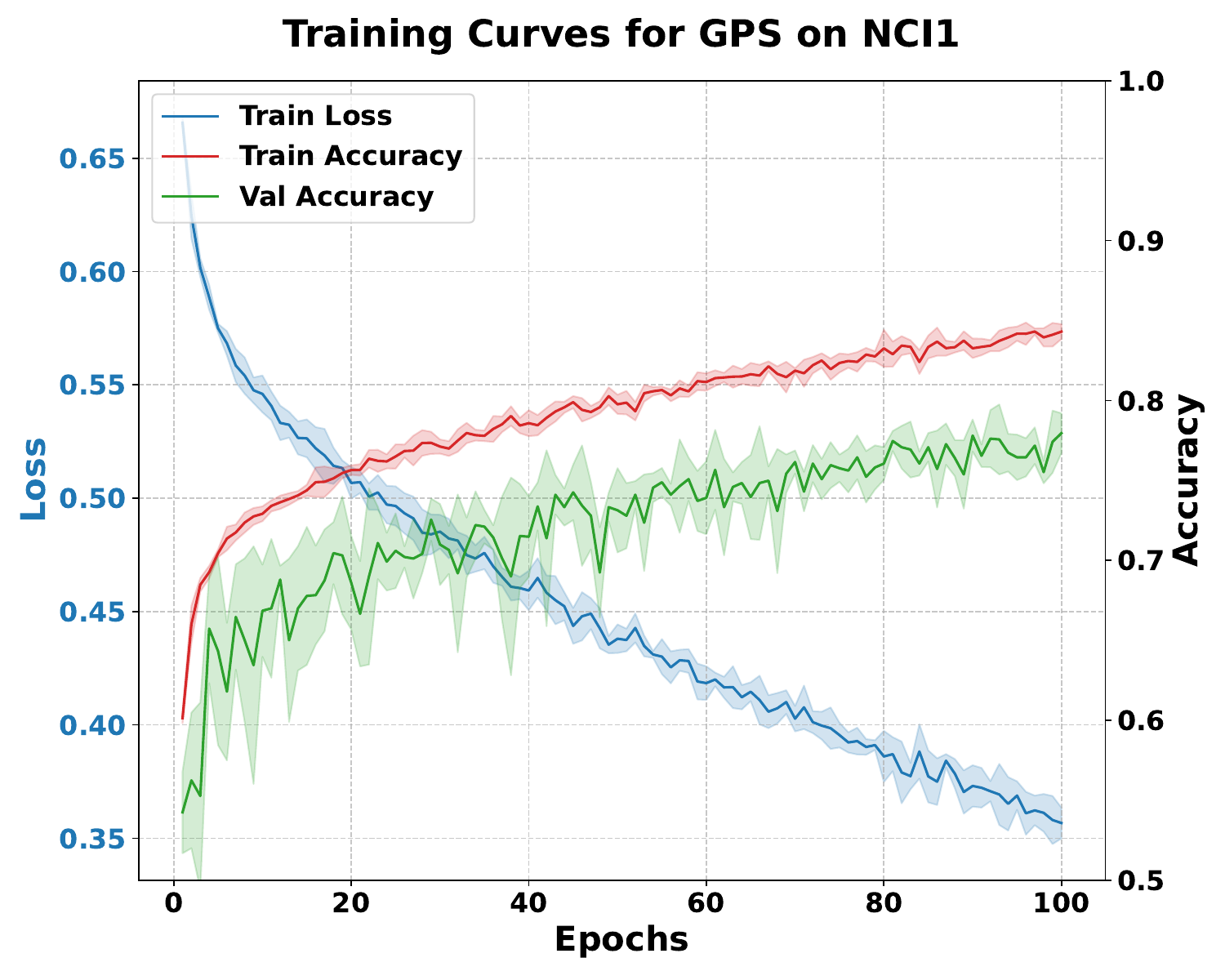}
    \includegraphics[width=0.245\textwidth]{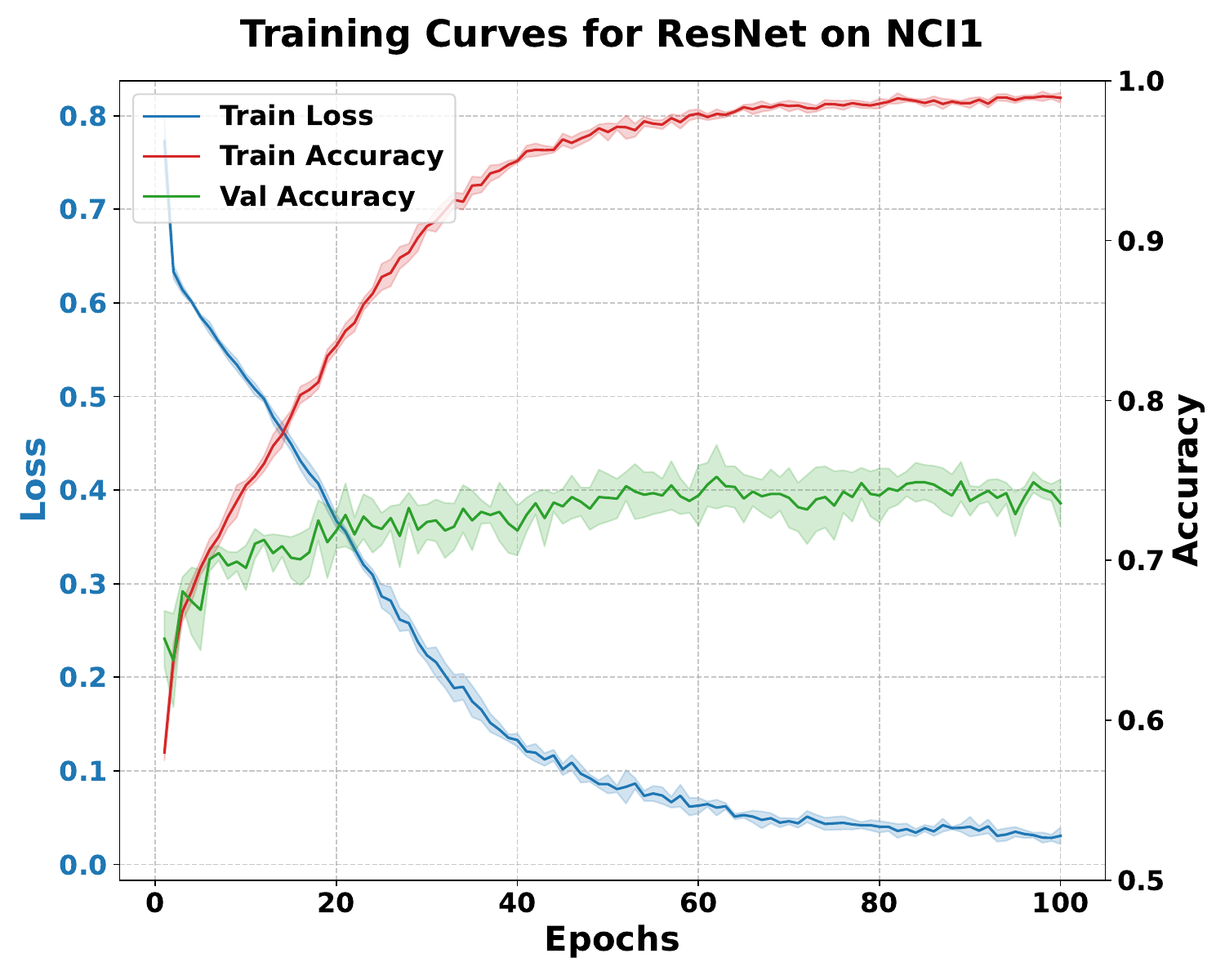}
    \includegraphics[width=0.245\textwidth]{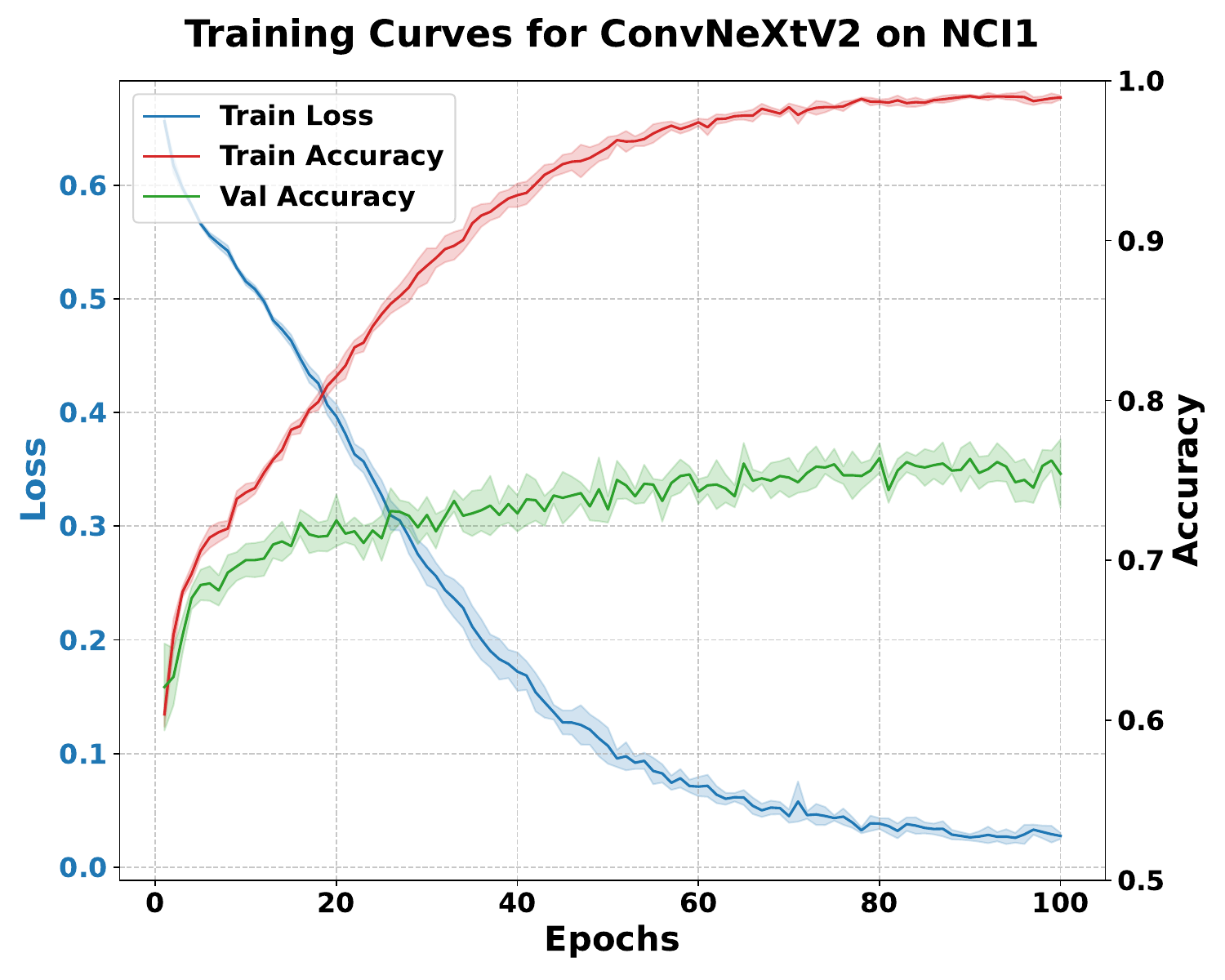}
    \includegraphics[width=0.245\textwidth]{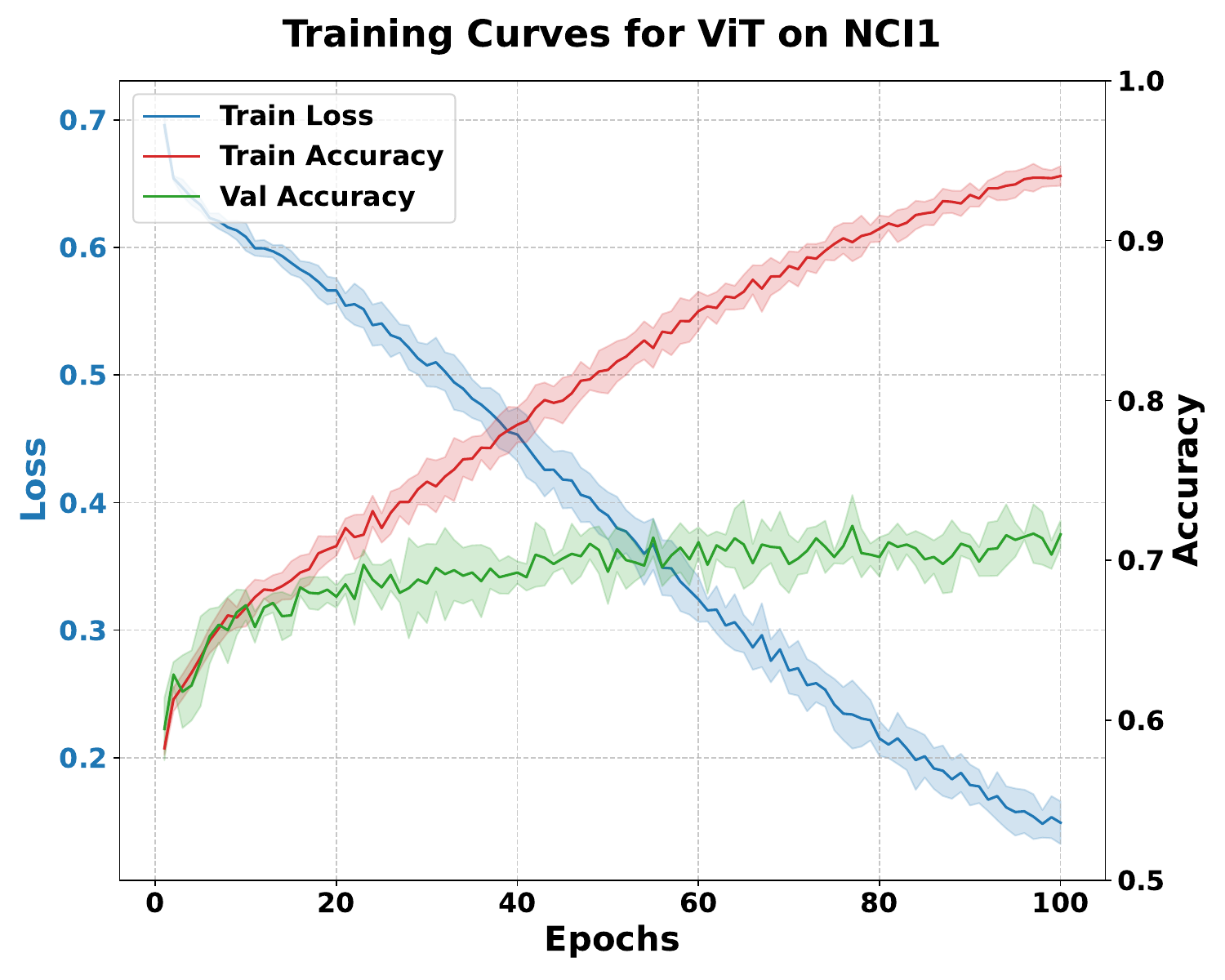}
    \includegraphics[width=0.245\textwidth]{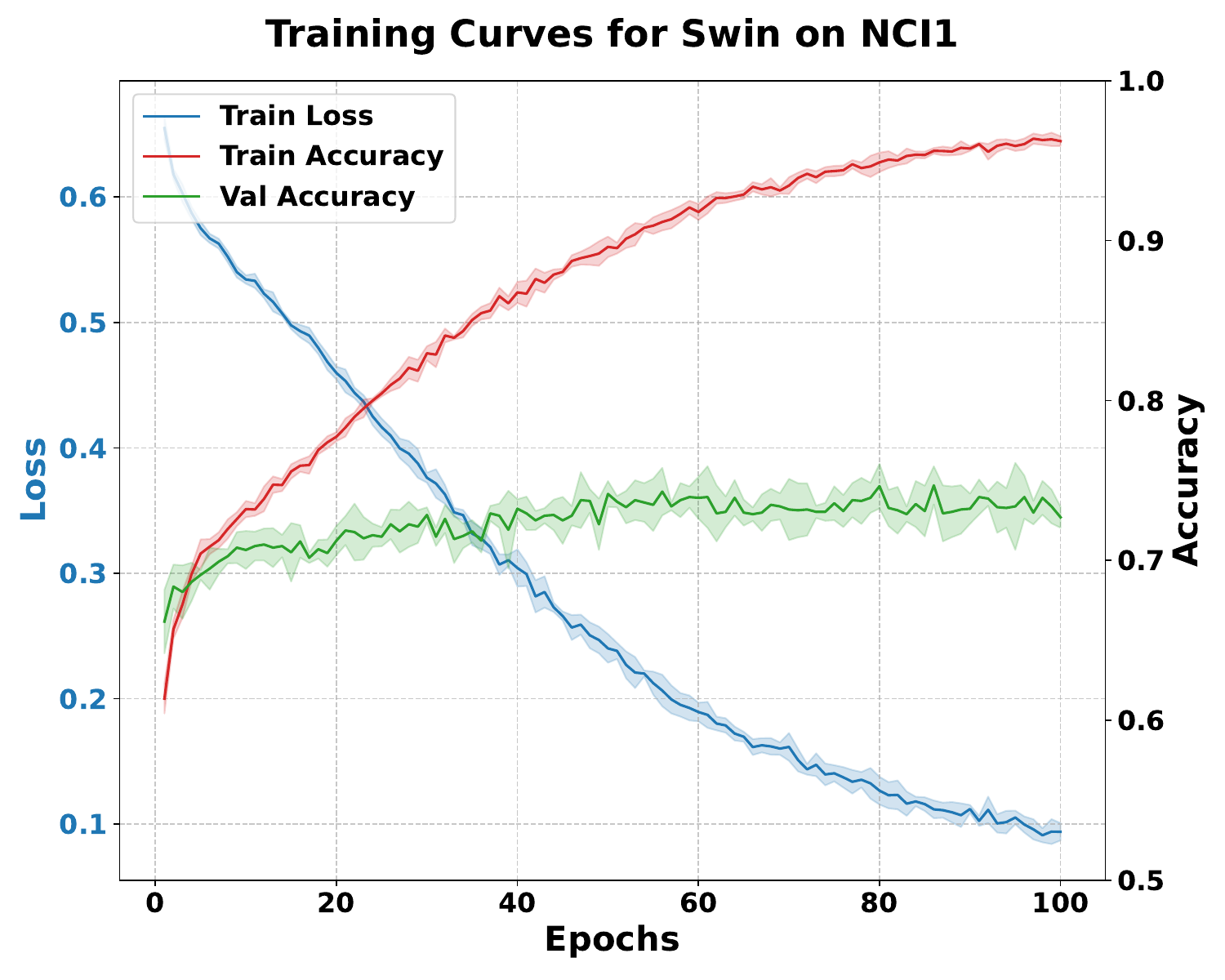}
    \caption{Training dynamics across different architectures on NCI1 dataset. For each model, we plot the training loss (blue), training accuracy (red), and validation accuracy (green) over 100 epochs. The shaded areas represent the standard deviation across multiple runs.}\label{fig:Training-dynamics}
\end{figure}%\vspace{-3.9mm}

\section{A New Benchmark: GraphAbstract}

\subsection{Motivation and Design Principles}
Building on our analysis of cognitive divergence between model families, we introduce \textbf{GraphAbstract}, a benchmark specifically designed to evaluate fundamental graph understanding capabilities that align with human visual reasoning. Traditional benchmarks in domains like molecular prediction, citation networks, and protein interaction graphs inadvertently couple domain-specific node features with topology, often allowing models to succeed through feature-based shortcuts rather than genuine structural understanding. This limitation becomes particularly evident in recent studies \citep{bechler2024graph,wilson2024cayley}, where models using fixed-structure expander graphs rather than true molecular topologies match or exceed performance on multiple benchmarks.
The theoretical evaluation of graph neural networks has primarily centered on the Weisfeiler-Lehman (WL) test, which measures a model's ability to distinguish between pairs of non-isomorphic graphs of the same size. While valuable for theoretical analysis, this approach is inherently limited to binary discrimination between fixed-size graph pairs rather than evaluating models' ability to recognize abstract structural patterns across varying scales and contexts. What remains missing is a rigorous evaluation framework that targets explicitly models' ability to perceive, abstract, and reason about fundamental graph properties in ways that mirror human visual cognition.

\textbf{GraphAbstract} addresses these limitations through four carefully designed tasks that isolate pure structural comprehension from domain-specific attributes. Each task evaluates a different aspect of how humans intuitively perceive graphs: recognizing organizational archetypes, detecting symmetry patterns, sensing connectivity strength, and identifying critical structural elements. By focusing on these fundamental perceptual capabilities and systematically varying graph scale between training and testing, we can rigorously assess whether graph learning approaches develop the scale-invariant structural understanding that characterizes human visual cognition.

\subsection{Benchmark Details}
\subsubsection{High-level Topology Classification}
The first and core task of our benchmark is high-level topology classification, where models must identify the dominant topological pattern in a graph $\mathcal{G} = (V, E)$. We carefully design six fundamental topological patterns commonly observed in real-world networks, each representing distinct organizational principles that humans can readily identify through visual inspection. This task evaluates a model's ability to perceive global structural organization beyond local connectivity patterns, mirroring how humans naturally identify network archetypes across diverse domains.

The following six graph types represent our core taxonomy of high-level topological patterns:
The \textbf{Cyclic Structure} is generated as an annular random geometric graph where nodes are distributed within a ring-shaped region and connections are established based on spatial proximity~\citep{giles2016connectivity}. 
The \textbf{Random Geometric Graph} structure emerges from spatial constraints, where connections are determined by the proximity of nodes in an underlying metric space. This topology is ubiquitous in wireless sensor networks, urban infrastructure, and physical systems governed by geographical limitations~\citep{penrose2003random}. \textbf{Hierarchical Structures} organize nodes into multiple levels where higher tiers have fewer, more densely connected nodes controlling numerous nodes in lower layers. \textbf{Community Structures} feature multiple densely connected subgroups with relatively sparse inter-group connections, representing common patterns in social networks and biological networks~\citep{girvan2002community}. The \textbf{Bottleneck Configuration} contains critical narrow passages between larger substructures, similar to traffic networks, information flow channels, and metabolic networks. These structures are particularly important for testing models' ability to identify crucial connecting components that often represent vulnerability points in real systems. \textbf{Multicore-periphery Networks}~\citep{yan2019multicores} exhibit multiple densely connected centers with their respective peripheral nodes, reflecting patterns found in distributed computing systems, multi-center urban structures, and neural networks.

\subsubsection{Symmetry Classification} Symmetry perception represents one of the most fundamental pattern recognition capabilities in human cognition. Humans can readily identify symmetric patterns through visual inspection, even without explicit mathematical analysis, when examining graph structures. This capacity has profound practical importance across domains: in chemical structures, symmetry determines molecular properties and reactivity; in network design, engineers leverage symmetry for resilience and load balancing; while cryptographers specifically construct asymmetric structures to enhance security. Our Symmetry Classification task challenges models to develop this same intuitive capability by determining whether a graph possesses non-trivial symmetry based on its automorphism properties.

To precisely characterize graph symmetry, we employ the concept of graph automorphism. Using the automorphism group, we can precisely categorize graphs based on their symmetry properties:

\begin{definition}[Graph Automorphism]
Given a graph $\mathcal{G} = (V, E)$, an \textit{automorphism} is a bijection $\phi: V \rightarrow V$ such that $(u,v) \in E$ if and only if $(\phi(u), \phi(v)) \in E$. The set of all automorphisms forms a group $\mathrm{Aut}(\mathcal{G})$ under composition:
\begin{equation}
\mathrm{Aut}(\mathcal{G}) = \{\phi: V \rightarrow V \mid \phi \text{ is bijective and } (u,v) \in E \iff (\phi(u), \phi(v)) \in E\}
\end{equation}
\end{definition}

\begin{definition}[Symmetric and Asymmetric Graphs]
A graph is classified as \textit{symmetric} if $|\mathrm{Aut}(\mathcal{G})| > 1$, indicating the existence of at least one non-identity automorphism, and \textit{asymmetric} if $|\mathrm{Aut}(\mathcal{G})| = 1$, where the only automorphism is the identity mapping.
\end{definition}

To construct a diverse symmetry classification dataset, we implement several carefully designed generation strategies for both symmetric and asymmetric graphs. As part of our approach, we also extract a collection of base graphs from real-world datasets using multiple sampling strategies to enhance structural diversity, as detailed in Appendix~\ref{app:datasets:symm}. For symmetric graphs, we employ four principled approaches based on group-theoretic constructions. 
Our first symmetric graph generation method utilizes Cayley graphs, which are constructed from algebraic groups and naturally exhibit rich symmetry properties:

\begin{definition}[Cayley Graph]
Given a group $\Gamma$ and a generating set $S \subset \Gamma$ where $S = S^{-1}$ (closed under inverses), the Cayley graph $\mathrm{Cay}(\Gamma, S)$ has vertices $V = \Gamma$ and edges $E = \{(g, gs) \mid g \in \Gamma, s \in S\}$. 
\end{definition}
Using this definition, we construct Cayley graphs $\mathrm{Cay}(\mathbb{Z}_n, S)$ where $\mathbb{Z}_n$ is the cyclic group of order $n$ and $S$ contains generators of the group (elements coprime to $n$). These graphs inherently possess rich symmetry patterns with automorphism groups containing at least $n$ elements.

The second approach leverages bipartite double covers, which provide a systematic way to construct symmetric graphs from arbitrary base graphs:
\begin{definition}[Bipartite Double Cover]
For a graph $\mathcal{G} = (V, E)$, its bipartite double cover $\tilde{\mathcal{G}} = (\tilde{V}, \tilde{E})$ is defined as:
\begin{align}
\tilde{V} &= V \times \{0, 1\} = \{(v, i) \mid v \in V, i \in \{0, 1\}\} \\
\tilde{E} &= \{((u, 0), (v, 1)), ((u, 1), (v, 0)) \mid (u, v) \in E\}
\end{align}
\end{definition}
We generate bipartite double covers from various base graphs, including random graphs, community graphs, bottleneck graphs, and real-world data. Each cover naturally possesses a non-trivial automorphism $\sigma((v, i)) = (v, 1-i)$ that swaps the two layers, guaranteeing $|\mathrm{Aut}(\tilde{\mathcal{G}})| \geq 2$ and ensuring the graph is symmetric by definition. Appendix~\ref{app:symmetry_proof} provides a detailed proof of this property.

Our third method creates symmetric structures through the Cartesian product:
\begin{definition}[Cartesian Product]
For graphs $\mathcal{G}_1 = (V_1, E_1)$ and $\mathcal{G}_2 = (V_2, E_2)$, their Cartesian product $\mathcal{G}_1 \square \mathcal{G}_2 = (V, E)$ is defined as:
\begin{align}
V &= V_1 \times V_2 = \{(u, v) \mid u \in V_1, v \in V_2\} \\
E &= \{((u_1, v), (u_2, v)) \mid (u_1, u_2) \in E_1, v \in V_2\} \cup \{((u, v_1), (u, v_2)) \mid u \in V_1, (v_1, v_2) \in E_2\}
\end{align}
\end{definition}
We create symmetric graphs through Cartesian products of known symmetric components (e.g., cycle graphs $C_n$, path graphs $P_n$, star graphs $S_n$), resulting in structures like prism graphs $C_n \square K_2$ and torus grids $C_m \square C_n$. For products involving real-world graphs (which we also use to generate asymmetric graphs), we rigorously verify and filter based on the actual automorphism group properties, as the Cartesian product preserves symmetry only when both factor graphs are symmetric.

Additionally, we employ multi-layer cyclic covers using real-world data as base graphs. These covers possess a natural cyclic symmetry where the automorphism $\tau((v, i)) = (v, (i+1) \bmod k)$ generates a cyclic group isomorphic to $\mathbb{Z}_k$, ensuring $|\mathrm{Aut}(\mathcal{G}_k)| \geq k$. The full mathematical definition and additional properties of these structures are provided in Appendix~\ref{app:symmetry_proof}.

For asymmetric graphs, we employ two primary strategies. First, we create perturbed graphs using Double-Edge Swap perturbations~\citep{maslov2002specificity}, where we start with symmetric structures and systematically transform them through edge swaps. Specifically, we repeatedly select two edges $(v_i,v_j)$ and $(v_k,v_l)$ and replace them with $(v_i,v_l)$ and $(v_k,v_j)$ if these new edges don't already exist. After each swap, we verify both connectivity preservation and symmetry breaking using automorphism group computation. This method maintains the degree distribution of the original graph while disrupting its symmetry structure. Second, we leverage real-world graph patterns through Cartesian products of real-world graphs, whose inherent irregularity typically leads to asymmetric structures.

\subsubsection{Spectral Gap Regression}
While humans cannot directly ``see'' mathematical properties of networks, we intuitively perceive network conductance, bottleneck structures, and overall connectivity strength through visual inspection. These perceptual judgments closely align with what graph theory formalizes as the spectral gap $\lambda_2(\mathcal{G})$, the second-smallest eigenvalue of the normalized Laplacian matrix~\citep{chung1997spectral}.
This fundamental parameter quantifies a graph's global connectivity characteristics, determining how quickly random walks mix ($t_{\text{mix}} \propto 1/\lambda_2$) and providing lower bounds on critical connectivity measures. Our regression task challenges models to develop representations that can infer this abstract property directly from topology, mirroring human ability to estimate network efficiency without explicit computation.

To ensure diverse spectral properties, we generate graphs using stochastic block models with varying mixing parameters, geometric graphs with different connection radii, and configuration models with targeted degree distributions. This design forces models to develop structural intuitions equivalent to understanding that bottlenecked networks (low $\lambda_2$) exhibit restricted information flow, while expander-like graphs (high $\lambda_2$) enable rapid diffusion. This reflects precisely the type of reasoning humans employ when analyzing network resilience in domains ranging from transportation systems to communication infrastructure.

\subsubsection{Bridge Counting}

Bridge counting evaluates a model's ability to identify critical edges whose removal would increase the number of connected components in a graph. Formally, given a graph $\mathcal{G} = (V, E)$, we define the set of bridges $\mathcal{B}(\mathcal{G})$ as:
\begin{equation}
\mathcal{B}(\mathcal{G}) = \{e \in E \mid \kappa(\mathcal{G} \setminus \{e\}) > \kappa(\mathcal{G})\}
\end{equation}
where $\kappa(\mathcal{G})$ denotes the number of connected components in graph $\mathcal{G}$. The objective is to predict $|\mathcal{B}(\mathcal{G})|$, the total number of bridges in the input graph.

This regression task requires models to understand both local edge importance and global connectivity patterns. Bridges serve as critical connectors between biconnected components of a graph, with each bridge $e = (u, v)$ satisfying the property that there exists no alternative path between $u$ and $v$ when $e$ is removed.

The bridge identification challenge varies systematically with graph structure, requiring models to adapt their reasoning across different topological contexts. Models lacking this capability face limitations in practical applications requiring critical connectivity awareness, such as molecular stability analysis and retrosynthetic planning~\cite{coley2019graph,thakkar2023unbiasing}.

These four tasks systematically probe models' ability to perceive and reason about fundamental graph properties. While representing a subset of human topological capabilities, they provide diagnostic tests for structural understanding that underlies many practical applications. Systematically measuring these capability gaps offers insights into current limitations and directions for developing more robust graph learning models.

%\zxj{This ability is also practically consequential, as models that fail to recognize bridge structures often face limitations in tasks requiring awareness of critical connectivity, such as molecular stability analysis or retrosynthetic planning~\cite{coley2019graph,thakkar2023unbiasing}.}

% \zxj{In summary, \textbf{GraphAbstract} represents an initial exploration into the vast space of topological capabilities that humans naturally possess. We selected only a small subset of these abilities for early investigation, believing that systematically identifying and measuring these capability gaps provides insights into the current state of graph learning models. Understanding these fundamental capabilities may prove essential as graph learning evolves toward tasks demanding genuine structural reasoning beyond current approaches.}

\subsection{Evaluation Protocol}

To rigorously assess the generalization capabilities of graph learning models, we introduce a systematic evaluation framework incorporating progressively challenging distribution shifts based on graph scale. This framework enables us to quantify how well models can transfer topological understanding across varying graph sizes, a capability that humans demonstrate naturally.

Our evaluation includes three test settings of increasing difficulty: \textbf{ID} (In-Distribution) setting uses test graphs containing $20$-$50$ nodes, matching the training distribution. \textbf{Near-OOD} (Near Out-of-Distribution) setting contains graphs with $40$-$100$ nodes, representing a moderate scale shift. \textbf{Far-OOD} (Far Out-of-Distribution) setting features graphs with $60$-$150$ nodes, constituting a significant scale shift. These examples challenge models to recognize the same topological patterns at dramatically larger scales, testing their ability to abstract core structural principles independent of scale. 
This evaluation framework serves as an analog to human cognitive flexibility, where people can seamlessly recognize familiar patterns at vastly different scales. For instance, humans can readily identify the same community structure whether it appears in a small departmental network of dozens of people or a large organizational chart with hundreds of employees. The ability to maintain consistent performance across these distribution shifts reflects the kind of scale-invariant understanding that advanced graph reasoning systems should aspire to develop.

\textbf{Baselines.} We evaluate two primary model families: graph neural networks and vision-based models. For GNNs, we implement four architectures using one-hot degree encoding as node features: GCN~\citep{kipf2016semi}, GIN~\citep{xu2018powerful}, GAT~\citep{velivckovic2017graph}, and GPS~\cite{rampavsek2022recipe}, combined with three positional encoding schemes: LapPE~\citep{dwivedi2023benchmarking}, SignNet~\citep{lim2022sign}, SPE~\citep{huang2023stability}. For vision-based approaches, we evaluate four backbone architectures: ResNet-50~\citep{he2016deep}, Swin Transformer-Tiny~\citep{liu2021swin}, ViT-B/16~\citep{dosovitskiy2020image}, and ConvNeXtV2-Tiny~\citep{woo2023convnext}, with three graph layout algorithms: Kamada-Kawai~\citep{kamada1989algorithm}, Spectral layout~\citep{hall1970r}, and ForceAtlas2~\citep{jacomy2014forceatlas2}. Benchmark statistics, implementation details, and examples of graphs from all four tasks are provided in Appendices~\ref{app:Statistics}, \ref{app:implementation}, and \ref{app:examples}, respectively.

\subsection{Main Results}\label{sec:main_result} 
Our benchmark isolates the challenge of topology understanding from feature-based learning prevalent in tasks like node classification and molecular property prediction. This design enables us to focus specifically on evaluating models' capability to comprehend graph structural patterns. Our extensive experiments reveal several key findings.

\textit{\textbf{Vision Models Exhibit Superior Scale-Invariant Understanding.} }Tables~\ref{tab:performance_comparison} and \ref{tab:layout_comparison} demonstrate that pure vision-based models show remarkable proficiency in abstracting global topological patterns across varying scales. Vision models maintain consistent performance across increasing distribution shifts, while GNNs exhibit severe degradation. On topology classification, vision models drop only 5-6\% accuracy from ID to Far-OOD settings, while basic GNNs with one-hot degree feature experience dramatic declines of over 45\%. This stark contrast highlights vision models' human-like ability to recognize organizational patterns regardless of scale. The vision advantage is particularly pronounced in symmetry detection, where vision models with spectral layouts achieve 20\% higher accuracy than even the best GNN variants. This task directly evaluates a model's capacity to perceive global structural properties that humans naturally identify through visual inspection.

\begin{table*}[t!]
\centering
\definecolor{top1color}{RGB}{220,230,242}  
\definecolor{top2color}{RGB}{242,234,218}  
\caption{Performance comparison across different tasks and models.\colorbox{top1color}{First} and \colorbox{top2color}{second} best performances are highlighted in each setting and model family.}
\resizebox{\textwidth}{!}{%
\begin{tabular}{lccc|ccc|ccc|ccc}
\toprule
\multirow{2}{*}{\textbf{Model}} & \multicolumn{3}{c|}{\textbf{Topology}} & \multicolumn{3}{c|}{\textbf{Symmetry}} & \multicolumn{3}{c|}{\textbf{Spectral}} & \multicolumn{3}{c}{\textbf{Bridge}} \\
 & \textbf{ID} & \textbf{Near-OOD} & \textbf{Far-OOD} & \textbf{ID} & \textbf{Near-OOD} & \textbf{Far-OOD} & \textbf{ID} & \textbf{Near-OOD} & \textbf{Far-OOD} & \textbf{ID} & \textbf{Near-OOD} & \textbf{Far-OOD} \\
\midrule
\textbf{GCN+Degree} & 80.67 ± 0.60 & 54.67 ± 2.69 & 33.67 ± 3.56 & 69.73 ± 0.87 & 66.87 ± 1.08 & 65.13 ± 1.94 & 0.1325 ± 0.0022 & 0.2517 ± 0.0036 & 0.3167 ± 0.0046 & 1.3995 ± 0.0294 & 3.1067 ± 0.1171 & 5.6302 ± 0.0898 \\

\textbf{GPS+Degree} & 81.40 ± 1.81 & 64.33 ± 2.18 & 37.87 ± 4.60 & 72.73 ± 0.87 & 70.87 ± 1.52 & 66.53 ± 2.44 & 0.0696 ± 0.0020 & 0.1844 ± 0.0136 & 0.4271 ± 0.0494 & 1.5226 ± 0.1512 & 3.3043 ± 0.2873 & 5.9010 ± 0.3702 \\

\textbf{GIN+Degree} & 79.87 ± 1.05 & 62.47 ± 2.79 & 39.33 ± 4.31 & 71.57 ± 1.54 & 69.13 ± 1.54 & 68.47 ± 0.51 & 0.1159 ± 0.0025 & 0.2885 ± 0.0474 & 0.6460 ± 0.2373 & 1.2953 ± 0.0373 & 3.3695 ± 0.3631 & 7.0563 ± 1.0770 \\

\textbf{GAT+Degree} & 81.87 ± 1.67 & 58.40 ± 3.52 & 42.80 ± 3.07 & 69.47 ± 0.89 & 67.40 ± 0.76 & 65.70 ± 1.36 & 0.1329 ± 0.0004 & 0.2512 ± 0.0133 & 0.3149 ± 0.0158 & 1.3775 ± 0.0472 & 3.1871 ± 0.1867 & 5.6034 ± 0.1339 \\
\midrule
\textbf{GCN+LapPE} & 86.83 ± 1.64 & 70.97 ± 3.99 & \cellcolor{top1color}\textbf{55.00 ± 3.08} & 68.63 ± 1.02 & 66.19 ± 0.88 & 65.35 ± 2.30 & 0.0442 ± 0.0211 & 0.1076 ± 0.0526 & 0.1840 ± 0.0807 & 0.9961 ± 0.0762 & 2.8534 ± 0.2058 & 5.7156 ± 0.3173 \\

\textbf{GPS+LapPE} & \cellcolor{top2color}\textbf{93.07 ± 1.14} & \cellcolor{top2color}\textbf{81.00 ± 2.77} & 47.40 ± 2.16 & \cellcolor{top1color}\textbf{71.52 ± 1.50} & \cellcolor{top1color}\textbf{69.89 ± 1.58} & \cellcolor{top2color}\textbf{66.80 ± 1.88} & 0.0263 ± 0.0084 & 0.0706 ± 0.0313 & 0.1781 ± 0.0881 & 1.1665 ± 0.6545 & 2.4846 ± 0.5410 & 5.3825 ± 0.6990 \\

\textbf{GIN+LapPE} & \cellcolor{top1color}\textbf{93.37 ± 0.62} & \cellcolor{top1color}\textbf{82.13 ± 2.96} & 51.13 ± 4.34 & \cellcolor{top2color}\textbf{71.37 ± 1.19} & \cellcolor{top2color}\textbf{68.83 ± 1.17} & \cellcolor{top1color}\textbf{67.17 ± 1.55} & \cellcolor{top2color}\textbf{0.0217 ± 0.0057} & \cellcolor{top2color}\textbf{0.0538 ± 0.0145} & \cellcolor{top2color}\textbf{0.1268 ± 0.0415} & \cellcolor{top1color}\textbf{0.8683 ± 0.1112} & \cellcolor{top1color}\textbf{2.4427 ± 0.2064} & \cellcolor{top1color}\textbf{5.1461 ± 0.3576} \\

\textbf{GAT+LapPE} & 84.90 ± 2.77 & 72.07 ± 3.13 & \cellcolor{top2color}\textbf{54.07 ± 7.26} & 69.15 ± 1.09 & 66.46 ± 1.28 & 66.22 ± 1.71 & \cellcolor{top1color}\textbf{0.0182 ± 0.0026} & \cellcolor{top1color}\textbf{0.0419 ± 0.0039} & \cellcolor{top1color}\textbf{0.0722 ± 0.0048} & \cellcolor{top2color}\textbf{0.9603 ± 0.0701} & \cellcolor{top2color}\textbf{2.4669 ± 0.1743} & \cellcolor{top2color}\textbf{5.2778 ± 0.2455} \\
\midrule
\textbf{GCN+SignNet} & \cellcolor{top1color}\textbf{94.47 ± 1.54} & \cellcolor{top2color}\textbf{94.20 ± 1.65} & \cellcolor{top2color}\textbf{77.93 ± 2.77} & 69.03 ± 0.93 & 67.07 ± 0.83 & 65.60 ± 1.43 & \cellcolor{top1color}\textbf{0.0203 ± 0.0018} & \cellcolor{top1color}\textbf{0.0274 ± 0.0034} & \cellcolor{top1color}\textbf{0.0523 ± 0.0147} & 0.6750 ± 0.1104 & 2.4387 ± 0.5131 & 6.3090 ± 1.5907 \\

\textbf{GPS+SignNet} & 81.80 ± 14.38 & 87.67 ± 5.19 & 75.20 ± 6.53 & \cellcolor{top1color}\textbf{70.73 ± 1.46} & \cellcolor{top1color}\textbf{69.47 ± 1.24} & \cellcolor{top1color}\textbf{67.73 ± 1.03} & 0.0244 ± 0.0060 & 0.0783 ± 0.0134 & 0.3133 ± 0.0529 & 0.9872 ± 0.3033 & \cellcolor{top2color}\textbf{1.9819 ± 0.4649} & \cellcolor{top2color}\textbf{4.5278 ± 0.8672} \\

\textbf{GIN+SignNet} & \cellcolor{top2color}\textbf{94.20 ± 1.80} & 84.73 ± 6.50 & 61.00 ± 8.72 & \cellcolor{top2color}\textbf{70.43 ± 0.69} & \cellcolor{top2color}\textbf{68.60 ± 2.14} & \cellcolor{top2color}\textbf{67.50 ± 2.11} & 0.0237 ± 0.0039 & 0.0750 ± 0.0195 & 0.2417 ± 0.0904 & \cellcolor{top2color}\textbf{0.6303 ± 0.1828} & 2.4745 ± 0.3013 & 7.1992 ± 1.0679 \\

\textbf{GAT+SignNet} & 94.00 ± 1.21 & \cellcolor{top1color}\textbf{96.47 ± 1.60} & \cellcolor{top1color}\textbf{85.27 ± 6.83} & 69.90 ± 0.98 & 67.60 ± 1.47 & 67.27 ± 1.37 & \cellcolor{top2color}\textbf{0.0204 ± 0.0047} & \cellcolor{top2color}\textbf{0.0303 ± 0.0030} & \cellcolor{top2color}\textbf{0.0571 ± 0.0154} & \cellcolor{top1color}\textbf{0.5713 ± 0.0973} & \cellcolor{top1color}\textbf{1.7152 ± 0.1943} & \cellcolor{top1color}\textbf{4.1380 ± 0.2873} \\
\midrule
\textbf{GCN+SPE} & 93.20 ± 2.16 & \cellcolor{top2color}\textbf{90.60 ± 4.77} & \cellcolor{top2color}\textbf{72.33 ± 7.90} & 68.90 ± 0.79 & 66.80 ± 1.40 & 64.80 ± 2.85 & \cellcolor{top1color}\textbf{0.0255 ± 0.0025} & \cellcolor{top1color}\textbf{0.0507 ± 0.0039} & \cellcolor{top1color}\textbf{0.1351 ± 0.0284} & \cellcolor{top2color}\textbf{0.5503 ± 0.0777} & \cellcolor{top1color}\textbf{1.4143 ± 0.2405} & \cellcolor{top2color}\textbf{3.8632 ± 1.2460} \\

\textbf{GPS+SPE} & 84.80 ± 13.75 & 84.07 ± 16.04 & 72.20 ± 14.51 & \cellcolor{top1color}\textbf{71.97 ± 1.65} & \cellcolor{top1color}\textbf{70.67 ± 1.23} & \cellcolor{top2color}\textbf{67.70 ± 1.37} & 0.0681 ± 0.0298 & 0.1537 ± 0.0839 & 0.6716 ± 0.2709 & 0.6402 ± 0.1753 & \cellcolor{top2color}\textbf{1.4666 ± 0.0713} & \cellcolor{top1color}\textbf{3.8021 ± 1.0492} \\

\textbf{GIN+SPE} & \cellcolor{top1color}\textbf{94.53 ± 1.76} & 87.80 ± 9.89 & 70.33 ± 12.09 & \cellcolor{top2color}\textbf{70.87 ± 1.11} & \cellcolor{top2color}\textbf{68.80 ± 1.12} & \cellcolor{top1color}\textbf{68.63 ± 0.97} & 0.0376 ± 0.0028 & 0.1491 ± 0.0382 & 0.8412 ± 0.3343 & 0.6011 ± 0.1649 & 2.4499 ± 0.6701 & 7.8487 ± 2.0425 \\

\textbf{GAT+SPE} & \cellcolor{top2color}\textbf{93.53 ± 4.66} & \cellcolor{top1color}\textbf{92.60 ± 6.95} & \cellcolor{top1color}\textbf{85.33 ± 9.94} & 68.07 ± 1.07 & 66.87 ± 0.98 & 67.47 ± 0.64 & \cellcolor{top2color}\textbf{0.0296 ± 0.0029} & \cellcolor{top2color}\textbf{0.0784 ± 0.0044} & \cellcolor{top2color}\textbf{0.2210 ± 0.0347} & \cellcolor{top1color}\textbf{0.4854 ± 0.0622} & 1.5176 ± 0.4072 & 4.1430 ± 1.7660 \\
\midrule
\textbf{Swin} & 94.80 ± 0.54 & \cellcolor{top1color}\textbf{97.73 ± 0.57} & \cellcolor{top2color}\textbf{89.13 ± 3.26} & 92.50 ± 0.43 & \cellcolor{top2color}\textbf{90.77 ± 0.81} & \cellcolor{top2color}\textbf{84.70 ± 1.36} & \cellcolor{top2color}\textbf{0.0312 ± 0.0037} & \cellcolor{top2color}\textbf{0.0594 ± 0.0024} & \cellcolor{top1color}\textbf{0.0946 ± 0.0094} & \cellcolor{top2color}\textbf{0.6526 ± 0.0547} & \cellcolor{top1color}\textbf{1.6338 ± 0.1675} & \cellcolor{top2color}\textbf{3.7918 ± 0.3361} \\

\textbf{ConvNeXtV2} & \cellcolor{top2color}\textbf{95.20 ± 0.34} & \cellcolor{top2color}\textbf{97.20 ± 1.48} & \cellcolor{top1color}\textbf{90.33 ± 4.60} & 92.83 ± 0.53 & 89.13 ± 0.57 & 84.67 ± 0.77 & \cellcolor{top1color}\textbf{0.0279 ± 0.0047} & \cellcolor{top1color}\textbf{0.0578 ± 0.0056} & \cellcolor{top2color}\textbf{0.1006 ± 0.0047} & \cellcolor{top1color}\textbf{0.6261 ± 0.0702} & 1.8045 ± 0.2007 & 4.1809 ± 0.2742 \\

\textbf{ResNet} & \cellcolor{top1color}\textbf{95.87 ± 0.62} & 96.27 ± 1.02 & 87.40 ± 3.33 & \cellcolor{top2color}\textbf{93.47 ± 0.66} & 88.83 ± 0.64 & 84.20 ± 0.39 & 0.0335 ± 0.0021 & 0.0600 ± 0.0063 & 0.1102 ± 0.0100 & 0.7771 ± 0.1095 & \cellcolor{top2color}\textbf{1.6356 ± 0.1643} & \cellcolor{top1color}\textbf{3.6814 ± 0.1217} \\

\textbf{ViT} & 94.00 ± 0.99 & 95.20 ± 1.20 & 86.40 ± 1.61 & \cellcolor{top1color}\textbf{94.03 ± 1.04} & \cellcolor{top1color}\textbf{91.03 ± 0.56} & \cellcolor{top1color}\textbf{85.67 ± 1.06} & 0.0345 ± 0.0046 & 0.0746 ± 0.0081 & 0.1154 ± 0.0080 & 0.7406 ± 0.1167 & 1.8263 ± 0.0679 & 4.3765 ± 0.1214 \\
\bottomrule
\end{tabular}}
\label{tab:performance_comparison}
\end{table*}

\begin{table*}[t]
\centering
\caption{Performance comparison between KK, ForceAtlas2, and Spectral layout algorithms across different tasks and vision models}
\definecolor{bestcolor}{RGB}{188,205,228}
\resizebox{\textwidth}{!}{%
\begin{tabular}{lccc|ccc|ccc|ccc}
\toprule
\multirow{2}{*}{\textbf{Model}} & \multicolumn{3}{c|}{\textbf{Topology}} & \multicolumn{3}{c|}{\textbf{Symmetry}} & \multicolumn{3}{c|}{\textbf{Spectral}} & \multicolumn{3}{c}{\textbf{Bridge}} \\
 & \textbf{KK} & \textbf{ForceAtlas2} & \textbf{Spectral} & \textbf{KK} & \textbf{ForceAtlas2} & \textbf{Spectral} & \textbf{KK} & \textbf{ForceAtlas2} & \textbf{Spectral} & \textbf{KK} & \textbf{ForceAtlas2} & \textbf{Spectral} \\
\midrule
\multicolumn{13}{c}{\textbf{ID}} \\
\midrule
\textbf{Swin} & 93.80 ± 1.31 & \cellcolor{bestcolor}\textbf{94.80 ± 0.54} & 87.27 ± 0.57 & 85.07 ± 1.05 & 80.93 ± 1.09 & \cellcolor{bestcolor}\textbf{92.50 ± 0.43} & 0.0324 ± 0.0035 & \cellcolor{bestcolor}\textbf{0.0312 ± 0.0037} & 0.0333 ± 0.0025 & \cellcolor{bestcolor}\textbf{0.6526 ± 0.0547} & 0.9867 ± 0.1587 & 1.2524 ± 0.0543 \\

\textbf{ConvNeXtV2} & \cellcolor{bestcolor}\textbf{95.20 ± 0.34} & 94.93 ± 0.13 & 87.53 ± 0.78 & 87.30 ± 1.20 & 80.73 ± 0.98 & \cellcolor{bestcolor}\textbf{92.83 ± 0.53} & 0.0284 ± 0.0024 & \cellcolor{bestcolor}\textbf{0.0279 ± 0.0047} & 0.0403 ± 0.0032 & \cellcolor{bestcolor}\textbf{0.6261 ± 0.0702} & 0.9226 ± 0.0540 & 1.3579 ± 0.0320 \\

\textbf{ResNet} & \cellcolor{bestcolor}\textbf{95.87 ± 0.62} & 94.93 ± 1.08 & 85.67 ± 0.47 & 85.63 ± 0.84 & 79.53 ± 0.90 & \cellcolor{bestcolor}\textbf{93.47 ± 0.66} & \cellcolor{bestcolor}\textbf{0.0335 ± 0.0021} & 0.0376 ± 0.0098 & 0.0463 ± 0.0071 & \cellcolor{bestcolor}\textbf{0.7771 ± 0.1095} & 1.0602 ± 0.0759 & 1.5106 ± 0.1855 \\

\textbf{ViT} & \cellcolor{bestcolor}\textbf{94.00 ± 0.99} & 92.93 ± 0.65 & 86.13 ± 0.65 & 86.47 ± 1.75 & 80.07 ± 1.96 & \cellcolor{bestcolor}\textbf{94.03 ± 1.04} & 0.0367 ± 0.0045 & \cellcolor{bestcolor}\textbf{0.0345 ± 0.0046} & 0.0441 ± 0.0067 & \cellcolor{bestcolor}\textbf{0.7406 ± 0.1167} & 1.0883 ± 0.0887 & 1.3944 ± 0.0493 \\
\midrule
\multicolumn{13}{c}{\textbf{Near-OOD}} \\
\midrule
\textbf{Swin} & \cellcolor{bestcolor}\textbf{97.73 ± 0.57} & 92.20 ± 0.65 & 93.27 ± 1.77 & 81.80 ± 1.31 & 79.87 ± 0.51 & \cellcolor{bestcolor}\textbf{90.77 ± 0.81} & 0.0819 ± 0.0141 & \cellcolor{bestcolor}\textbf{0.0594 ± 0.0024} & 0.0690 ± 0.0084 & \cellcolor{bestcolor}\textbf{1.6338 ± 0.1675} & 2.2916 ± 0.1672 & 2.4495 ± 0.2141 \\

\textbf{ConvNeXtV2} & \cellcolor{bestcolor}\textbf{97.20 ± 1.48} & 92.40 ± 0.83 & 93.20 ± 1.89 & 81.83 ± 1.51 & 80.70 ± 0.49 & \cellcolor{bestcolor}\textbf{89.13 ± 0.57} & 0.0728 ± 0.0122 & \cellcolor{bestcolor}\textbf{0.0578 ± 0.0056} & 0.0750 ± 0.0109 & \cellcolor{bestcolor}\textbf{1.8045 ± 0.2007} & 2.4521 ± 0.1678 & 2.5207 ± 0.1870 \\

\textbf{ResNet} & \cellcolor{bestcolor}\textbf{96.27 ± 1.02} & 93.67 ± 1.15 & 94.60 ± 1.00 & 82.60 ± 1.05 & 79.07 ± 0.67 & \cellcolor{bestcolor}\textbf{88.83 ± 0.64} & 0.0936 ± 0.0120 & \cellcolor{bestcolor}\textbf{0.0600 ± 0.0063} & 0.0914 ± 0.0052 & \cellcolor{bestcolor}\textbf{1.6356 ± 0.1643} & 2.3661 ± 0.2173 & 2.9001 ± 0.3499 \\

\textbf{ViT} & \cellcolor{bestcolor}\textbf{95.20 ± 1.20} & 92.53 ± 0.88 & 92.67 ± 1.01 & 82.47 ± 1.89 & 79.87 ± 1.31 & \cellcolor{bestcolor}\textbf{91.03 ± 0.56} & 0.1058 ± 0.0132 & \cellcolor{bestcolor}\textbf{0.0746 ± 0.0081} & 0.0828 ± 0.0068 & \cellcolor{bestcolor}\textbf{1.8263 ± 0.0679} & 2.4940 ± 0.1665 & 2.6900 ± 0.0848 \\
\midrule
\multicolumn{13}{c}{\textbf{Far-OOD}} \\
\midrule
\textbf{Swin} & \cellcolor{bestcolor}\textbf{89.13 ± 3.26} & 81.93 ± 1.73 & 87.13 ± 2.95 & 74.20 ± 2.05 & 77.23 ± 1.15 & \cellcolor{bestcolor}\textbf{84.70 ± 1.36} & 0.1668 ± 0.0142 & 0.1182 ± 0.0050 & \cellcolor{bestcolor}\textbf{0.0946 ± 0.0094} & \cellcolor{bestcolor}\textbf{3.7918 ± 0.3361} & 4.9075 ± 0.2818 & 4.9141 ± 0.2744 \\

\textbf{ConvNeXtV2} & \cellcolor{bestcolor}\textbf{90.33 ± 4.60} & 87.47 ± 1.39 & 89.00 ± 1.62 & 75.90 ± 2.30 & 77.03 ± 1.10 & \cellcolor{bestcolor}\textbf{84.67 ± 0.77} & 0.1419 ± 0.0149 & \cellcolor{bestcolor}\textbf{0.1006 ± 0.0047} & 0.1018 ± 0.0112 & \cellcolor{bestcolor}\textbf{4.1809 ± 0.2742} & 5.3567 ± 0.2295 & 5.0889 ± 0.1948 \\

\textbf{ResNet} & \cellcolor{bestcolor}\textbf{87.40 ± 3.30} & 76.93 ± 1.16 & 85.20 ± 2.08 & 74.80 ± 1.36 & 74.93 ± 1.10 & \cellcolor{bestcolor}\textbf{84.20 ± 0.39} & 0.1739 ± 0.0120 & \cellcolor{bestcolor}\textbf{0.1102 ± 0.0100} & 0.1179 ± 0.0075 & \cellcolor{bestcolor}\textbf{3.6814 ± 0.1217} & 4.9441 ± 0.2304 & 5.5314 ± 0.4061 \\

\textbf{ViT} & 79.53 ± 0.69 & \cellcolor{bestcolor}\textbf{86.40 ± 1.61} & 81.80 ± 0.69 & 76.87 ± 2.27 & 74.17 ± 1.55 & \cellcolor{bestcolor}\textbf{85.67 ± 1.06} & 0.1837 ± 0.0101 & 0.1471 ± 0.0143 & \cellcolor{bestcolor}\textbf{0.1154 ± 0.0080} & \cellcolor{bestcolor}\textbf{4.3765 ± 0.1214} & 5.1766 ± 0.1582 & 5.3629 ± 0.1370 \\
\bottomrule
\end{tabular}}
\label{tab:layout_comparison}
\end{table*}%

\textit{\textbf{Layout Algorithms Critically Shape Visual Graph Understanding.}} 
Different layout algorithms significantly impact how easily humans and models perceive key structural properties.  Spectral layouts, for instance, excel at symmetry detection ($8$-$10$\% higher accuracy than force-directed approaches). Their use of Laplacian eigenvectors often leads to node overlap when visualizing graphs with symmetries or highly repetitive patterns. This simplification of complex structures into more recognizable forms makes global properties more apparent (detailed in Appendix \ref{app:layout_examples}). Similarly, circular layouts~\citep{gansner2006improved}, with detailed performance reported in Appendix \ref{app:circular_performance}, arrange nodes in perfect circles. This allows for straightforward symmetry assessment through edge density: symmetric graphs show uniform connections, while in asymmetric graphs, humans can easily spot irregular densities or specific edges that break the symmetry.
In contrast, force-directed methods like Kamada Kawai prioritize preserving the graph's intrinsic shape by optimizing for clear visual separation of nodes and sensible edge routing. This makes individual elements and local relationships more distinguishable, often excelling at revealing community structures and general topological arrangements.
These observations highlight the importance of task-aware layout selection in visual graph understanding. Different layout algorithms naturally emphasize different structural properties, suggesting opportunities for developing adaptive visualization strategies that match layout choices to specific reasoning objectives. The intuitions behind how layout algorithms enable vision models to access structural information are discussed in Appendix~\ref{app:discussion}.

\textit{\textbf{Global Structural Priors Help GNNs Bridge the Cognitive Gap.}} Our experiments reveal that incorporating positional encodings (PEs) that inject pre-computed global structural information significantly outperforms innovations in message passing architectures alone. This approach represents an alternative ``global-first'' strategy within the message-passing framework, providing GNNs with structural context before local propagation begins. All three PE schemes substantially improve performance and generalization capability, with some advanced PE-enhanced GNNs approaching vision model performance on topology tasks.
This finding suggests a unified insight: successful graph understanding fundamentally requires access to global topological information, whether through visual perception or explicitly injected structural priors. It also indicates promising directions for future development where vision models could potentially benefit from more explicit structural priors through specialized pre-training or augmentation strategies that emphasize key topological features.

\textit{\textbf{Computational Trade-offs and Model Capacity.}}
Vision models require approximately $10$× more computational time than GNN+PE approaches in our standard experimental setting (detailed in Table~\ref{tab:computational_cost} of Appendix~\ref{app:Extended}). To better attribute the performance differences, we conducted parameter scaling experiments with GPS+SPE, expanding it to match or exceed the vision model capacity. 
Consistent with known challenges in scaling GNNs, performance degraded rather than improved (full results in Table~\ref{tab:scaling_results} of Appendix~\ref{app:Extended}). This confirms that the observed advantages stem from architectural differences, not simply parameter count.
The two paradigms understand graph structure through fundamentally different mechanisms: GNNs process topology through abstract message passing and structural priors, while vision models directly recognize patterns in visual representations. The superior out-of-distribution generalization of vision models on global structural understanding tasks aligns with this direct pattern recognition approach. When selecting approaches for practical applications, the computational overhead must be weighed against these advantages in scale generalization and structural reasoning.

\section{Conclusion}
Our work reveals the underappreciated power of vision models for graph structural understanding, demonstrating that vision-based methods better align with human cognitive processes in capturing global topological properties. Through \textbf{GraphAbstract}, we systematically quantified these advantages on tasks requiring holistic understanding and scale-invariant reasoning, particularly their ability to maintain consistent performance across varying graph scales.
These findings establish visual processing as a complementary pathway to traditional graph learning. While GNNs excel through explicit structural priors and domain-specific inductive biases, vision models offer distinctive strengths in perceiving global patterns through direct pattern recognition. This suggests promising directions for graph foundation models that integrate visual perception with structural reasoning, combining the strengths of both paradigms for more robust and generalizable graph understanding.

\clearpage

\section{ACKNOWLEDGMENTS}
This work is supported in part by the National Natural Science Foundation of China (Grant No. 92470113), the National Key Research and Development Program (2018YFB1402600), and the Guangdong Provincial Key Laboratory of Mathematical Foundations for Artificial Intelligence (2023B1212010001).
\bibliographystyle{plain}
\bibliography{ref}

%%%%%%%%%%%%%%%%%%%%%%%%%%%%%%%%%%%%%%%%%%%%%%%%%%%%%%%%%%%%
\clearpage
\appendix
\setcounter{theorem}{0}
\setcounter{lemma}{0}
\setcounter{definition}{0}
\appendix
\begin{center}
	\LARGE \bf {Appendix}
\end{center}

\etocdepthtag.toc{mtappendix}
\etocsettagdepth{mtchapter}{none}
\etocsettagdepth{mtappendix}{subsubsection}
\tableofcontents
\clearpage

\section{Limitations and Future Work}\label{app:limitations} 
\subsection{Limitations}

\textbf{Theoretical Foundations for Layout Algorithms.} While our experiments demonstrate that layout algorithms critically shape performance,  we lack complete theoretical characterizations of why certain layouts benefit particular reasoning tasks. The relationship between geometric properties of layouts and their learnability by neural networks remains an open theoretical question. Early work by Eades \& Lin ~\citep{eades2000spring} connected spring algorithms to geometric automorphisms, but the graph visualization community has since focused primarily on human aesthetics rather than machine learning objectives. Establishing rigorous theoretical frameworks connecting layout properties to learning guarantees represents an essential direction for bridging graph visualization and machine learning research.

\textbf{Human Cognition Alignment}: While tasks in GraphAbstract are designed to mirror human 
cognitive capabilities, we do not directly compare model performance with human behavior 
on these tasks. Future studies could incorporate human experiments (e.g., eye-tracking 
studies or timed reasoning tasks) to establish quantitative benchmarks for cognitive alignment.

\subsection{Future Work}

\textbf{Vision-centric Graph Foundation Models.} Our work establishes visual processing as a viable pathway for graph understanding, demonstrating competitive performance and superior scale generalization on structural reasoning tasks. Realizing this potential requires developing a comprehensive ecosystem for vision-based graph learning, ultimately enabling vision-centric graph foundation models.

Key directions include curating large-scale pretraining datasets of graph visualizations across diverse domains, designing graph-specific augmentation strategies that preserve topological properties in the visual domain, developing visual encoding schemes for node and edge attributes (e.g., color mapping, size encoding, visual markers), creating specialized architectures optimized for processing graph images, and exploring new application scenarios such as interactive graph visualization and visual graph analytics. Such infrastructure could enable foundation models that leverage visual perception for robust and generalizable structural understanding while incorporating rich semantic information.

\section{Extended Related Work}
\label{app:related_works}

\textbf{Graph Neural Networks and Positional Encodings.}
Graph Neural Networks have achieved remarkable success across diverse domains through learnable aggregation of neighborhood information~\citep{kipf2016semi,gilmer2017neural,hamilton2017inductive,velivckovic2017graph,xu2018powerful}. The field has developed various architectural innovations to enhance structural understanding capabilities. Subgraph-based methods~\citep{zhao2021stars,frasca2022understanding,zhang2021nested,huang2022boosting} extract features from local structural patterns around nodes. Graph transformers~\citep{dwivedi2020generalization,kreuzer2021rethinking,ying2021transformers,rampavsek2022recipe} enable broader context aggregation through global attention mechanisms.
Particularly relevant to our work are positional encodings (PE) that augment graph models with pre-computed global information. Spectral approaches~\citep{dwivedi2023benchmarking,kreuzer2021rethinking} leverage Laplacian eigenvectors to encode global connectivity patterns. SPE~\citep{huang2023stability} learns continuous soft‐partition mappings that weight each eigenvector by its eigenvalue, ensuring both provable Lipschitz stability under graph perturbations and universal expressivity for basis‐invariant functions. SignNet~\citep{lim2022sign} addresses the ambiguity of eigenvectors by computing node-wise features through a learned function, followed by MLPs to produce sign-invariant representations. 
These methods demonstrate that injecting pre-computed global structural information can substantially improve performance on tasks requiring holistic graph understanding.

\section{Benchmark Statistics}\label{app:Statistics}
Table~\ref {tab:dataset_statistics} provides a comprehensive overview of our benchmark statistics across all four tasks. Each task is carefully designed with appropriate sample sizes and node ranges to enable robust evaluation. To gain deeper insights into the distribution characteristics of our regression tasks, we visualized the distributions of bridge counts and spectral gaps across different graph types and dataset splits in Figures~\ref{Distribution of bridge counts} and~\ref{Distribution of spectral gaps}. 

\begin{table}[h]
\centering \caption{Dataset statistics across our four benchmark tasks. Each cell shows the number of graphs followed by the node count range in parentheses. Topology Classification is a $6$-way classification task, Symmetry Classification is a $2$-way classification task, while Spectral Gap and Bridge Count are regression tasks.}
\begin{tabular}{lccccc}
\toprule
\textbf{Split} & \textbf{Topology} & \textbf{Symmetry} & \textbf{Spectral Gap} & \textbf{Bridge Count} \\
\midrule
\textbf{Train} & 3000 & 2000 & 3000 & 2500 \\
 & (20-50) & (30-60) & (20-50) & (20-50) \\
\midrule
\textbf{Val} & 300 & 200 & 300 & 250 \\
 & (20-50) & (30-60) & (20-50) & (20-50) \\
\midrule
\textbf{Test (ID)} & 300 & 600 & 300 & 250 \\
 & (20-50) & (30-60) & (20-50) & (20-50) \\
\midrule
\textbf{Test (Near-OOD)} & 300 & 600 & 300 & 250 \\
 & (40-100) & (50-100) & (40-100) & (40-100) \\
\midrule
\textbf{Test (Far-OOD)} & 300 & 600 & 300 & 250 \\
 & (60-150) & (70-150) & (60-150) & (60-150) \\
\bottomrule
\end{tabular}
\label{tab:dataset_statistics}
\end{table}

\begin{figure}[htbp]
    \centering
    \includegraphics[width=0.24\textwidth]{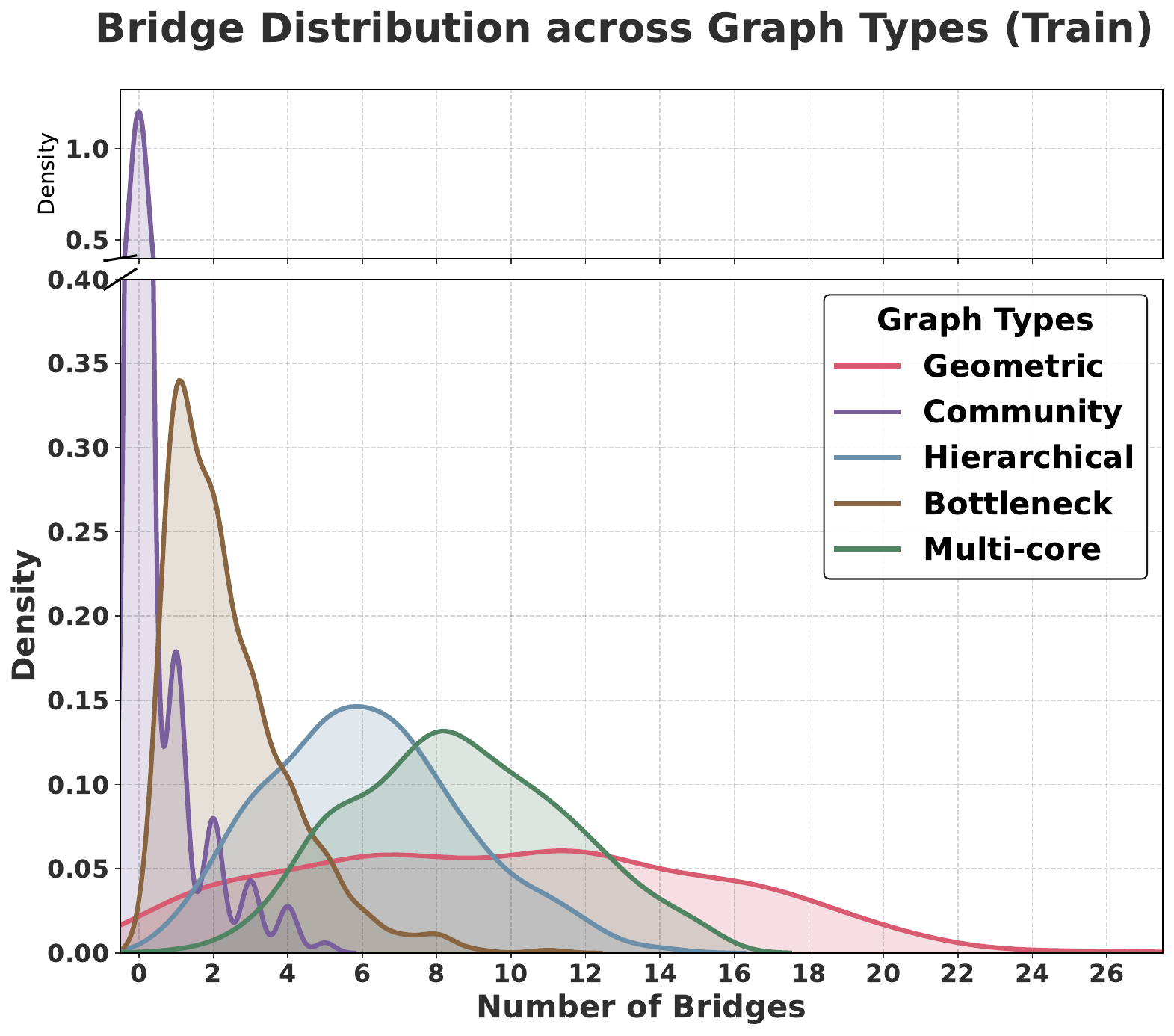}
    \includegraphics[width=0.24\textwidth]{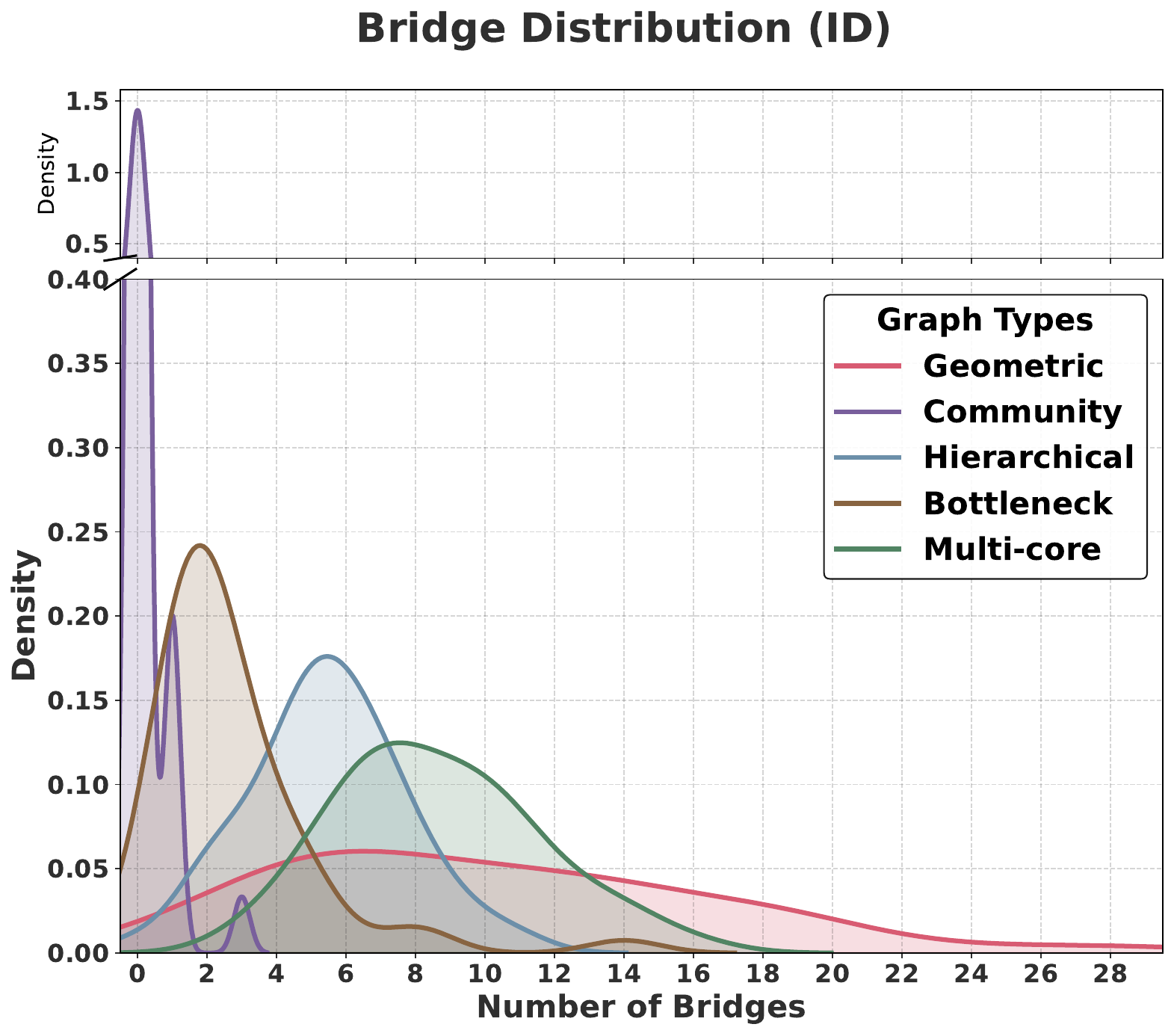}
    \includegraphics[width=0.24\textwidth]{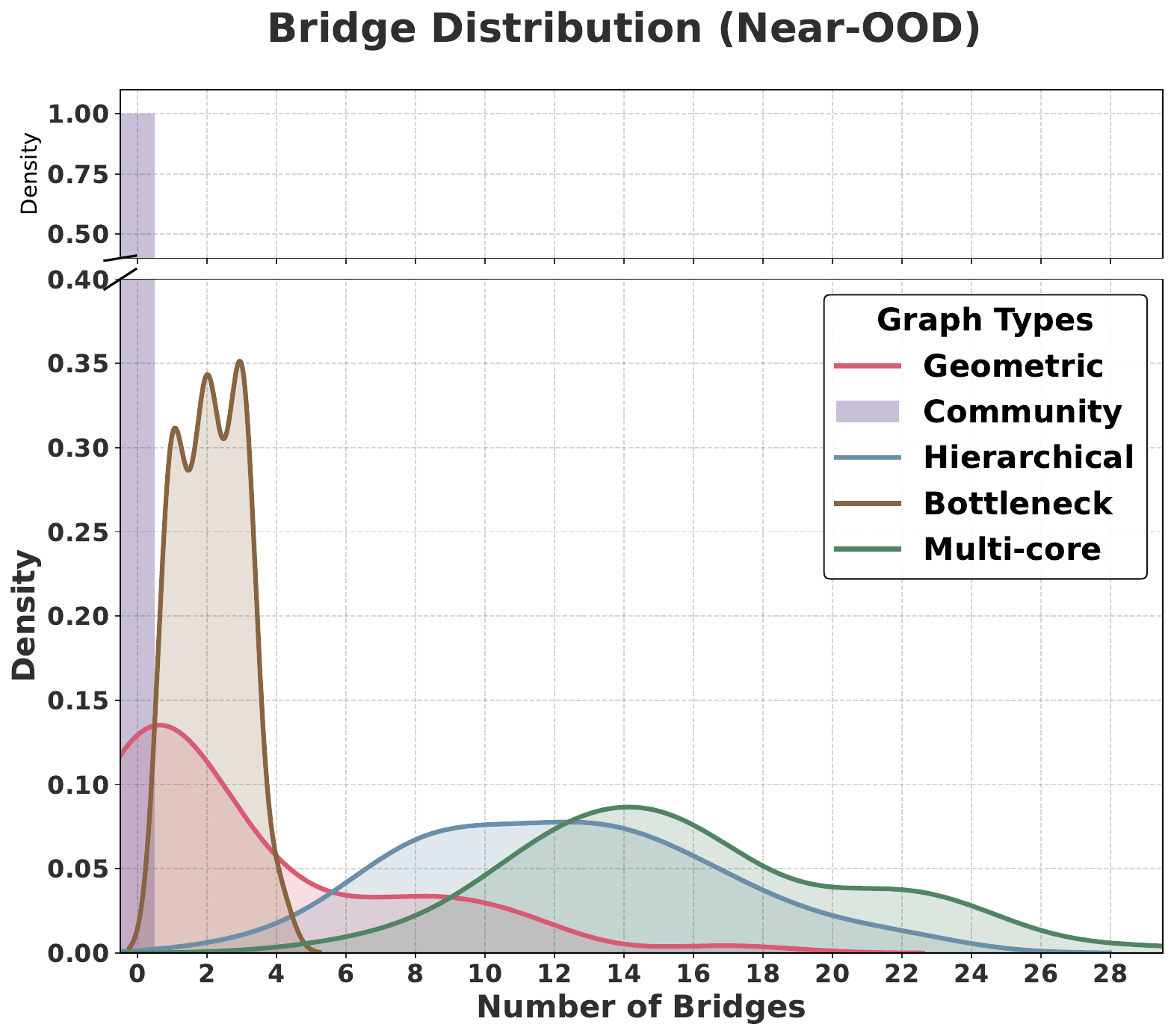}
    \includegraphics[width=0.24\textwidth]{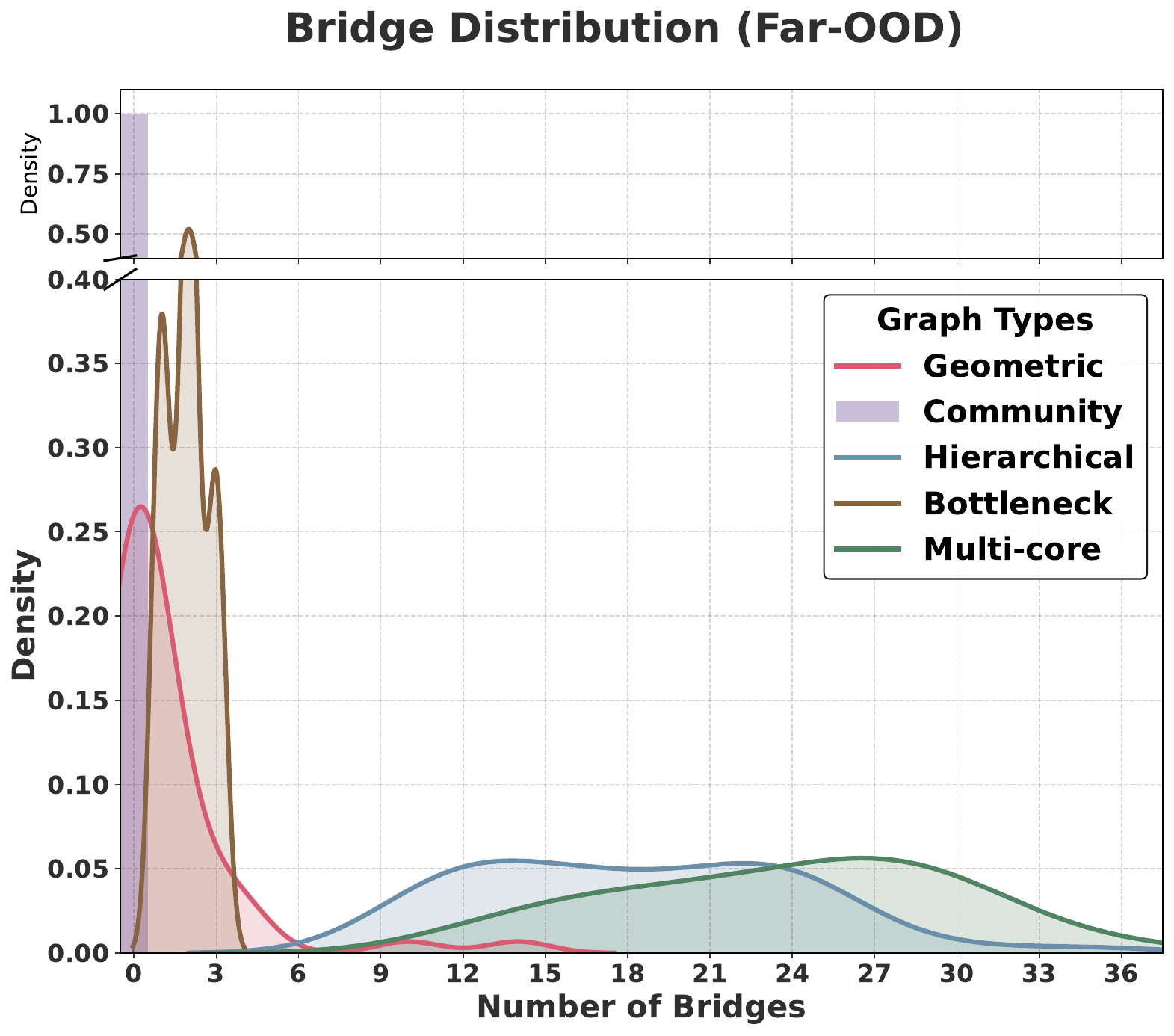}
    \caption{Distribution of bridge counts across different graph types under various settings (Train, ID, Near-OOD, and Far-OOD). The plots reveal distinct bridge count patterns for each graph structure (Geometric, Community, Hierarchical, Bottleneck, and Multicore). Notably, the distributions exhibit shifts as graph sizes increase, particularly visible in the OOD scenarios.}\label{Distribution of bridge counts}
\end{figure}

\begin{figure}[t]
    \centering
    \includegraphics[width=0.24\textwidth]{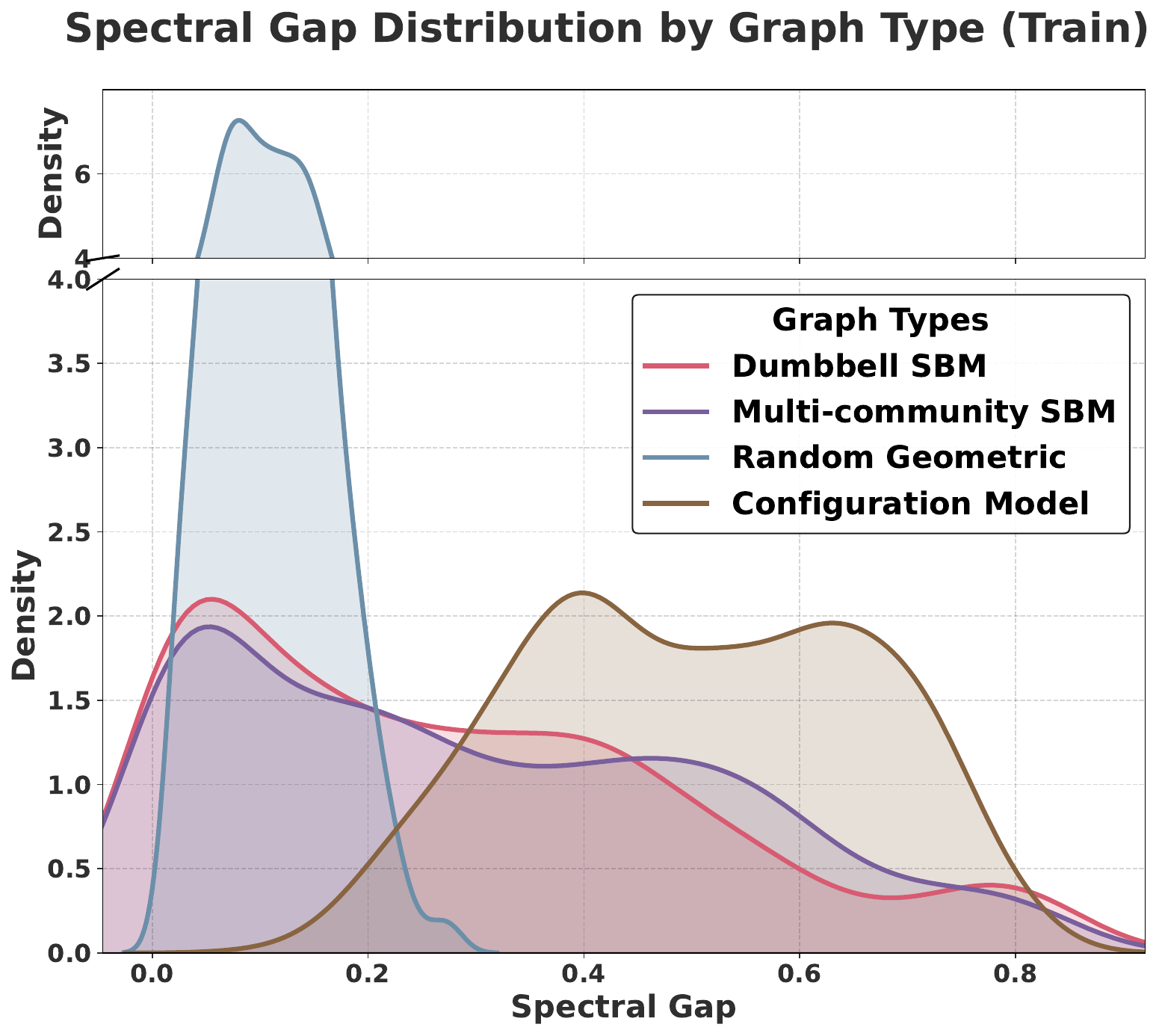}
    \includegraphics[width=0.24\textwidth]{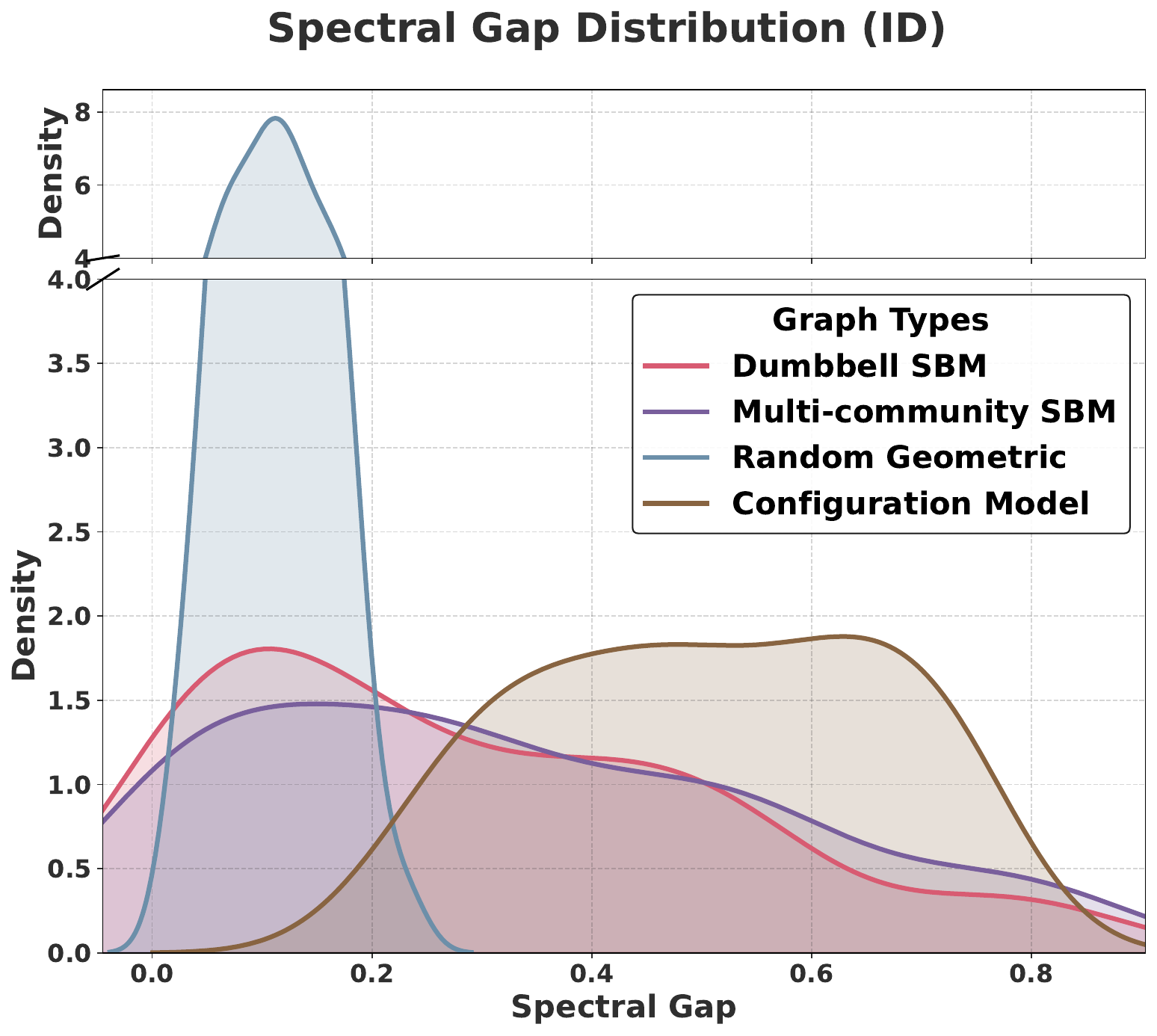}
    \includegraphics[width=0.24\textwidth]{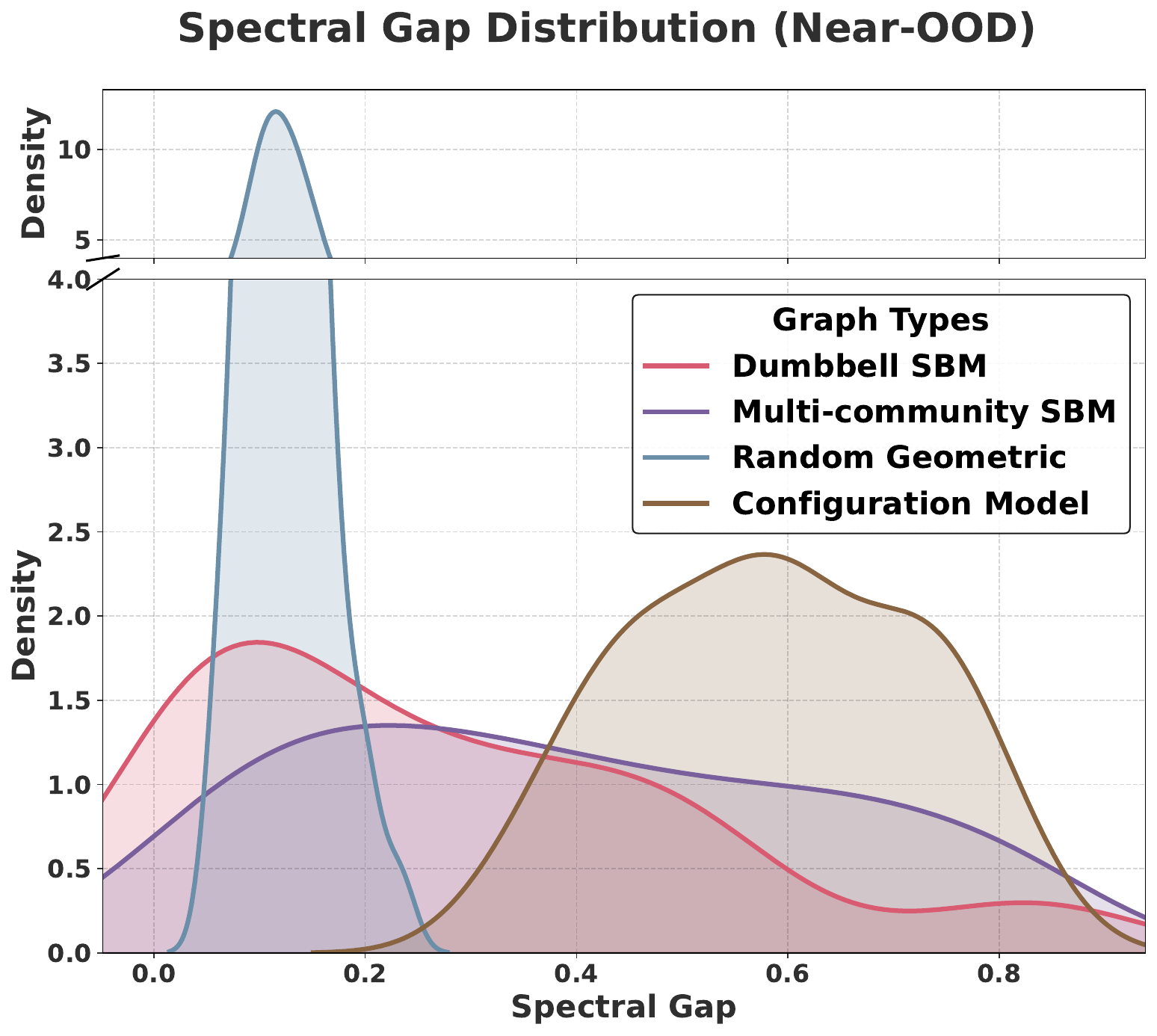}
    \includegraphics[width=0.24\textwidth]{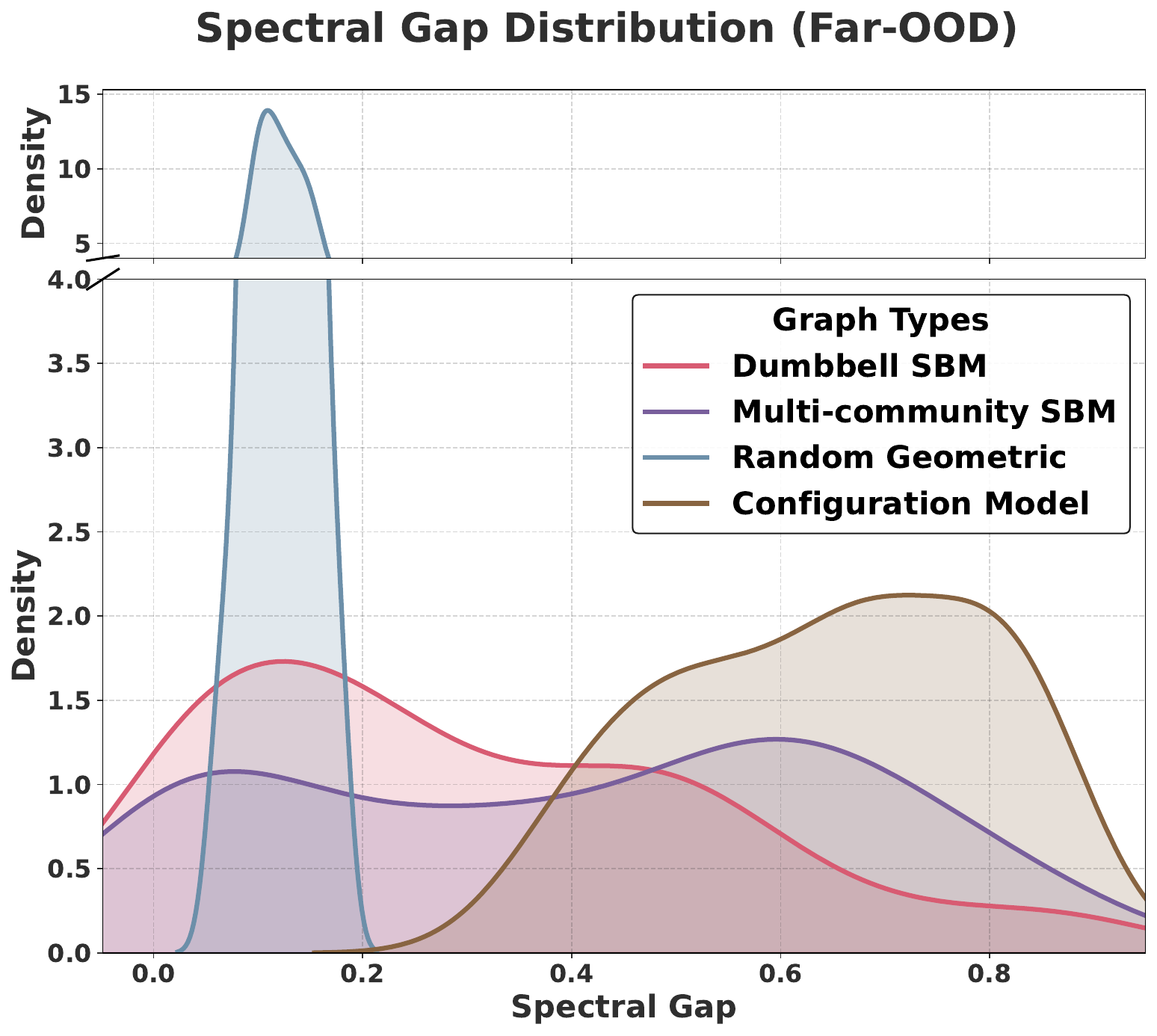}
    \caption{Distribution of spectral gaps across different graph types under various settings (Train, ID, Near-OOD, and Far-OOD). The plots reveal distinct spectral gap patterns for each graph structure. Notably, the distributions exhibit shifts as graph sizes increase.}\label{Distribution of spectral gaps}
\end{figure}

\section{Implementation Details}
\label{app:implementation}

This appendix provides comprehensive implementation details for our benchmark tasks, including dataset generation methodology, model architectures, and training protocols.

\subsection{Dataset Generation}
\label{app:dataset}

Our benchmark consists of four distinct graph understanding tasks. For each task, we generate three test sets with varying difficulty: \textbf{ID} (same distribution as training), \textbf{Near-OOD} (moderate distribution shift), and \textbf{Far-OOD} (significant distribution shift). All datasets are implemented using PyTorch Geometric.

\subsubsection{Topology Classification Dataset Generation}
For the topology classification task, we implemented six distinct topology generators, each producing graphs with visually and structurally distinct patterns.

\textbf{Cyclic Structure.} We generate annular random geometric graphs (ARGG) with inner radius randomly sampled from $[0.7, 1.2]$, outer radius set as $r_{inner} + \text{random}(0.05, 0.3)$, and connection radius randomly sampled from $[0.5, 0.8]$. 

\textbf{Random Geometric Graph.} Nodes are distributed uniformly in a unit square with connection radius randomly sampled from $[0.15, 0.25]$. Connections are established between nodes within this radius of each other. We ensure connectedness by adding edges between disconnected components if necessary.

\textbf{Community Structure.} We generate communities with $3$ to $5$ groups (based on graph size), intra-community edge probability from $[0.6, 0.8]$, and inter-community edge probability from $[0.01, 0.05]$. This creates densely connected communities with sparse connections between them, reflecting real-world community structures.

\textbf{Hierarchical Hub Structure.} We implement a multi-level hierarchical organization with $2$ to $4$ levels. Level size ratio decreases by a factor of approximately $0.4$ with higher levels. Intra-level connectivity decreases for lower levels, calculated as $0.7 \times \frac{\text{num\_levels} - \text{level}}{\text{num\_levels}}$. Each node connects to $1$-$3$ nodes in the level above it, creating graphs with clear hierarchical organization from top to bottom.

\textbf{Bottleneck Structure.} We generate bottleneck topologies with $2$ to $4$ communities, intra-community edge probability from $[0.4, 0.6]$, and bottleneck width of $1$ to $3$nodes connecting adjacent communities. This creates graphs with distinct modules connected by narrow pathways.

\textbf{Multi-core Periphery Structure.} We generate multi-core networks with $2$ to $3$ cores occupying $50$-$60$\% of total nodes (minimum $4$ nodes per core). Core internal probability ranges from $[0.6, 0.8]$ with $1$-$2$ bridging nodes between each core pair. Periphery nodes are randomly connected to $1$-$2$ nodes in a randomly selected core, creating structures with multiple highly connected centers surrounded by sparsely connected peripheral nodes.

\subsubsection{Symmetry Classification Dataset Generation}\label{app:datasets:symm}
For symmetry classification, we employ several theoretically grounded approaches to generate both symmetric and asymmetric graphs.

\textbf{Real-world Base Graph Extraction.} To enhance diversity and realism in our generated graphs, we extracted a collection of base graphs from real-world datasets. For MUTAG, we directly utilized the molecular graphs. For Cora, which is a large citation network, we employed three different sampling techniques to obtain a diverse set of subgraphs: \textbf{(1)} BFS sampling from random starting nodes. \textbf{(2)} Random walks with restart probability $\alpha = 0.2$, which were explored by previous work~\citep{tang2025grapharena}. \textbf{(3)} GraphSAGE-style neighborhood sampling with controlled layer expansion.
For each target node size range ($5$-$50$ nodes), we ensured a sufficient number of base graphs (approximately $30$ graphs per size) to support subsequent operations like Cartesian products and graph covers. These real-world base graphs were cached and reused across different generation methods, providing consistent structural patterns while maintaining diversity. All base graphs were processed to ensure connectedness, and their node indices were normalized to ensure compatibility with our generation pipelines.

We implemented five distinct methods to generate symmetric graphs:

\textbf{Cayley Cyclic Graphs.} We construct Cayley graphs based on cyclic groups $\mathbb{Z}_n$. For a given node count $n$, we identify valid generators (elements coprime to $n$), randomly select $1$-$3$ generators, and create connections according to the Cayley graph definition. Each node $i \in \mathbb{Z}_n$ is connected to nodes $(i + g) \bmod n$ and $(i - g) \bmod n$ for each generator $g$, ensuring that the graph exhibits the algebraic symmetry of the cyclic group.

\textbf{Bipartite Double Cover.} We create bipartite double covers from various base graphs (random graphs, community structures, bottleneck structures, and real-world data). For a base graph $\mathcal{G} = (V, E)$, the bipartite double cover $\tilde{\mathcal{G}} = (\tilde{V}, \tilde{E})$ is constructed with $\tilde{V} = V \times {0, 1}$ and edges $\tilde{E} = {((u, 0), (v, 1)), ((u, 1), (v, 0)) \mid (u, v) \in E}$. Base graphs have approximately half the target node count. For random base graphs, edge probability is from $[0.15, 0.3]$. For community-based graphs, intra-community probability is from $[0.3, 0.7]$ and inter-community probability is from $[0.05, 0.15]$. For bottleneck base graphs, internal connectivity probability is from $[0.3, 0.6]$ with bottleneck width of $1$-$3$ nodes.

\textbf{Cartesian Product.} We generate cartesian products of known symmetric components with combinations including cycle graphs $C_n \square P_m$ (cycle $\square$ path), $C_n \square C_m$ (cycle $\square$ cycle), and $P_n \square S_m$ (path $\square$ star). The cartesian product $G_1 \square G_2$ contains vertices $V(\mathcal{G}_1) \times V(\mathcal{G}_2)$ with edges between $(u_1, v_1)$ and $(u_2, v_2)$ if either $u_1 = u_2$ and $(v_1, v_2) \in E(G_2)$, or $v_1 = v_2$ and $(u_1, u_2) \in E(G_1)$. Factors are chosen to optimize the product size to be close to the target node count.

\textbf{Cartesian Product with Real Data.} We compute Cartesian products using real-world graph data, selected from the MUTAG and Cora datasets. It is important to note that the Cartesian product preserves symmetry only when both base graphs are symmetric. Since real-world graphs typically lack perfect symmetry, we use the \texttt{pynauty}\footnote{https://github.com/pdobsan/pynauty} library to verify the symmetry of each generated graph and filter accordingly. The introduction of real-world graphs enables us to generate more diverse and structurally realistic graphs, including both symmetric and asymmetric variants with complex topological features that purely synthetic generators cannot easily produce.

\textbf{Real Data Cyclic Cover.} We construct $k$-fold cyclic covers using real data as base graphs, with the number of layers ($k$) from $2$ to $5$ based on target size. For a base graph $G = (V, E)$, the $k$-fold cyclic cover creates $k$ copies of $V$ with edges connecting corresponding vertices across consecutive layers in a cyclic pattern. Formally, vertices in the cover are $V \times \mathbb{Z}_k$, and for each edge $(u, v) \in E$, we add edges $((u, i), (v, (i+1) \bmod k))$ and $((v, i), (u, (i+1) \bmod k))$ for all $i \in \mathbb{Z}_k$.

We implemented two primary approaches to generate asymmetric graphs:

\textbf{Perturbed Asymmetric Graphs.} We start with symmetric graphs and apply targeted edge perturbations via double-edge swap operations. We select two edges $(a,b)$ and $(c,d)$ and replace them with edges $(a,d)$ and $(c,b)$ if they don't already exist. This operation maintains the degree distribution while potentially breaking symmetry. We verify that connectivity is preserved and symmetry is broken, applying up to $20$ swap attempts. Each generated graph is verified to ensure $|\text{Aut}(G)| = 1$, meaning the only automorphism is the identity mapping.

\textbf{Cartesian Products with Real Graphs.} We leverage the inherent asymmetry of real-world data through Cartesian products involving the previously extracted real-world networks. It's important to note that while the Cartesian product operation naturally creates repeating structural patterns (as each vertex from one graph is combined with every vertex from the other graph), these repeating patterns do not necessarily translate into mathematical symmetry (automorphisms). The resulting structure may contain similar local neighborhoods but still lack the global permutation invariance required for true symmetry. This distinction is crucial - Cartesian products create structural regularity that can be challenging for models to analyze, but often without introducing the simplifying symmetry properties that might make the task easier. All generated graphs are rigorously verified using \texttt{pynauty} to confirm their asymmetric nature ($|\text{Aut}(G)| = 1$) before inclusion in the dataset.

\subsubsection{Spectral Gap Regression}
For spectral gap regression, we employed a mixture of techniques to generate graphs with diverse spectral properties:

\textbf{SBM Evolution.} Using Stochastic Block Models(SBM) with controlled mixing, including SBM-dumbbell structure (two equal-sized communities) and SBM-multi communities structure (3-5 communities). Within-block probability ranges from $[0.6, 0.8]$. Between-block probability varies with mixing parameter $\mu \in [0,1]$: $[0.001, 0.02]$ when $\mu < 0.1$, $[0.02, 0.1]$ when $\mu < 0.3$, $[0.1, 0.3]$ when $\mu < 0.7$, and $[0.3, \mu]$ when $\mu \geq 0.7$. The spectral gap $\lambda_2$ of these graphs is strongly influenced by the between-block connectivity.

\textbf{Geometric Evolution.} Modified random geometric graphs with base radius calculated to ensure basic connectivity. Additional connections are added based on mixing parameter $\mu$, with the number of extra connections approximately equal to $\mu \times 0.1 \times n \times \ln(n)$. Higher $\mu$ values create more small-world-like properties, affecting the spectral gap.

\textbf{Configuration Model.} Starting with an SBM or geometric base graph, we rewire edges using the configuration model while preserving the degree distribution. The randomization level controls the fraction of edges rewired, ranging from $[0.3, 0.8]$. This process disrupts the original structure while maintaining the degree sequence, often resulting in graphs with different spectral properties.

To ensure comprehensive coverage of the spectral gap value range, we strategically sample the mixing parameter $\mu$, with $40$\% of samples using low connectivity ($\mu < 0.2$), $30$\% using medium connectivity ($\mu \in [0.2, 0.5]$), and $30$\% using higher connectivity ($\mu \in [0.5, 0.8]$). The spectral gap $\lambda_2$ is computed as the second-smallest eigenvalue of the normalized Laplacian matrix $\mathcal{L} = I - D^{-1/2}AD^{-1/2}$, where $D$ is the degree matrix and $A$ is the adjacency matrix.

\subsubsection{Bridge Counting Dataset Generation}
For the bridge counting task, we generate diverse graph structures using five topology generators similar to those used in the topology classification task. These include Random Geometric Graphs with connection radius $r \in [0.15, 0.25]$, Community Structures with $3$-$5$ communities and controlled density parameters, Hierarchical Structures with $2$-$4$ levels, Bottleneck Configurations with single-connection bottlenecks between adjacent communities, and Multi-core Structures with $2$-$3$ cores and peripheral attachments.
For each graph, we compute the exact number of bridges using \texttt{NetworkX}'s bridge detection algorithm\footnote{https://networkx.org}, which identifies edges whose removal would increase the number of connected components in the graph.

\subsection{Node Features and Positional Encodings}
\label{app:features}

All positional encodings are incorporated into our GNN models with 16 dimensions. For node features, we use one-hot degree encoding with a maximum degree of $100$ across all graph datasets.

\subsection{Model Architecture and Hyperparameters}
\label{app:models}

For our graph neural network models, we experiment with varying numbers of layers ranging from $2$ to $4$, with a consistent hidden dimension size of $128$ across all architectures. Dropout with a rate of $0.5$ is applied throughout the networks to prevent overfitting. 
For vision-based models, we use standard architectures: ResNet-50, ViT-B/16, Swin Transformer-Tiny, and ConvNeXtV2-Tiny. All models resize graph images to $224$×$224$ resolution as input.

\subsection{Training Protocol}
\label{app:training}

All models are trained with a batch size of $128$ for a maximum of $200$ epochs, employing early stopping with a patience of $30$ epochs to prevent overfitting. We use the Adam optimizer with different learning rates: $1e-5$ for vision backbone parameters, $1e-3$ for GNN models and classifier heads. Weight decay is set to $1e-4$ for vision models.
For classification tasks (Topology, Symmetry), we use cross-entropy loss, while for regression tasks (Spectral Gap, Bridge Counting), we employ mean squared error loss. 
All experiments are conducted on 4 NVIDIA A800 GPUs. For consistent evaluation, we measure accuracy for classification tasks, while regression tasks use Mean Absolute Error (MAE). To ensure reproducibility, we set fixed random seeds $\in [0,1,2,3,4]$ for all experiments, controlling the initialization of model parameters, data splitting.

\subsection{Graph Image Generation for Vision Models}
\label{app:images}

For vision-based models, we render graph visualizations with specific parameters to ensure visual consistency. Nodes are rendered as skyblue circles with white borders and size $50$, while edges are rendered as white lines with width $1.5$ and alpha $0.8$. Each graph is rendered once for each dataset split, ensuring consistent visual representation across training and evaluation.

\clearpage
\section{Proof of Symmetry in Graph Coverings}
\label{app:symmetry_proof}

In this section, we provide formal proofs of the symmetry properties stated in the main text. For notational simplicity, we use $H$ to denote a cover graph throughout these proofs. This corresponds to $\tilde{G}$ (bipartite double cover) and $G_k$ ($k$-fold cyclic cover) in the main text.

\subsection{Definitions}

\begin{definition}[Automorphism]
An automorphism of a graph $\mathcal{G}$ is a bijection $\phi: V(\mathcal{G}) \rightarrow V(\mathcal{G})$ such that for any $u,v \in V(\mathcal{G})$:
\begin{equation}
(u,v) \in E(\mathcal{G}) \iff (\phi(u),\phi(v)) \in E(\mathcal{G})
\end{equation}
\end{definition}

\begin{definition}[Automorphism Group]
The set of all automorphisms of $\mathcal{G}$, denoted $\mathrm{Aut}(\mathcal{G})$, forms a group under function composition.
\end{definition}

\begin{definition}[Symmetry]
A graph $\mathcal{G}$ is symmetric if $|\mathrm{Aut}(\mathcal{G})| > 1$, i.e., it admits at least one non-identity automorphism.
\end{definition}

\begin{definition}[Bipartite Double Cover]
For a graph $\mathcal{G}$, its bipartite double cover $H$ is constructed by:
\begin{itemize}
    \item Creating two vertices $(v,0)$ and $(v,1)$ in $H$ for each vertex $v$ in $\mathcal{G}$
    \item Creating edges $((u,0),(v,1))$ and $((u,1),(v,0))$ in $H$ for each edge $(u,v)$ in $\mathcal{G}$
\end{itemize}
\end{definition}

\begin{definition}[$k$-fold Cyclic Cover]
For a graph $\mathcal{G}$ (without self-loops), its $k$-fold cyclic cover $H$ is constructed by:
\begin{itemize}
    \item Creating $k$ vertices $(v,0), (v,1), \ldots, (v,k-1)$ in $H$ for each vertex $v$ in $G$
    \item Creating edges $((u,i),(v,(i+1) \bmod k))$ and $((v,i),(u,(i+1) \bmod k))$ in $H$ for each edge $(u,v)$ in $G$ and each $i \in \{0,1,\ldots,k-1\}$
\end{itemize}
\end{definition}

\subsection{Symmetry of Bipartite Double Cover}

\begin{theorem}
For any graph $\mathcal{G}$, its bipartite double cover $H$ satisfies $|\mathrm{Aut}(H)| > 1$.
\end{theorem}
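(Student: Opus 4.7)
The plan is to exhibit an explicit non-identity automorphism of $H$, namely the layer-swap map
\[
\sigma: V(H) \to V(H), \qquad \sigma((v, i)) = (v, 1 - i),
\]
and verify it lies in $\mathrm{Aut}(H)$. This immediately gives $|\mathrm{Aut}(H)| \geq 2$, hence $|\mathrm{Aut}(H)| > 1$.

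First I would check that $\sigma$ is well-defined and bijective: since $1 - i \in \{0,1\}$ whenever $i \in \{0,1\}$, $\sigma$ maps $V(H)$ into itself, and because $\sigma \circ \sigma$ sends $(v,i)$ to $(v, 1 - (1-i)) = (v,i)$, we have $\sigma^2 = \mathrm{id}_{V(H)}$, so $\sigma$ is its own inverse and thus a bijection. Next I would verify edge preservation in both directions using the definition of $H$. Recall $((u, a), (v, b)) \in E(H)$ if and only if $(u, v) \in E(G)$ and $\{a, b\} = \{0, 1\}$. Applying $\sigma$ to an edge $((u, a), (v, b)) \in E(H)$ yields $((u, 1 - a), (v, 1 - b))$; since $\{a, b\} = \{0, 1\}$ implies $\{1 - a, 1 - b\} = \{0, 1\}$ and $(u,v) \in E(G)$ is unchanged, the image is again in $E(H)$. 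The reverse implication follows by applying the same argument to $\sigma^{-1} = \sigma$. Thus $\sigma$ is an automorphism of $H$.

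Finally, I would argue $\sigma$ is non-trivial. Provided $V(G) \neq \emptyset$, pick any $v \in V(G)$; then $\sigma((v,0)) = (v,1) \neq (v,0)$, so $\sigma$ differs from the identity at this vertex. Combined with the identity automorphism, this gives at least two distinct elements of $\mathrm{Aut}(H)$, establishing the theorem.

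The argument has essentially no hard step: the only subtlety is the bookkeeping that edges of $H$ only connect layer $0$ to layer $1$, which is exactly the property the swap $\sigma$ preserves. The main thing to be careful about is stating the non-triviality hypothesis correctly (some non-empty vertex set), so that $\sigma \neq \mathrm{id}$ is guaranteed — a trivially vacuous edge case I would flag explicitly rather than leaving implicit.
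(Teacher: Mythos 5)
Your proposal is correct and follows essentially the same route as the paper's proof: exhibit the layer-swap map $\sigma((v,i)) = (v,1-i)$, observe it is an involution (hence bijective), check edge preservation via the characterization of edges in $H$, and note it moves every vertex, so $|\mathrm{Aut}(H)| \geq 2$. Your explicit flagging of the vacuous case $V(\mathcal{G}) = \emptyset$ is a small refinement the paper omits, but otherwise the two arguments coincide.
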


\begin{proof}
We define a mapping $\sigma: V(H) \rightarrow V(H)$ as follows:
\begin{equation}
\sigma((v,i)) = (v,1-i) \quad \forall v \in V(\mathcal{G}), i \in \{0,1\}
\end{equation}

\textbf{Step 1:} We prove $\sigma$ is a bijection.\\
Since for each $(v,i) \in V(H)$, $\sigma$ maps to exactly one element $(v,1-i) \in V(H)$, and since $\sigma(\sigma((v,i))) = \sigma((v,1-i)) = (v,i)$, $\sigma$ is its own inverse. Therefore, $\sigma$ is a bijection.

\textbf{Step 2:} We verify $\sigma$ preserves edges.\\
For any edge $e = ((u,i),(v,j)) \in E(H)$, by the definition of double cover:
\begin{equation}
(u,v) \in E(\mathcal{G}) \text{ and } j=1-i
\end{equation}

Applying $\sigma$ to both endpoints:
\begin{equation}
\sigma((u,i)) = (u,1-i) \text{ and } \sigma((v,j)) = \sigma((v,1-i)) = (v,i)
\end{equation}

Since $(u,v) \in E(\mathcal{G})$, by the definition of double cover:
\begin{equation}
((u,1-i),(v,i)) \in E(H)
\end{equation}

Thus, $(\sigma(u,i),\sigma(v,j)) \in E(H)$.

\textbf{Step 3:} We verify $\sigma$ preserves non-edges.\\
For any non-edge $((u,i),(v,j)) \notin E(H)$, there are two cases:
\begin{enumerate}
    \item $(u,v) \notin E(\mathcal{G})$: Then $((u,1-i),(v,1-j)) \notin E(H)$ by definition of double cover.
    \item $(u,v) \in E(\mathcal{G})$ but $j \neq 1-i$: Then either $i=j=0$ or $i=j=1$. After applying $\sigma$, we have $\sigma((u,i)) = (u,1-i)$ and $\sigma((v,j)) = (v,1-j)$. Since $1-i = 1-j$, we have $(\sigma(u,i),\sigma(v,j)) \notin E(H)$.
\end{enumerate}

\textbf{Step 4:} We show $\sigma$ is not the identity mapping.\\
For any vertex $(v,0) \in V(H)$:
\begin{equation}
\sigma((v,0)) = (v,1) \neq (v,0)
\end{equation}

Therefore, $\sigma$ is a non-identity automorphism of $H$, which proves $|\mathrm{Aut}(H)| > 1$.
\end{proof}

\begin{corollary}
If $|\mathrm{Aut}(\mathcal{G})| = m$, then $|\mathrm{Aut}(H)| \geq 2m$.
\end{corollary}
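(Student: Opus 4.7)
The plan is to exhibit $2m$ distinct automorphisms of $H$ by combining lifts of automorphisms of $\mathcal{G}$ with the swap automorphism $\sigma$ constructed in the preceding theorem. Concretely, for each $\phi \in \mathrm{Aut}(\mathcal{G})$ I would define a lift $\tilde{\phi}: V(H) \to V(H)$ by
\[
\tilde{\phi}((v,i)) = (\phi(v), i), \qquad v \in V(\mathcal{G}),\ i \in \{0,1\},
\]
and verify that $\tilde{\phi} \in \mathrm{Aut}(H)$. Bijectivity follows immediately from the bijectivity of $\phi$. For edge preservation, an edge of $H$ has the form $((u,i),(v,1-i))$ with $(u,v) \in E(\mathcal{G})$; applying $\tilde{\phi}$ gives $((\phi(u),i),(\phi(v),1-i))$, which lies in $E(H)$ precisely because $(\phi(u),\phi(v)) \in E(\mathcal{G})$. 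A symmetric argument handles non-edges.

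Next I would show that the assignment $\phi \mapsto \tilde{\phi}$ is injective, so the set $A := \{\tilde{\phi} : \phi \in \mathrm{Aut}(\mathcal{G})\}$ has exactly $m$ elements. Injectivity is clear: if $\tilde{\phi_1} = \tilde{\phi_2}$, then evaluating at $(v,0)$ yields $\phi_1(v) = \phi_2(v)$ for all $v$. Then I would consider the second family $B := \{\sigma \circ \tilde{\phi} : \phi \in \mathrm{Aut}(\mathcal{G})\}$, which also has $m$ elements (composition with the bijection $\sigma$ is injective). Each element of $B$ is an automorphism of $H$ as a composition of two automorphisms.

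Finally I would argue $A \cap B = \emptyset$, so $|\mathrm{Aut}(H)| \geq |A| + |B| = 2m$. The separating feature is the second coordinate: every $\tilde{\phi} \in A$ satisfies $\tilde{\phi}((v,0)) \in V(\mathcal{G}) \times \{0\}$, whereas every $\sigma \circ \tilde{\phi} \in B$ satisfies $(\sigma \circ \tilde{\phi})((v,0)) \in V(\mathcal{G}) \times \{1\}$, so no element of $A$ can equal any element of $B$.

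I do not anticipate a real obstacle here; the argument is essentially bookkeeping once one writes down the lift. The subtle point worth being careful about is the disjointness $A \cap B = \emptyset$, which is what actually doubles the count; without it one would only be guaranteed $m+1$ automorphisms (from $A \cup \{\sigma\}$). Noting that $\sigma$ acts freely on the fibre coordinate while every $\tilde{\phi}$ fixes it makes the disjointness transparent. As a side remark, the same reasoning in fact shows that $A \cup B$ is closed under composition and contains the identity, so it forms a subgroup of $\mathrm{Aut}(H)$ isomorphic to $\mathrm{Aut}(\mathcal{G}) \times \mathbb{Z}_2$, which strengthens the stated bound conceptually but is not needed for the corollary.
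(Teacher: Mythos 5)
Your proposal is correct and takes essentially the same approach as the paper: the paper also lifts each $\phi \in \mathrm{Aut}(\mathcal{G})$ to $\phi_1((v,i))=(\phi(v),i)$ and $\phi_2((v,i))=(\phi(v),1-i)$ (your $\tilde{\phi}$ and $\sigma\circ\tilde{\phi}$) and notes the resulting $2m$ automorphisms are distinct. Your version just spells out the verification (edge-preservation, injectivity of the lift, and the disjointness via the fibre coordinate) that the paper leaves implicit.
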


\begin{proof}
Every automorphism $\phi \in \mathrm{Aut}(\mathcal{G})$ induces two automorphisms on $H$:
\begin{equation}
\phi_1((v,i)) = (\phi(v),i) \quad \text{and} \quad \phi_2((v,i)) = (\phi(v),1-i)
\end{equation}

These $2m$ automorphisms are all distinct, hence $|\mathrm{Aut}(H)| \geq 2m$.
\end{proof}

\subsection{Symmetry of $k$-fold Cyclic Cover}

\begin{lemma}
For $k \geq 2$, the map $\tau: V(H) \rightarrow V(H)$ defined by $\tau((v,i)) = (v,(i+1) \bmod k)$ is an automorphism of the $k$-fold cyclic cover $H$ of any graph $\mathcal{G}$.
\end{lemma}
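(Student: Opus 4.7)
The plan is to mirror the three-step structure used for the bipartite double cover: establish that $\tau$ is a bijection, verify that it preserves edges, and verify that it preserves non-edges. The cyclic shift in the second coordinate is a very clean operation, so the argument will essentially be bookkeeping modulo $k$, and I expect no serious obstacle beyond making the index arithmetic unambiguous.

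First, I would show $\tau$ is a bijection by exhibiting its inverse explicitly: define $\tau^{-1}((v,i)) = (v,(i-1)\bmod k)$, and observe that $\tau \circ \tau^{-1}$ and $\tau^{-1}\circ \tau$ both act as the identity on $V(H)$. Equivalently, one can note that the $k$-fold composition $\tau^{k}$ is the identity, so $\tau$ generates a cyclic subgroup of the symmetric group on $V(H)$ and is in particular a permutation.

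Next, for edge preservation, I would take an arbitrary edge of $H$, which by the definition of the $k$-fold cyclic cover has the form $e = ((u,i),(v,(i+1)\bmod k))$ for some $(u,v) \in E(\mathcal{G})$ and some layer index $i$. Applying $\tau$ to both endpoints yields $((u,(i+1)\bmod k),(v,(i+2)\bmod k))$. Setting $i' = (i+1)\bmod k$, this has the form $((u,i'),(v,(i'+1)\bmod k))$ with the same underlying edge $(u,v) \in E(\mathcal{G})$, hence lies in $E(H)$. The converse direction (if $(\tau(x),\tau(y)) \in E(H)$ then $(x,y)\in E(H)$) follows immediately by applying the same argument to $\tau^{-1}$, which has the identical form as $\tau$ up to replacing $+1$ with $-1$ modulo $k$; alternatively, non-edges can be handled directly by observing that every edge of $H$ is determined by an edge of $\mathcal{G}$ together with a pair of consecutive layers, so if the shifted pair is not of this form, neither is the original.

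The only step requiring any care is ensuring that the edge set of the cover is correctly characterized, i.e., that every edge of $H$ indeed arises as $((u,i),(v,(i+1)\bmod k))$ for some $(u,v)\in E(\mathcal{G})$, so that the argument is not missing any edge type; this is immediate from the definition since $\mathcal{G}$ is assumed to have no self-loops. Finally, to note that $\tau$ is non-trivial (needed for the corollary $|\mathrm{Aut}(H)|\geq k$ that the main text appeals to), I would observe that for any $v\in V(\mathcal{G})$ we have $\tau((v,0)) = (v,1)\neq (v,0)$ since $k\geq 2$. This completes the verification that $\tau \in \mathrm{Aut}(H)$.
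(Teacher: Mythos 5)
Your proposal is correct and follows essentially the same route as the paper: exhibit the inverse $\tau^{-1}((v,i))=(v,(i-1)\bmod k)$ to show bijectivity, then check edge preservation by the index shift $((u,i),(v,(i+1)\bmod k))\mapsto((u,(i+1)\bmod k),(v,(i+2)\bmod k))$, and finally note $\tau((v,0))=(v,1)\neq(v,0)$ for non-triviality. The one cosmetic difference is that the paper verifies non-edge preservation by a direct two-case analysis ($(u,v)\notin E(\mathcal{G})$, or $(u,v)\in E(\mathcal{G})$ with mismatched layer indices), whereas you obtain the same conclusion more economically by applying the edge-preservation argument to $\tau^{-1}$; both are valid and interchangeable.
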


\begin{proof}
\textbf{Step 1:} We prove $\tau$ is a bijection.\\
For each $(v,i) \in V(H)$, $\tau$ maps to exactly one element $(v,(i+1) \bmod k)$. The inverse is defined by $\tau^{-1}((v,i)) = (v,(i-1) \bmod k)$. Hence, $\tau$ is a bijection.

\textbf{Step 2:} We verify $\tau$ preserves edges.\\
Let $e = ((u,i),(v,(i+1) \bmod k)) \in E(H)$. By definition of $k$-fold cover, $(u,v) \in E(\mathcal{G})$.

Applying $\tau$ to both endpoints:
\begin{equation}
\begin{aligned}
\tau((u,i)) &= (u,(i+1) \bmod k)\\
\tau((v,(i+1) \bmod k)) &= (v,(i+2) \bmod k)
\end{aligned}
\end{equation}

Since $(u,v) \in E(\mathcal{G})$, by definition of $k$-fold cover:
\begin{equation}
((u,(i+1) \bmod k),(v,(i+2) \bmod k)) \in E(H)
\end{equation}

\textbf{Step 3:} We verify $\tau$ preserves non-edges.\\
For any non-edge $((u,i),(v,j)) \notin E(H)$, there are two cases:
\begin{enumerate}
    \item $(u,v) \notin E(\mathcal{G})$: Then $((u,(i+1) \bmod k),(v,(j+1) \bmod k)) \notin E(H)$ by definition.
    \item $(u,v) \in E(\mathcal{G})$ but $j \neq (i+1) \bmod k$: After applying $\tau$, we have 
    \begin{equation}
    \begin{aligned}
    \tau((u,i)) &= (u,(i+1) \bmod k)\\
    \tau((v,j)) &= (v,(j+1) \bmod k)
    \end{aligned}
    \end{equation}
    Since $(j+1) \bmod k \neq (i+2) \bmod k$ when $j \neq (i+1) \bmod k$, we have $(\tau(u,i),\tau(v,j)) \notin E(H)$.
\end{enumerate}

\textbf{Step 4:} For $k \geq 2$, $\tau$ is not the identity mapping.\\
For any vertex $(v,0) \in V(H)$:
\begin{equation}
\tau((v,0)) = (v,1) \neq (v,0)
\end{equation}

Therefore, $\tau$ is a non-identity automorphism of $H$.
\end{proof}

\begin{theorem}
For any graph $\mathcal{G}$ and $k \geq 2$, the $k$-fold cyclic cover $H$ admits an automorphism group containing a subgroup isomorphic to $\mathbb{Z}_k$, thus satisfying $|\mathrm{Aut}(H)| \geq k$.
\end{theorem}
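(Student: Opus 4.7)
The plan is to leverage the preceding lemma, which already establishes that the shift map $\tau((v,i)) = (v, (i+1) \bmod k)$ belongs to $\mathrm{Aut}(H)$, and then show that the cyclic subgroup it generates has order exactly $k$. Since the statement asks only for a subgroup isomorphic to $\mathbb{Z}_k$, the natural candidate is $\langle \tau \rangle \subseteq \mathrm{Aut}(H)$.

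First, I would verify by induction on $j \geq 0$ that $\tau^j((v,i)) = (v, (i+j) \bmod k)$ for every $(v,i) \in V(H)$. The base case $j=0$ is the identity, and the inductive step follows directly from composing with $\tau$ once more. This computation is routine and requires no new ideas beyond those in the lemma.

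Next, I would use this formula to determine the order of $\tau$. Observe that $\tau^j$ is the identity automorphism if and only if $(i+j) \equiv i \pmod{k}$ for every $i$, which holds if and only if $k \mid j$. Hence the smallest positive $j$ with $\tau^j = \mathrm{id}$ is $j=k$, so $\tau$ has order exactly $k$ in the group $\mathrm{Aut}(H)$. Consequently, the subgroup $\langle \tau \rangle = \{\mathrm{id}, \tau, \tau^2, \ldots, \tau^{k-1}\}$ consists of $k$ distinct automorphisms, and the map $j \bmod k \mapsto \tau^j$ is a group isomorphism $\mathbb{Z}_k \to \langle \tau \rangle$. Lagrange's theorem then gives $|\mathrm{Aut}(H)| \geq |\langle \tau \rangle| = k$.

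I do not anticipate a genuine obstacle here: the lemma has already done the structural work of verifying edge/non-edge preservation, so the remaining task is purely group-theoretic bookkeeping. The only subtlety worth flagging explicitly is confirming that the $k$ powers $\tau^0, \tau^1, \ldots, \tau^{k-1}$ are pairwise distinct as functions on $V(H)$, which follows immediately from the explicit formula $\tau^j((v,i)) = (v,(i+j)\bmod k)$ applied to any fixed base vertex.
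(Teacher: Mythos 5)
Your proof is correct and follows essentially the same route as the paper's: both identify the shift $\tau$ from the preceding lemma, establish $\tau^j((v,i)) = (v,(i+j)\bmod k)$, show the $k$ powers $\tau^0,\ldots,\tau^{k-1}$ are pairwise distinct with $\tau^k = \mathrm{id}$, and conclude $\langle\tau\rangle \cong \mathbb{Z}_k$ is a subgroup of $\mathrm{Aut}(H)$, whence $|\mathrm{Aut}(H)| \geq k$. The only minor differences are cosmetic — you justify the formula for $\tau^j$ by induction and invoke Lagrange's theorem, whereas the paper just asserts the formula and notes the subgroup size directly; neither changes the substance.
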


\begin{proof}
Consider the mappings $\tau, \tau^2, \ldots, \tau^{k-1}, \tau^k$, where $\tau$ is the automorphism defined in the lemma above. We have:
\begin{equation}
\tau^j((v,i)) = (v,(i+j) \bmod k)
\end{equation}

\textbf{Step 1:} The mappings $\tau^0=\text{id}, \tau^1, \tau^2, \ldots, \tau^{k-1}$ are all distinct.\\
For $0 \leq j_1 < j_2 < k$, there exists $(v,0) \in V(H)$ such that:
\begin{equation}
\tau^{j_1}((v,0)) = (v,j_1) \neq (v,j_2) = \tau^{j_2}((v,0))
\end{equation}

\textbf{Step 2:} These mappings form a cyclic subgroup of order $k$.\\
Since $\tau^k((v,i)) = (v,(i+k) \bmod k) = (v,i)$, we have $\tau^k = \text{id}$. This means $\tau$ generates a cyclic group of order $k$ isomorphic to $\mathbb{Z}_k$.

Therefore, $\mathrm{Aut}(H)$ contains a subgroup isomorphic to $\mathbb{Z}_k$, implying $|\mathrm{Aut}(H)| \geq k$.
\end{proof}

\subsection{Algorithmic Implementation}

The implementation of graph coverings in our code precisely follows the mathematical constructions in the above definitions:

\begin{algorithm}[H]
\caption{Generate Bipartite Double Cover}
\label{alg:double_cover}
\begin{algorithmic}[1]
\REQUIRE Base graph $\mathcal{G}(V,E)$
\ENSURE Double cover graph $H$
\STATE Initialize $H$ as empty graph
\FOR{each $v \in V$}
    \STATE Add vertices $(v,0)$ and $(v,1)$ to $H$
\ENDFOR
\FOR{each $(u,v) \in E$}
    \STATE Add edges $((u,0),(v,1))$ and $((u,1),(v,0))$ to $H$
\ENDFOR
\RETURN $H$
\end{algorithmic}
\end{algorithm}

\begin{algorithm}[H]
\caption{Generate $k$-fold Cyclic Cover from Real-world Network}
\label{alg:k_fold_cover}
\begin{algorithmic}[1]
\REQUIRE Real-world base graph $\mathcal{G}(V,E)$, integer $k \geq 2$
\ENSURE $k$-fold cyclic cover graph $H$
\STATE Initialize $H$ as empty graph
\FOR{each $v \in V$}
    \FOR{$i = 0$ to $k-1$}
        \STATE Add vertex $(v,i)$ to $H$
    \ENDFOR
\ENDFOR
\FOR{each $(u,v) \in E$}
    \FOR{$i = 0$ to $k-1$}
        \STATE Add edges $((u,i),(v,(i+1) \bmod k))$ and $((v,i),(u,(i+1) \bmod k))$ to $H$
    \ENDFOR
\ENDFOR
\RETURN $H$
\end{algorithmic}
\end{algorithm}

\clearpage

\section{Extended Experimental Results}\label{app:Extended}

This section provides detailed experimental results that support our main findings, including comprehensive computational cost analysis, model scaling experiments, advanced GNN comparisons, and resolution robustness studies.

\textbf{Computational Cost Analysis.}
Table~\ref{tab:computational_cost} presents a detailed breakdown of computational requirements across all four benchmark tasks. We report both time per epoch and total time to reach best validation accuracy. Vision models require approximately $10$× more time per epoch than GNN+SPE models across all tasks, with the ratio ranging from $8.2$× to $12.2$× depending on the specific task. This computational overhead primarily stems from the larger model capacity and more complex feature extraction in vision backbones compared to compact GNN architectures.

\textbf{Model Scaling Analysis.}
To investigate whether performance differences stem from model capacity, we conducted systematic scaling experiments with GPS+SPE. Table~\ref{tab:scaling_results} shows results for models with $53.2$M and $212.2$M parameters, substantially exceeding vision model sizes (ResNet-50: $25.6$M). Consistent with known limitations of message-passing architectures, performance degraded rather than improved with increased capacity across all tasks. For topology classification, accuracy dropped from $84.80$\% (baseline) to $82.53$\% ($212.2$M model) on Near-OOD tasks. This confirms that architectural differences, rather than parameter count, drive the observed advantages of vision models.

\textbf{Comparison with Advanced GNNs.}
Table~\ref{tab:i2gnn_comparison} compares our results with I²-GNN~\citep{huang2022boosting}, an advanced structure-aware GNN. While I²-GNN achieves competitive in-distribution performance, it shows significantly worse generalization on out-of-distribution tasks, particularly on Far-OOD settings. This demonstrates that even advanced GNN architectures with enhanced expressiveness face challenges in scale generalization compared to vision-based approaches.

\textbf{Resolution Robustness.}
Table~\ref{tab:resolution_robustness} examines how image resolution affects vision model performance. Performance remains relatively stable across different resolutions ($64$×$64$ to $448$×$448$), with some tasks even performing better at lower resolutions. This suggests that the structural patterns captured by vision models are robust to resolution changes, and extremely high resolution may not be necessary for graph structural understanding tasks.
\begin{table}[h]
\centering
\caption{Computational cost analysis across all benchmark tasks, showing time per epoch and time to best validation accuracy.}
\label{tab:computational_cost}
\resizebox{\textwidth}{!}{%
\begin{tabular}{l|cccccccc}
\toprule
\multirow{2}{*}{\textbf{Model}} & \multicolumn{2}{c}{\textbf{Topology}} & \multicolumn{2}{c}{\textbf{Symmetry}} & \multicolumn{2}{c}{\textbf{Bridge}} & \multicolumn{2}{c}{\textbf{Spectral Gap}} \\
& Time/Epoch (s) & Time to Best (s) & Time/Epoch (s) & Time to Best (s) & Time/Epoch (s) & Time to Best (s) & Time/Epoch (s) & Time to Best (s) \\
\midrule
\textbf{ConvNeXt} & 13.1 & 329.5 & 9.4 & 245.2 & 11.2 & 573.6 & 13.2 & 717.7 \\
\textbf{ResNet} & 5.5 & 120.7 & 4.2 & 76.5 & 4.8 & 127.5 & 5.6 & 312.0 \\
\textbf{Swin} & 11.5 & 368.9 & 8.2 & 123.1 & 9.8 & 227.1 & 11.6 & 348.2 \\
\textbf{ViT} & 23.1 & 509.9 & 15.8 & 127.0 & 19.5 & 858.6 & 23.2 & 1139.7 \\
\midrule
\textbf{GAT+SPE} & 1.2 & 43.2 & 0.9 & 16.3 & 1.1 & 2.5 & 1.2 & 107.7 \\
\textbf{GCN+SPE} & 1.2 & 45.1 & 0.8 & 5.3 & 1.0 & 93.3 & 1.2 & 35.1 \\
\textbf{GIN+SPE} & 1.1 & 47.0 & 0.8 & 6.1 & 1.1 & 56.0 & 1.2 & 30.6 \\
\textbf{GPS+SPE} & 1.4 & 26.9 & 1.0 & 22.0 & 1.2 & 34.9 & 1.5 & 32.7 \\
\midrule
Avg. Vision & 13.3 & 332.2 & 9.4 & 142.9 & 11.3 & 446.7 & 13.4 & 629.4 \\
Avg. GNN+SPE & 1.2 & 40.6 & 0.9 & 12.4 & 1.1 & 46.7 & 1.3 & 51.5 \\
\midrule
\textbf{Ratio (V/G)} & \textbf{10.9$\times$} & \textbf{8.2$\times$} & \textbf{10.7$\times$} & \textbf{11.5$\times$} & \textbf{10.3$\times$} & \textbf{9.6$\times$} & \textbf{10.5$\times$} & \textbf{12.2$\times$} \\
\bottomrule
\end{tabular}}
\end{table}

\begin{table}[h]
\centering
\caption{Model scaling analysis across all benchmark tasks. Scaled GPS+SPE models show degraded performance compared to baseline, confirming that architectural constraints rather than parameter count limit GNN performance.}
\label{tab:scaling_results}
\resizebox{\textwidth}{!}{%
\begin{tabular}{l|l|ccc}
\toprule
\textbf{Task} & \textbf{Model} & \textbf{ID} & \textbf{Near-OOD} & \textbf{Far-OOD} \\
\midrule
\multirow{4}{*}{\textbf{Topology (\%)}} 
& Baseline GPS+SPE & 84.80 ± 13.75 & 84.07 ± 16.04 & 72.20 ± 14.51 \\
& GPS+SPE (53.2M) & 87.73 ± 6.87 & 70.13 ± 13.03 & 40.53 ± 11.06 \\
& GPS+SPE (212.2M) & 82.53 ± 5.58 & 66.20 ± 11.42 & 34.80 ± 8.16 \\
& ResNet (25.6M) & \textbf{95.87 ± 0.62} & \textbf{96.27 ± 1.02} & \textbf{87.40 ± 3.33} \\
\midrule
\multirow{4}{*}{\textbf{Symmetry (\%)}} 
& Baseline GPS+SPE & 71.97 ± 1.65 & 70.67 ± 1.23 & 67.70 ± 1.37 \\
& GPS+SPE (53.2M) & 65.83 ± 2.21 & 65.83 ± 2.83 & 67.63 ± 3.94 \\
& GPS+SPE (212.2M) & 56.10 ± 4.47 & 55.67 ± 5.75 & 54.97 ± 4.53 \\
& ResNet (25.6M) & \textbf{93.47 ± 0.66} & \textbf{88.83 ± 0.64} & \textbf{84.20 ± 0.39} \\
\midrule
\multirow{4}{*}{\textbf{Spectral Gap (MAE)}} 
& Baseline GPS+SPE & 0.0681 ± 0.0298 & 0.1537 ± 0.0839 & 0.6716 ± 0.2709 \\
& GPS+SPE (53.2M) & 0.1483 ± 0.0210 & 0.1901 ± 0.0167 & 0.7497 ± 0.4243 \\
& GPS+SPE (212.2M) & 0.1214 ± 0.0365 & 0.2125 ± 0.0373 & 0.9101 ± 0.5625 \\
& ResNet (25.6M) & \textbf{0.0335 ± 0.0021} & \textbf{0.0600 ± 0.0063} & \textbf{0.1102 ± 0.0100} \\
\midrule
\multirow{4}{*}{\textbf{Bridge Count (MAE)}} 
& Baseline GPS+SPE & 0.6402 ± 0.1753 & 1.4666 ± 0.0713 & 3.8021 ± 1.0492 \\
& GPS+SPE (53.2M) & 1.4502 ± 0.2315 & 3.0053 ± 0.5616 & 5.6101 ± 0.8129 \\
& GPS+SPE (212.2M) & 2.1581 ± 0.8106 & 3.3334 ± 0.7540 & 5.7994 ± 1.1003 \\
& ResNet (25.6M) & \textbf{0.7771 ± 0.1095} & \textbf{1.6356 ± 0.1643} & \textbf{3.6814 ± 0.1217} \\
\bottomrule
\end{tabular}}
\end{table}

\begin{table}[h]
\centering
\caption{Comparison with I²-GNN across all benchmark tasks. I²-GNN shows strong symmetry detection but poor scale generalization.}
\label{tab:i2gnn_comparison}
\begin{tabular}{l|cc|cc}
\toprule
\textbf{Task} & \textbf{Model} & \textbf{ID} & \textbf{Near-OOD} & \textbf{Far-OOD} \\
\midrule
\multirow{2}{*}{\textbf{Topology} (\%)} 
& I²-GNN & 94.67 ± 1.49 & 87.87 ± 4.90 & 63.33 ± 5.83 \\
& Best Vision & \textbf{95.87 ± 0.62} & \textbf{97.73 ± 0.57} & \textbf{90.33 ± 4.60} \\
\midrule
\multirow{2}{*}{\textbf{Symmetry} (\%)} 
& I²-GNN & 90.80 ± 3.49 & 84.00 ± 2.53 & 83.40 ± 3.56 \\
& Best Vision & \textbf{94.03 ± 1.04} & \textbf{91.03 ± 0.56} & \textbf{85.67 ± 1.06} \\
\midrule
\multirow{2}{*}{\textbf{Spectral} (MAE)} 
& I²-GNN & 0.1044 ± 0.0091 & 0.3315 ± 0.0941 & 0.7893 ± 0.2709 \\
& Best Vision & \textbf{0.0279 ± 0.0047} & \textbf{0.0578 ± 0.0056} & \textbf{0.0946 ± 0.0094} \\
\midrule
\multirow{2}{*}{\textbf{Bridge} (MAE)} 
& I²-GNN & \textbf{0.4580 ± 0.1131} & \textbf{1.0459 ± 0.1523} & 3.7353 ± 1.0961 \\
& Best Vision & 0.6261 ± 0.0702 & 1.6338 ± 0.1675 & \textbf{3.6814 ± 0.1217} \\
\bottomrule
\end{tabular}
\end{table}

\begin{table}[h]
\centering
\caption{Impact of image resolution on vision model performance (ResNet on Symmetry classification).}
\label{tab:resolution_robustness}
\begin{tabular}{l|ccc}
\toprule
\textbf{Resolution} & \textbf{ID} & \textbf{Near-OOD} & \textbf{Far-OOD} \\
\midrule
\multicolumn{4}{c}{\textbf{Kamada-Kawai Layout}} \\
\midrule
448×448 & 80.87 ± 1.80 & 76.27 ± 2.99 & 72.60 ± 4.15 \\
224×224 & 84.30 ± 0.36 & 81.07 ± 0.56 & \textbf{75.60 ± 2.19} \\
128×128 & 84.50 ± 0.46 & \textbf{82.73 ± 0.90} & 74.77 ± 1.60 \\
64×64 & \textbf{86.20 ± 0.69} & 81.03 ± 1.50 & 72.73 ± 2.22 \\
\midrule
\multicolumn{4}{c}{\textbf{Spectral Layout}} \\
\midrule
448×448 & \textbf{93.97 ± 0.96} & \textbf{91.53 ± 0.71} & \textbf{85.93 ± 0.23} \\
224×224 & 93.17 ± 1.15 & 89.47 ± 0.46 & 83.20 ± 0.97 \\
128×128 & 93.53 ± 0.69 & 88.30 ± 0.40 & 81.23 ± 0.88 \\
64×64 & 91.97 ± 1.27 & 87.17 ± 1.48 & 80.93 ± 1.08 \\
\bottomrule
\end{tabular}
\end{table}

\clearpage
\section{Discussion: Visual vs. Message-Passing Paradigms}\label{app:discussion}

In this section, we provide an informal analysis of why vision models achieve strong performance on graph structural understanding despite lacking graph-specific inductive biases. While we lack formal theoretical proofs, several observations from our community's own research practices offer intuitions about the complementary computational mechanisms underlying these approaches.

\textbf{Different Computational Paradigms.} The two paradigms solve fundamentally different problems. GNNs take graph structure (adjacency matrices, edge lists) as input and build understanding through iterative local message passing. Layout algorithms, in contrast, perform global computations upfront: eigendecomposition for spectral layouts, energy minimization for force-directed methods. Once graphs are rendered as images, the task transforms from graph analysis to visual pattern recognition. Vision models then process geometric patterns where structural properties manifest as directly observable visual features: symmetric graphs produce symmetric layouts, clustered graphs show dense regional connections, bridges appear as narrow connectors between substructures.

\textbf{Evidence from Known GNN Limitations.} This difference becomes evident when examining tasks where GNNs face theoretical limitations. When researchers construct counterexamples for the Weisfeiler-Lehman test and its $k$-dimensional variants, they invariably use graph visualizations to illustrate why graphs are non-isomorphic despite fooling the WL algorithm~\citep{wang2024empirical}. Visual representations make structural distinctions immediately apparent that iterative refinement procedures miss.
Horn et al.~\citep{horn2021topological} explicitly present datasets containing graphs they describe as ``\textit{easily distinguished by humans}'' visually. Their \texttt{NECKLACES} dataset shows graphs with identical cycle counts but different connectivity patterns: two individual cycles versus a merged one. While humans immediately see this difference, standard message-passing approaches fail to distinguish them, requiring sophisticated persistent homology calculations. Similarly, Zhang et al.~\citep{zhang2023rethinking} proved that standard GNNs cannot identify bridges, yet these structures manifest as obvious visual bottlenecks in graph layouts.

These observations suggest that layout algorithms and vision models provide a complementary pathway to graph understanding: layout algorithms convert abstract topological properties into spatial patterns through global computation, while vision models recognize these geometric features through hierarchical processing. This explains why vision models maintain strong performance despite lacking graph-specific inductive biases: they access structural information through a fundamentally different computational mechanism than message passing.

\clearpage
\section{Visualization Examples from GraphAbstract}\label{app:examples}
% In this section, we briefly show the visualizations in Figure~\ref{app:topo_fig}-~\ref{app:spe_fig} of each graph type involved in the four tasks in the \textbf{GraphAbstract}.
In this section, we provide visualizations of representative graphs in our benchmark. Figures~\ref{app:topo_fig}--\ref{app:spe_fig} illustrate the diverse topological patterns across the four tasks: topology classification, symmetry classification, and spectral gap regression.

\begin{figure}[htbp]
    \centering
    \begin{subfigure}[b]{0.24\linewidth}
        \includegraphics[width=\linewidth]{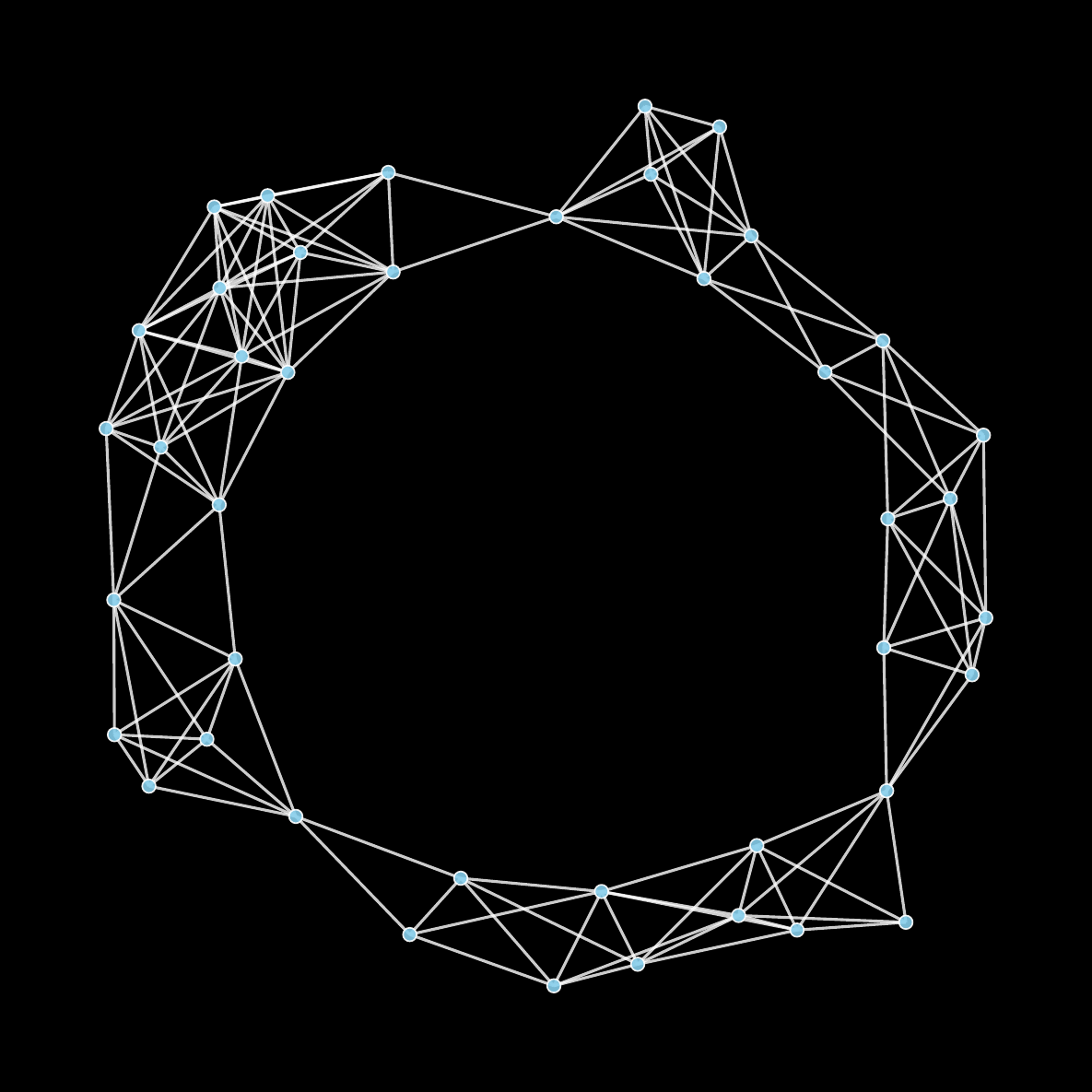}
        \caption{Cyclic}
    \end{subfigure}
    \begin{subfigure}[b]{0.24\linewidth}
        \includegraphics[width=\linewidth]{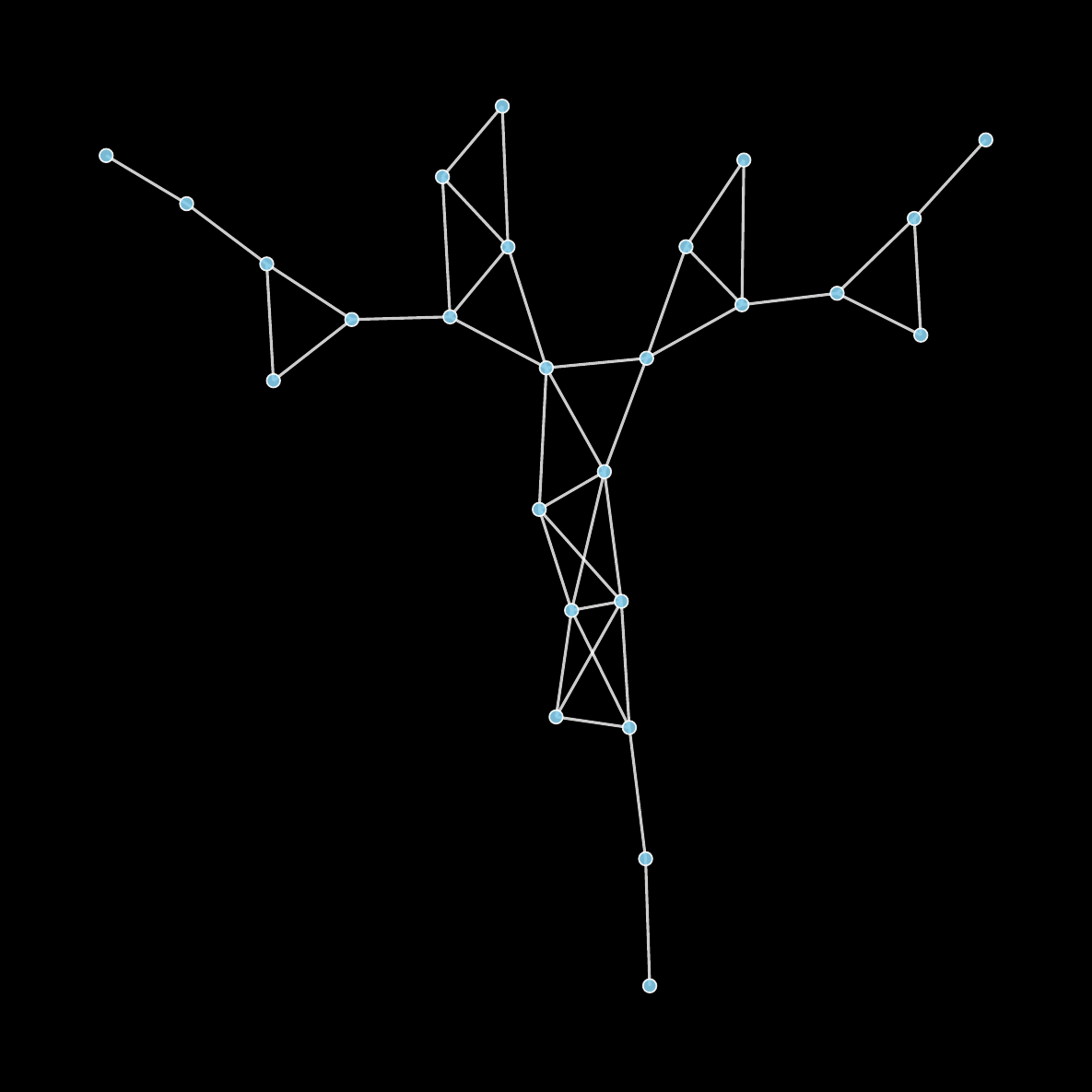}
        \caption{Random geometric}
    \end{subfigure}
    \begin{subfigure}[b]{0.24\linewidth}
        \includegraphics[width=\linewidth]{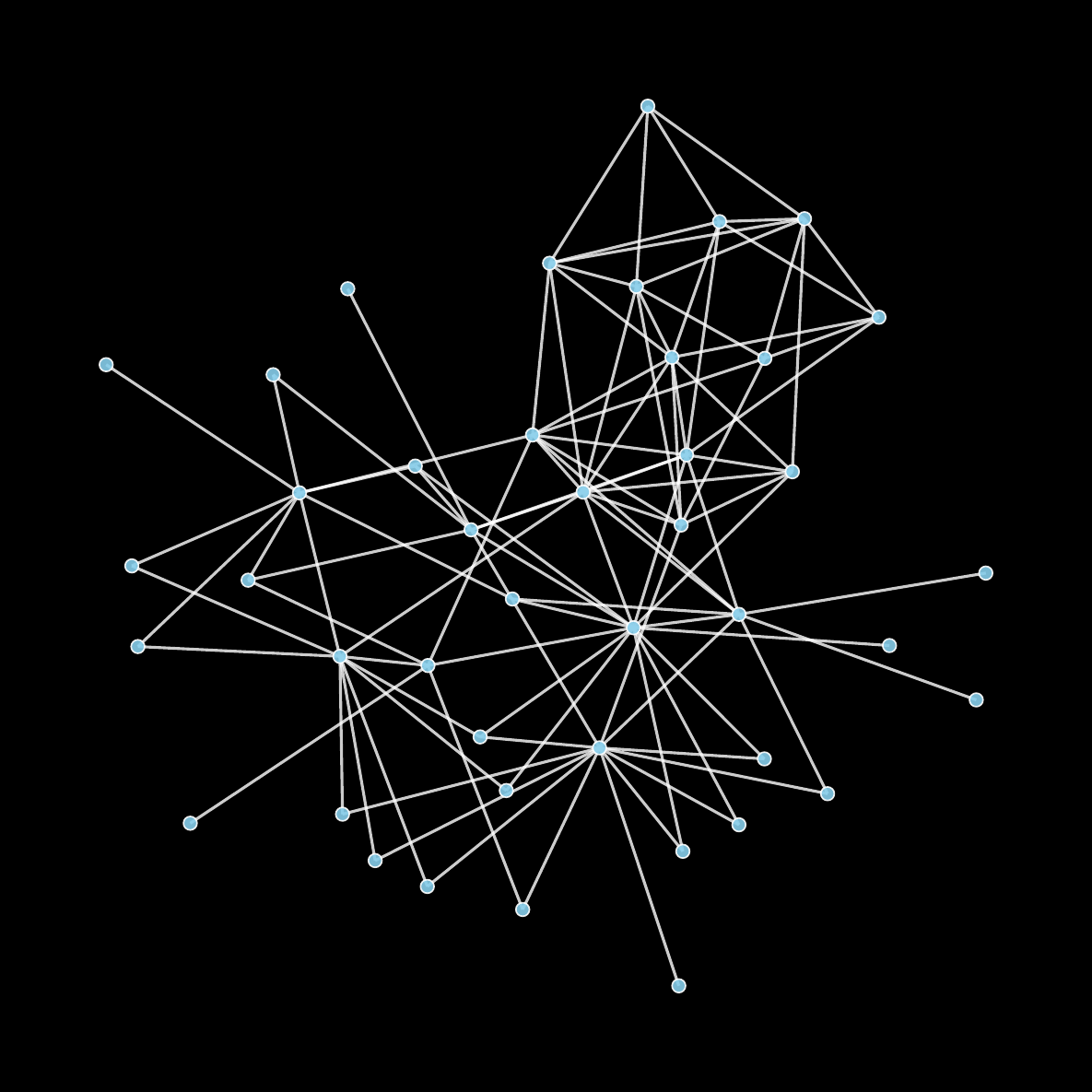}
        \caption{Hierarchical hub}
    \end{subfigure}
    \begin{subfigure}[b]{0.24\linewidth}
        \includegraphics[width=\linewidth]{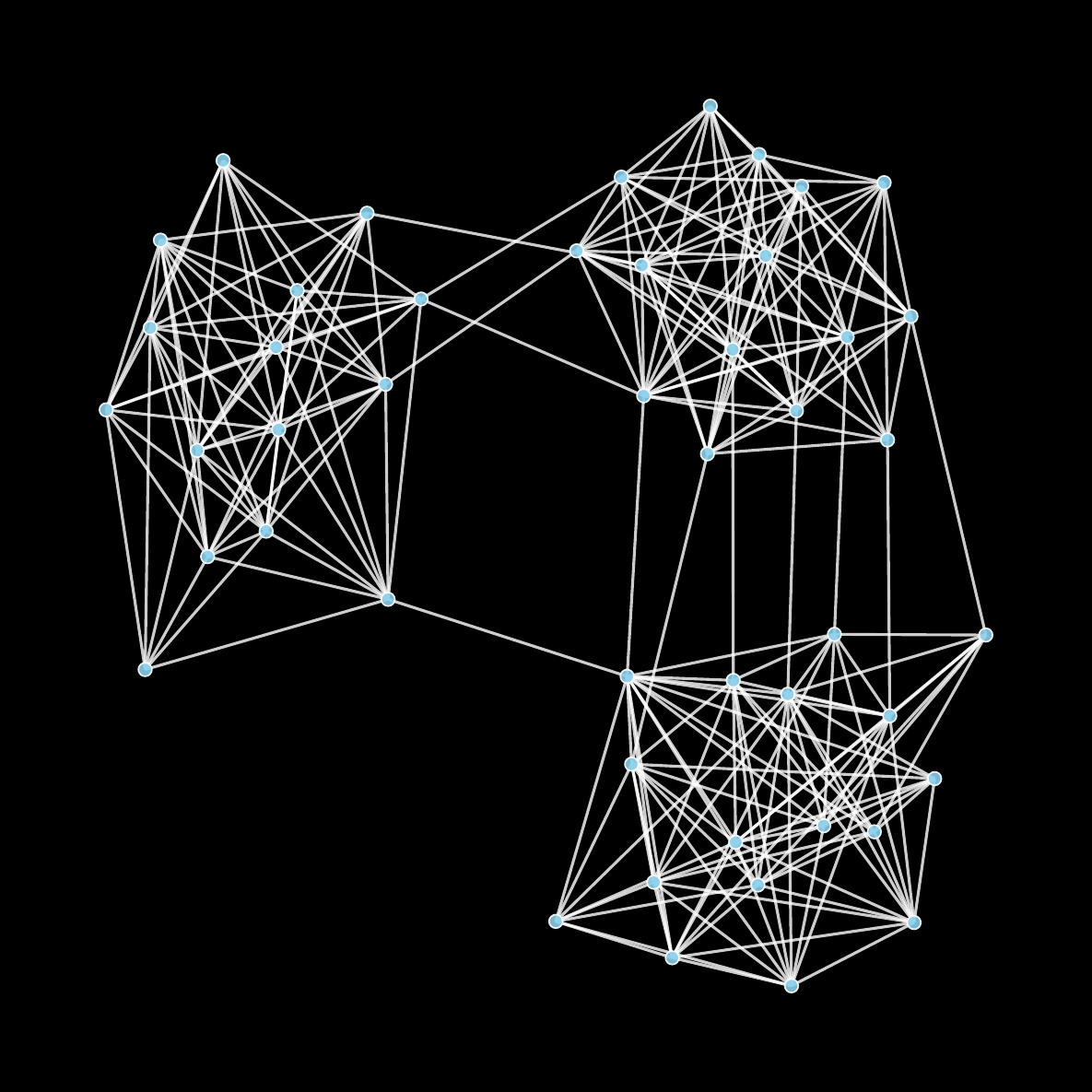}
        \caption{Community}
    \end{subfigure}
    
    \begin{subfigure}[b]{0.24\linewidth}
        \includegraphics[width=\linewidth]{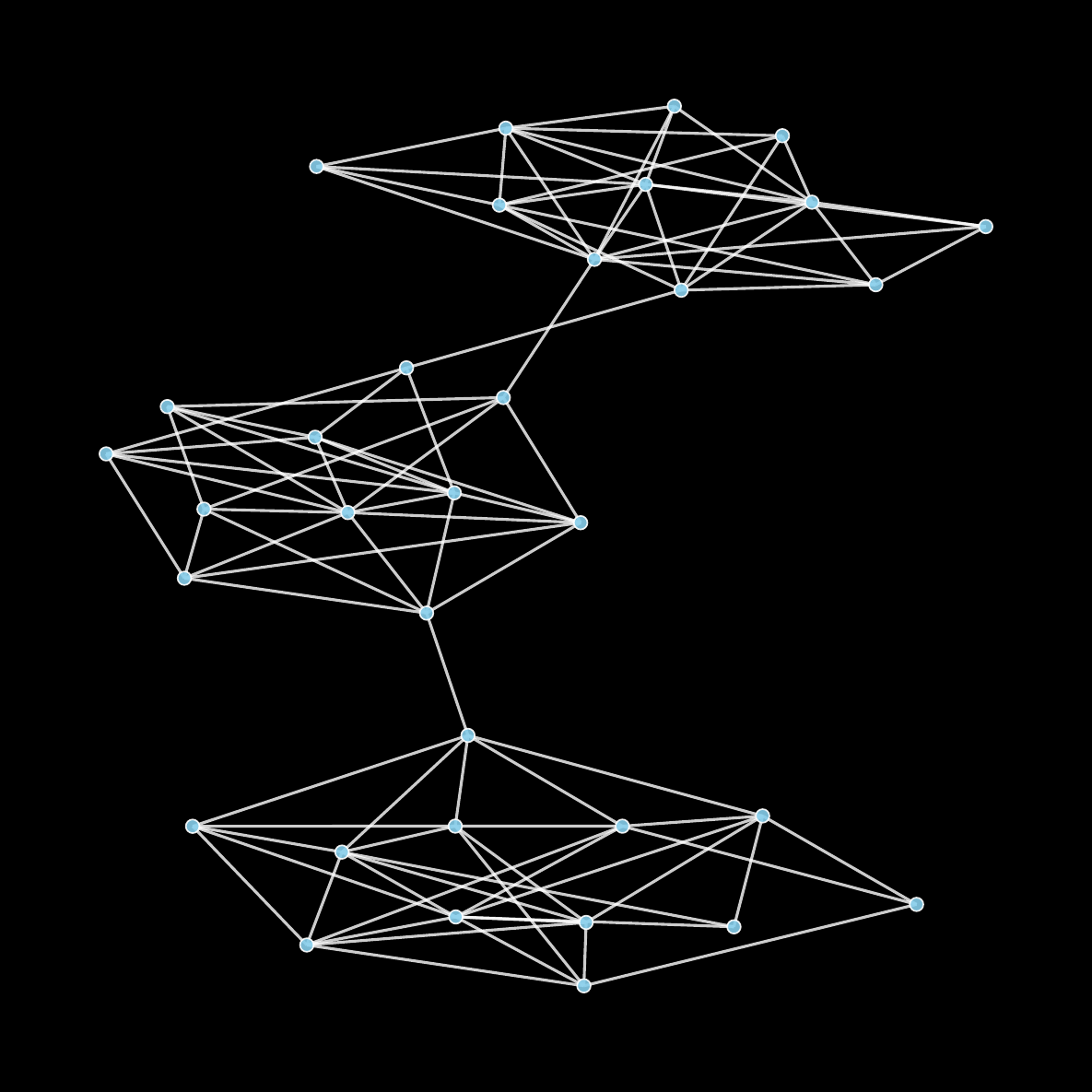}
        \caption{Bottleneck}
    \end{subfigure}
    \begin{subfigure}[b]{0.24\linewidth}
        \includegraphics[width=\linewidth]{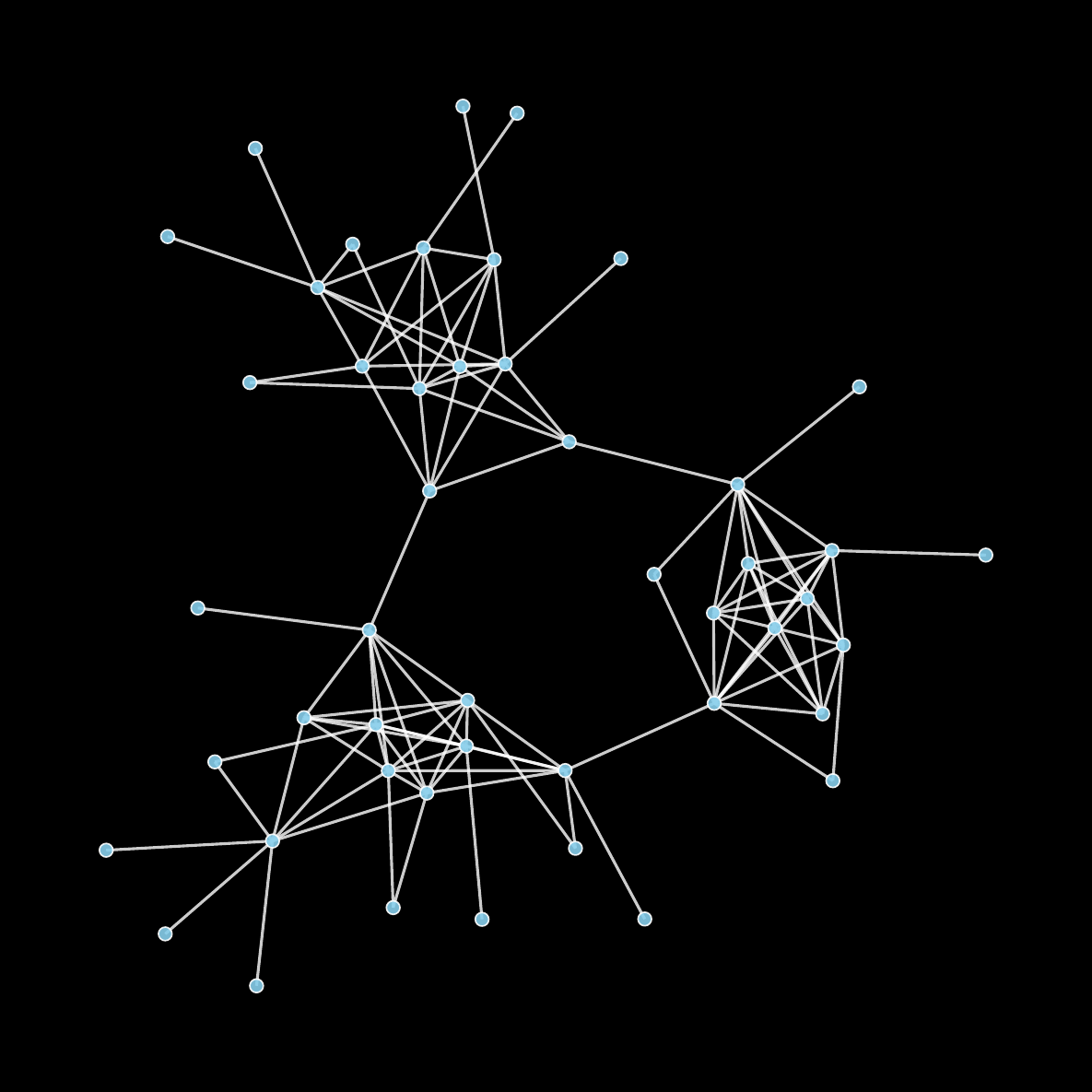}
        \caption{Multi-core}
    \end{subfigure}
    \caption{Training examples of topology classification. For bridge counting task, similar graph structures are used except for the cyclic structure, as bridge counting focuses on identifying edges whose removal would disconnect the graph.}\label{app:topo_fig}
\end{figure}

\begin{figure}[htbp]
    \centering
    \begin{subfigure}[b]{0.24\linewidth}
        \includegraphics[width=\linewidth]{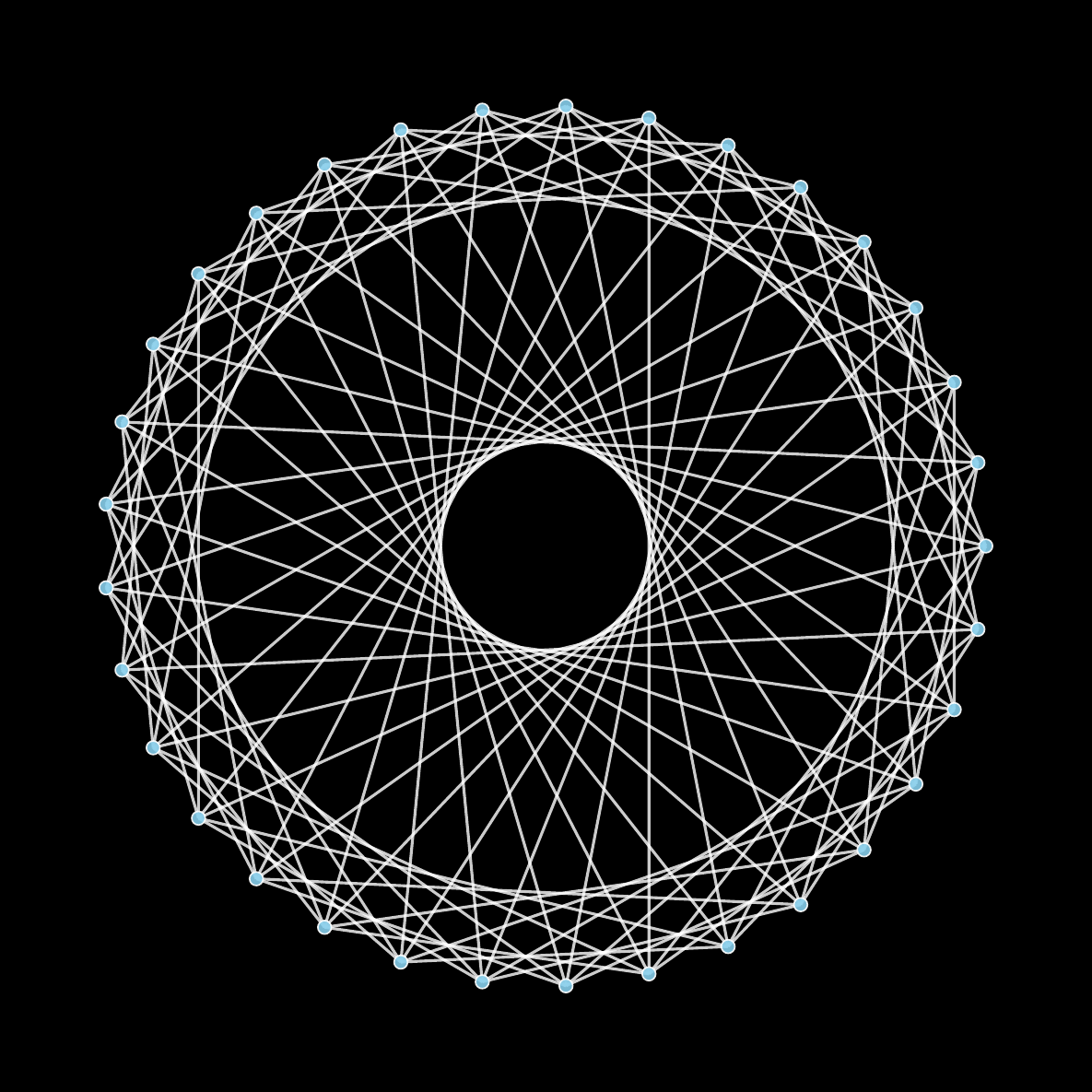}
        \caption{Cayley cyclic}
    \end{subfigure}
    \begin{subfigure}[b]{0.24\linewidth}
        \includegraphics[width=\linewidth]{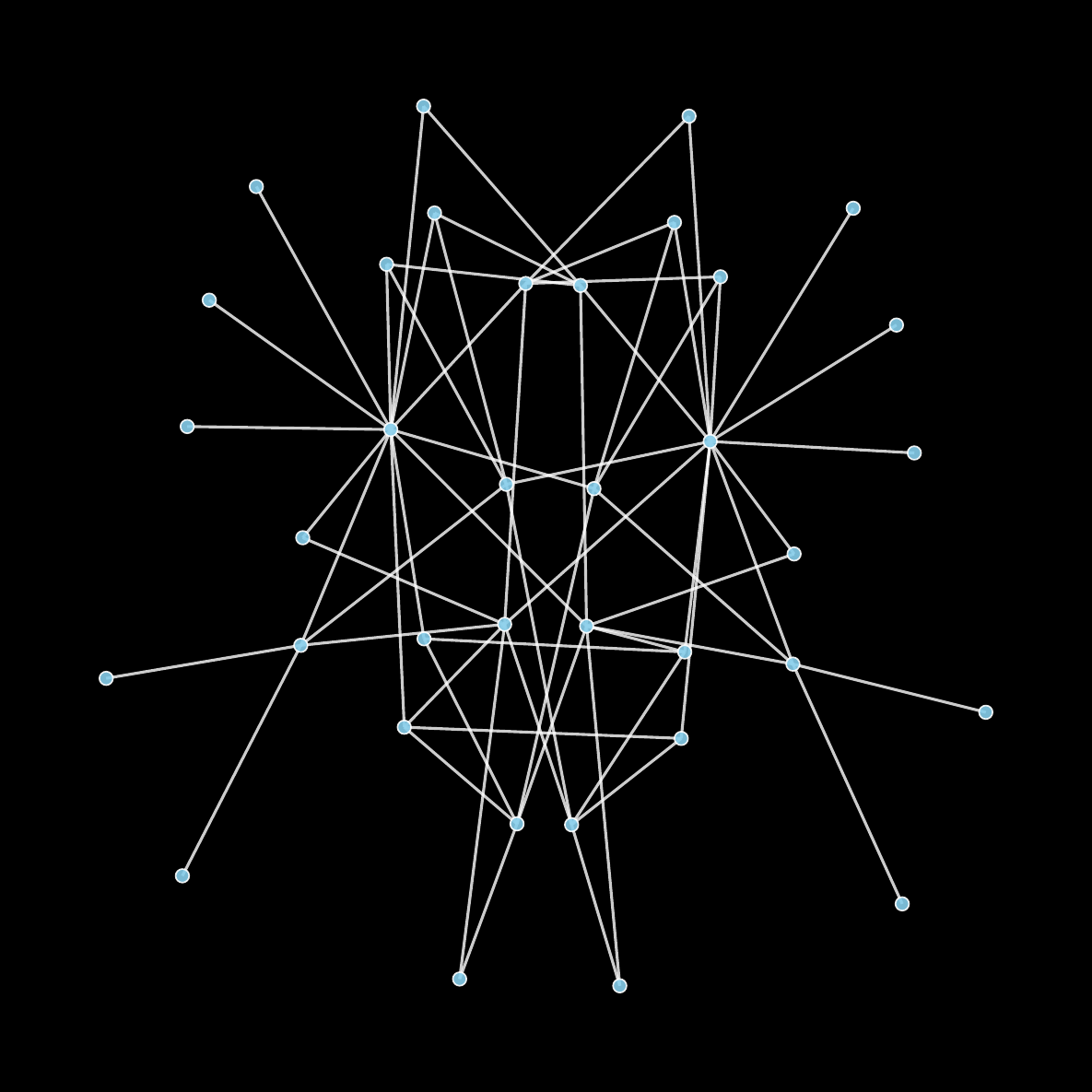}
        \caption{Bipartite double cover}
    \end{subfigure}
    \begin{subfigure}[b]{0.24\linewidth}
        \includegraphics[width=\linewidth]{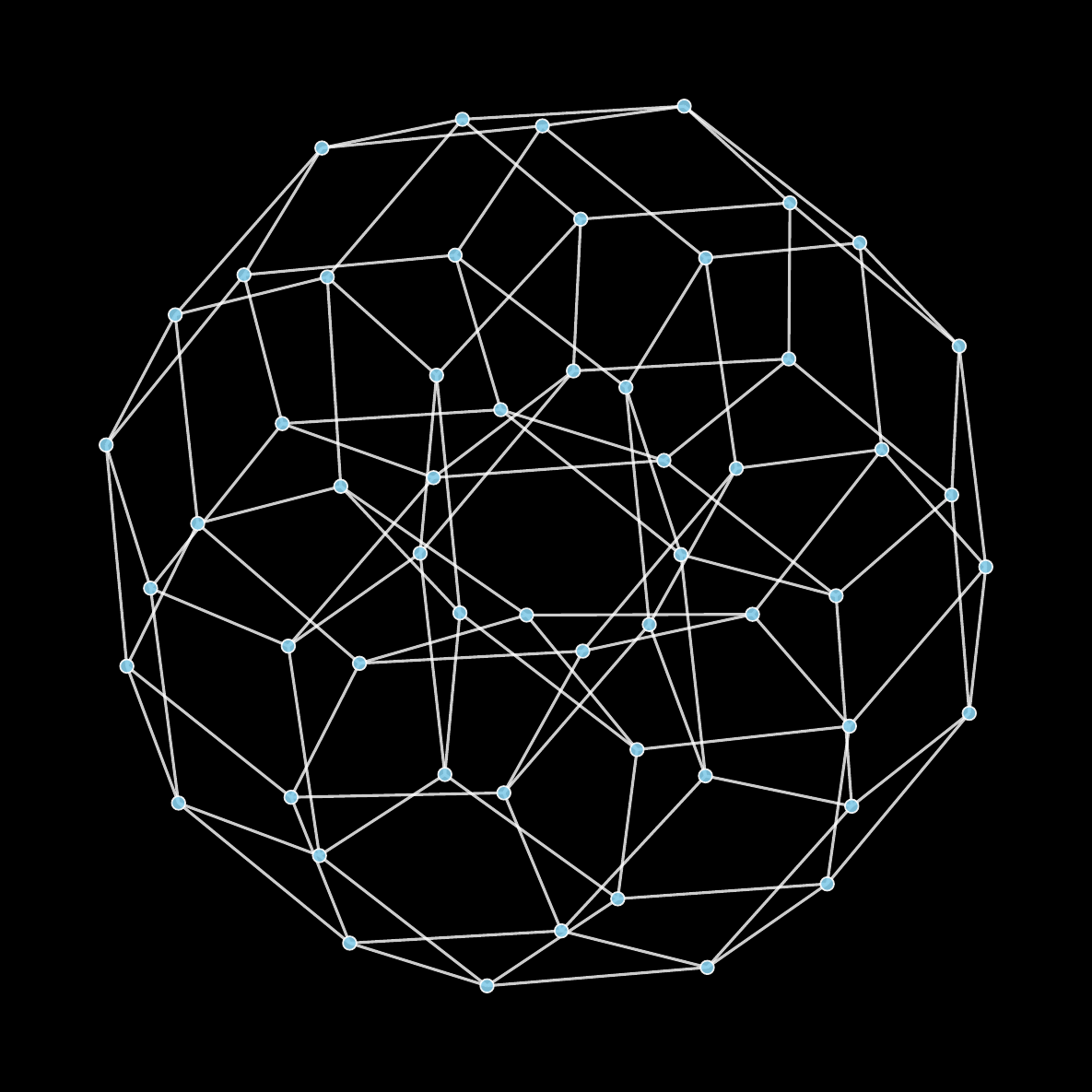}
        \caption{Cartesian product}
    \end{subfigure}
    \begin{subfigure}[b]{0.24\linewidth}
        \includegraphics[width=\linewidth]{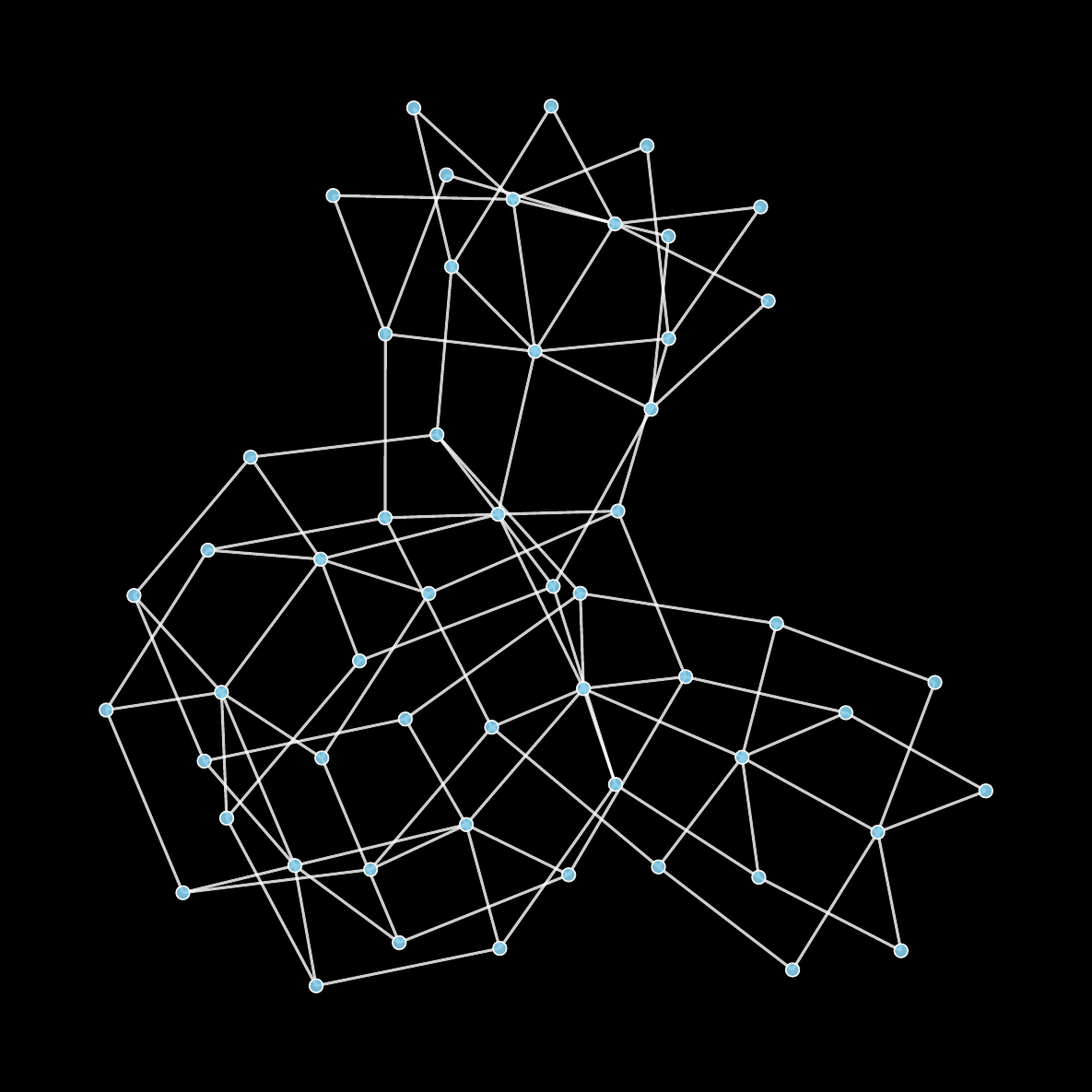}
        \caption{Cartesian w/ real (S)}
    \end{subfigure}
    
    \begin{subfigure}[b]{0.24\linewidth}
        \includegraphics[width=\linewidth]{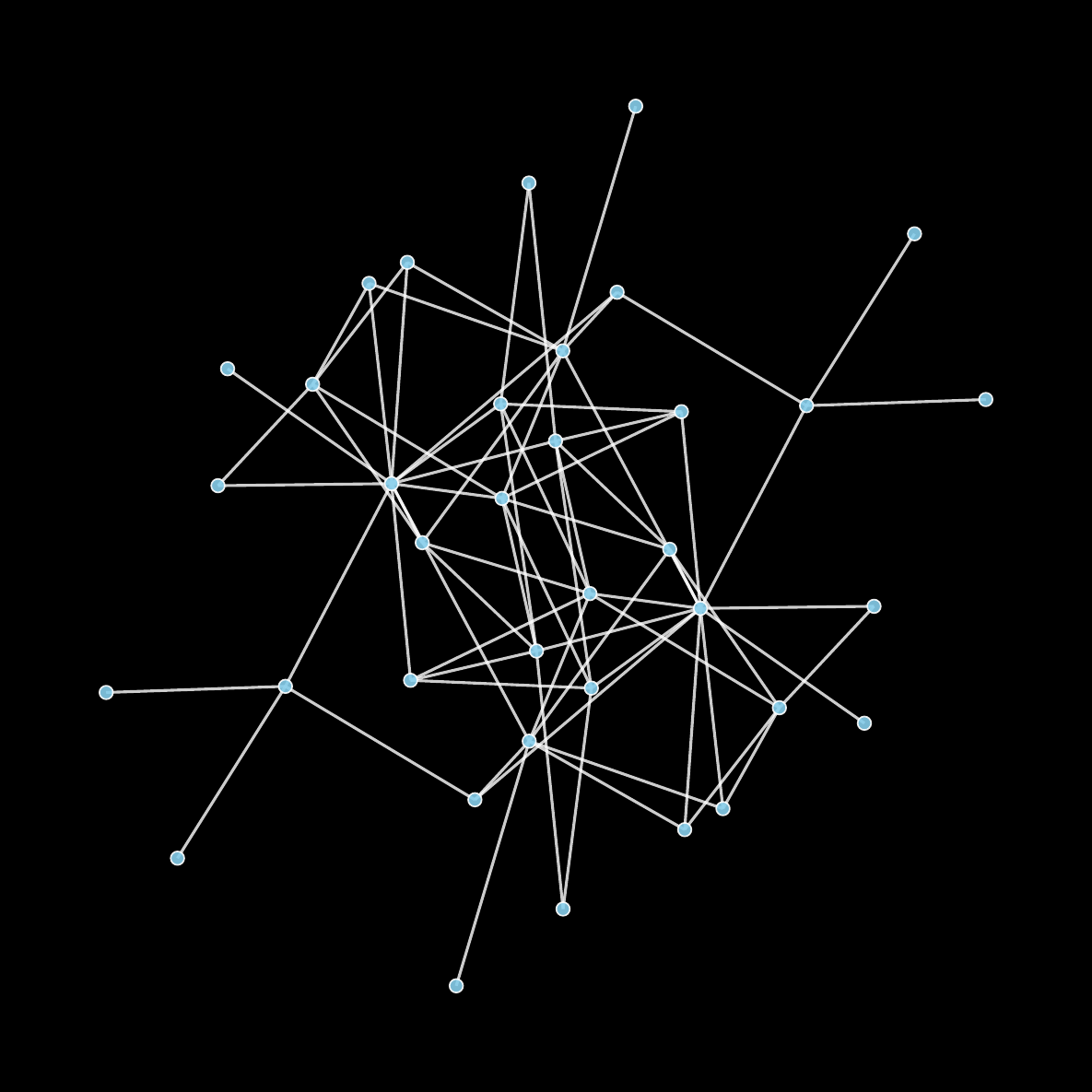}
        \caption{Real data cover}
    \end{subfigure}
    \begin{subfigure}[b]{0.24\linewidth}
        \includegraphics[width=\linewidth]{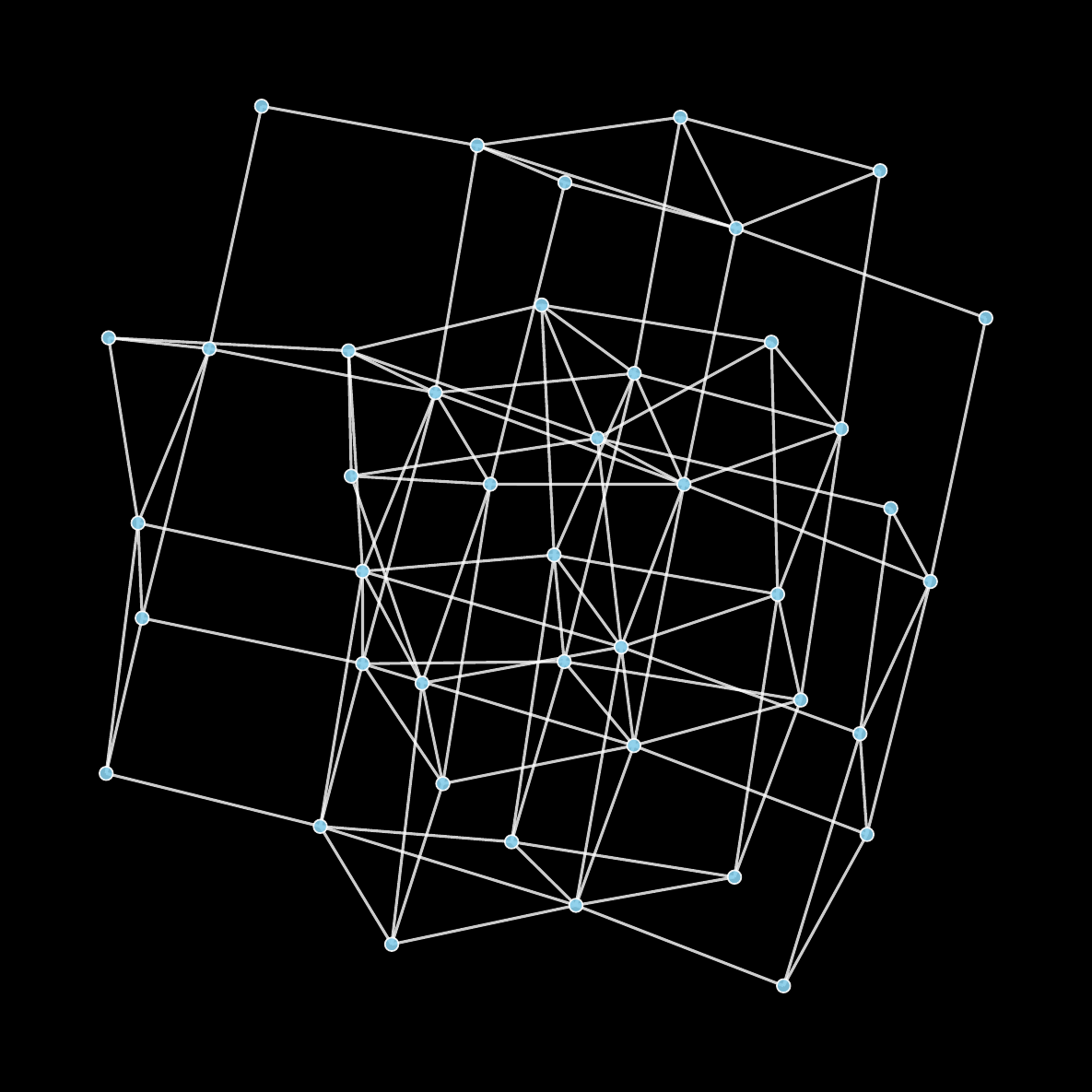}
        \caption{Cartesian w/ real (NS)}
    \end{subfigure}
    \begin{subfigure}[b]{0.24\linewidth}
        \includegraphics[width=\linewidth]{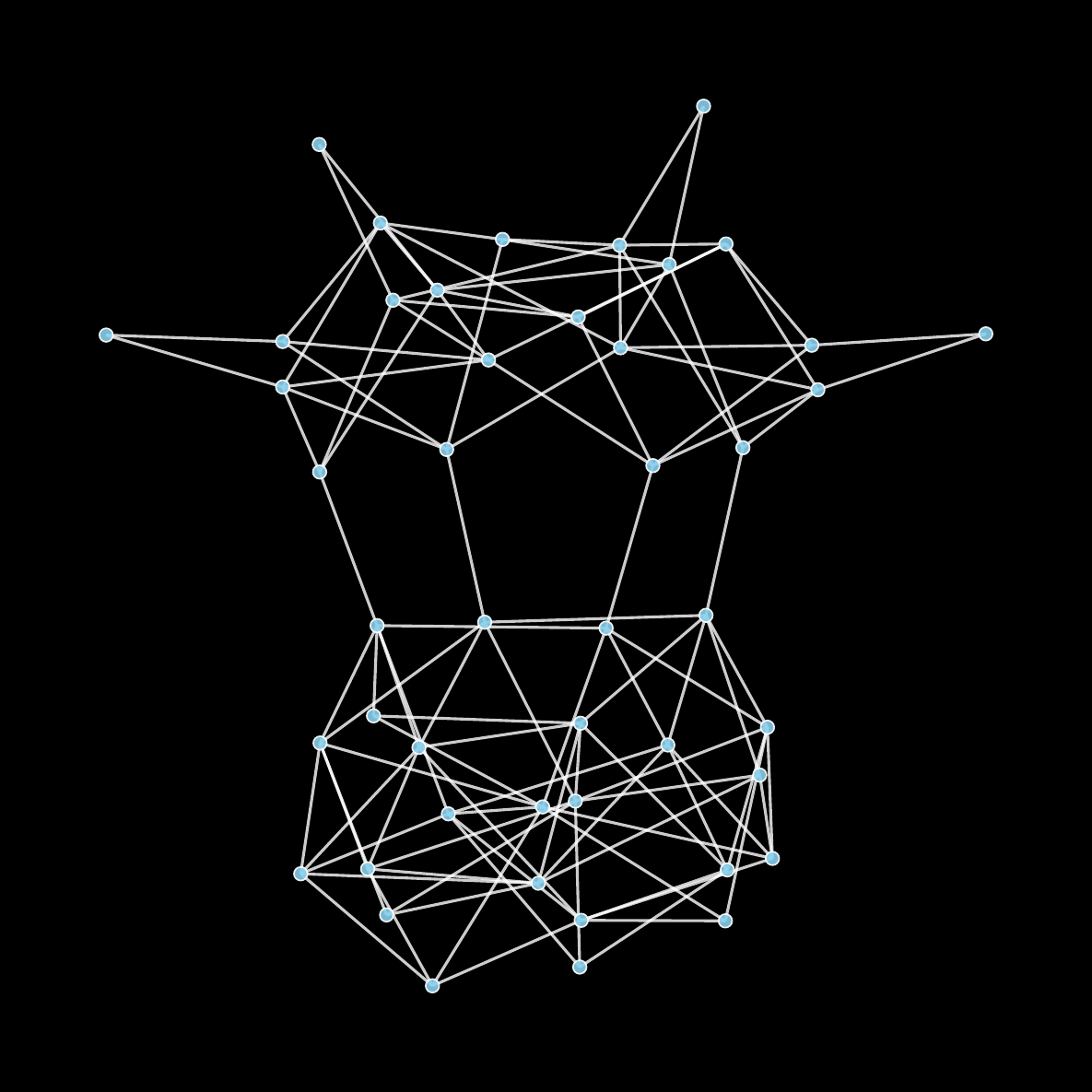}
        \caption{Perturbed graph}
    \end{subfigure}
    \caption{Training examples of symmetric classification. (a)-(e) are symmetric graphs, while (f)-(g) are asymmetric graphs. S: symmetric, NS: non-symmetric.}
\end{figure}

\begin{figure}[htbp]
    \centering
    \begin{subfigure}[b]{0.24\linewidth}
        \includegraphics[width=\linewidth]{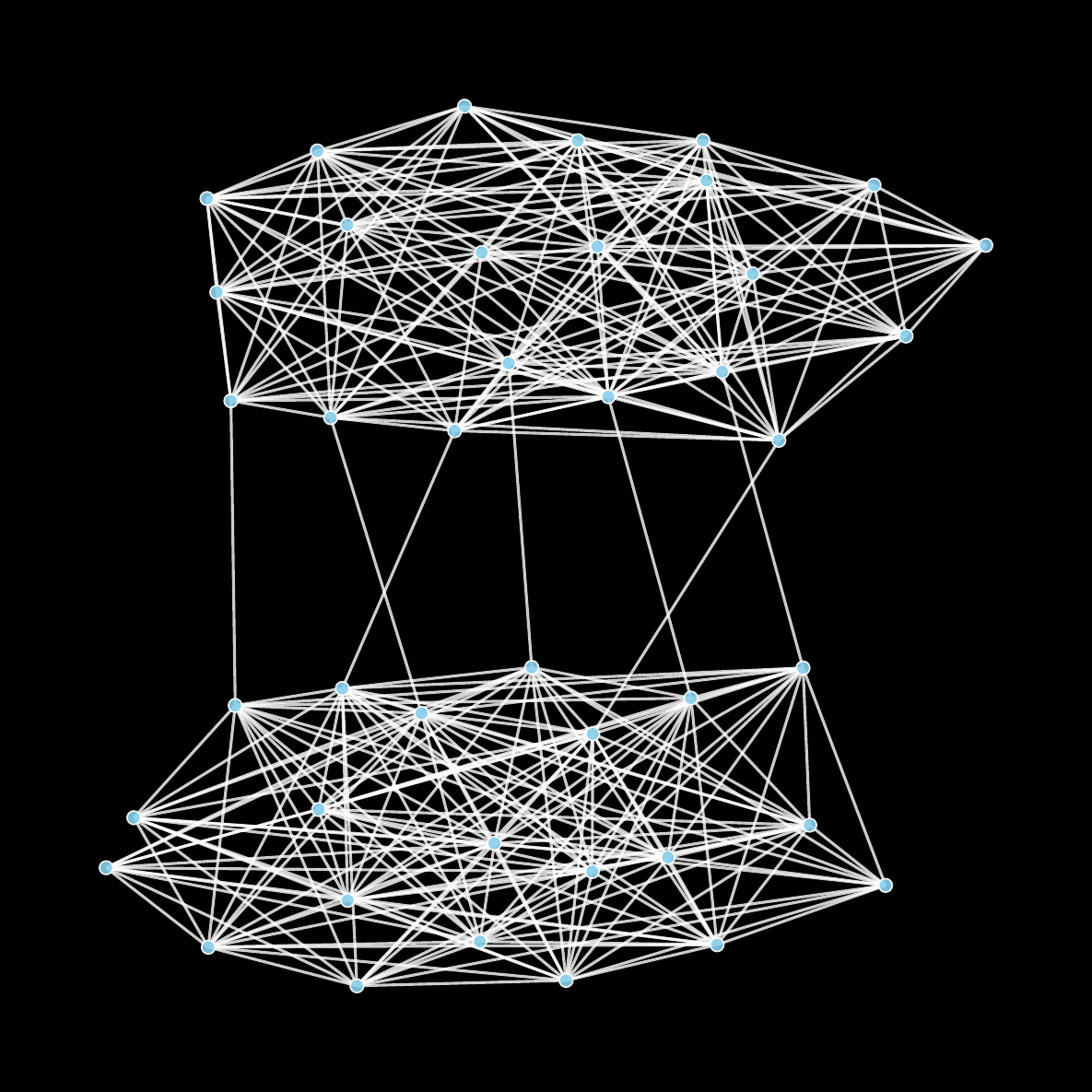}
        \caption{SBM-DB, $\lambda_2=0.04$}
    \end{subfigure}
    \begin{subfigure}[b]{0.24\linewidth}
        \includegraphics[width=\linewidth]{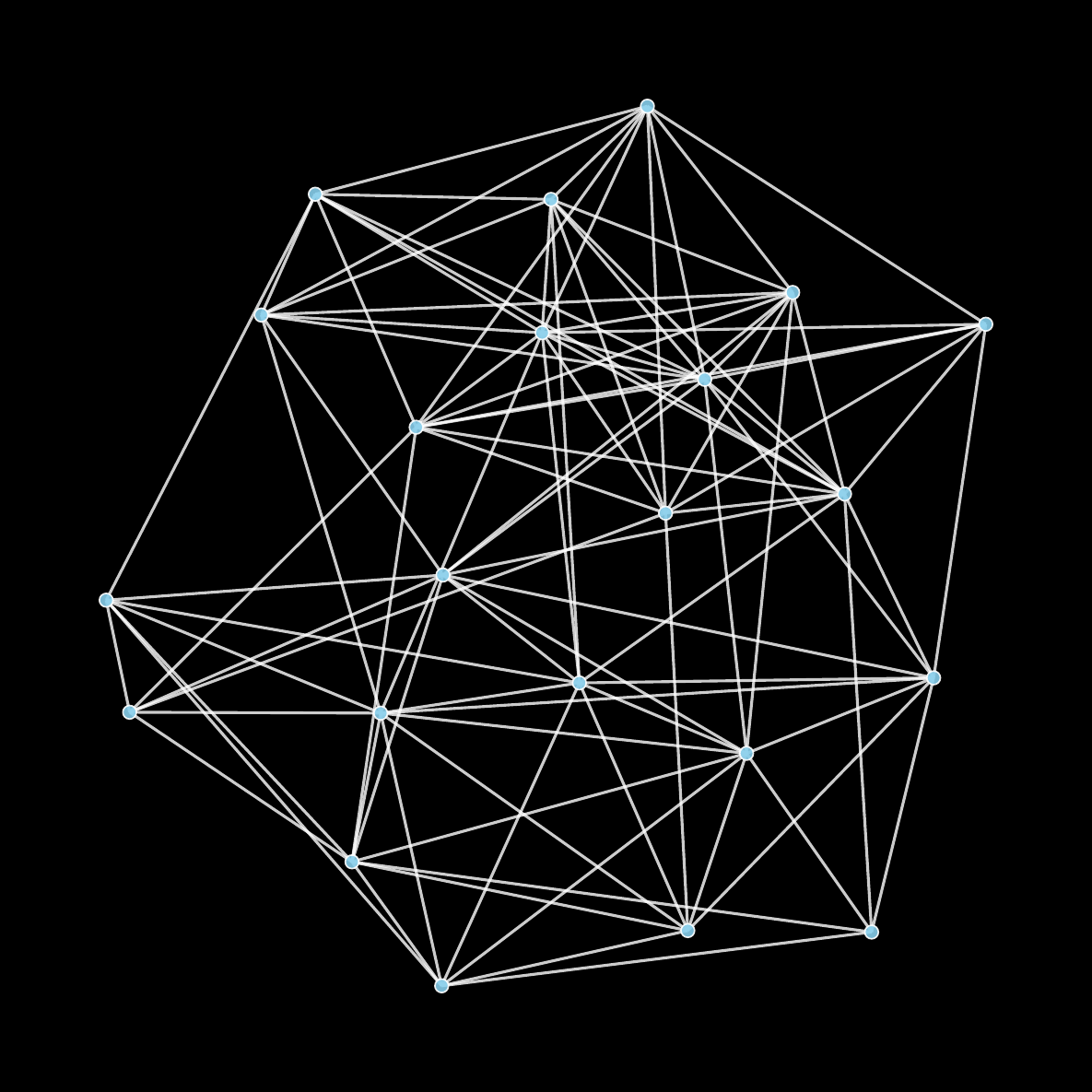}
        \caption{SBM-DB, $\lambda_2=0.35$}
    \end{subfigure}
    \begin{subfigure}[b]{0.24\linewidth}
        \includegraphics[width=\linewidth]{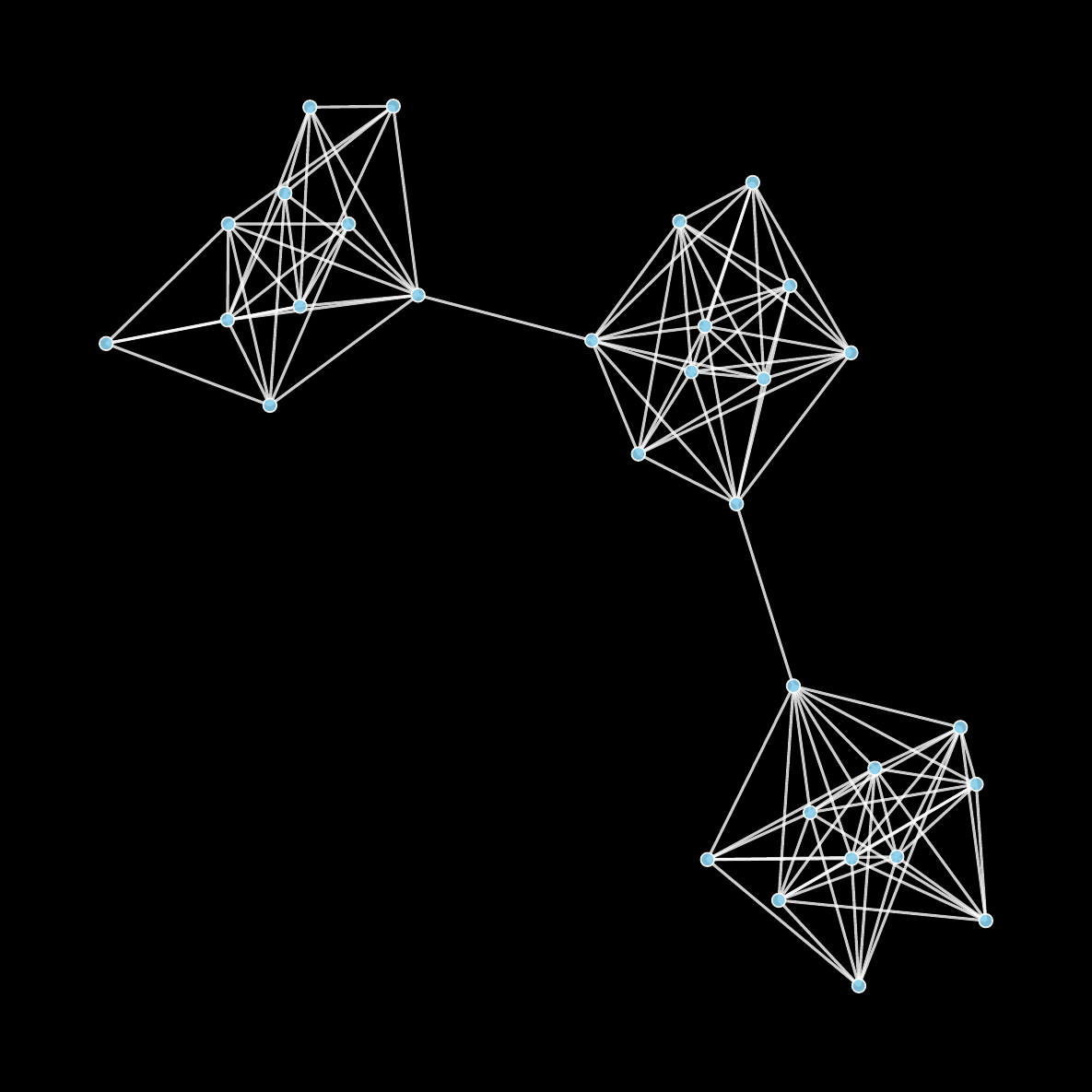}
        \caption{SBM-M, $\lambda_2=0.01$}
    \end{subfigure}
    \begin{subfigure}[b]{0.24\linewidth}
        \includegraphics[width=\linewidth]{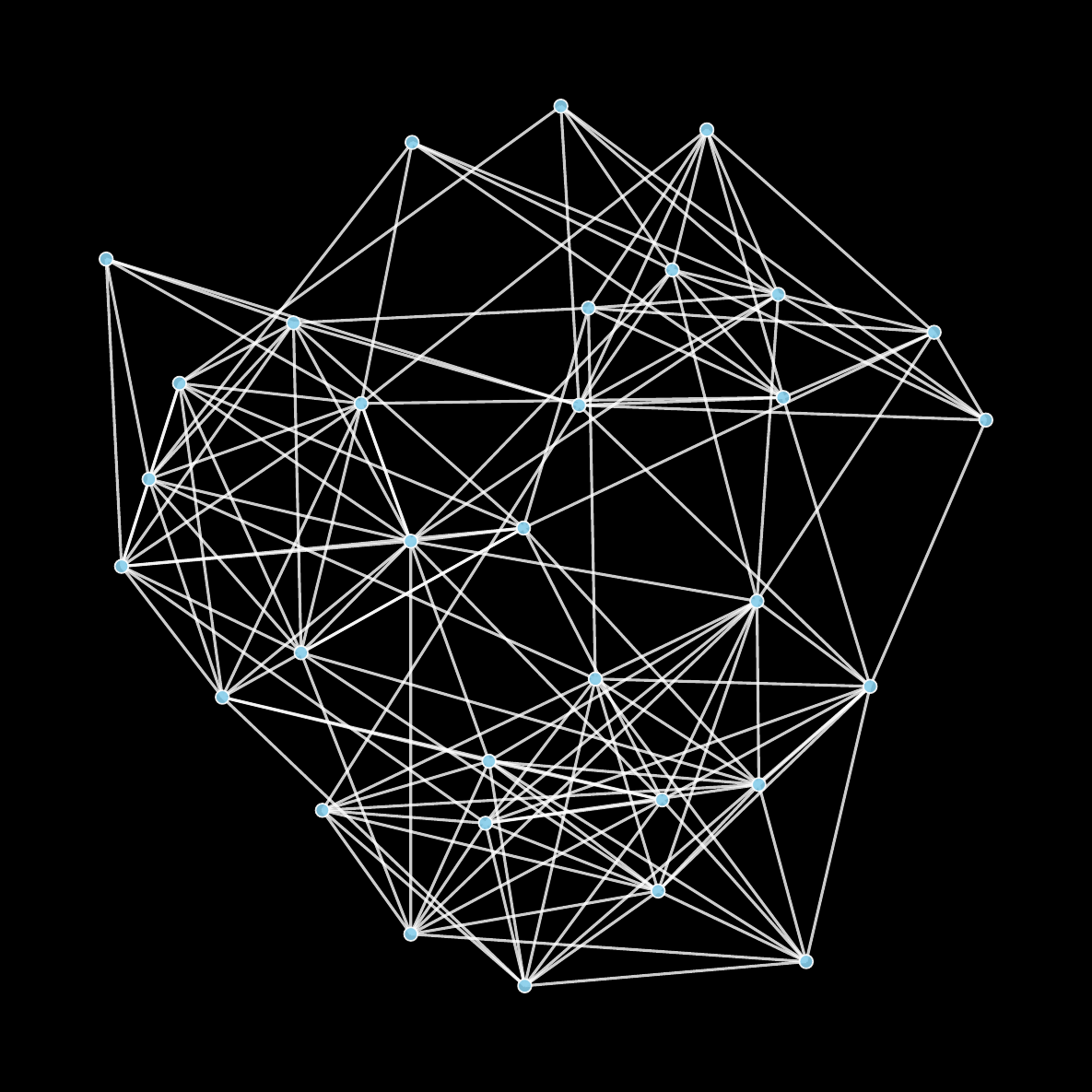}
        \caption{SBM-M, $\lambda_2=0.25$}
    \end{subfigure}
    
    \begin{subfigure}[b]{0.24\linewidth}
        \includegraphics[width=\linewidth]{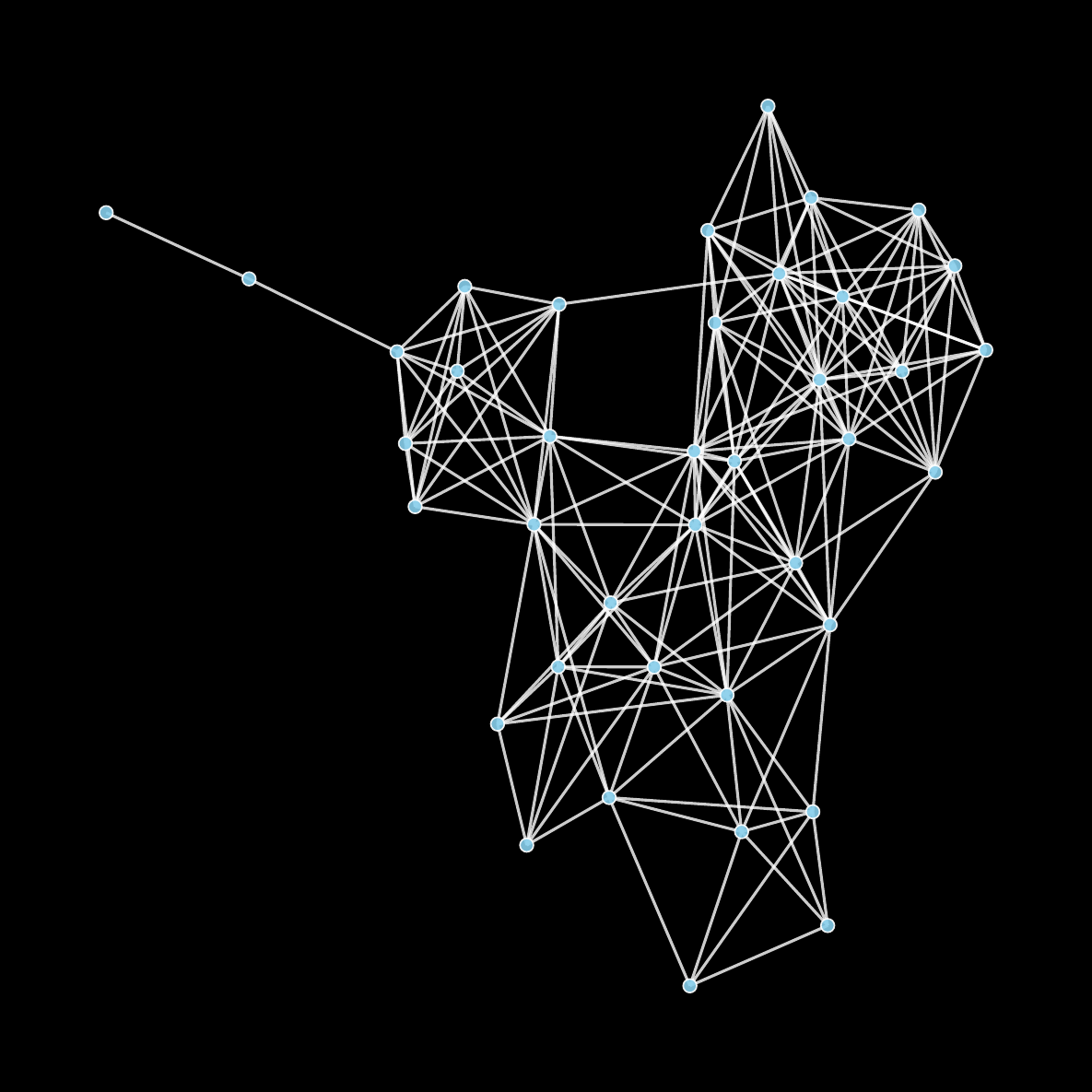}
        \caption{Geo, $\lambda_2=0.12$}
    \end{subfigure}
    \begin{subfigure}[b]{0.24\linewidth}
        \includegraphics[width=\linewidth]{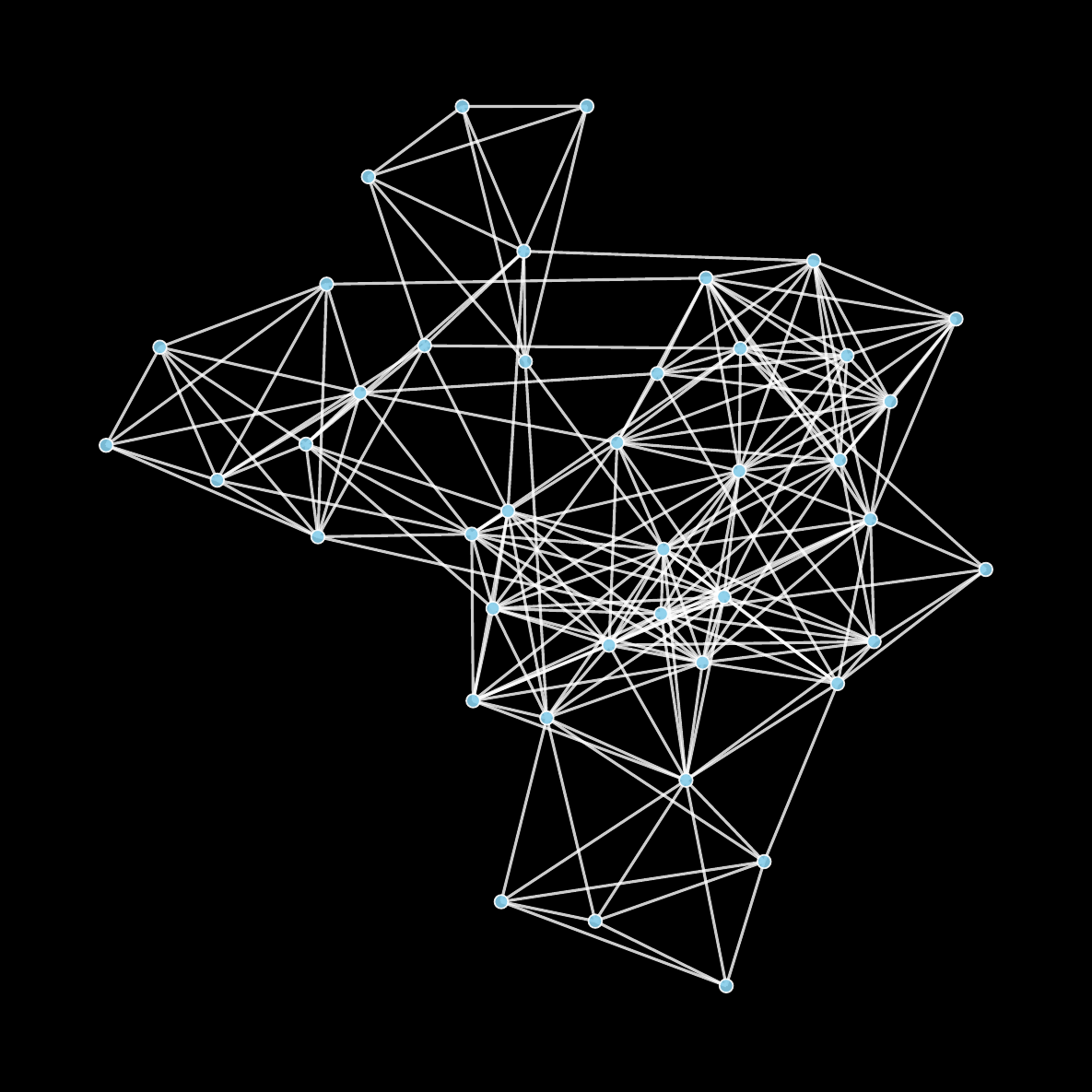}
        \caption{Geo, $\lambda_2=0.16$}
    \end{subfigure}
    \begin{subfigure}[b]{0.24\linewidth}
        \includegraphics[width=\linewidth]{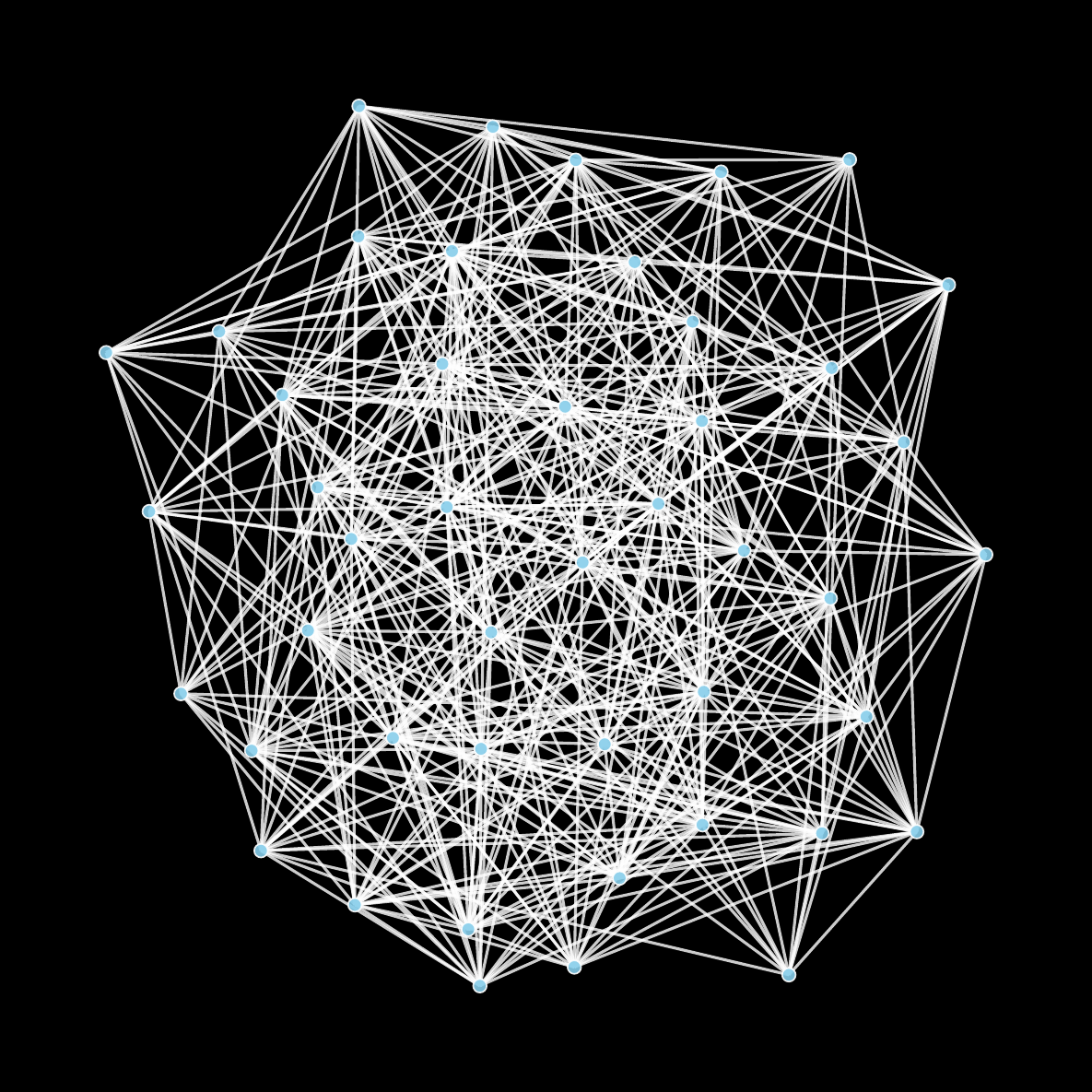}
        \caption{Cfg, $\lambda_2=0.62$}
    \end{subfigure}
    \begin{subfigure}[b]{0.24\linewidth}
        \includegraphics[width=\linewidth]{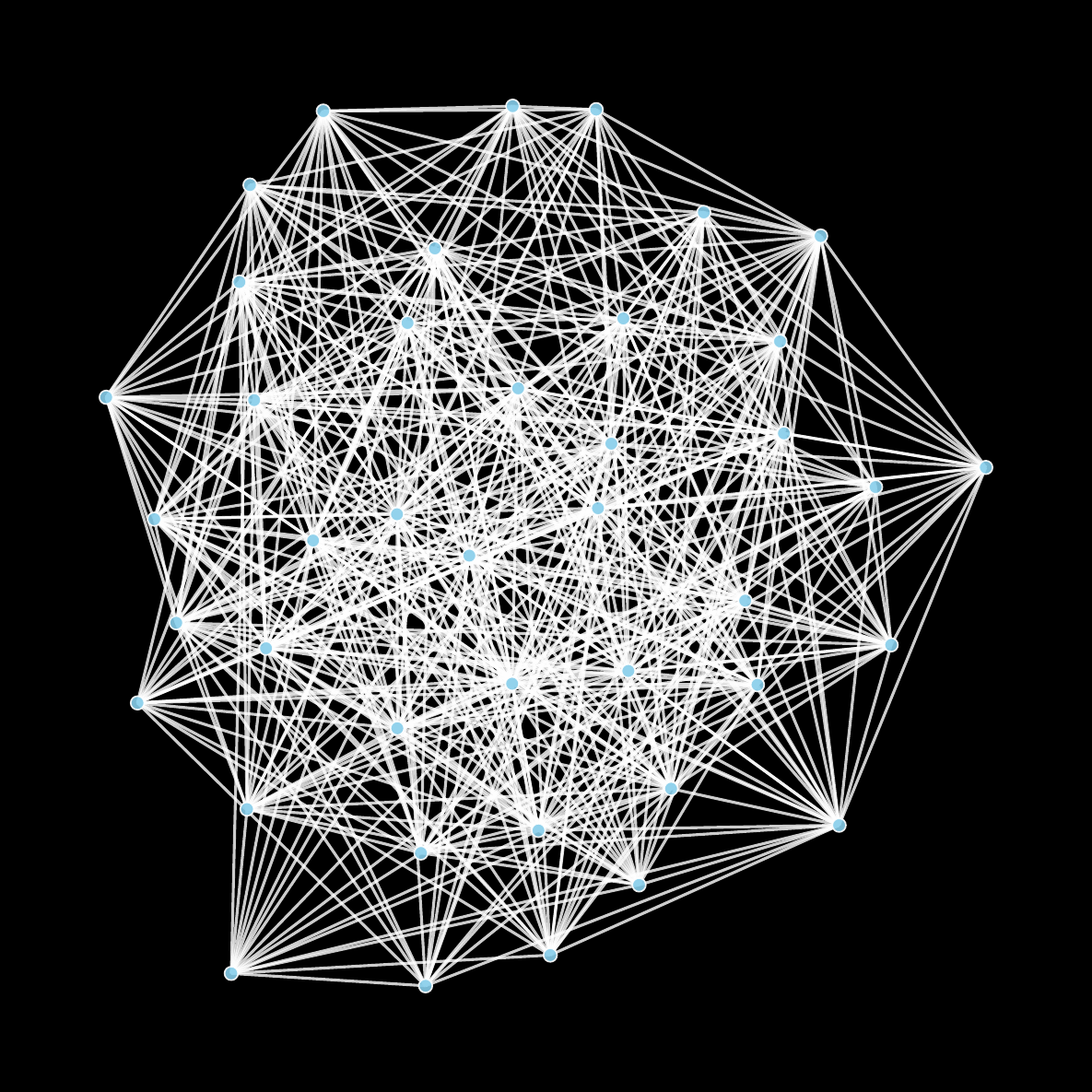}
        \caption{Cfg, $\lambda_2=0.78$}
    \end{subfigure}
    \caption{Training examples of spectral gap regression, where $\lambda_2$ is the second smallest eigenvalue of the normalized Laplacian. SBM-DB: SBM dumbbell, SBM-M: SBM multi-community, Geo: Geometric, Cfg: Configuration model.}\label{app:spe_fig}
\end{figure}

\clearpage
\section{Analysis of Graph Layout Algorithms}\label{app:layouts}

\subsection{Layout Algorithm Details}
Graph layout algorithms aim to produce 2D (or 3D) representations of graphs that are interpretable and reveal underlying structures. Different algorithms employ distinct heuristics and optimization criteria, leading to varied visual outputs. In this work, we primarily consider four common layout types:

\textbf{Kamada-Kawai.} This is a force-directed algorithm that models the graph as a system of springs. It aims to position nodes such that the geometric distance between them in the layout is proportional to their graph-theoretic distance (shortest path length). This often results in aesthetically pleasing layouts that emphasize the overall shape and connectivity.

\textbf{ForceAtlas2.} Another popular force-directed algorithm, particularly well-suited for larger graphs. It simulates attraction forces between connected nodes and repulsion forces between all nodes, often effectively revealing clusters and community structures within the network.

\textbf{Spectral Layout.} This method uses the eigenvectors of the graph Laplacian (or a related matrix) as coordinates for the nodes. Typically, the eigenvectors corresponding to the smallest non-zero eigenvalues are used. Spectral layouts are mathematically principled and often highlight global symmetries and Cheeger-type cuts. A characteristic feature is that structurally equivalent or highly similar nodes can overlap in the visualization.

\textbf{Circular Layout.} One of the simplest layout algorithms, it places all nodes equidistant on the circumference of a circle. The ordering of nodes around the circle can be arbitrary, based on node IDs, or determined by other properties like node degree.

\subsection{Visualizing Asymmetric Graphs}\label{app:layout_examples}
To illustrate how different layout algorithms can affect the visual perception of graph properties, particularly symmetry and structural regularity, we present visualizations of a non-symmetric graph in Figure~\ref{fig:asymm_layouts_appendix}. This graph is generated via the Cartesian product of two distinct real-world graphs, resulting in a structure with high local regularity but no global symmetry. As detailed in the caption, force-directed layouts (Kamada-Kawai and ForceAtlas2) tend to faithfully represent the resulting complex structure. Consequently, to determine its asymmetry, one might need to mentally deconstruct the layout to infer the properties of the underlying, distinct base graphs and recognize that their product would not yield simple visual symmetry. In contrast, the spectral and circular layouts render the graph such that its lack of global symmetry is more immediately visually evident.

\begin{figure}[th!]
    \centering
    \includegraphics[width=0.245\textwidth]{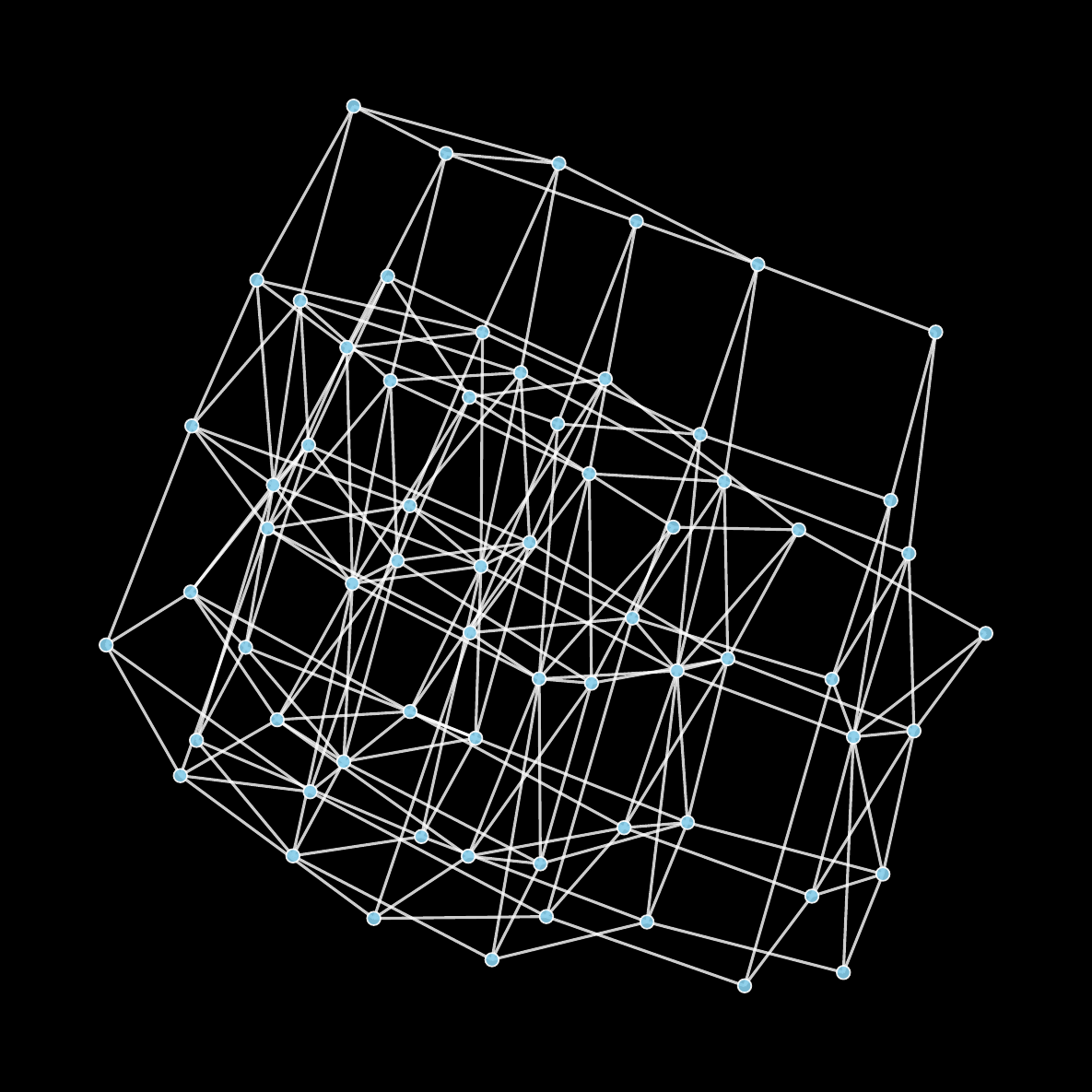}
    \includegraphics[width=0.245\textwidth]{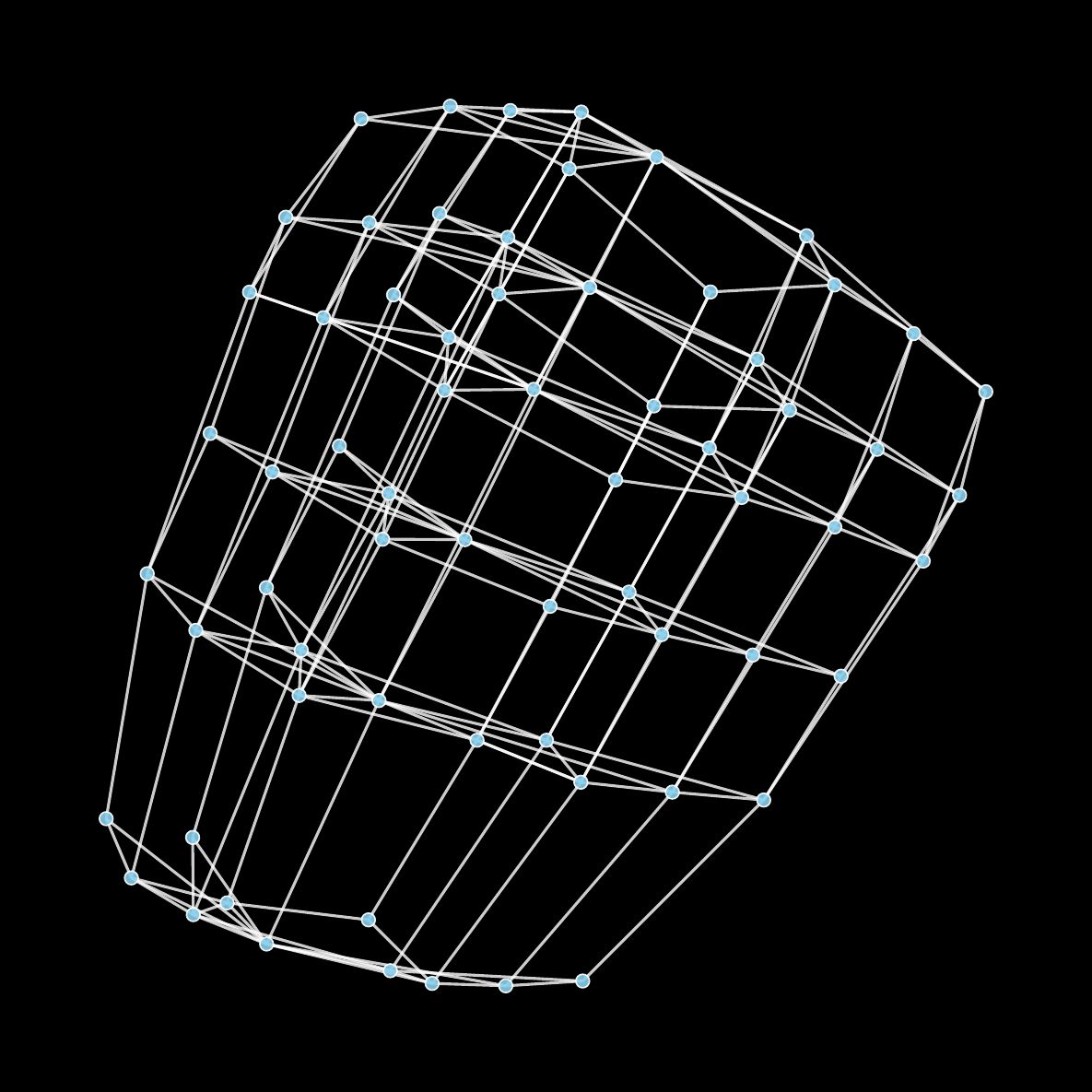}
    \includegraphics[width=0.245\textwidth]{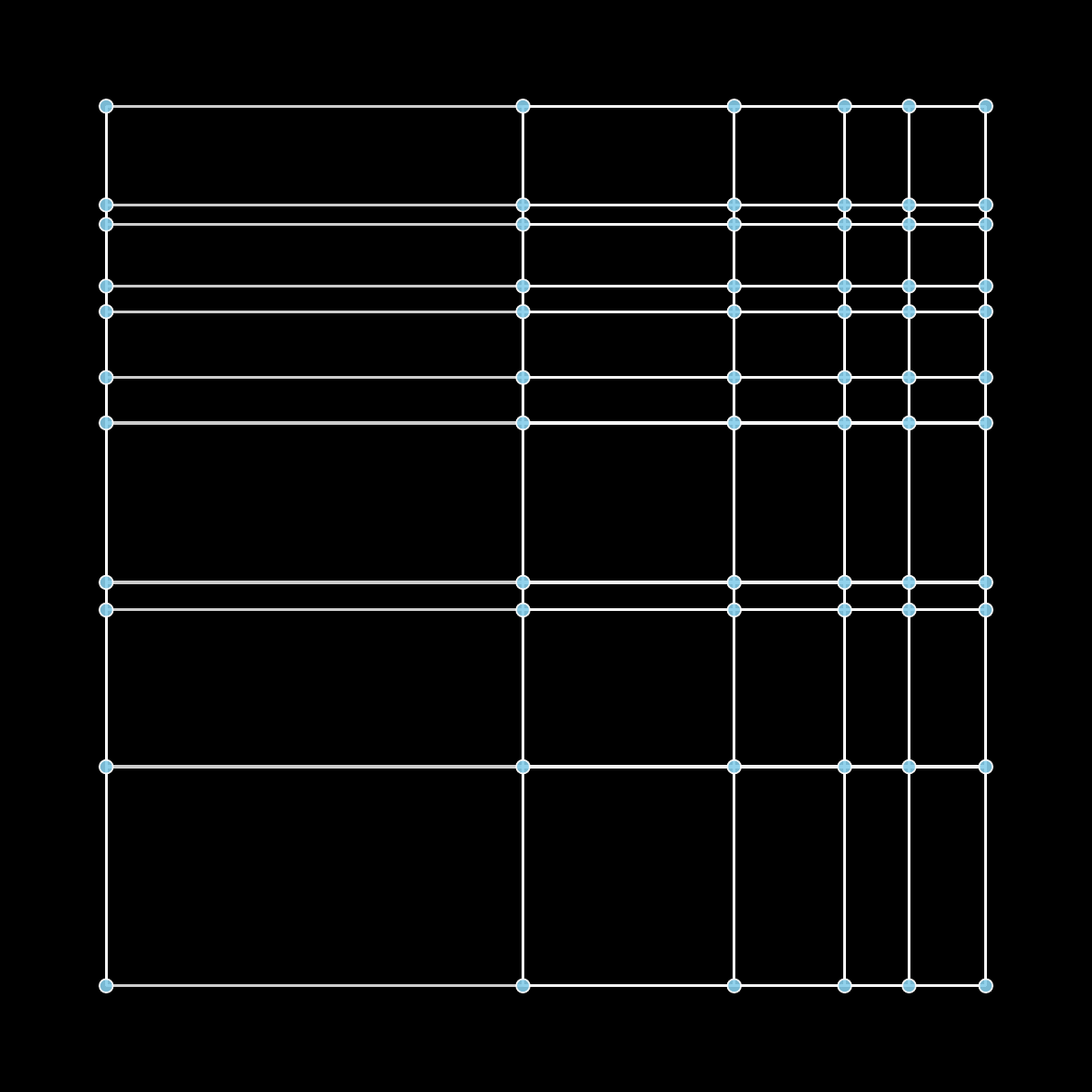}
    \includegraphics[width=0.245\textwidth]{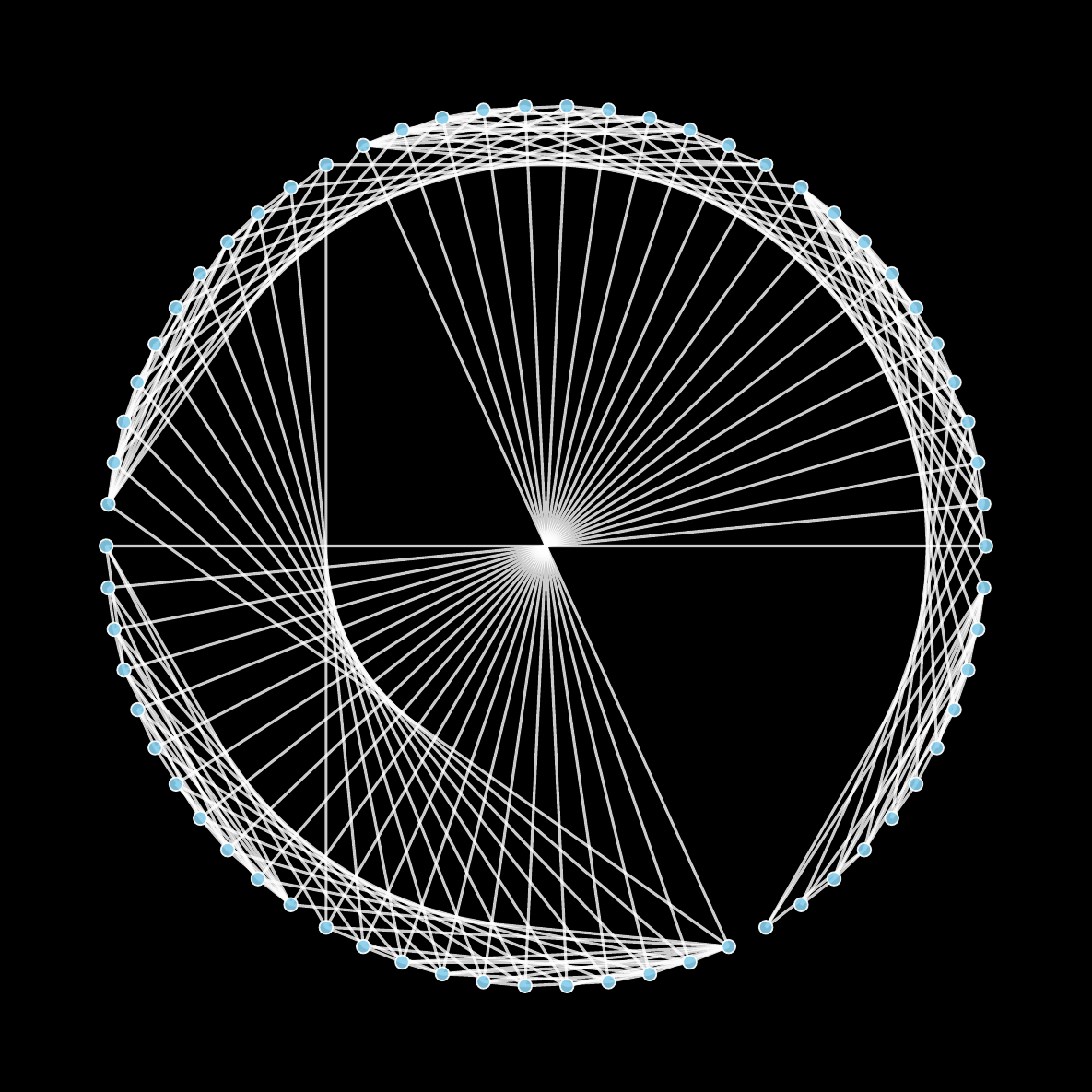}
    \caption{Visualizations of a non-symmetric graph generated by the Cartesian product of real-world base graphs. Despite its high structural regularity, this graph lacks symmetry. In Kamada-Kawai (\textbf{first}) and ForceAtlas2 (\textbf{second}) layouts, determining asymmetry requires mentally reconstructing the base graphs and analyzing their properties. In contrast, the Spectral layout (\textbf{third}) reveals the grid-like product structure. However, critical visual cues about its asymmetry are evident in the lack of perfect geometric regularity and the varying \textbf{thickness/brightness} of lines due to edge overlap. These imperfections prevent a global geometric symmetry axis (like a vertical line through the center) from mapping the graph onto itself, serving as clear visual signals of the underlying asymmetry. Circular layout (\textbf{fourth}) displays nodes on a circle and visually emphasizes \textbf{the lack of uniformity} in edge distribution, also clearly indicating the absence of symmetry. Both Spectral and Circular layouts facilitate a more direct visual assessment of asymmetry.
    }\label{fig:asymm_layouts_appendix}
\end{figure}

\subsection{Characteristics and Performance of Circular Layout}\label{app:circular_performance}
We present the performance of vision models using circular-layout generated graph visualization in Table \ref{tab:circular}. Despite circular layouts significantly disrupting many visual topological properties, evidenced by the sharp performance drop on Topology classification in Far-OOD settings, they prove remarkably effective for tasks like symmetry detection. This effectiveness stems from how symmetry (or its absence) becomes visually apparent: for symmetric graphs, edge densities are balanced across the circle, while for asymmetric graphs, humans can readily perceive uneven edge densities or specific edges that break the expected uniform pattern. This allows both humans and models to assess symmetry without needing to discern complex topological features that the circular layout inherently obscures.

\begin{table*}[h]
\centering
\caption{Performance comparison of circular layout across different tasks and models}
\definecolor{top1color}{RGB}{220,230,242}  
\definecolor{top2color}{RGB}{242,234,218}  
% \resizebox{\textwidth}{!}{%
\begin{tabular}{lcccc}
\toprule
\textbf{Task/Model} & \textbf{Swin} & \textbf{ConvNeXtV2} & \textbf{ResNet} & \textbf{ViT} \\
\midrule
\multicolumn{5}{c}{\textbf{Topology}} \\
\midrule
\textbf{ID} & \cellcolor{top2color}\textbf{92.00 ± 0.56} & 90.89 ± 1.29 & \cellcolor{top1color}\textbf{92.27 ± 1.00} & 91.47 ± 0.98 \\
\textbf{Near-OOD} & \cellcolor{top1color}\textbf{81.13 ± 1.09} & 79.22 ± 2.01 & 75.40 ± 2.35 & \cellcolor{top2color}\textbf{80.33 ± 2.80} \\
\textbf{Far-OOD} & \cellcolor{top2color}\textbf{58.93 ± 2.17} & 56.22 ± 4.84 & 47.87 ± 5.46 & \cellcolor{top1color}\textbf{59.13 ± 3.31} \\
\midrule
\multicolumn{5}{c}{\textbf{Symmetry}} \\
\midrule
\textbf{ID} & \cellcolor{top2color}\textbf{85.17 ± 1.12} & 85.08 ± 1.25 & \cellcolor{top1color}\textbf{86.07 ± 0.95} & 83.87 ± 0.88 \\
\textbf{Near-OOD} & \cellcolor{top1color}\textbf{85.07 ± 1.03} & 83.67 ± 1.33 & 82.17 ± 1.05 & \cellcolor{top2color}\textbf{84.53 ± 0.87} \\
\textbf{Far-OOD} & \cellcolor{top1color}\textbf{83.67 ± 0.73} & \cellcolor{top2color}\textbf{82.50 ± 1.00} & 76.00 ± 2.74 & 80.13 ± 1.86 \\
\midrule
\multicolumn{5}{c}{\textbf{Spectral}} \\
\midrule
\textbf{ID} & \cellcolor{top2color}\textbf{0.0503 ± 0.0030} & \cellcolor{top1color}\textbf{0.0454 ± 0.0017} & 0.0627 ± 0.0057 & 0.0538 ± 0.0020 \\
\textbf{Near-OOD} & \cellcolor{top1color}\textbf{0.1330 ± 0.0111} & 0.1503 ± 0.0149 & 0.1569 ± 0.0078 & \cellcolor{top2color}\textbf{0.1457 ± 0.0139} \\
\textbf{Far-OOD} & \cellcolor{top1color}\textbf{0.2090 ± 0.0141} & 0.2325 ± 0.0127 & 0.2296 ± 0.0083 & \cellcolor{top2color}\textbf{0.2194 ± 0.0184} \\
\midrule
\multicolumn{5}{c}{\textbf{Bridge}} \\
\midrule
\textbf{ID} & \cellcolor{top2color}\textbf{1.1834 ± 0.1239} & \cellcolor{top1color}\textbf{1.1461 ± 0.0842} & 1.2615 ± 0.0825 & 1.2484 ± 0.0609 \\
\textbf{Near-OOD} & \cellcolor{top2color}\textbf{2.5184 ± 0.2428} & 2.8263 ± 0.1644 & 2.8143 ± 0.1015 & \cellcolor{top1color}\textbf{2.3876 ± 0.1618} \\
\textbf{Far-OOD} & \cellcolor{top2color}\textbf{6.1317 ± 0.2053} & 6.7032 ± 0.1245 & 6.8447 ± 0.1495 & \cellcolor{top1color}\textbf{5.8798 ± 0.2176} \\
\bottomrule
\end{tabular}
\label{tab:circular}
\end{table*}

\clearpage
\section{Preliminary Experiments and Implementation}\label{app:Preliminary}
\subsection{Implementation Details of Preliminary Experiments}\label{app:Preliminary:imple}

We implemented eight neural architectures adapted to graph data. For GNNs, we used GCN, GIN, GAT, and GPS, with $2$–$5$ layers, $128$ hidden units, ReLU activation, dropout with $0.5$, and global mean pooling. GIN and GPS included MLPs in each block, GAT used multi-head attention, and GPS combined local GIN aggregation with global attention. For vision models (ResNet-50, ViT-B/16, Swin-Tiny, ConvNeXtV2-Tiny), graphs were converted to 2D image representations (e.g., adjacency layouts or distance matrices). Models were initialized with ImageNet-1K weights, classification heads replaced by MLPs, and inputs resized to $224$×$224$. All dataset splits were generated using seed 0, with experiments conducted across five random seeds $\in [0,1,2,3,4]$ for robust evaluation. Models were trained using Adam optimizer (learning rate $5e-6$, weight decay $1e-4$) with batch size 64 and early stopping.  For early stopping, the patience settings are set as: $30$ epochs for all GNN models, and for vision models, $5$ epochs on PROTEINS and $15$ epochs on all other datasets.

\subsection{Training dynamics and confidence}\label{app:training_dynamics}
Figures \ref{fig:dynamic9}–\ref{fig:dynamic12} present training dynamics across four datasets, showing model differences in convergence and generalization.
Figures \ref{fig:proteins_confidence} and \ref{fig:imdb_binary_confidence} display confidence distributions for two representative datasets, one from the biological domain (PROTEINS) and one from social networks (IMDB-BINARY), highlighting the contrast between GNNs and vision models in prediction certainty.

\begin{figure}[htbp]
    \centering
    \includegraphics[width=0.245\textwidth]{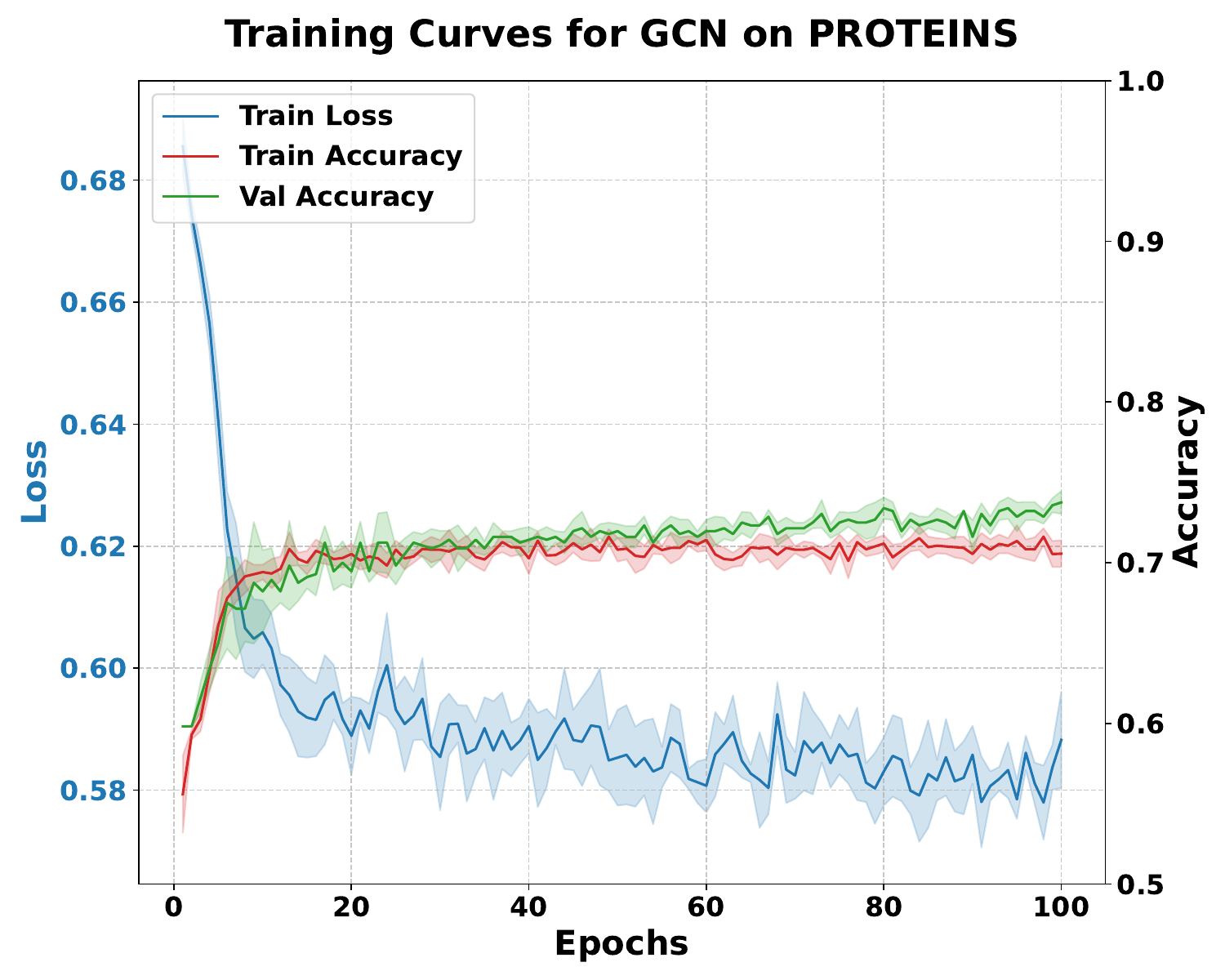}
    \includegraphics[width=0.245\textwidth]{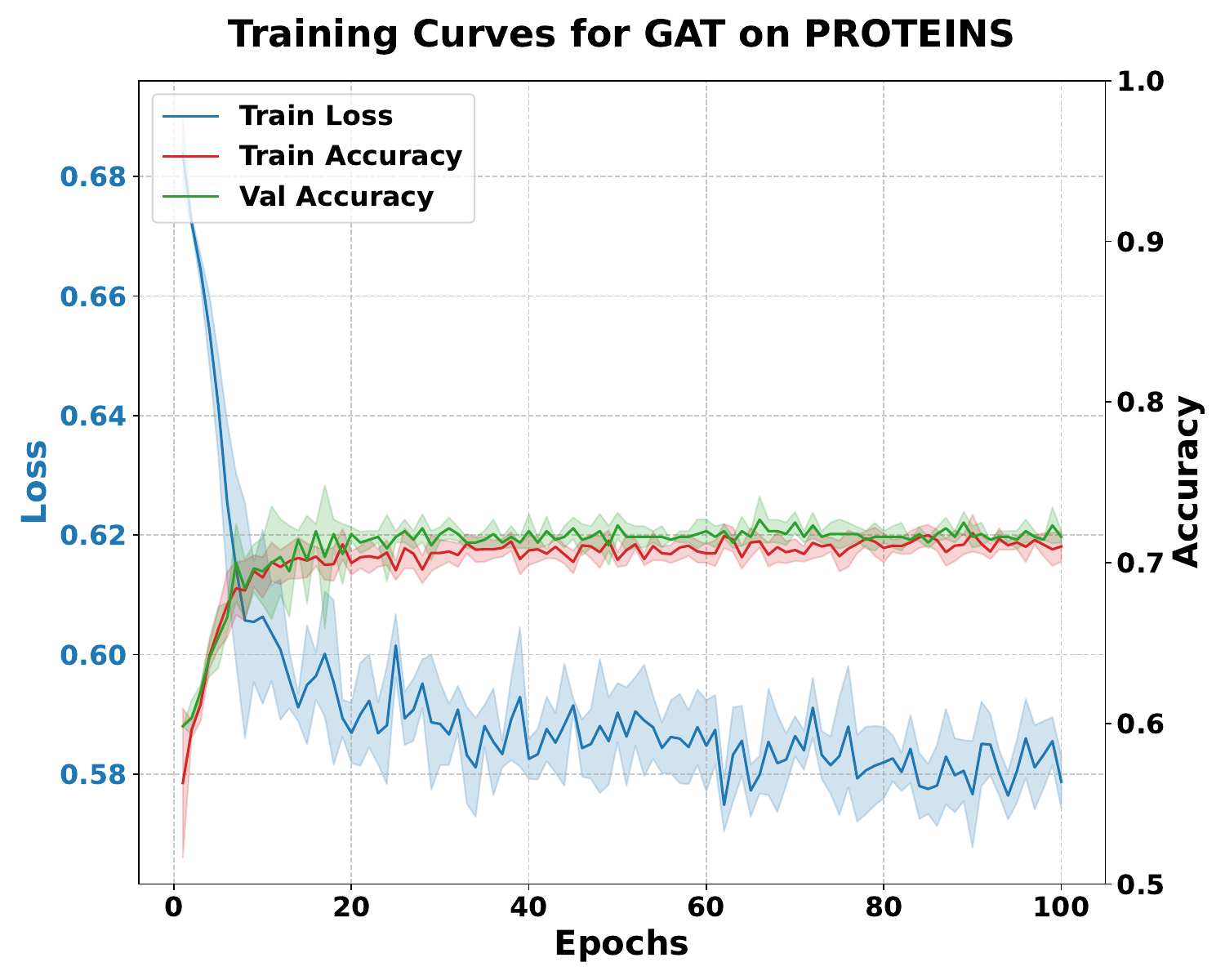}
    \includegraphics[width=0.245\textwidth]{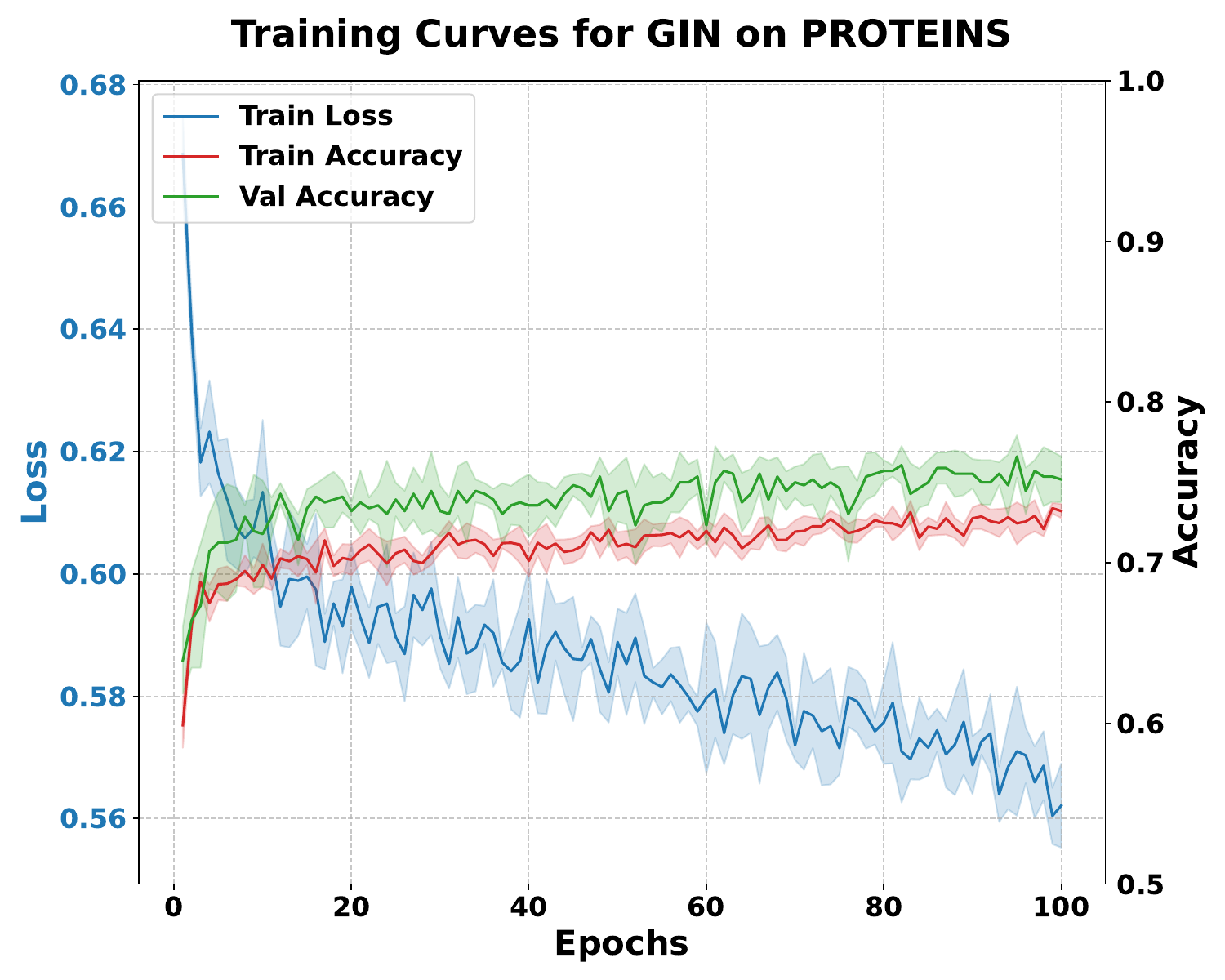}
    \includegraphics[width=0.245\textwidth]{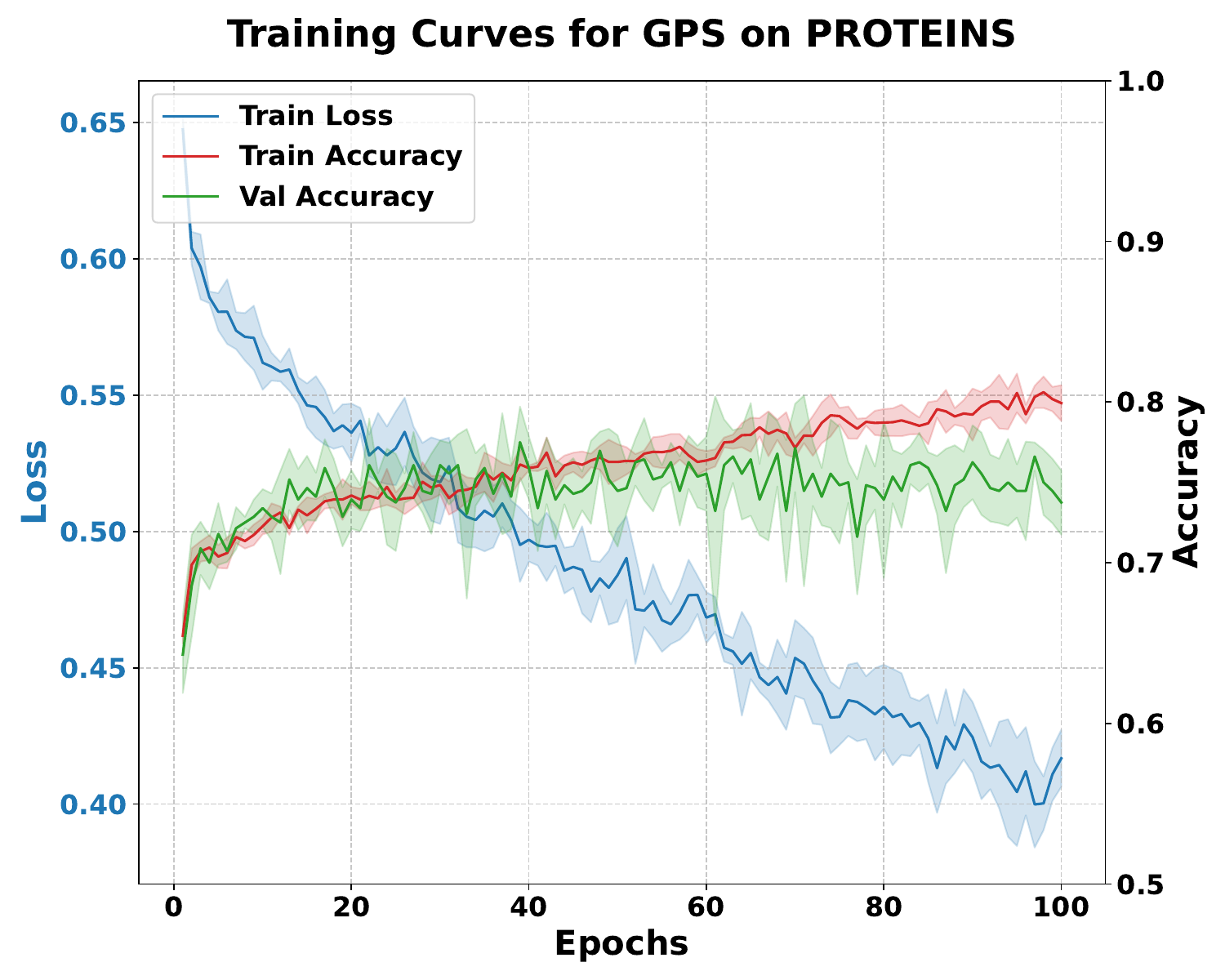}
    \includegraphics[width=0.245\textwidth]{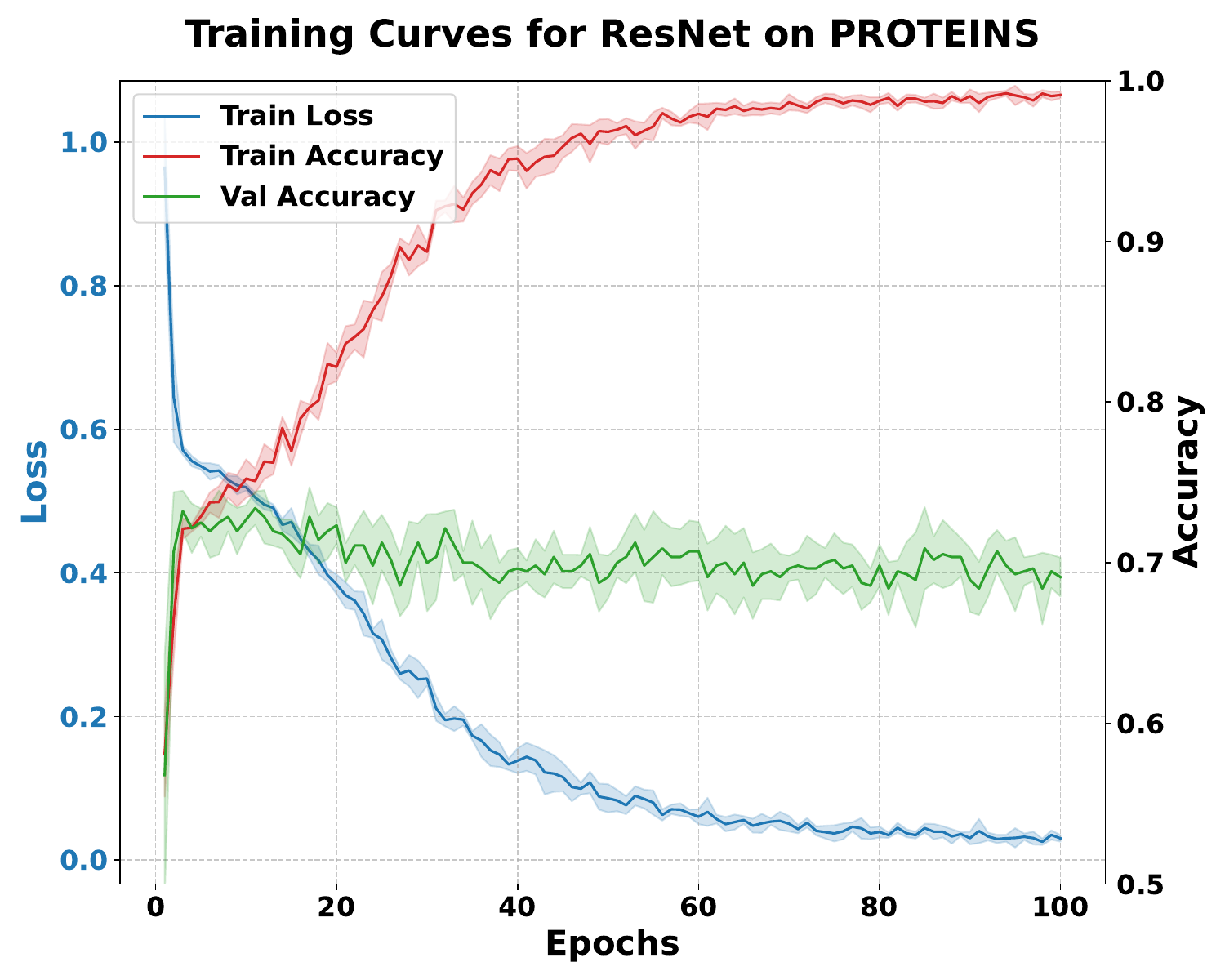}
    \includegraphics[width=0.245\textwidth]{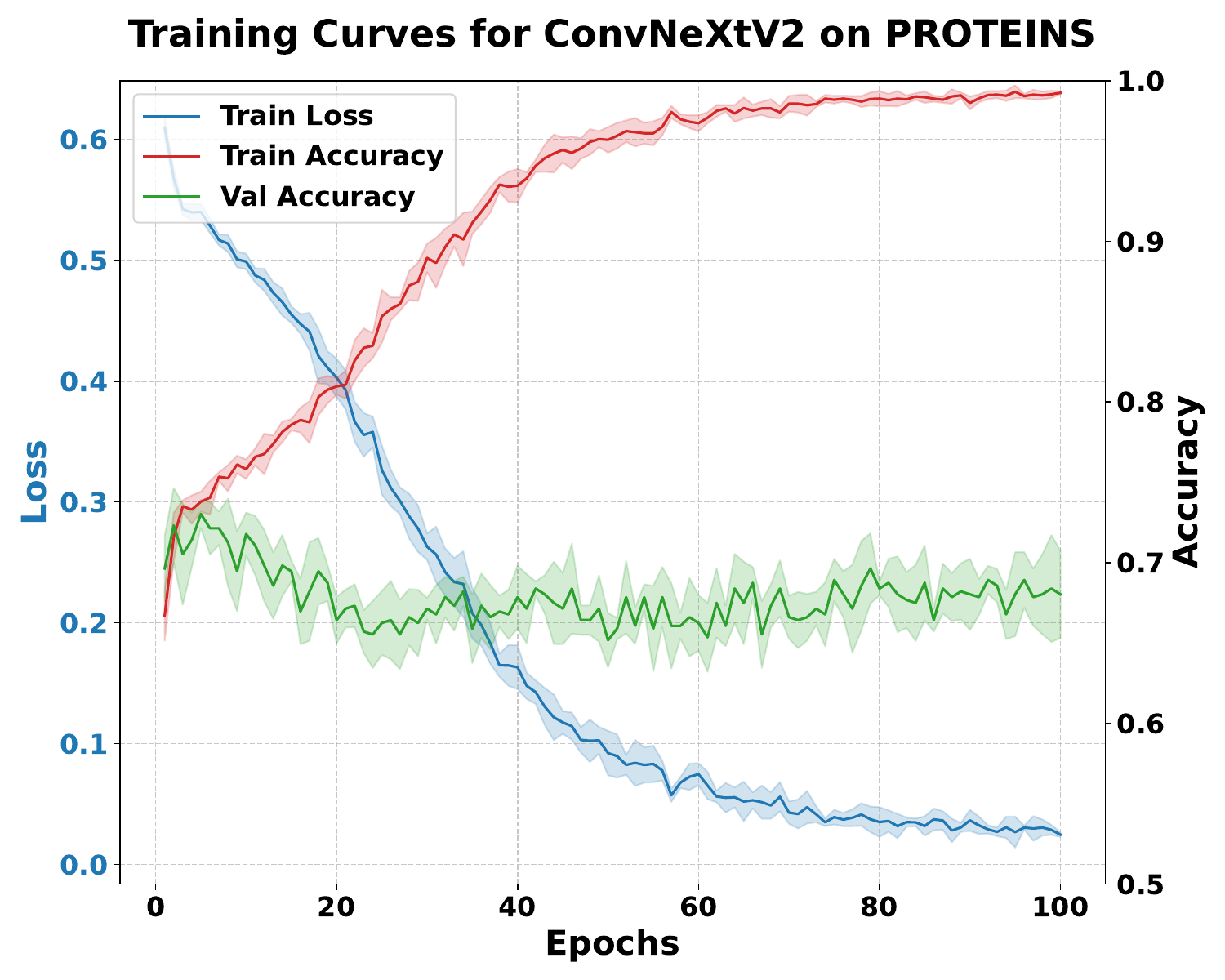}
    \includegraphics[width=0.245\textwidth]{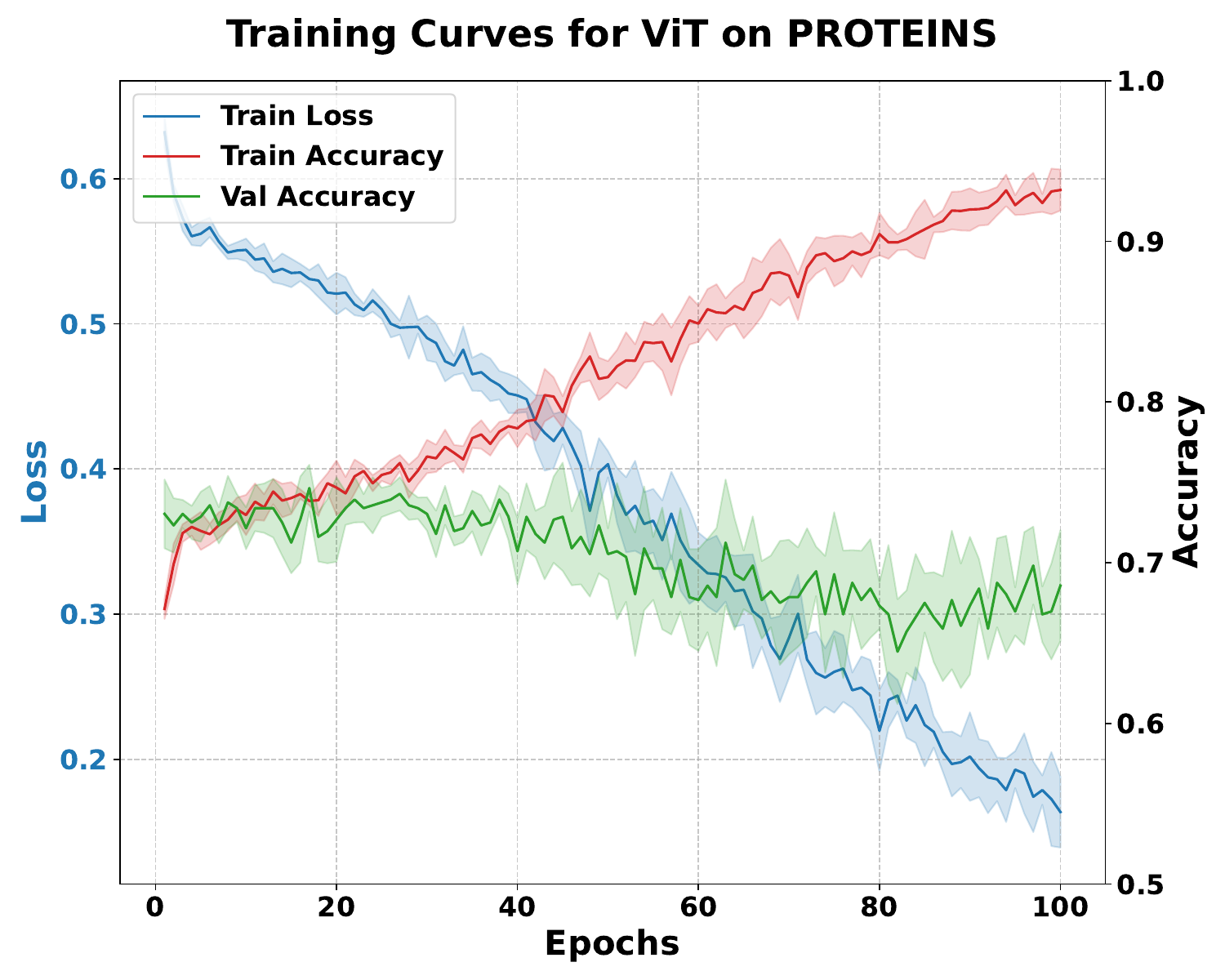}
    \includegraphics[width=0.245\textwidth]{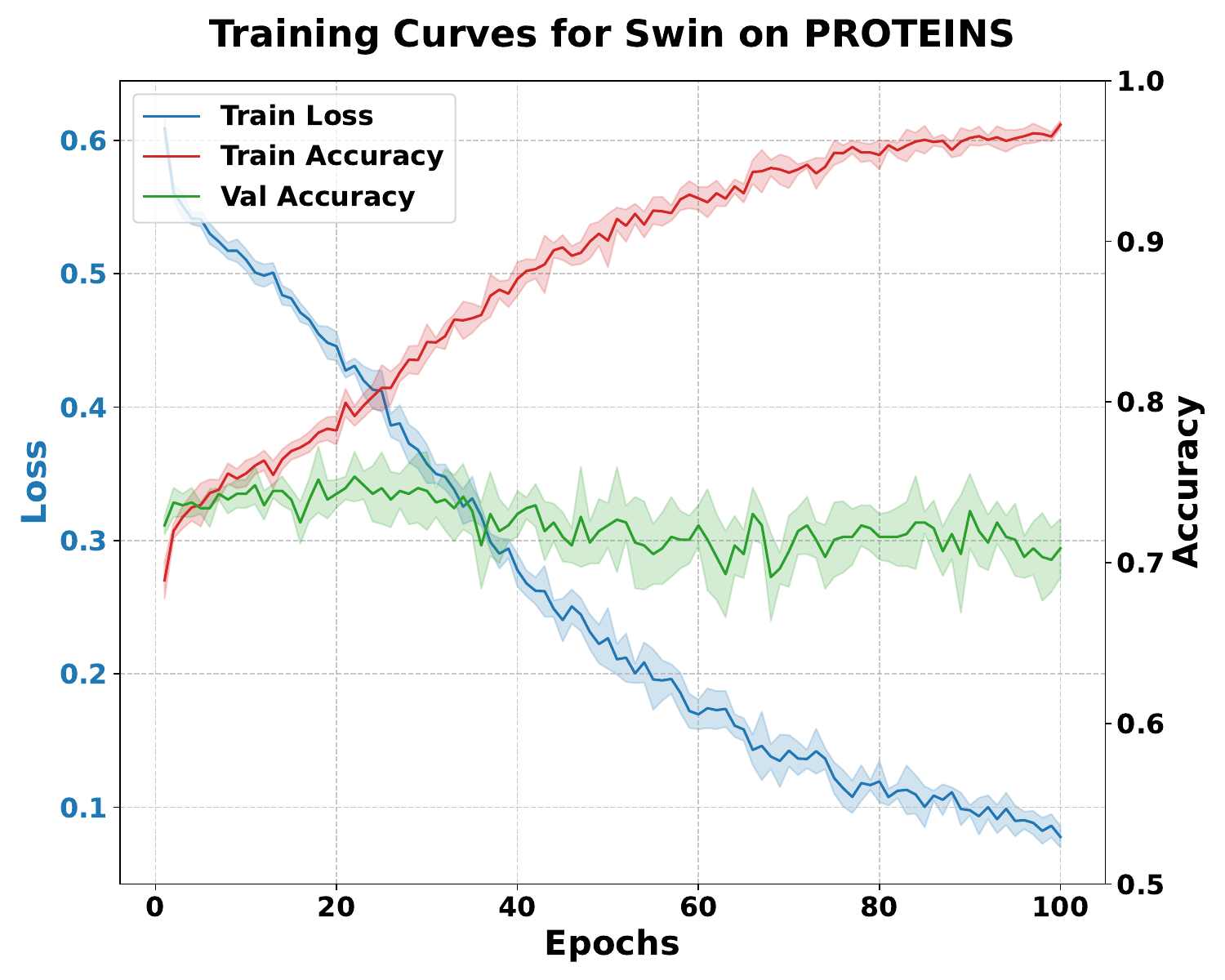}
    \caption{Training dynamics across different architectures on PROTEINS datasets. For each model, we plot the training loss (blue), training accuracy (red), and validation accuracy (green) over 100 epochs. The shaded areas represent the standard deviation across multiple runs.}\label{fig:dynamic9}
\end{figure}

\begin{figure}[htbp]
    \centering
    \includegraphics[width=0.245\textwidth]{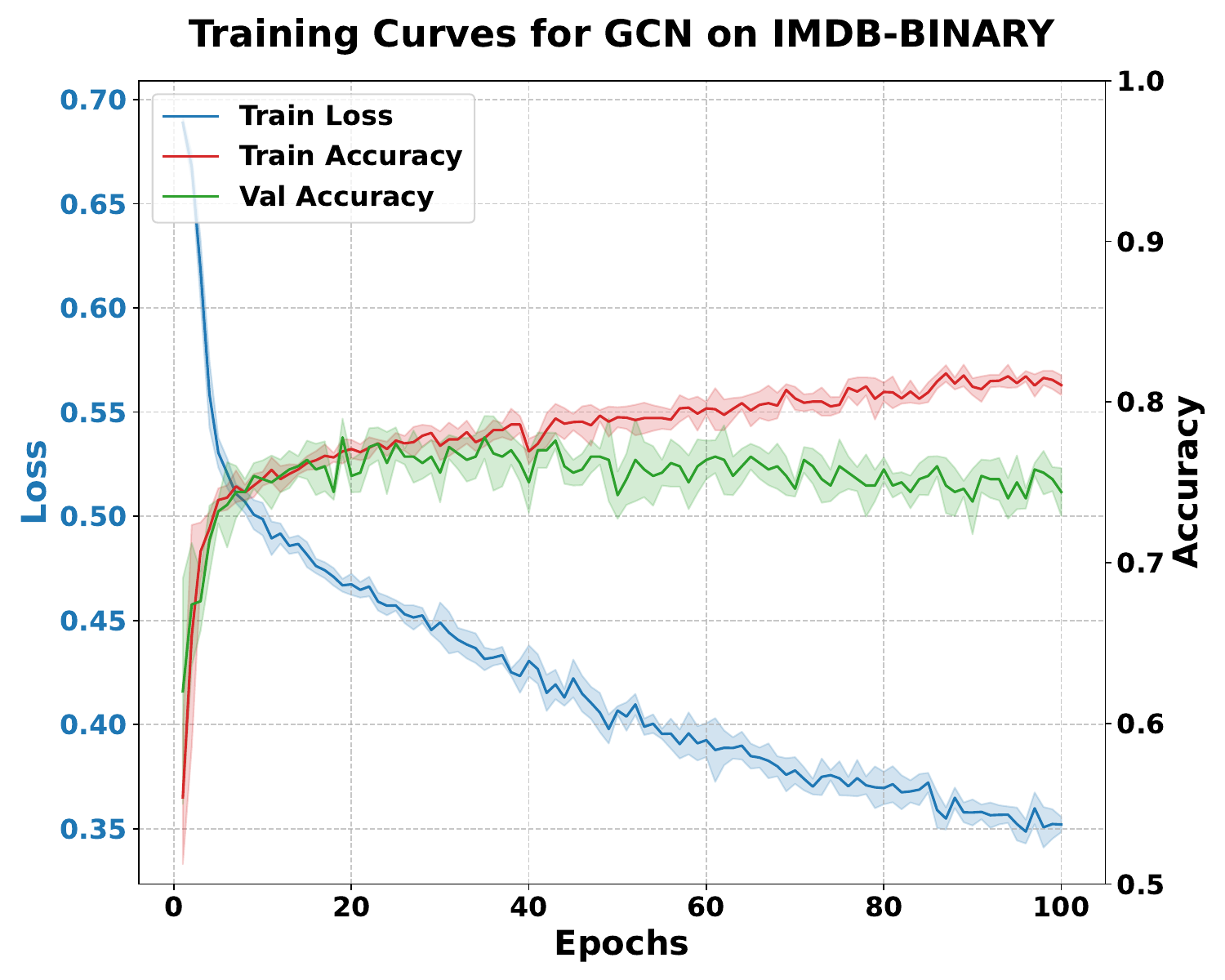}
    \includegraphics[width=0.245\textwidth]{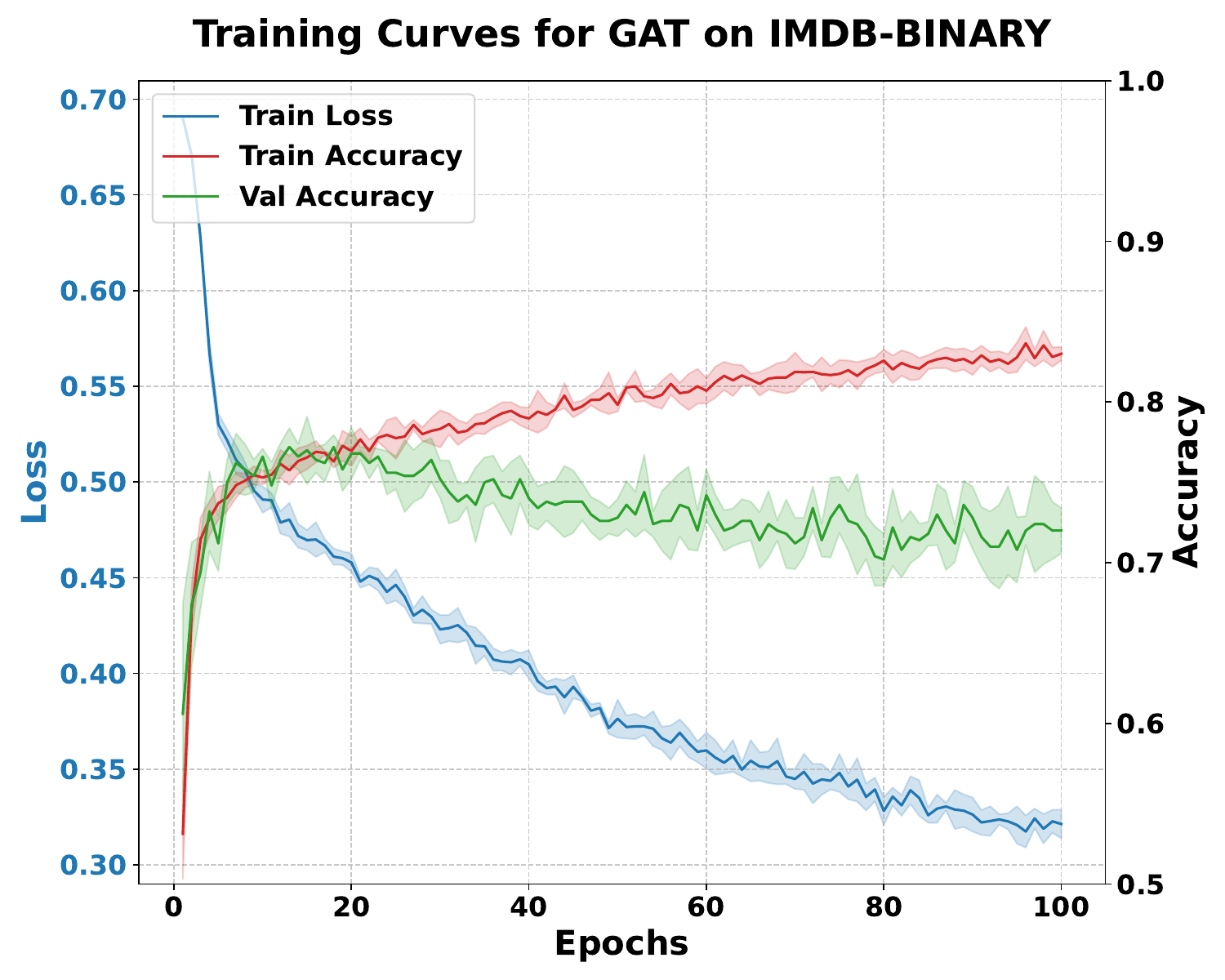}
    \includegraphics[width=0.245\textwidth]{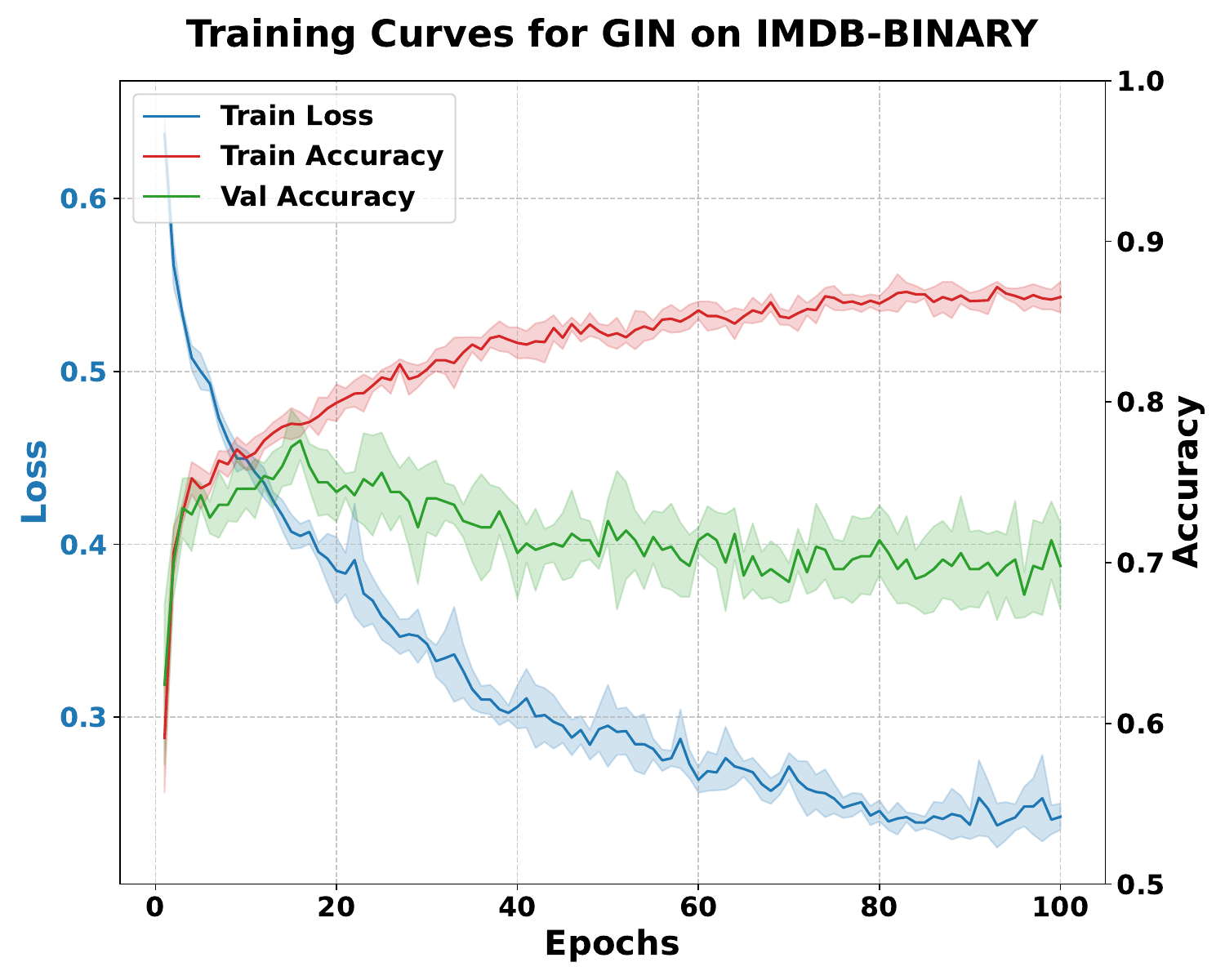}
    \includegraphics[width=0.245\textwidth]{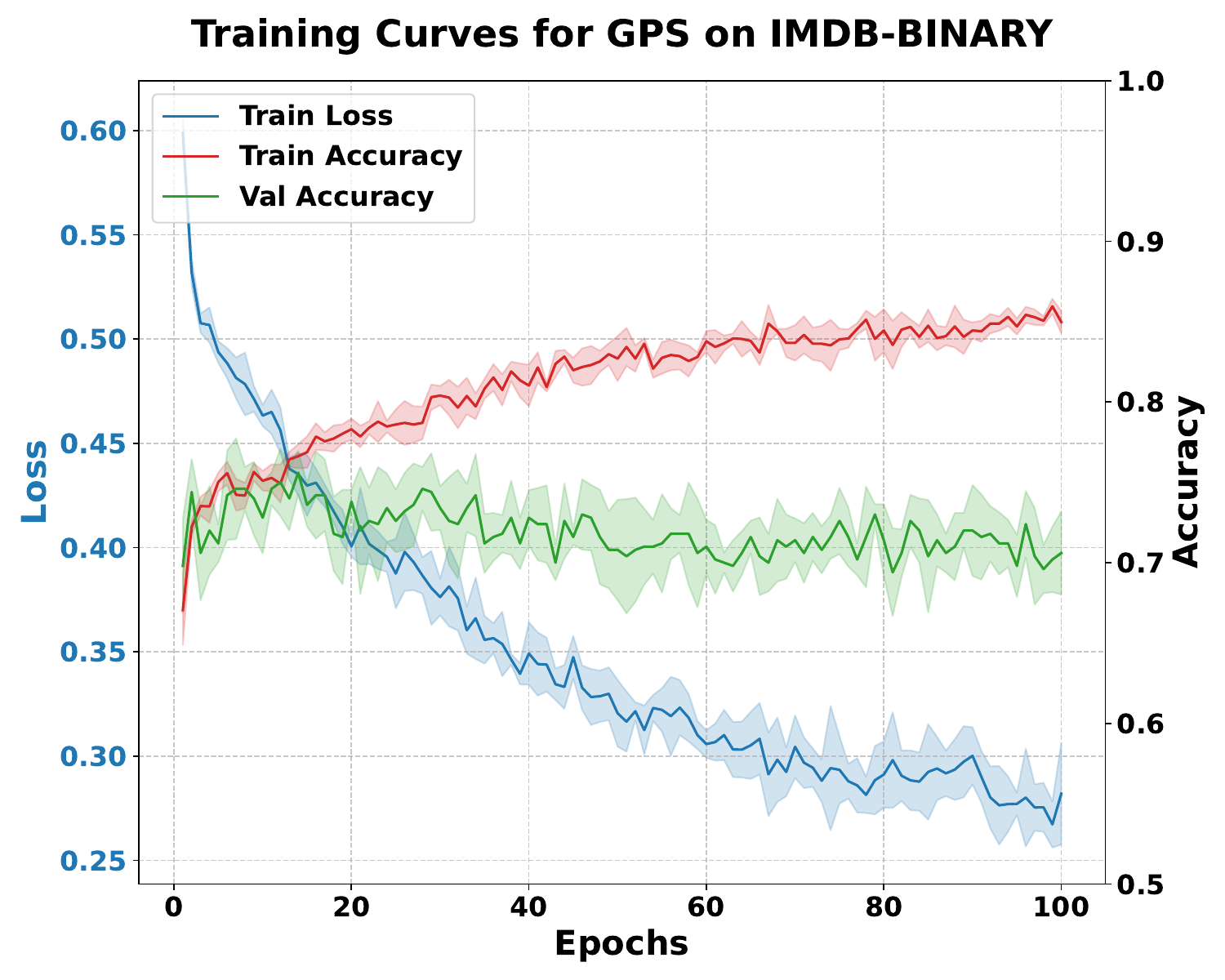}
    \includegraphics[width=0.245\textwidth]{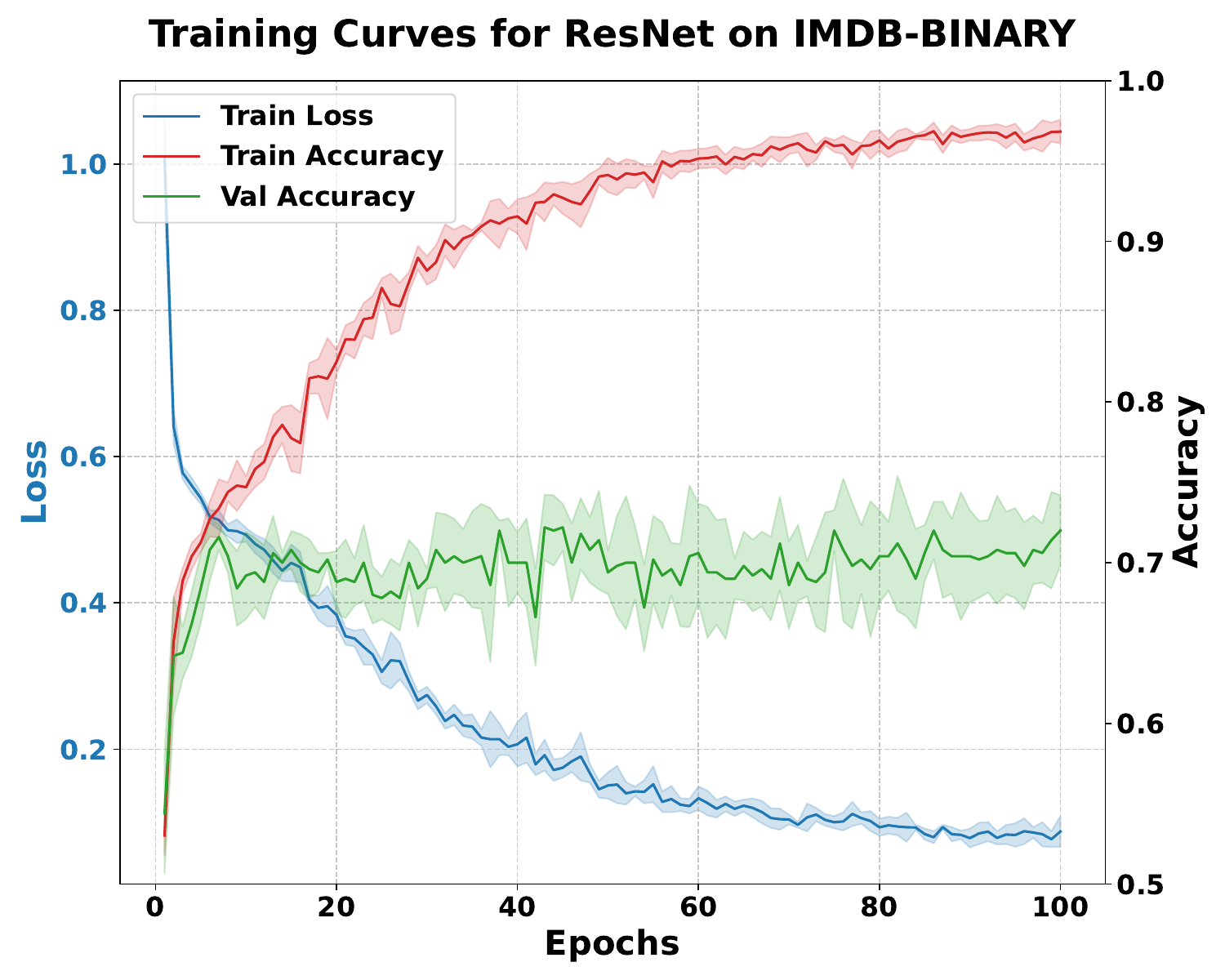}
    \includegraphics[width=0.245\textwidth]{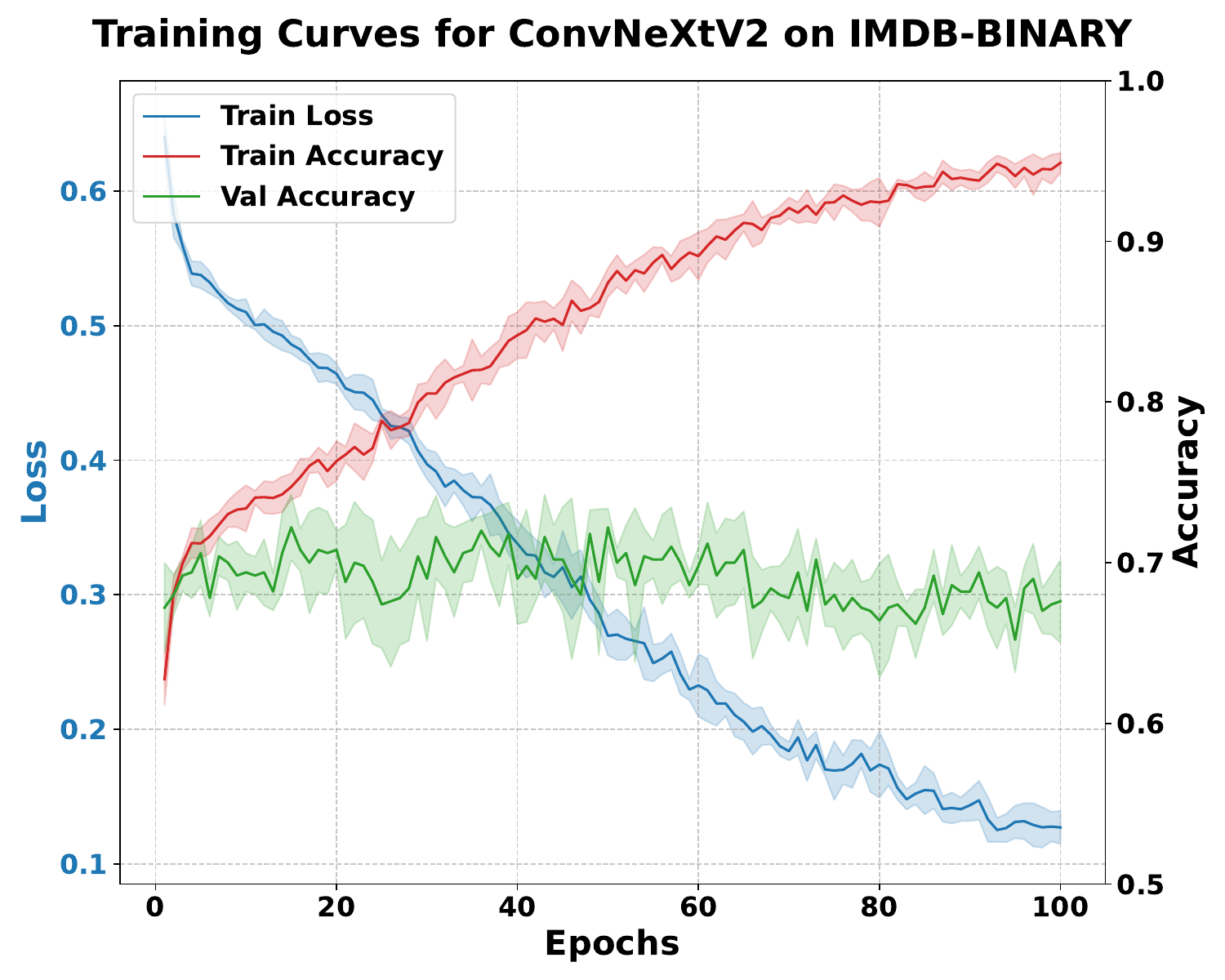}
    \includegraphics[width=0.245\textwidth]{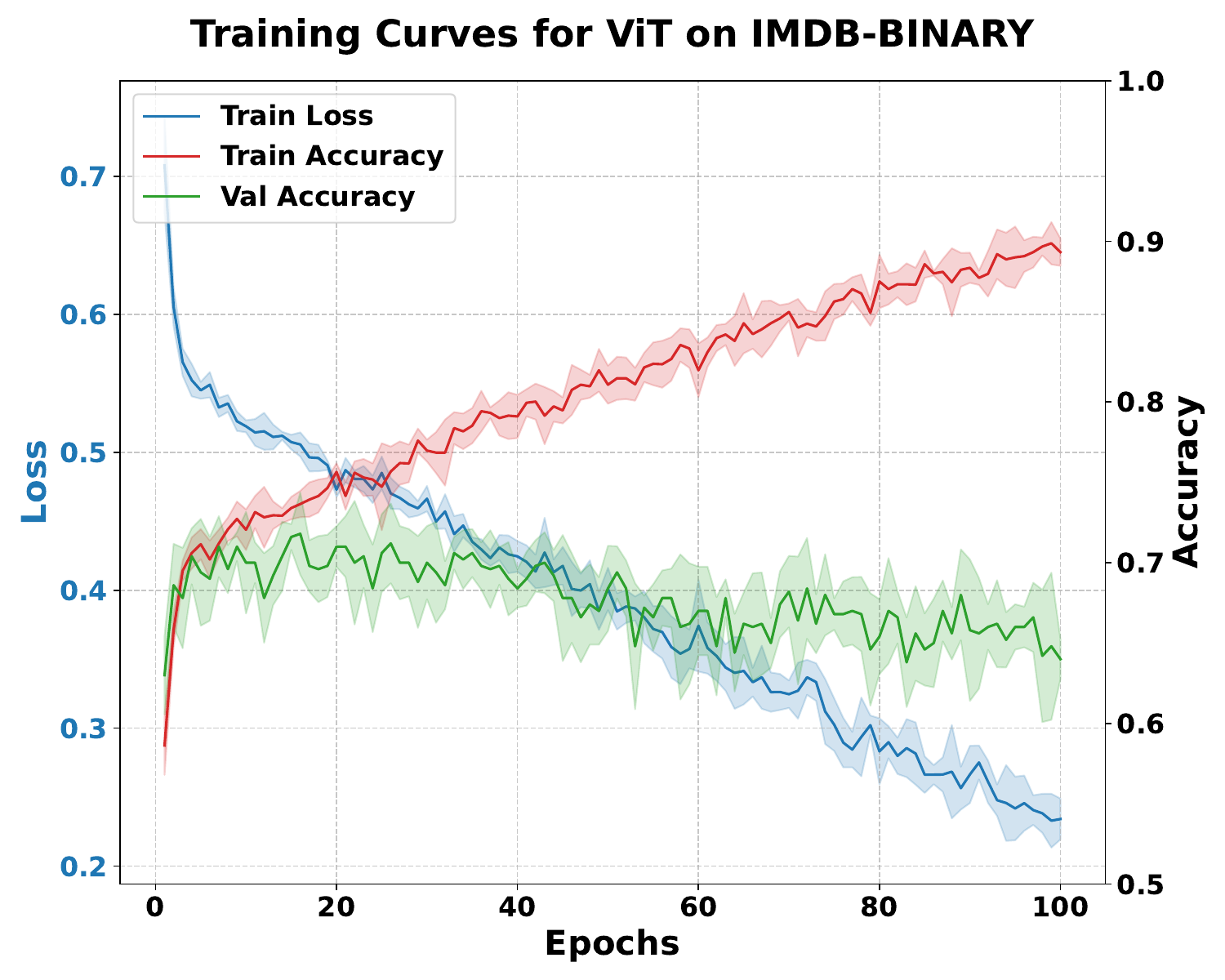}
    \includegraphics[width=0.245\textwidth]{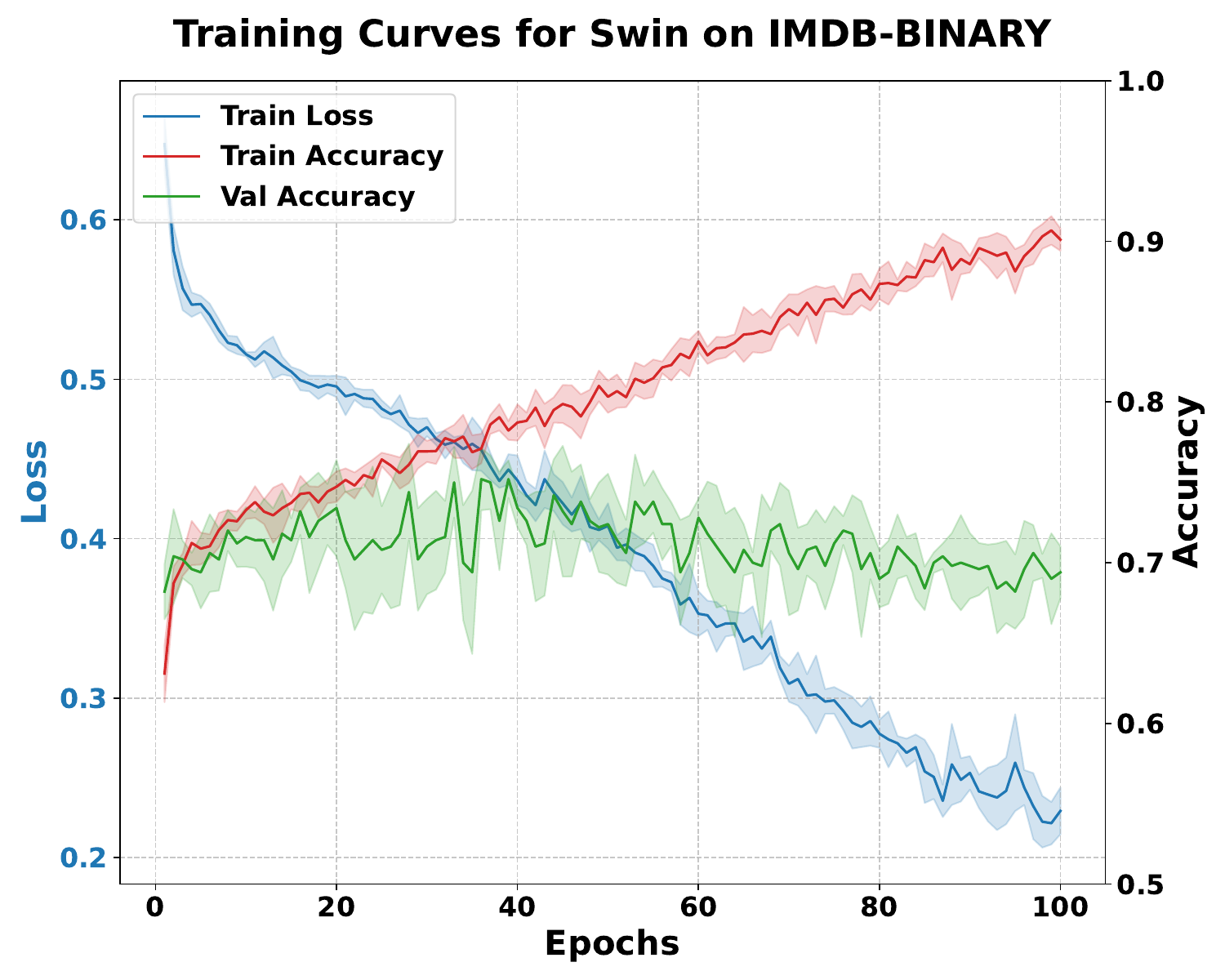}
    \caption{Training dynamics across different architectures on IMDB-BINARY datasets. For each model, we plot the training loss (blue), training accuracy (red), and validation accuracy (green) over 100 epochs. The shaded areas represent the standard deviation across multiple runs.}\label{fig:dynamic10}
\end{figure}

\begin{figure}[htbp]
    \centering
    \includegraphics[width=0.245\textwidth]{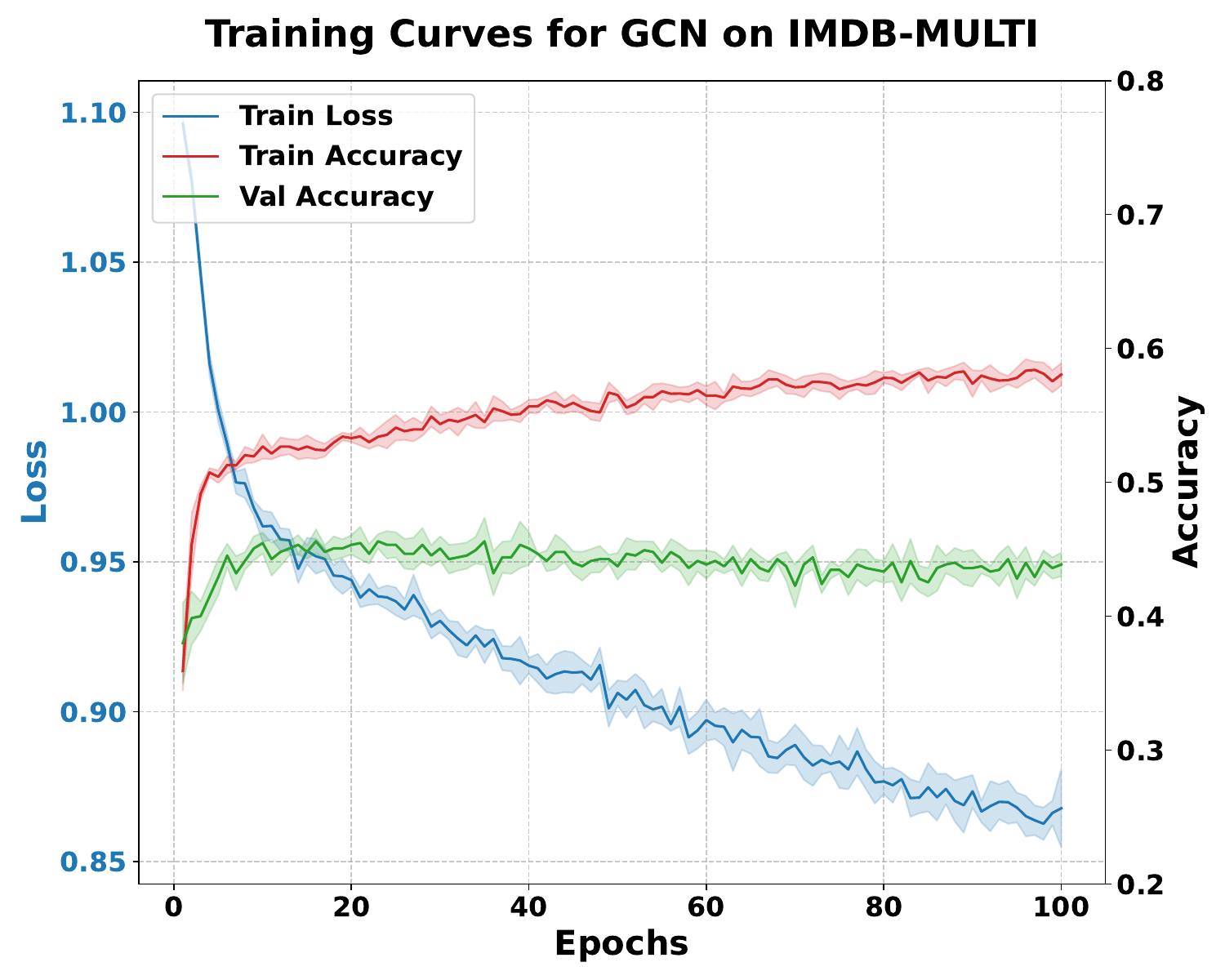}
    \includegraphics[width=0.245\textwidth]{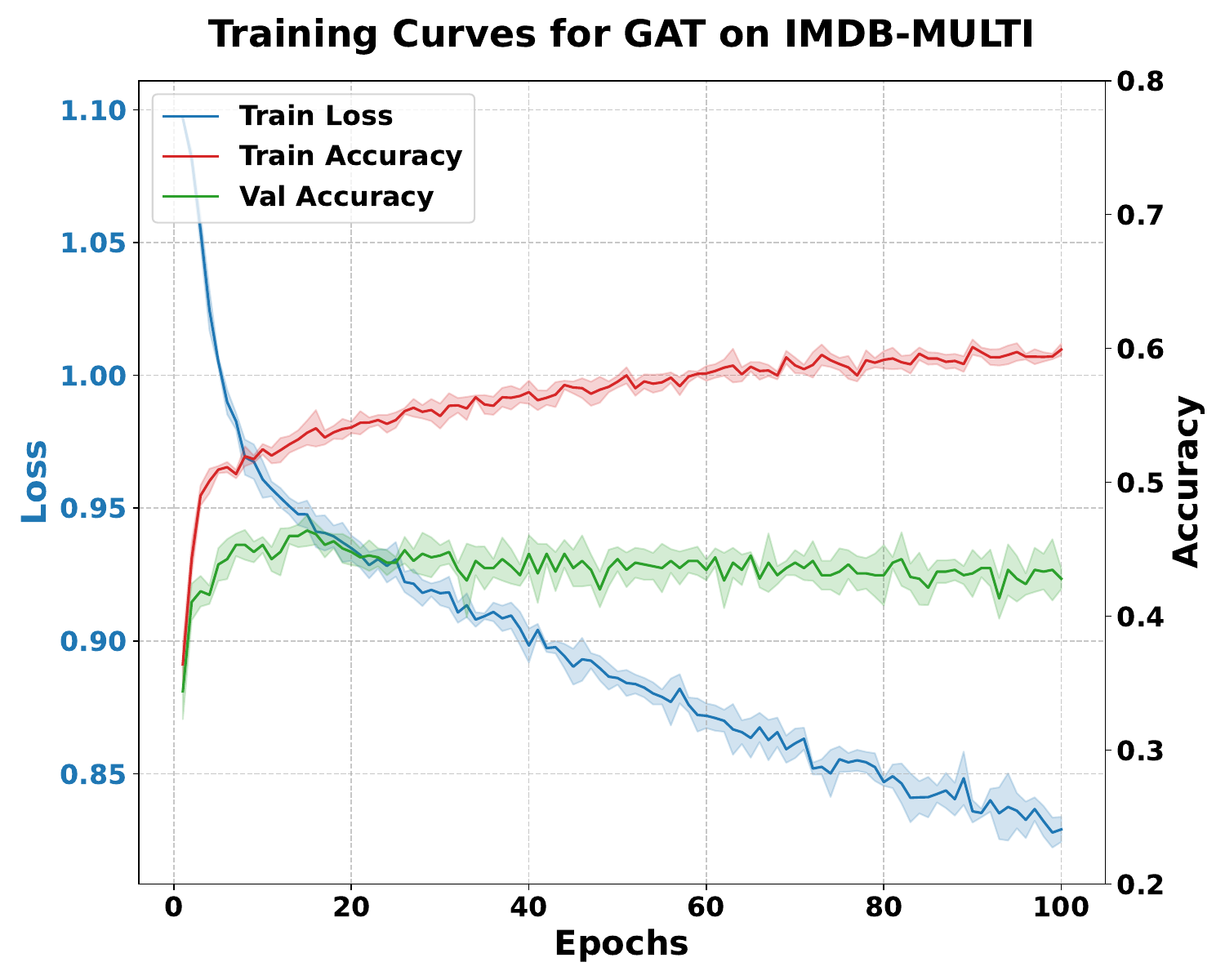}
    \includegraphics[width=0.245\textwidth]{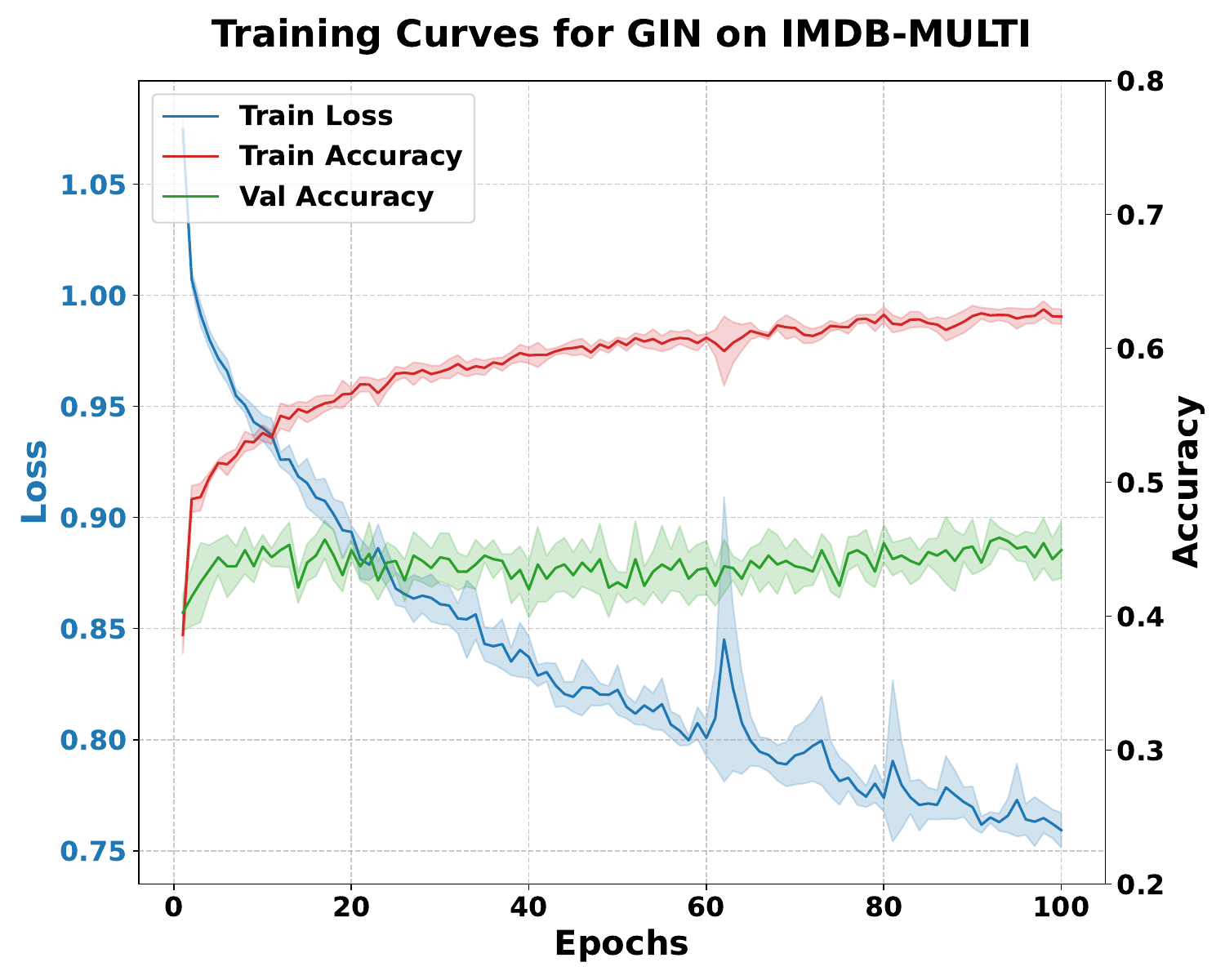}
    \includegraphics[width=0.245\textwidth]{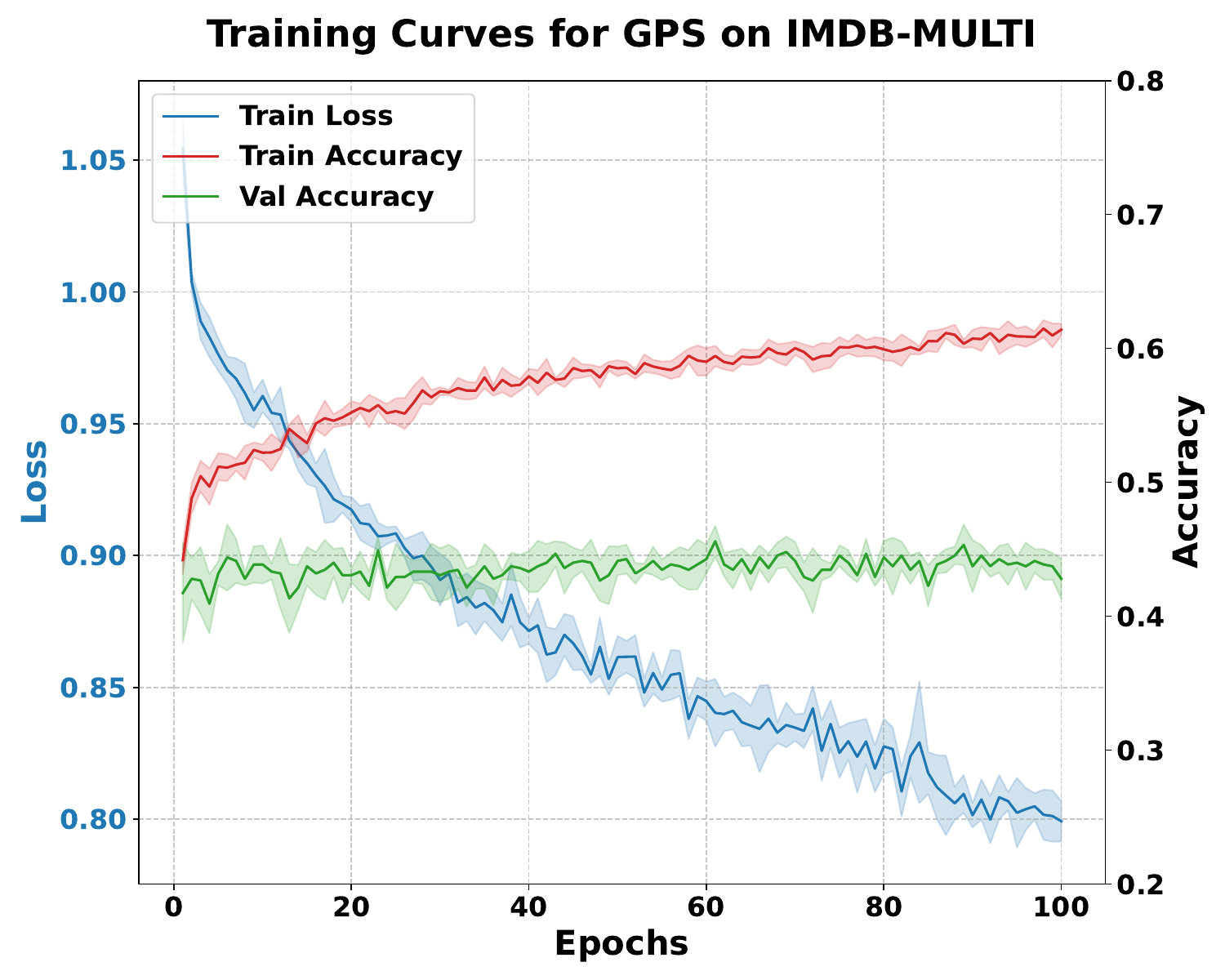}
    \includegraphics[width=0.245\textwidth]{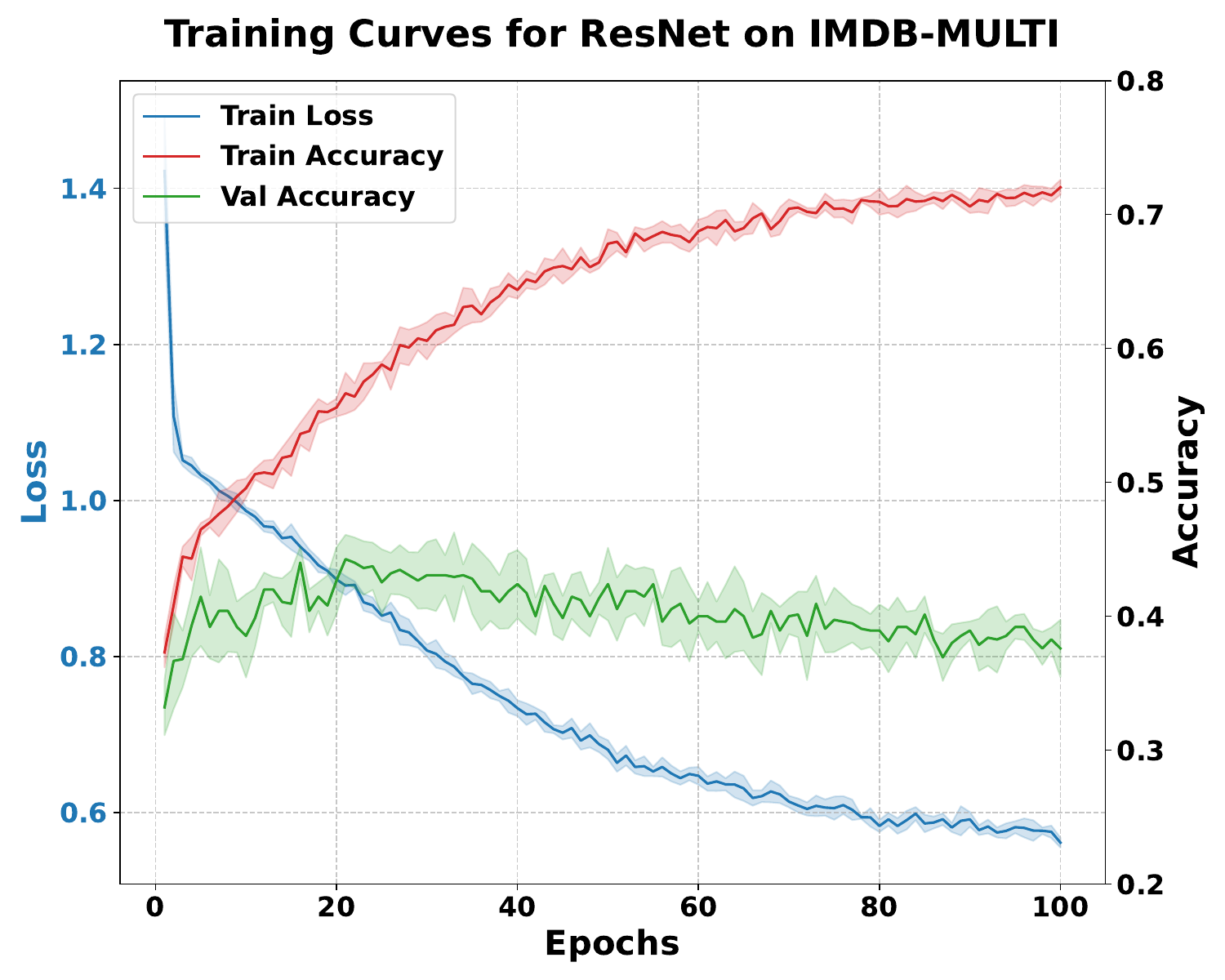}
    \includegraphics[width=0.245\textwidth]{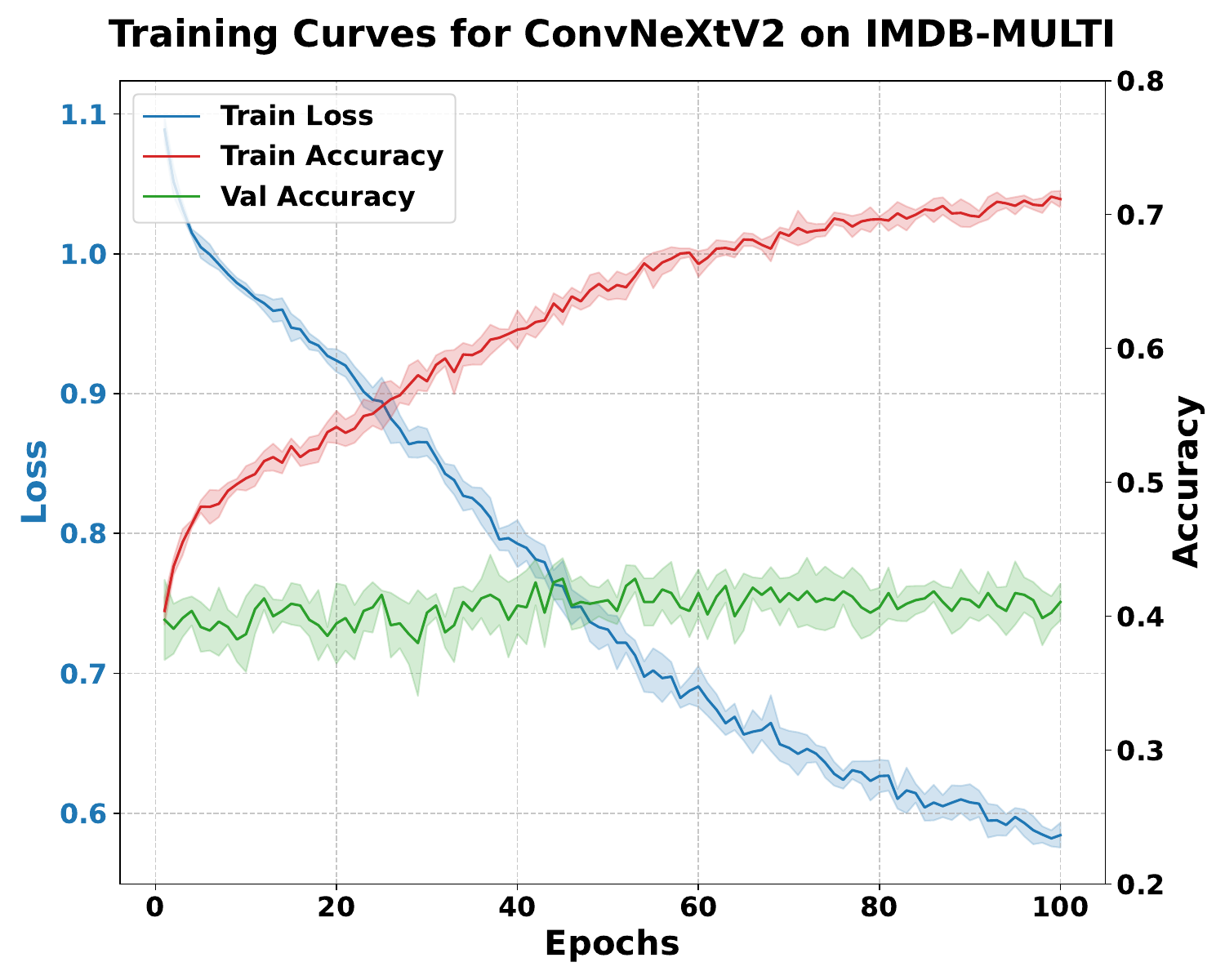}
    \includegraphics[width=0.245\textwidth]{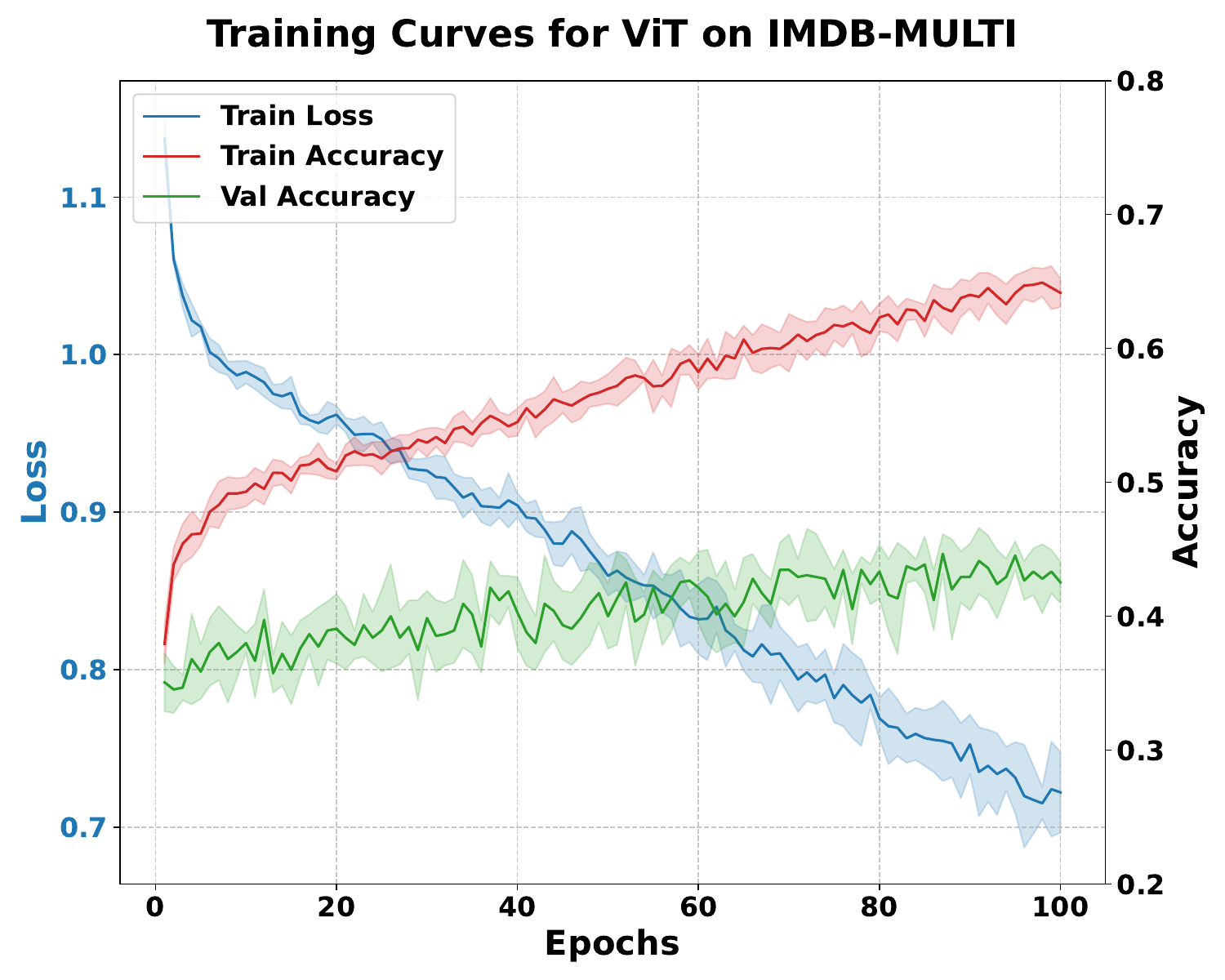}
    \includegraphics[width=0.245\textwidth]{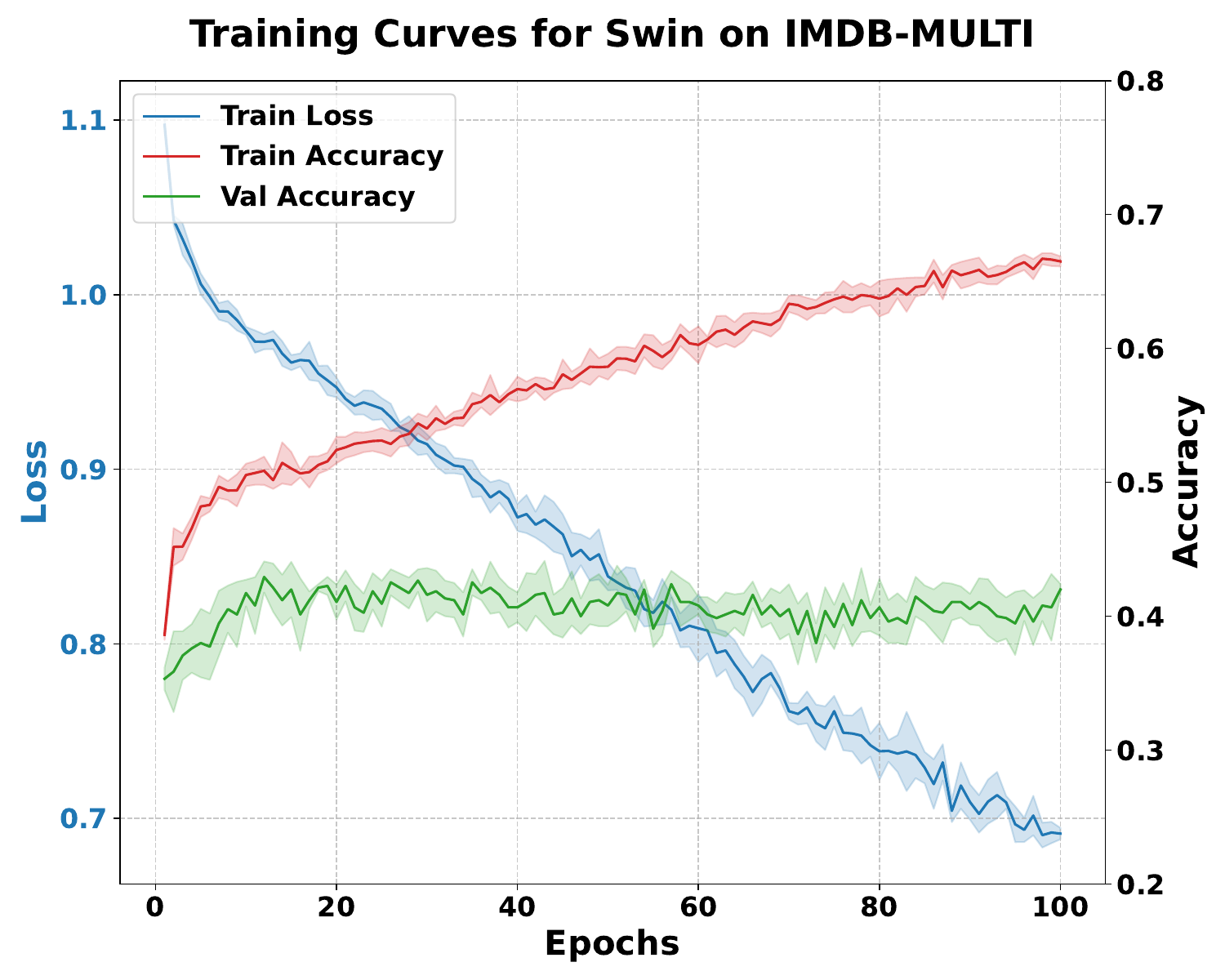}
    \caption{Training dynamics across different architectures on IMDB-MULTI and enzymes datasets. For each model, we plot the training loss (blue), training accuracy (red), and validation accuracy (green) over 100 epochs. The shaded areas represent the standard deviation across multiple runs.}\label{fig:dynamic11}
\end{figure}

\begin{figure}[htbp]
    \centering
    \includegraphics[width=0.245\textwidth]{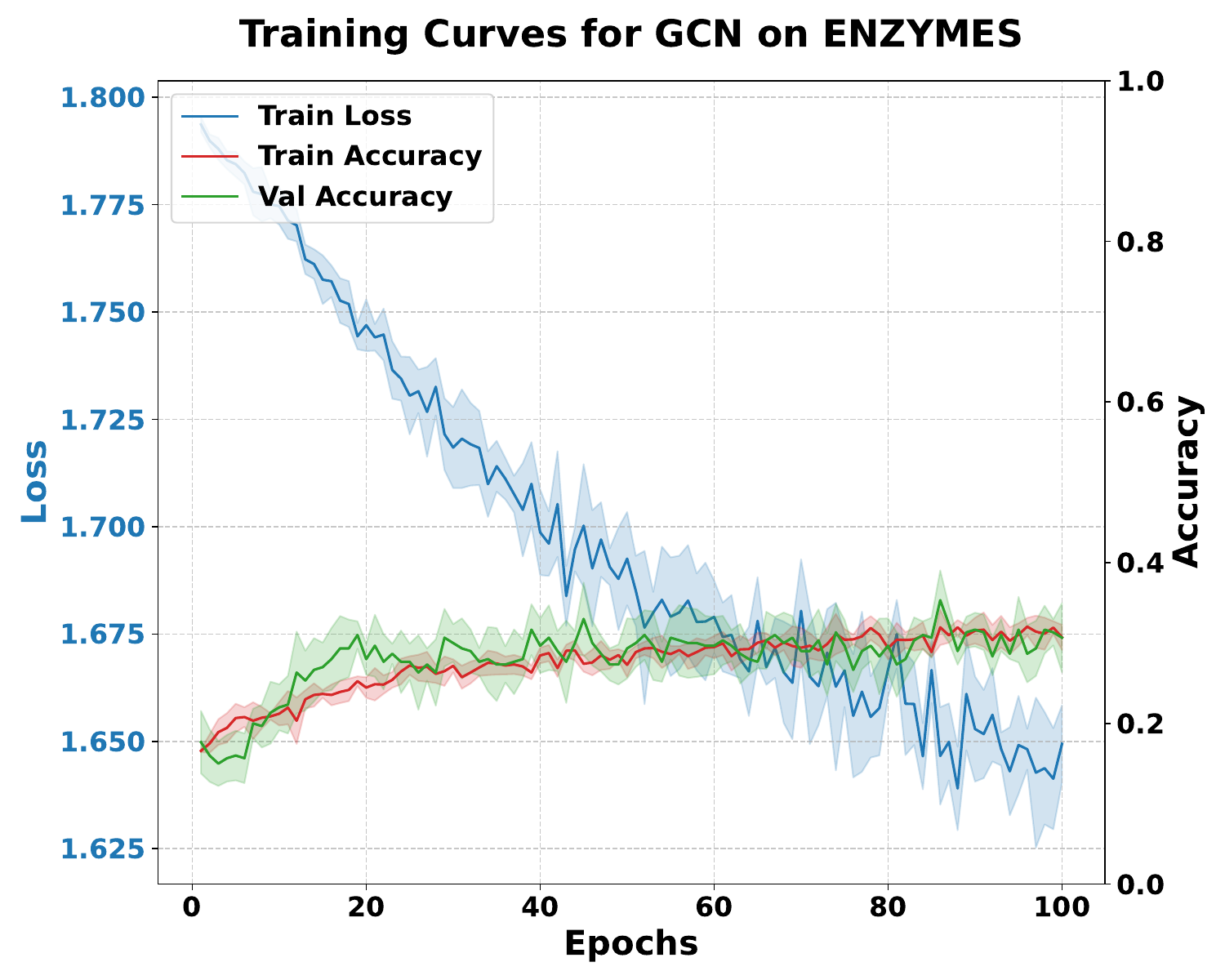}
    \includegraphics[width=0.245\textwidth]{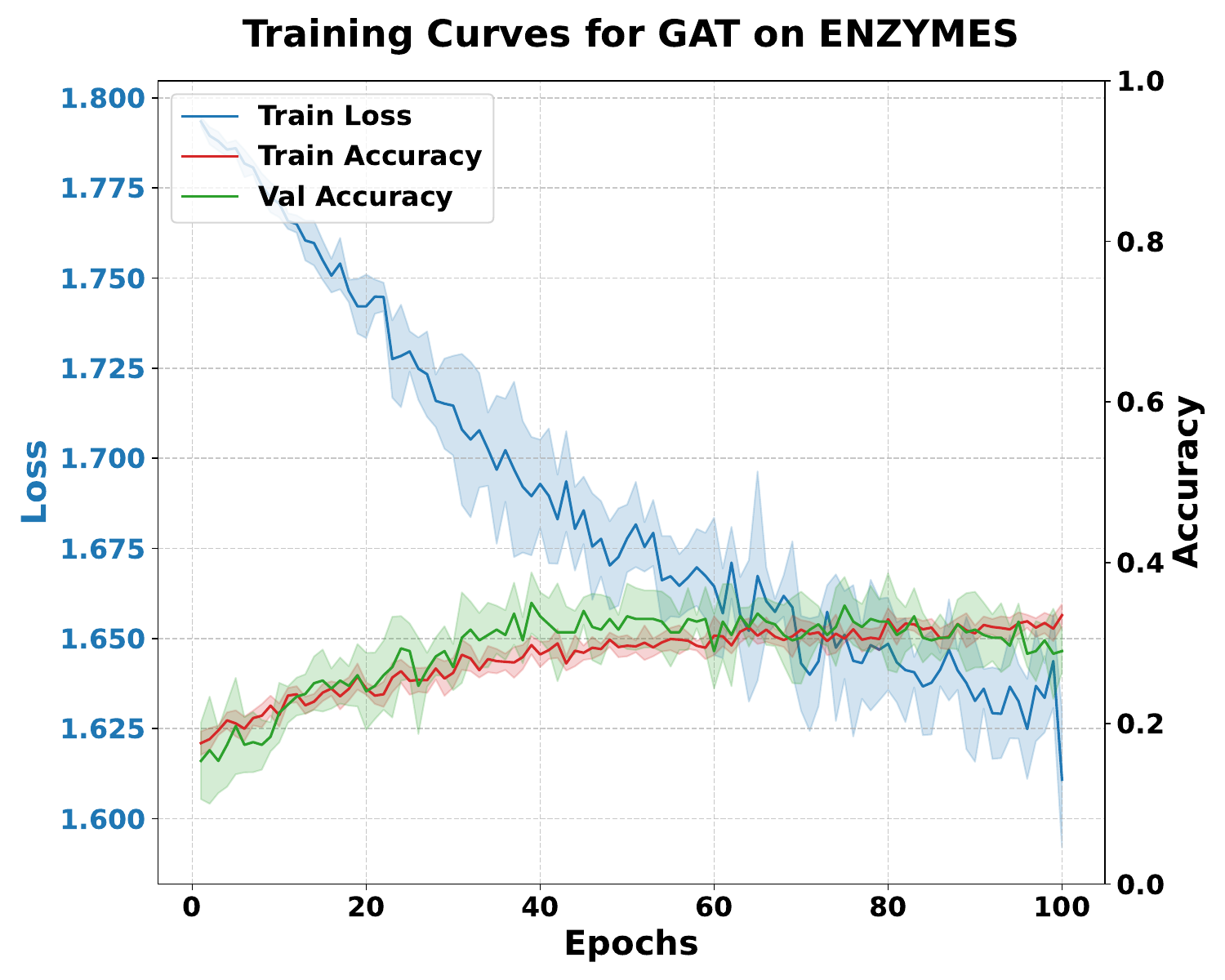}
    \includegraphics[width=0.245\textwidth]{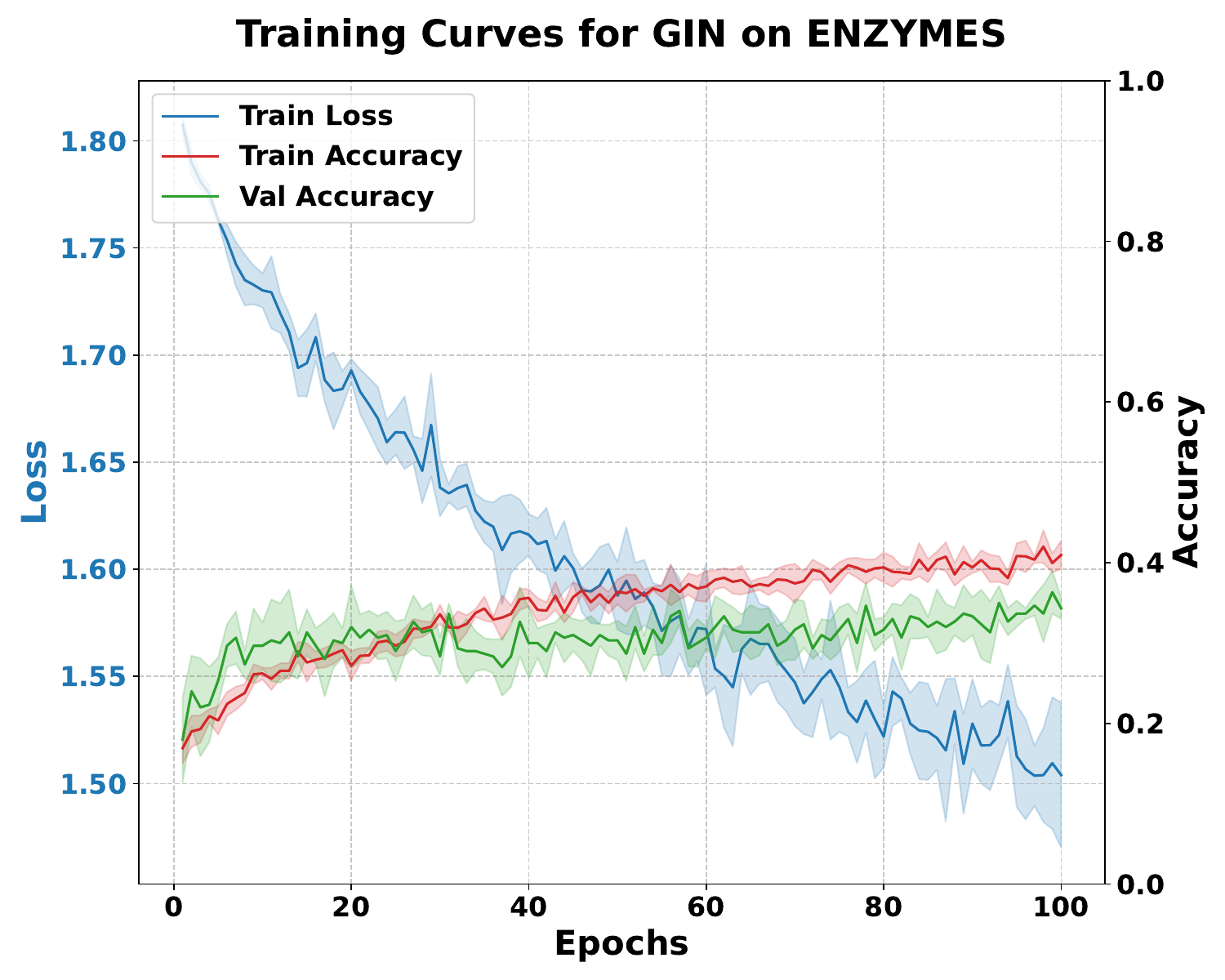}
    \includegraphics[width=0.245\textwidth]{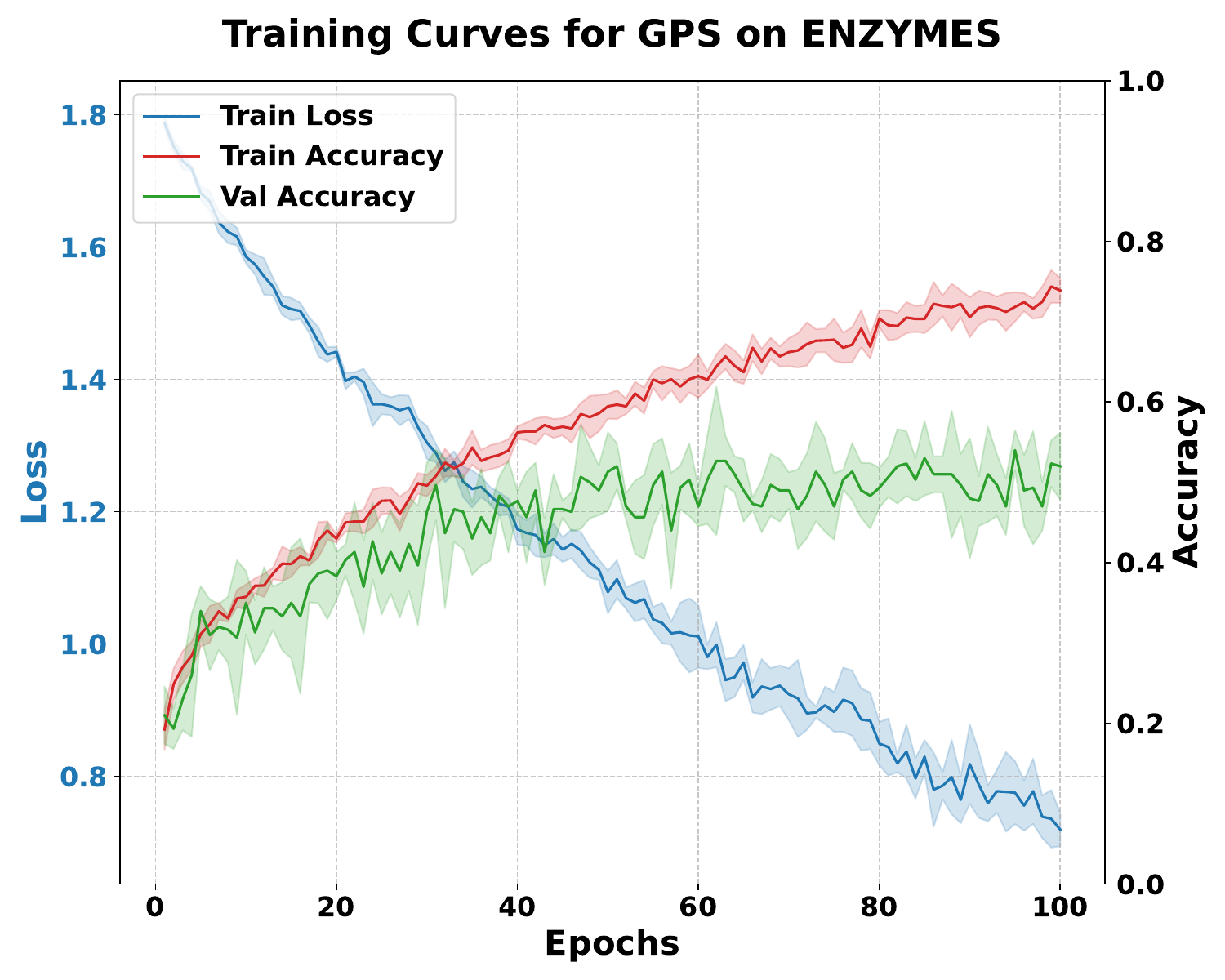}
    \includegraphics[width=0.245\textwidth]{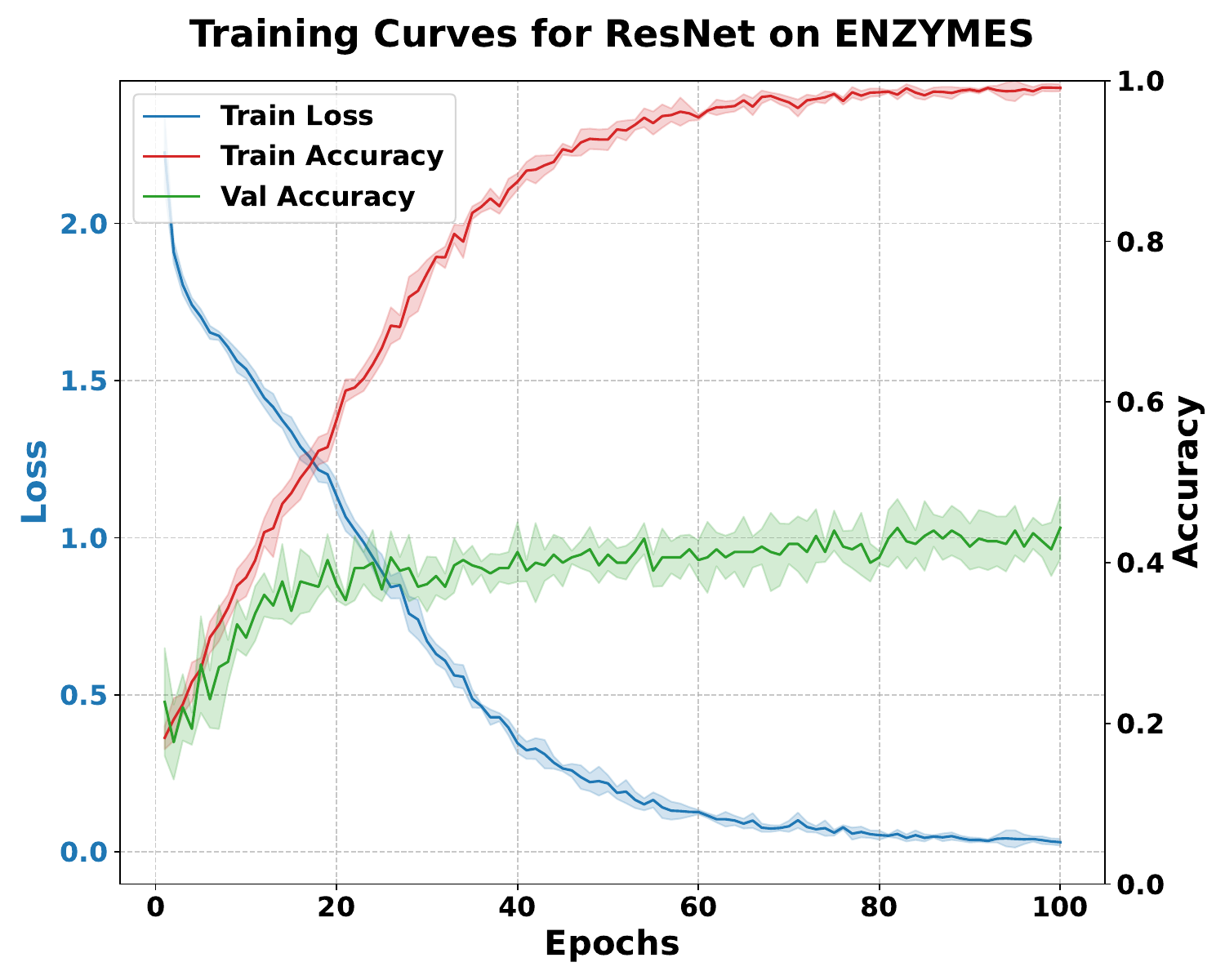}
    \includegraphics[width=0.245\textwidth]{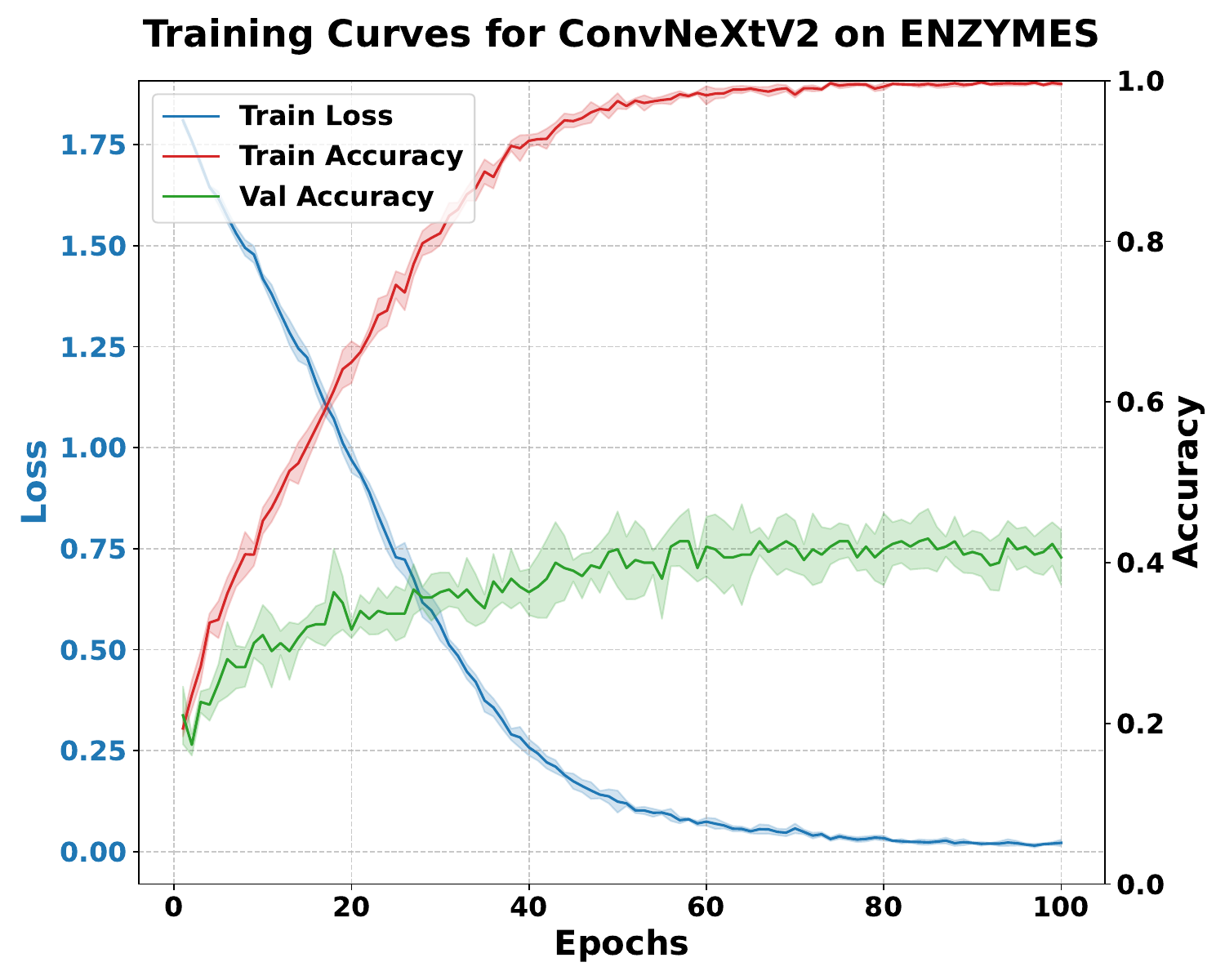}
    \includegraphics[width=0.245\textwidth]{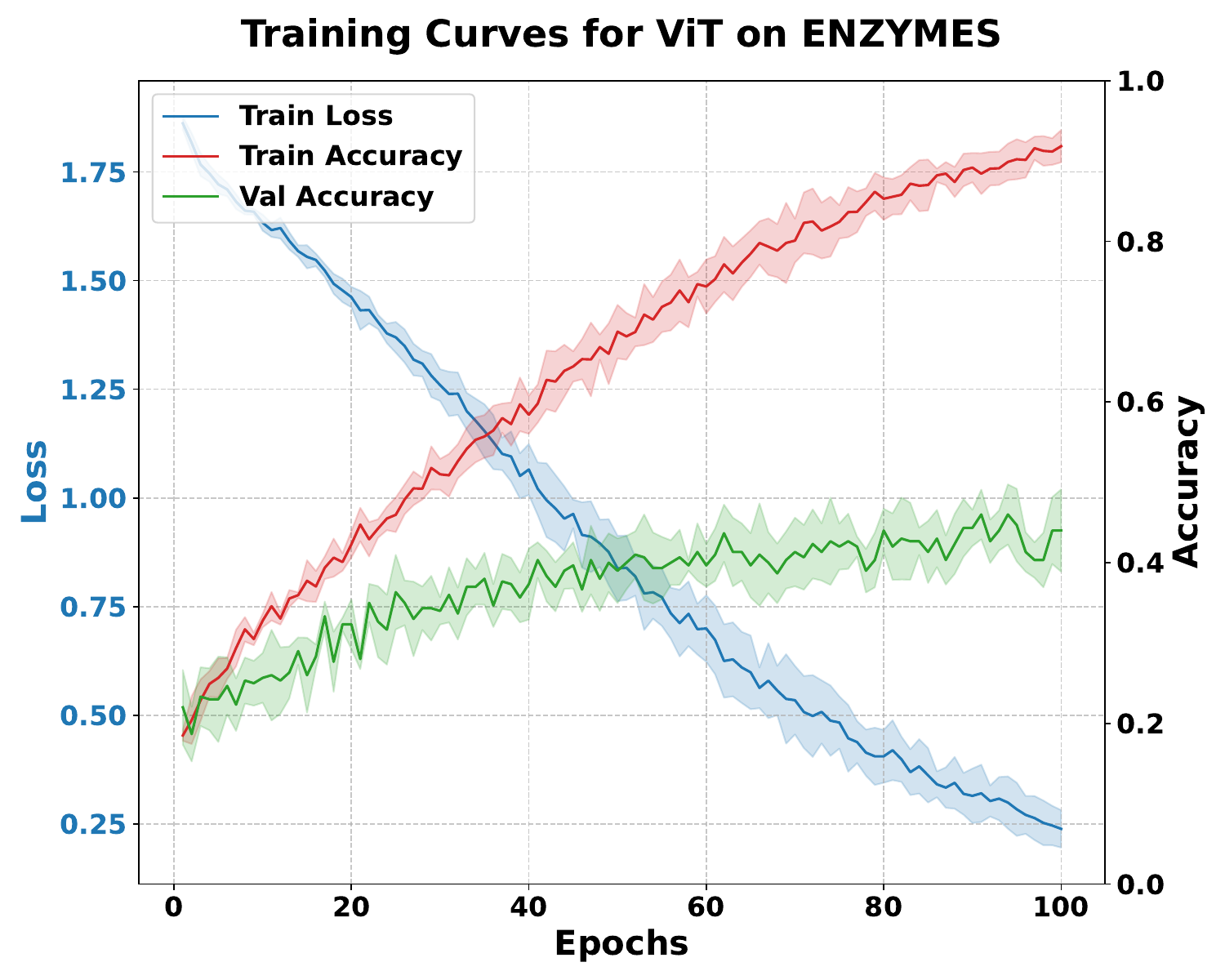}
    \includegraphics[width=0.245\textwidth]{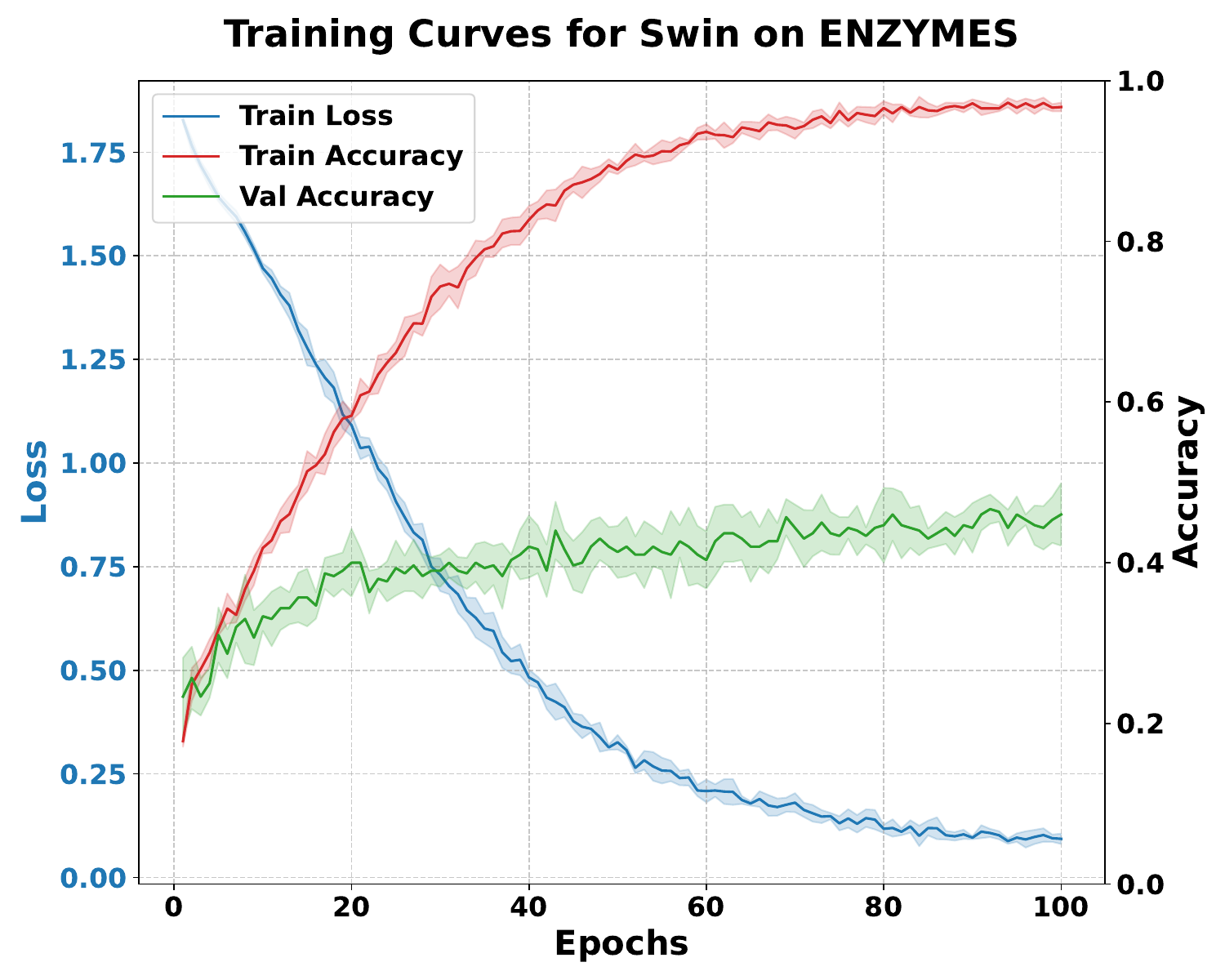}
    \caption{Training dynamics across different architectures on ENZYMES dataset. For each model, we plot the training loss (blue), training accuracy (red), and validation accuracy (green) over 100 epochs. The shaded areas represent the standard deviation across multiple runs.}\label{fig:dynamic12}
\end{figure}

\begin{figure}[htbp]
    \centering
    % PROTEINS train (first two rows)
    \includegraphics[width=0.245\textwidth]{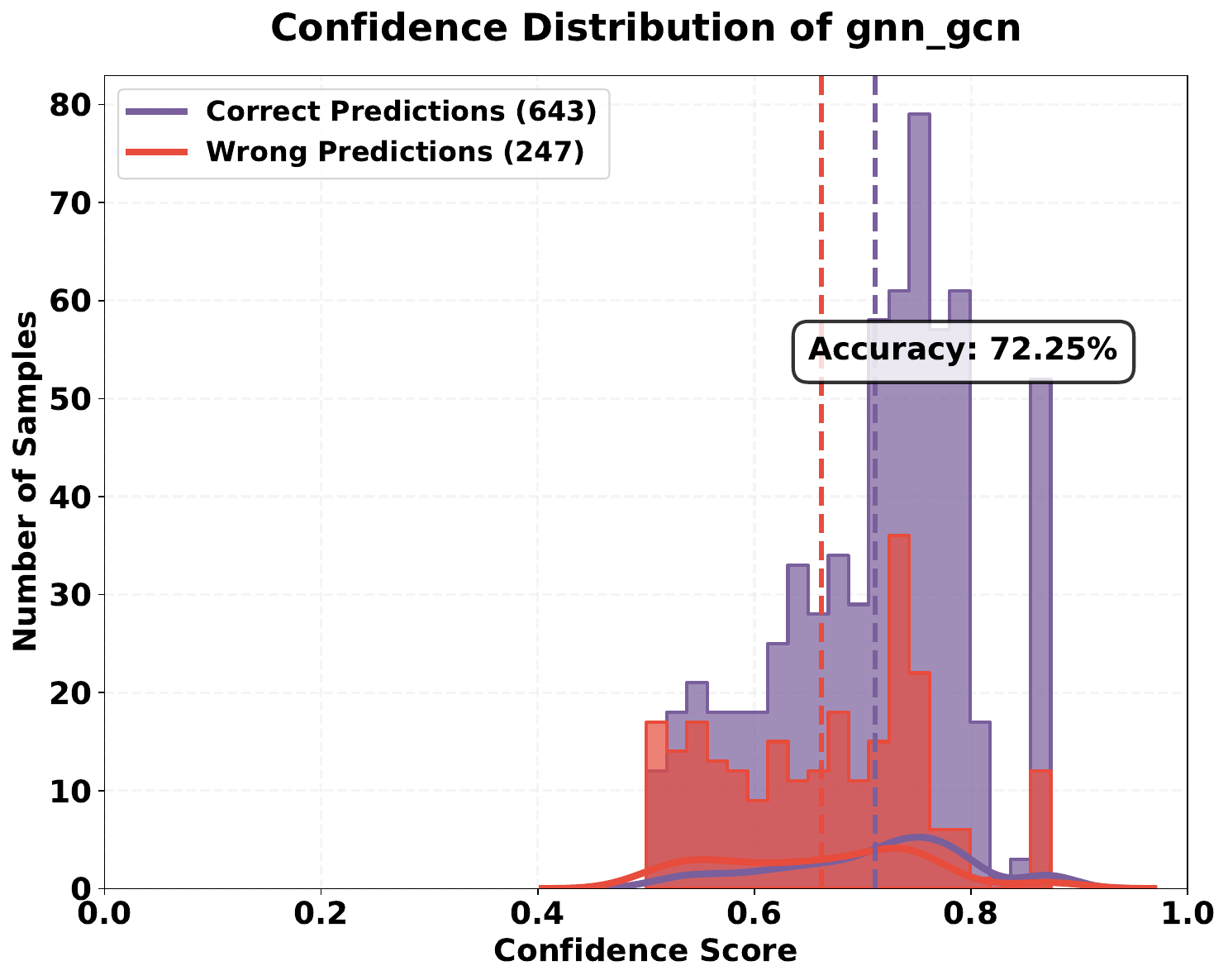}
    \includegraphics[width=0.245\textwidth]{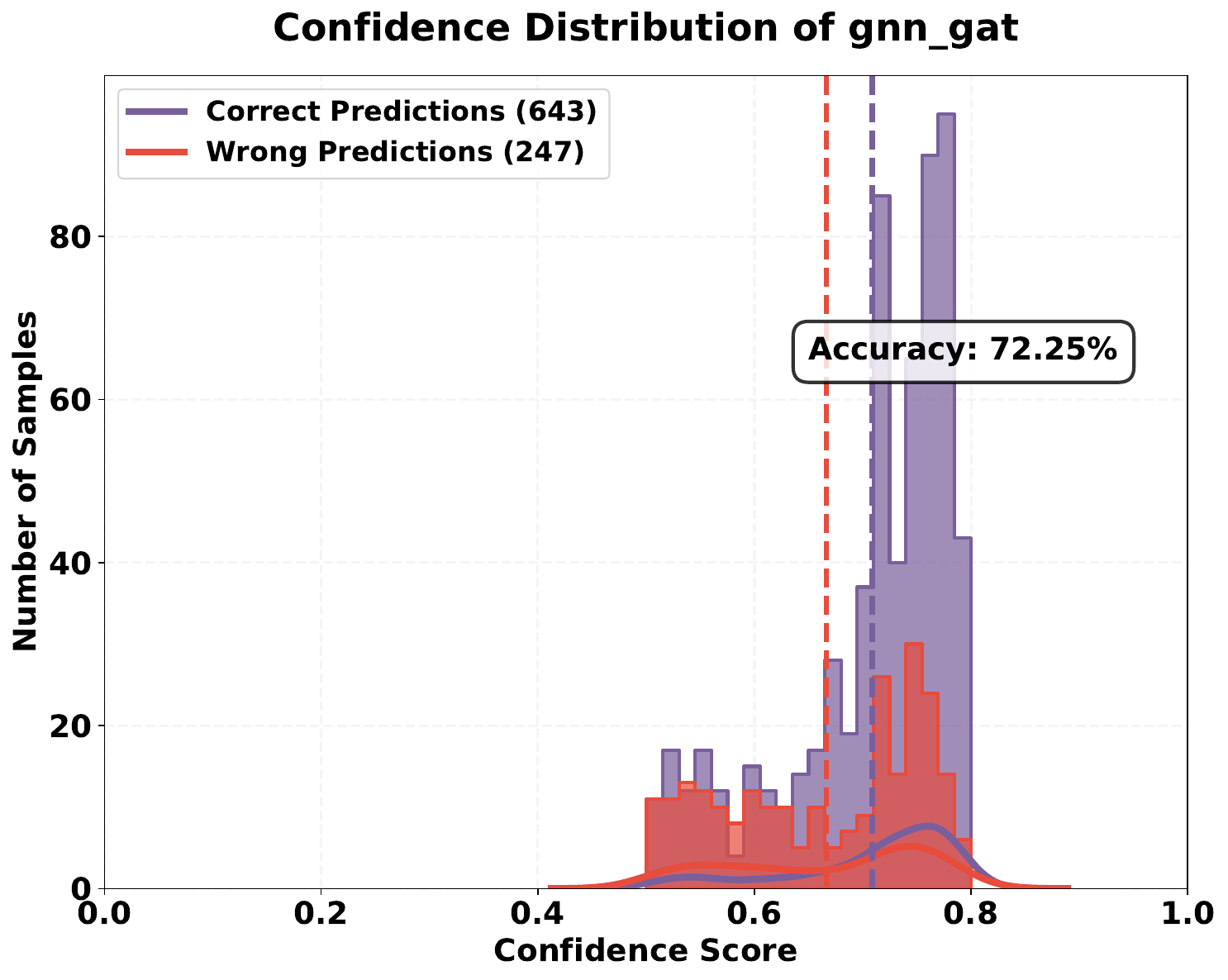}
    \includegraphics[width=0.245\textwidth]{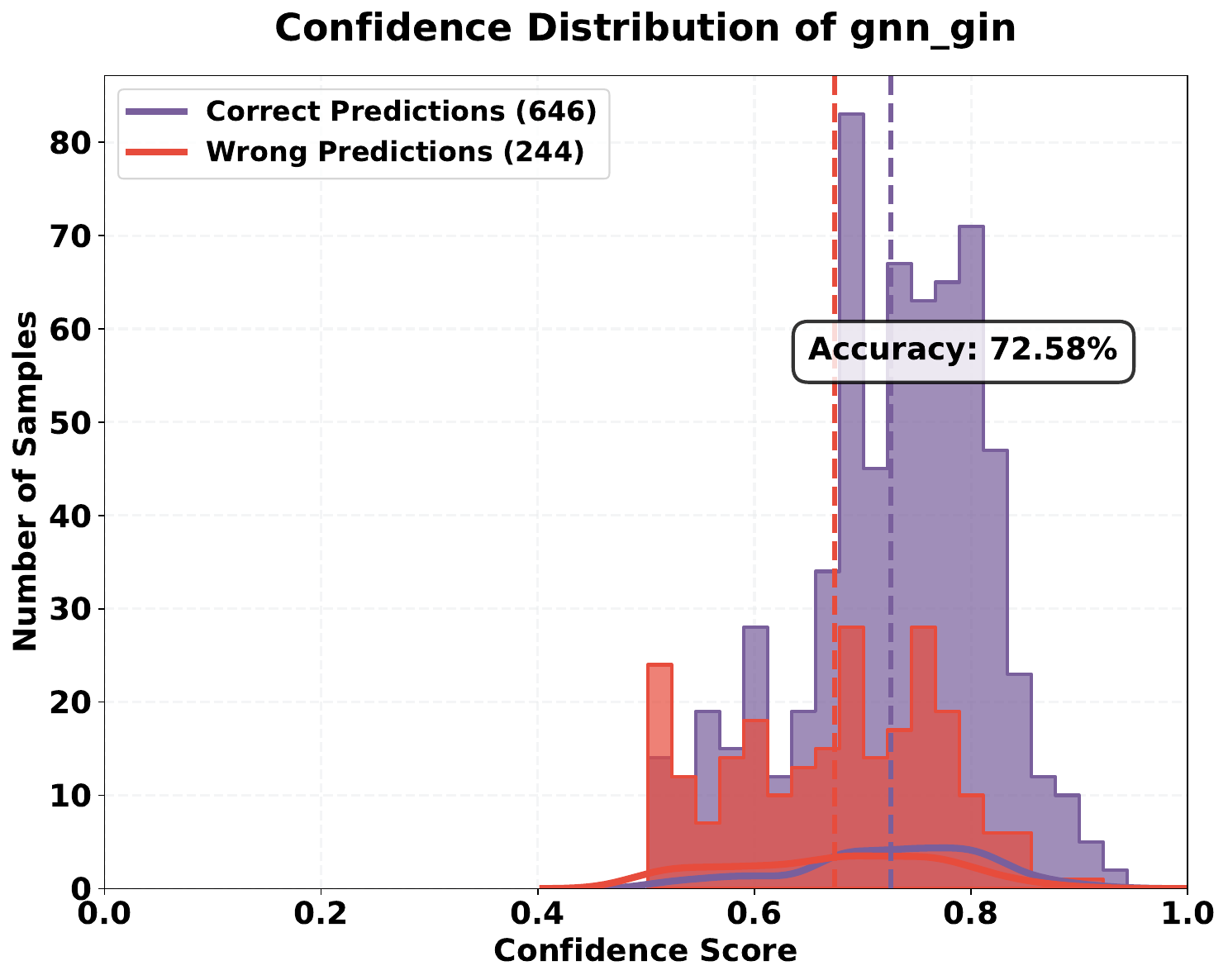}
    \includegraphics[width=0.245\textwidth]{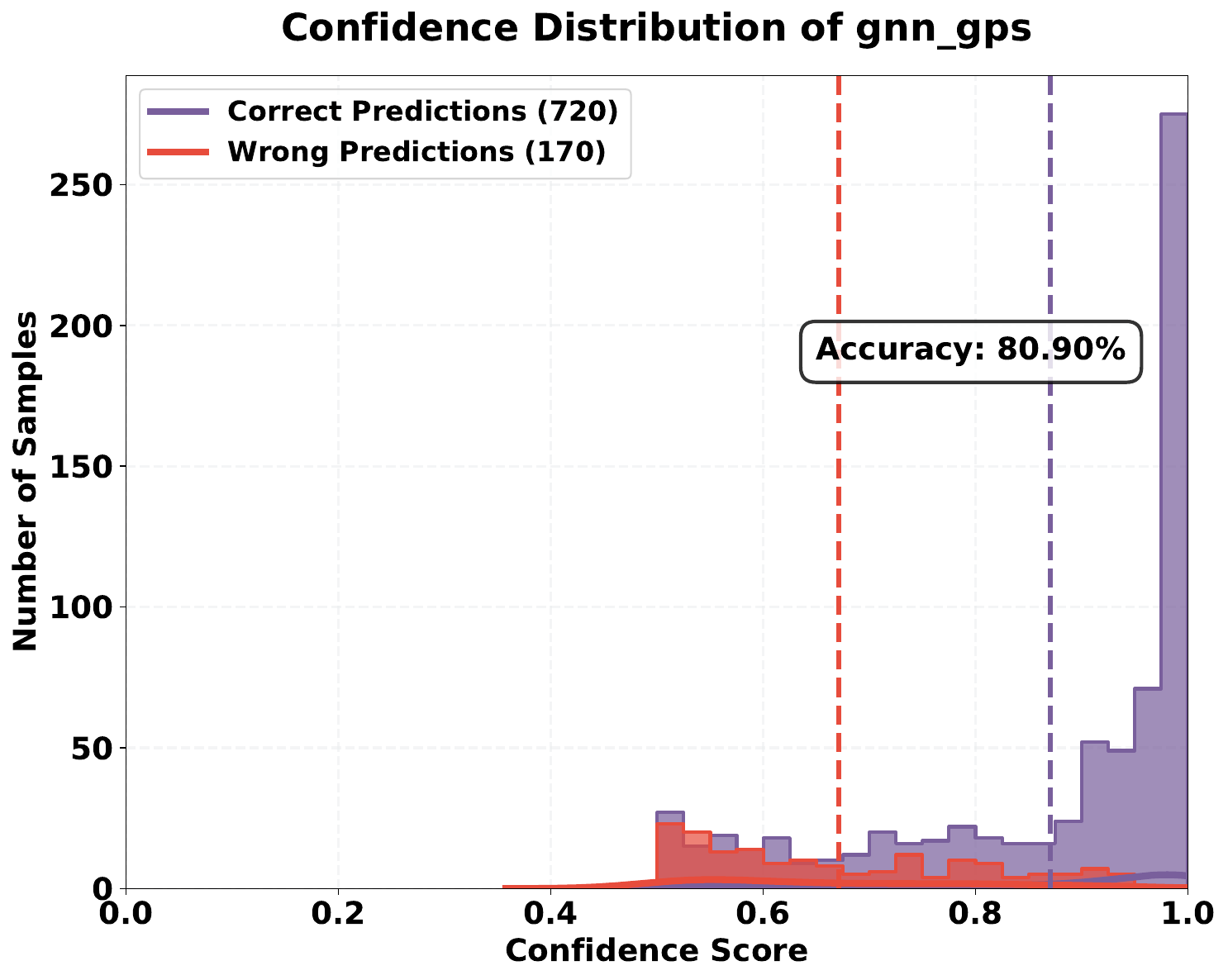}
    
    \includegraphics[width=0.245\textwidth]{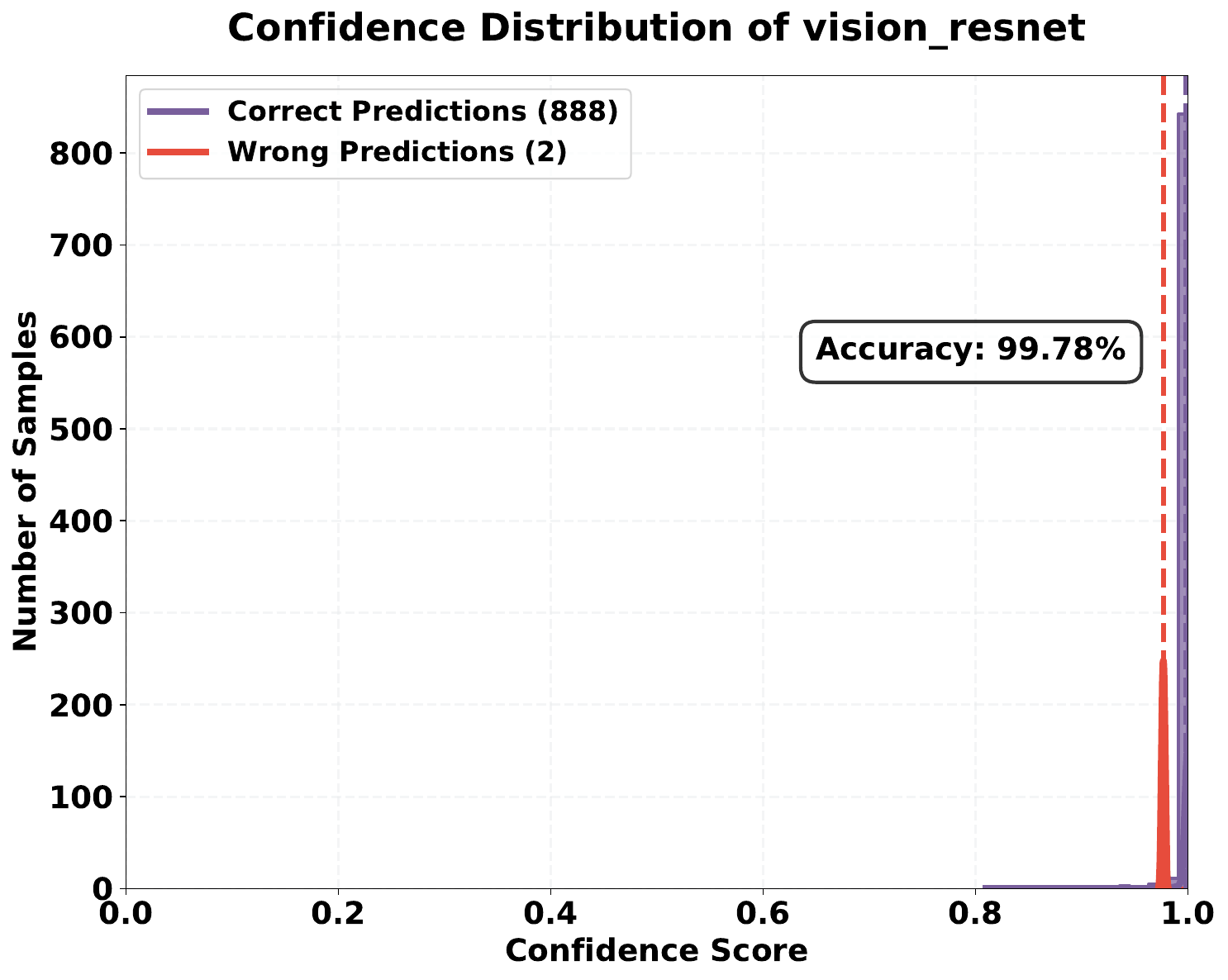}
    \includegraphics[width=0.245\textwidth]{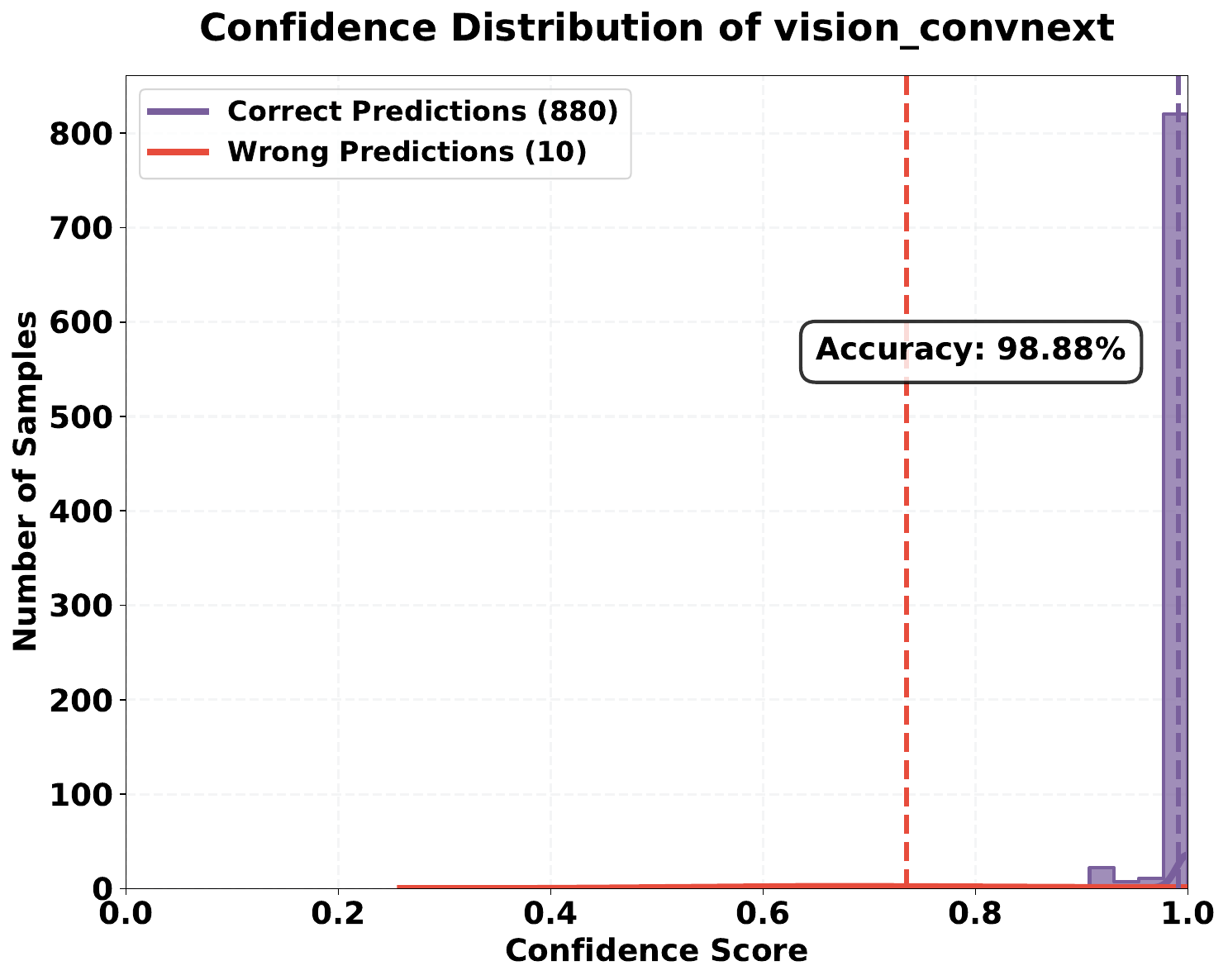}
    \includegraphics[width=0.245\textwidth]{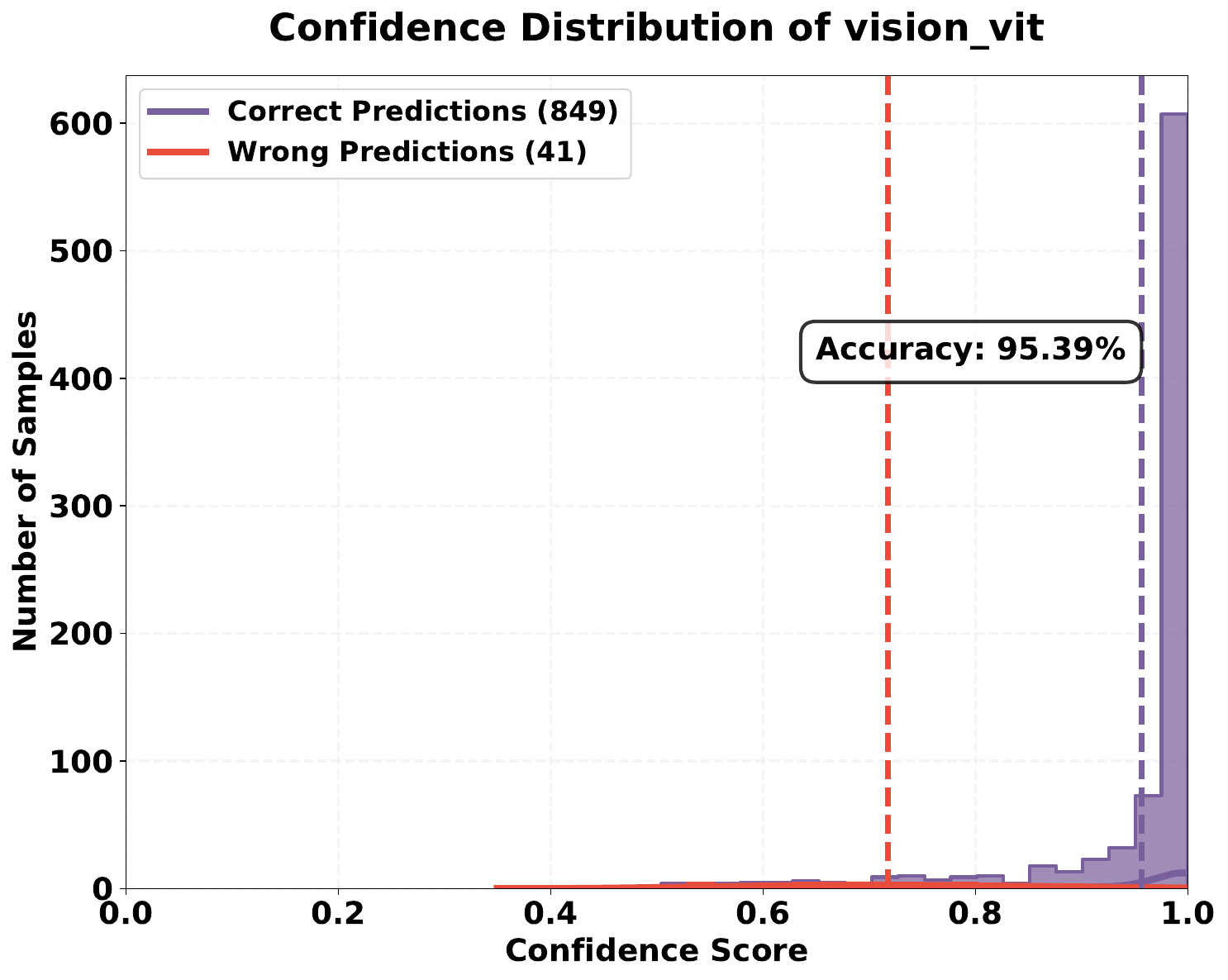}
    \includegraphics[width=0.245\textwidth]{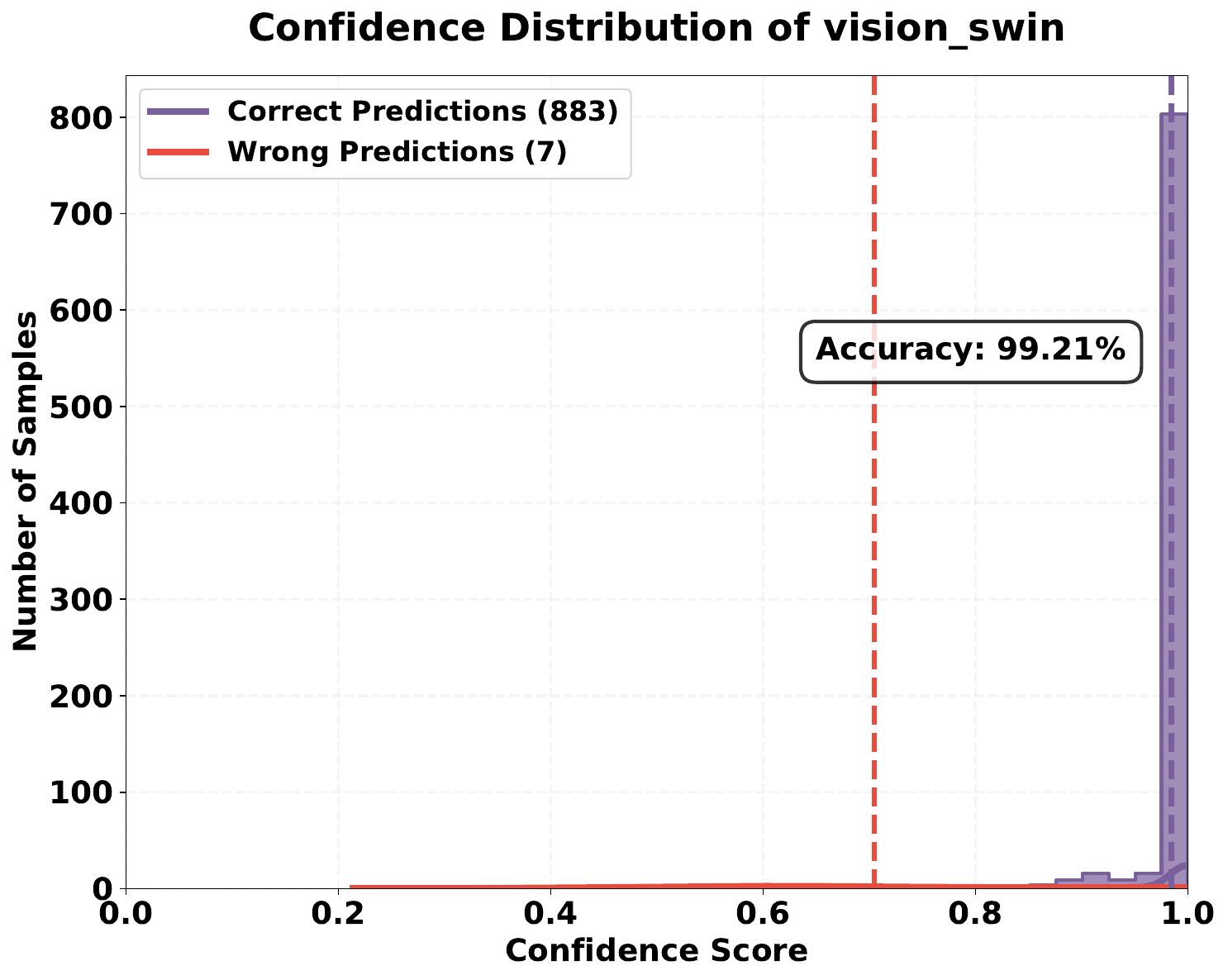}
    
    % PROTEINS test (last two rows)
    \includegraphics[width=0.245\textwidth]{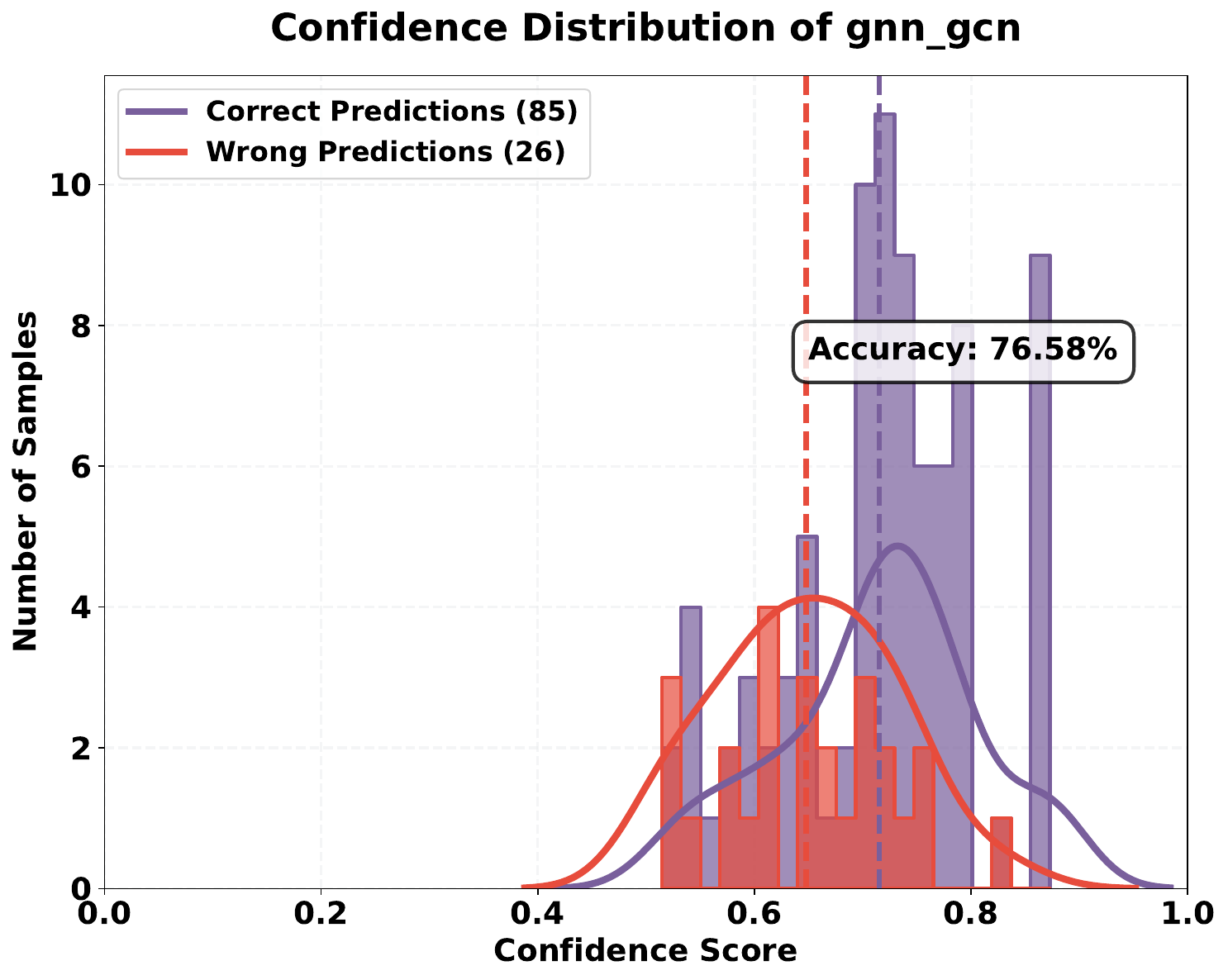}
    \includegraphics[width=0.245\textwidth]{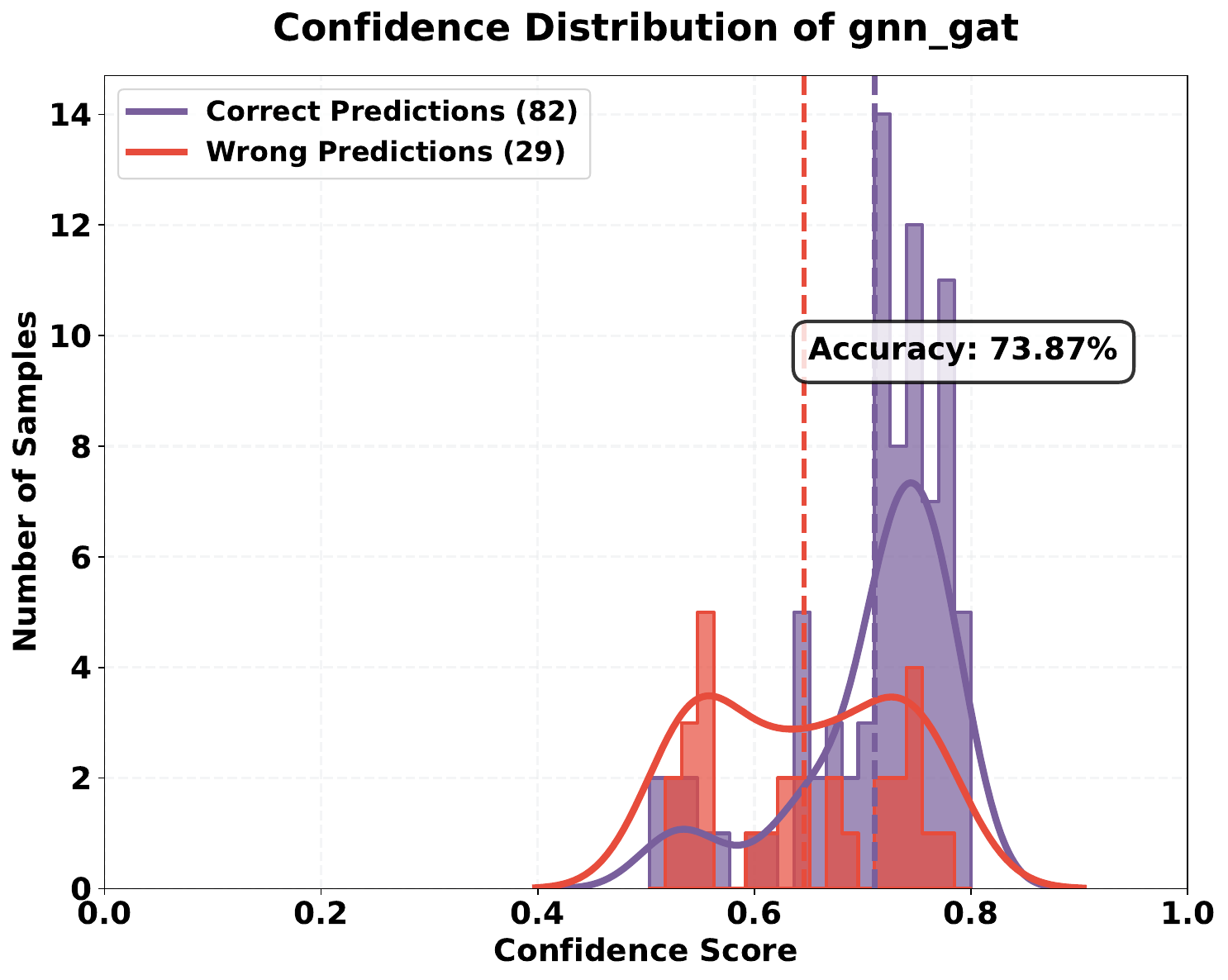}
    \includegraphics[width=0.245\textwidth]{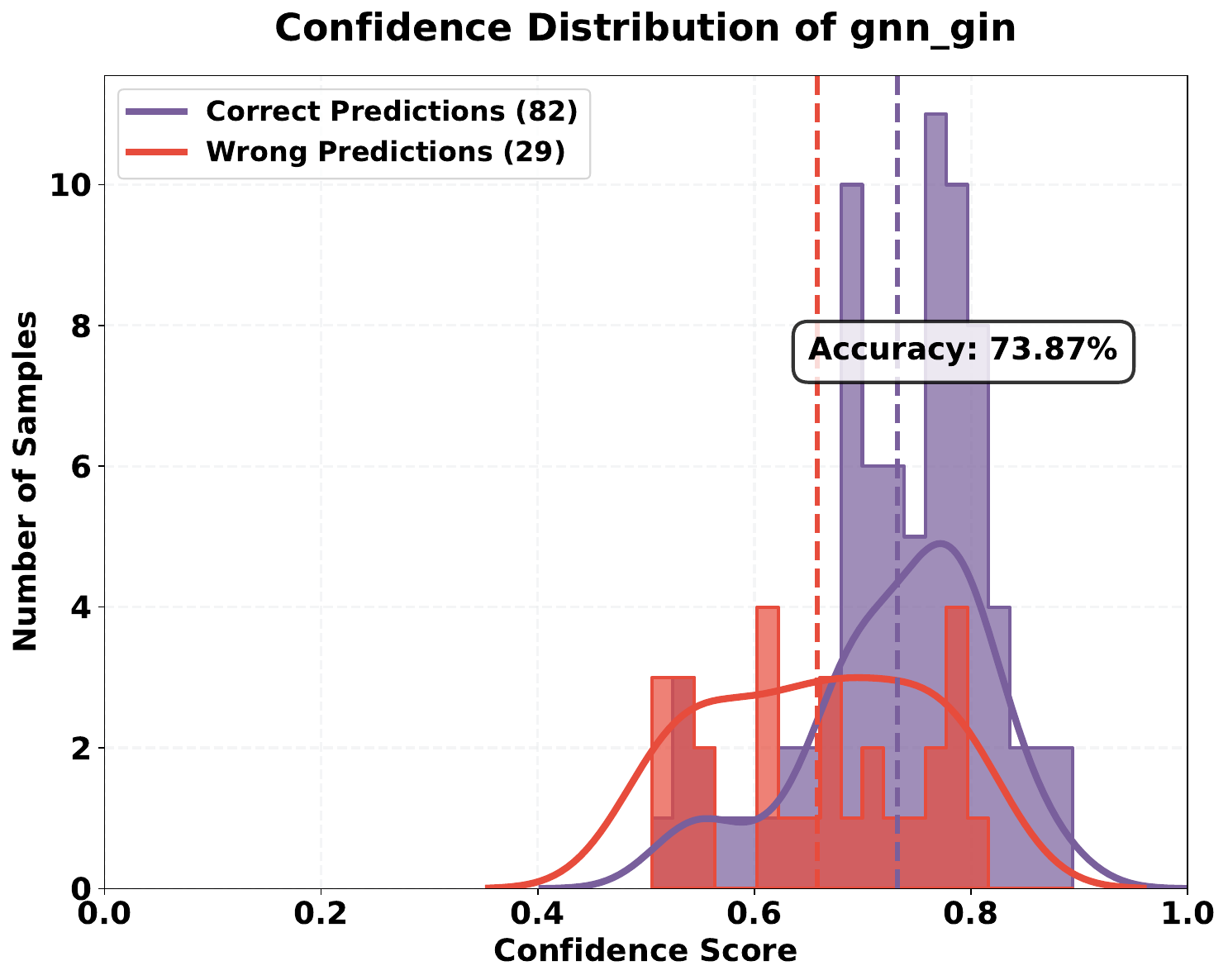}
    \includegraphics[width=0.245\textwidth]{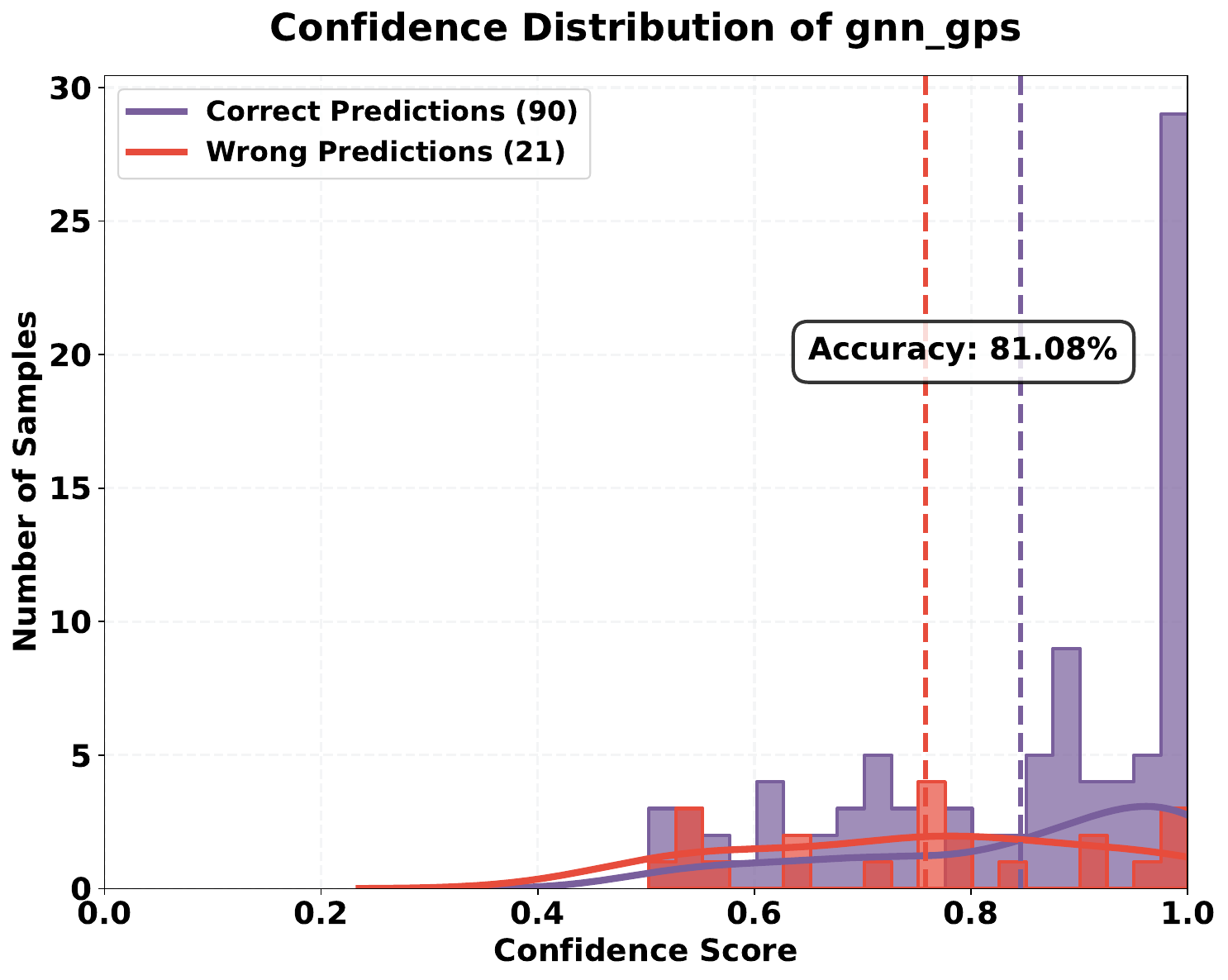}
    
    \includegraphics[width=0.245\textwidth]{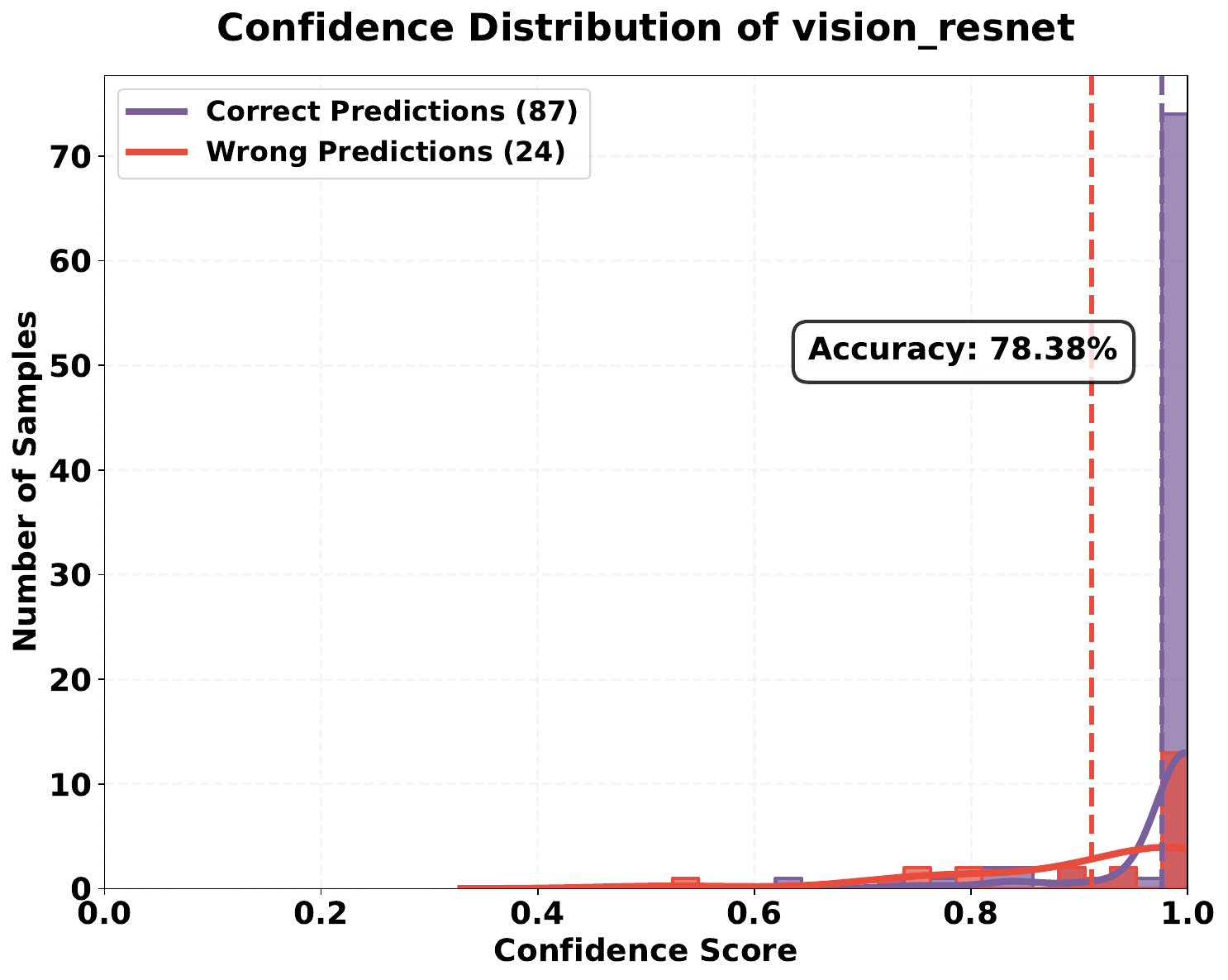}
    \includegraphics[width=0.245\textwidth]{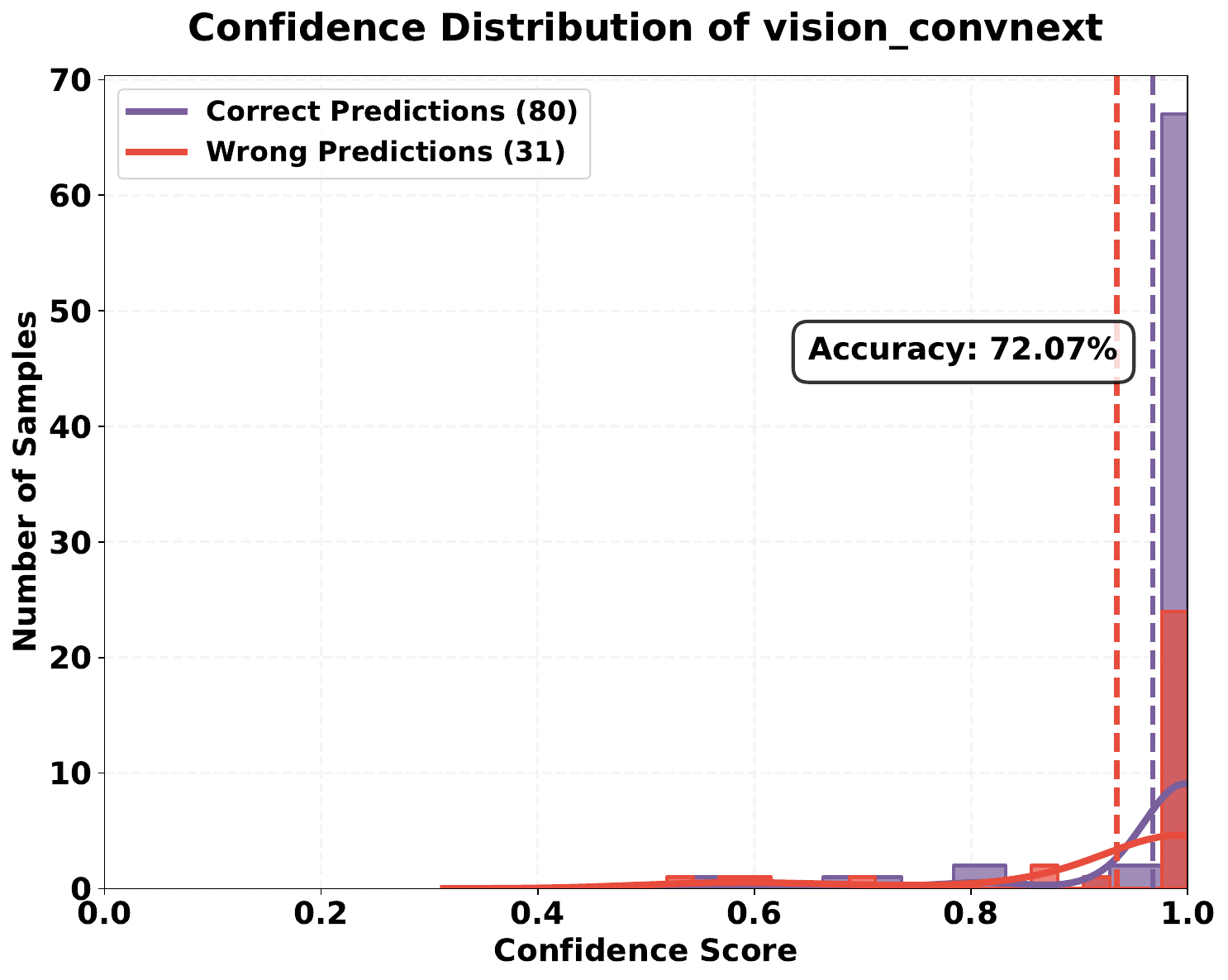}
    \includegraphics[width=0.245\textwidth]{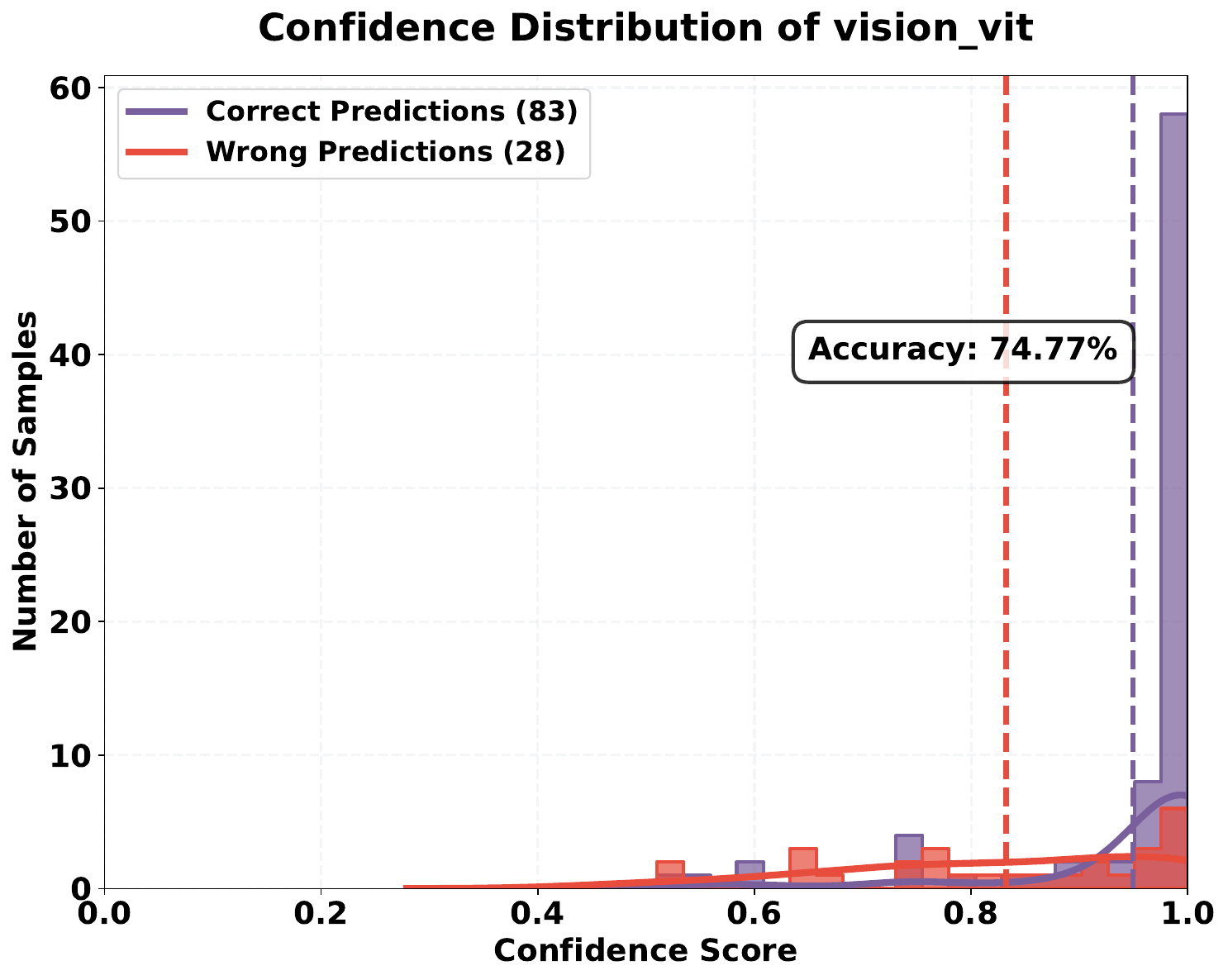}
    \includegraphics[width=0.245\textwidth]{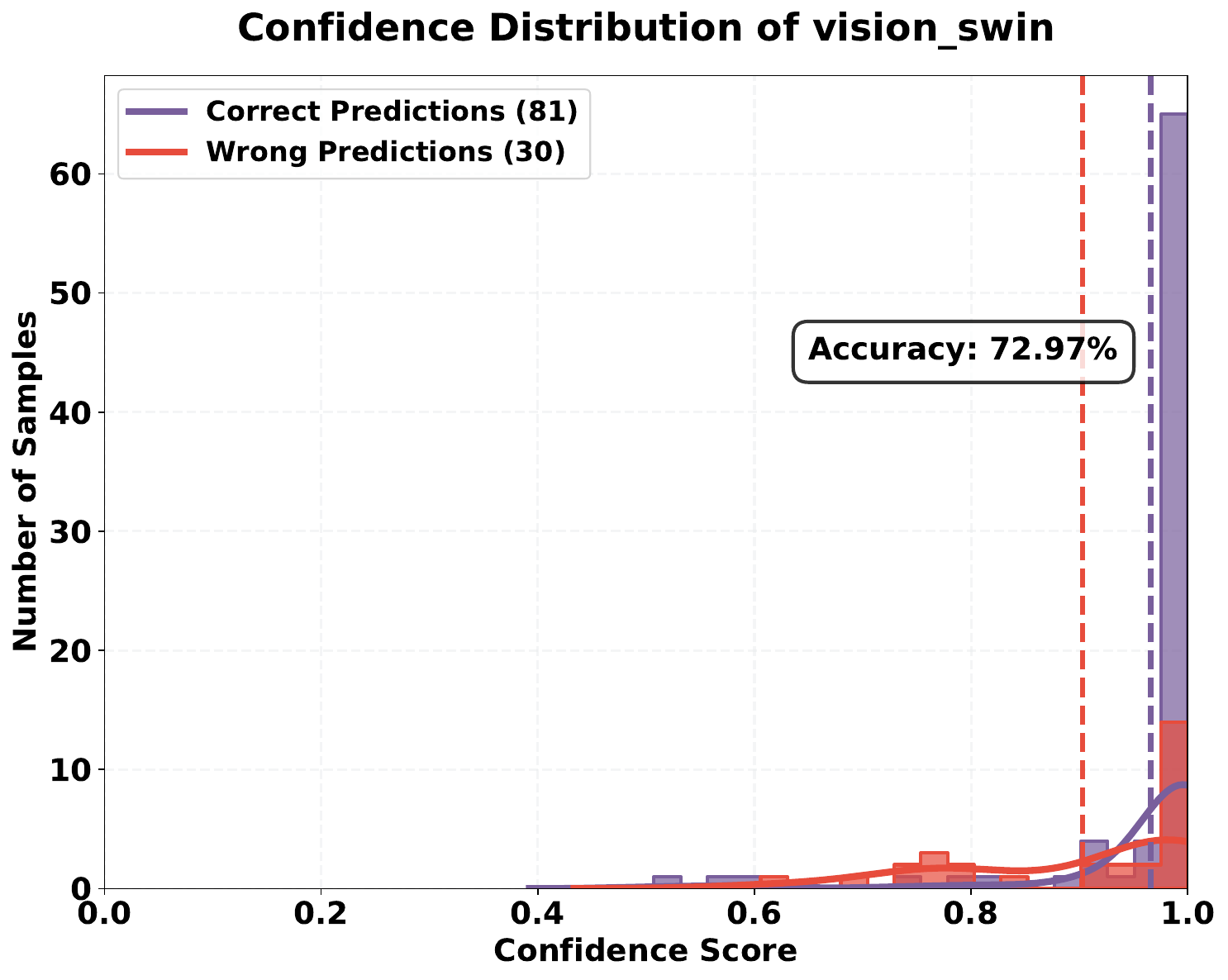}
    
    \caption{Confidence distribution across different model architectures on the \textbf{PROTEINS} dataset. The first two rows show results from the \textbf{training set}, while the last two rows present the \textbf{test set}. Vision models demonstrate a strong tendency toward high-confidence predictions (0.8-1.0) in both splits, while traditional GNNs typically make lower-confidence predictions. The GPS model, featuring global message passing, uniquely exhibits high-confidence predictions among GNN variants.}\label{fig:proteins_confidence}
\end{figure}

\begin{figure}[htbp]
    \centering
    % IMDB-BINARY train (first two rows)
    \includegraphics[width=0.245\textwidth]{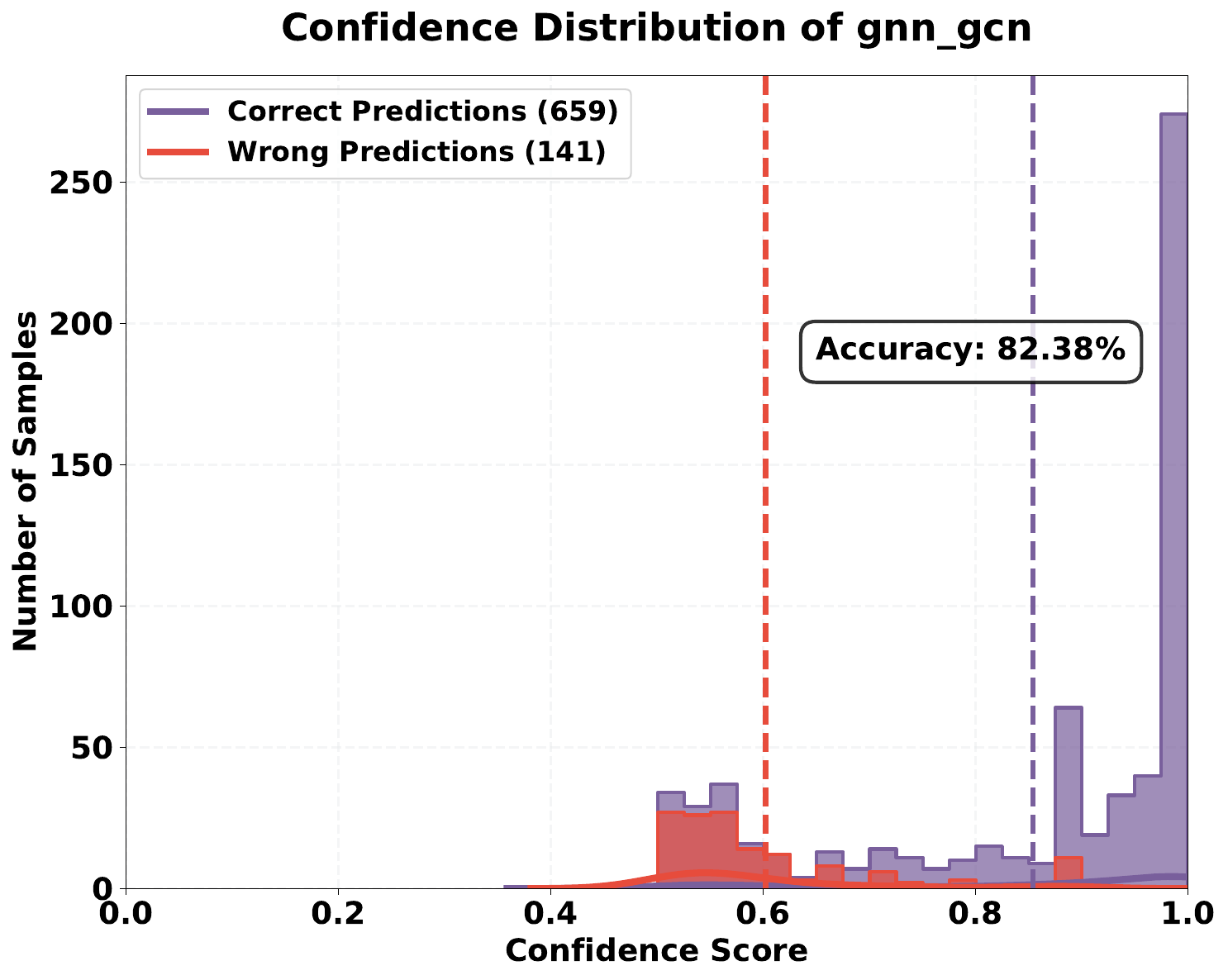}
    \includegraphics[width=0.245\textwidth]{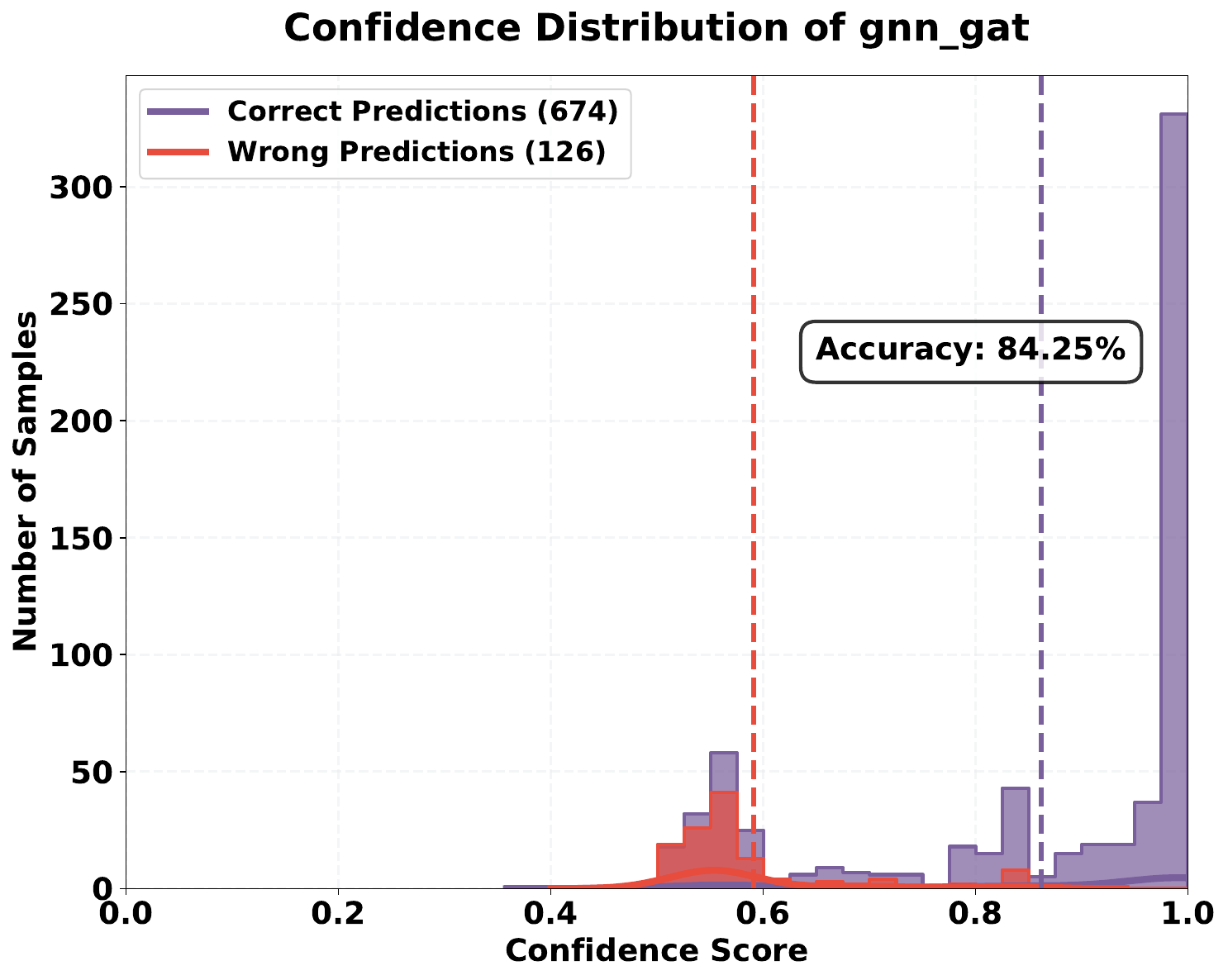}
    \includegraphics[width=0.245\textwidth]{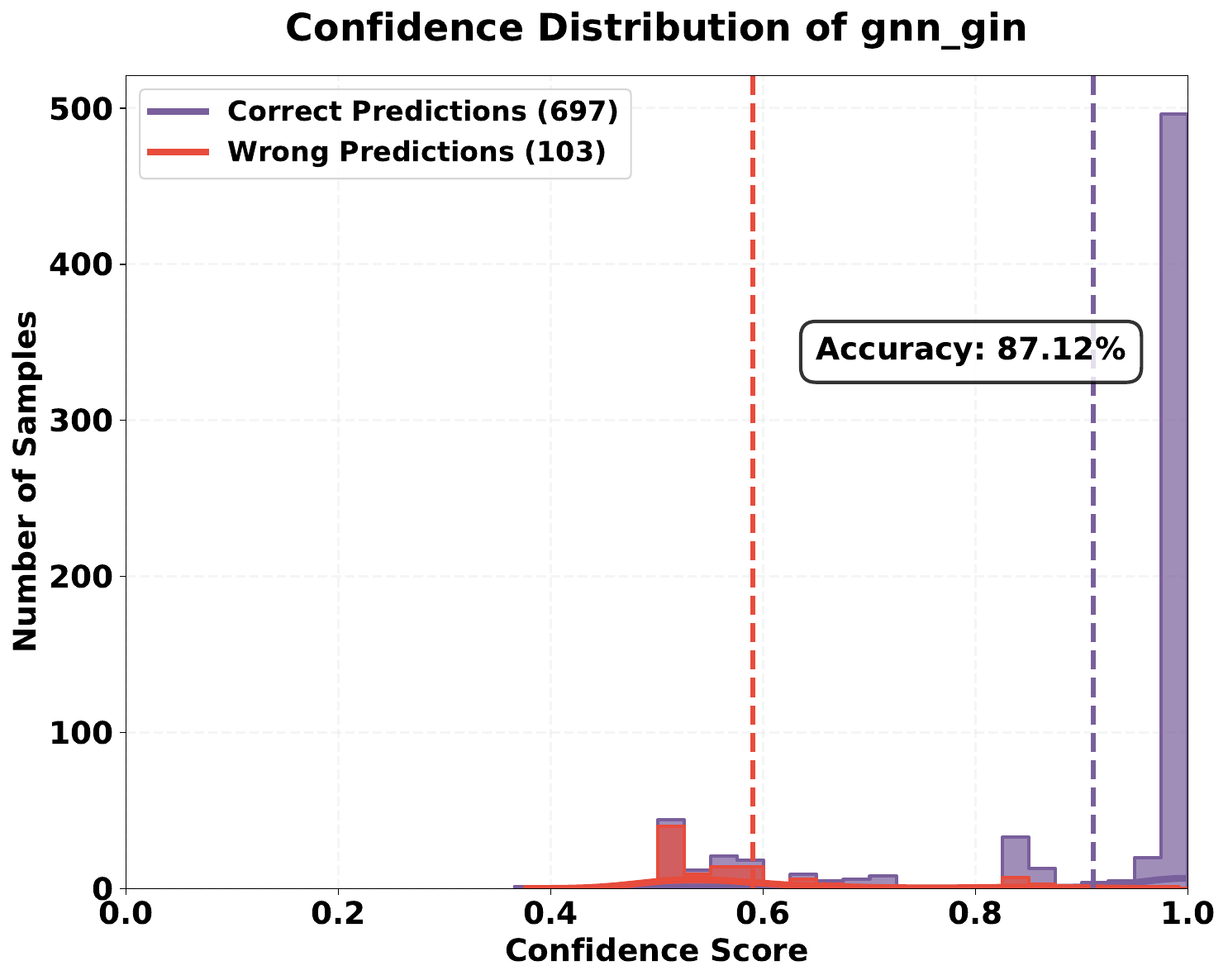}
    \includegraphics[width=0.245\textwidth]{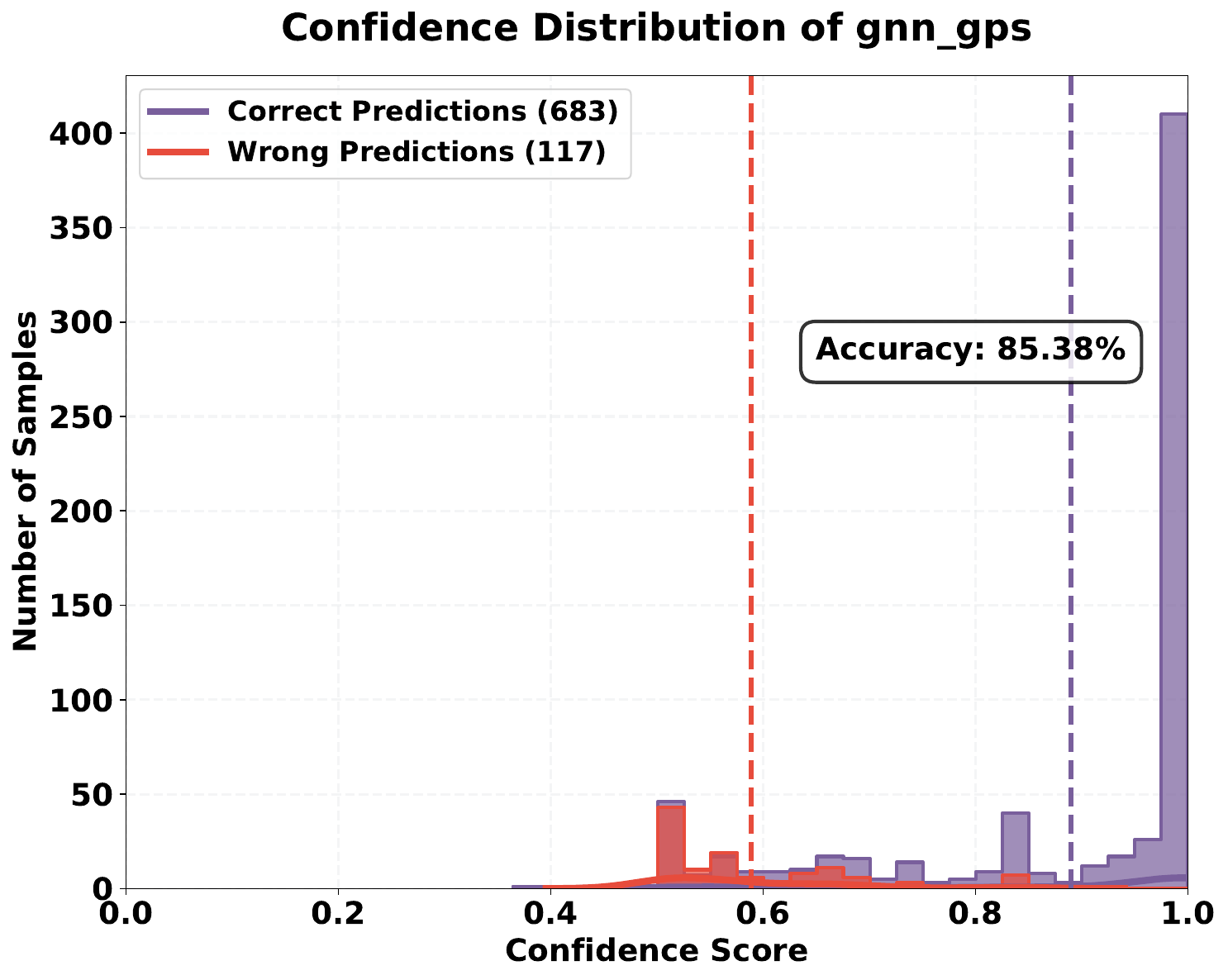}
    
    \includegraphics[width=0.245\textwidth]{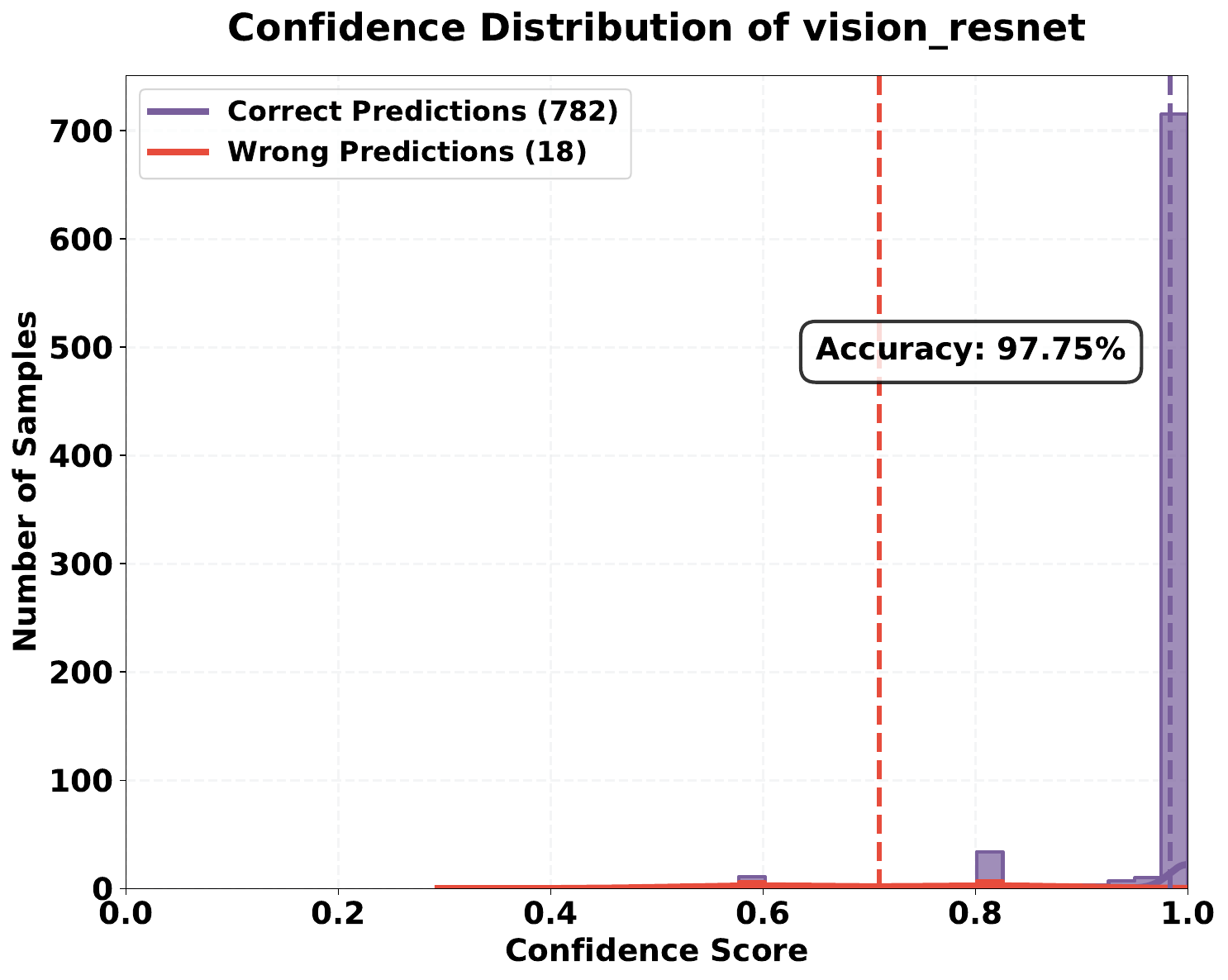}
    \includegraphics[width=0.245\textwidth]{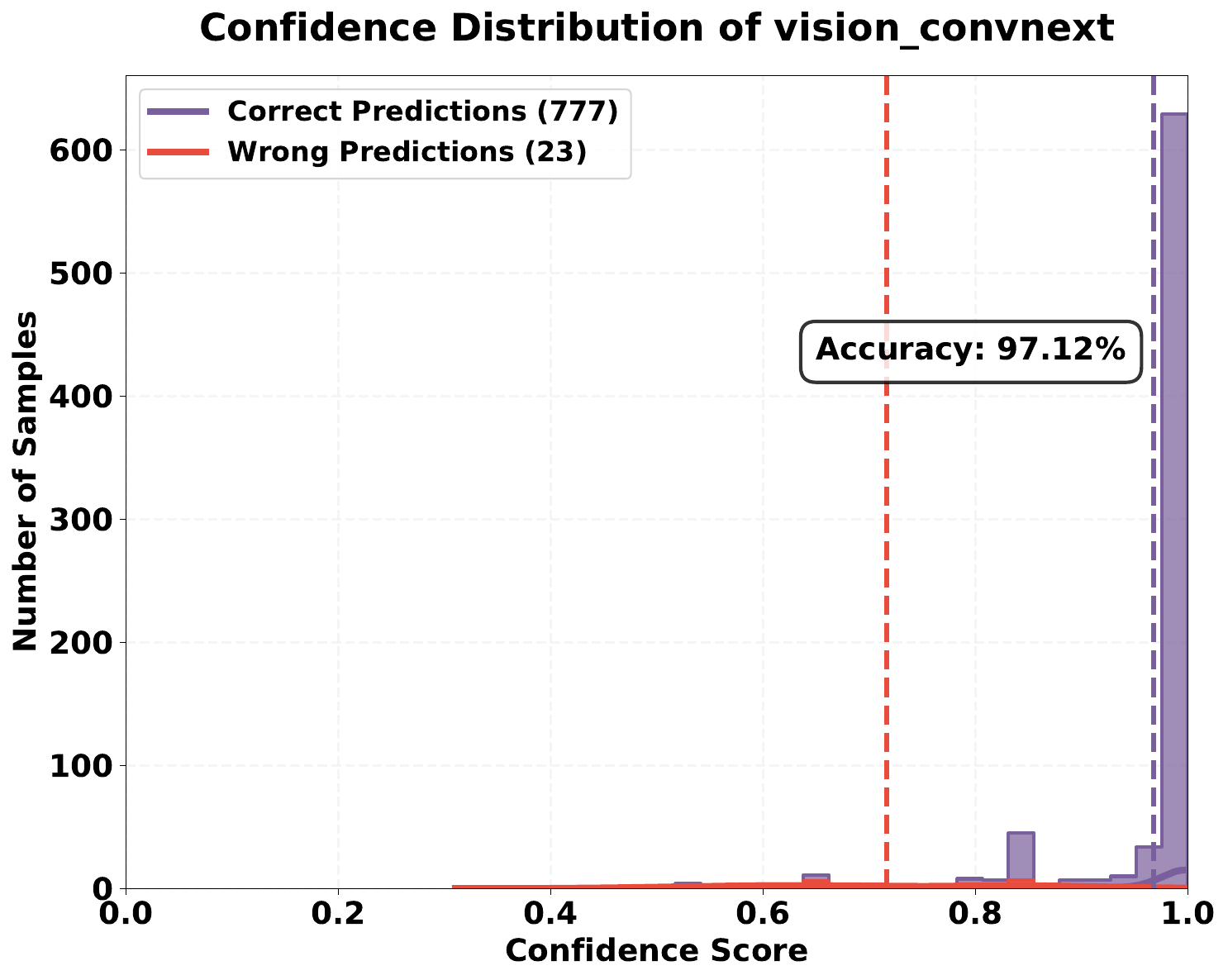}
    \includegraphics[width=0.245\textwidth]{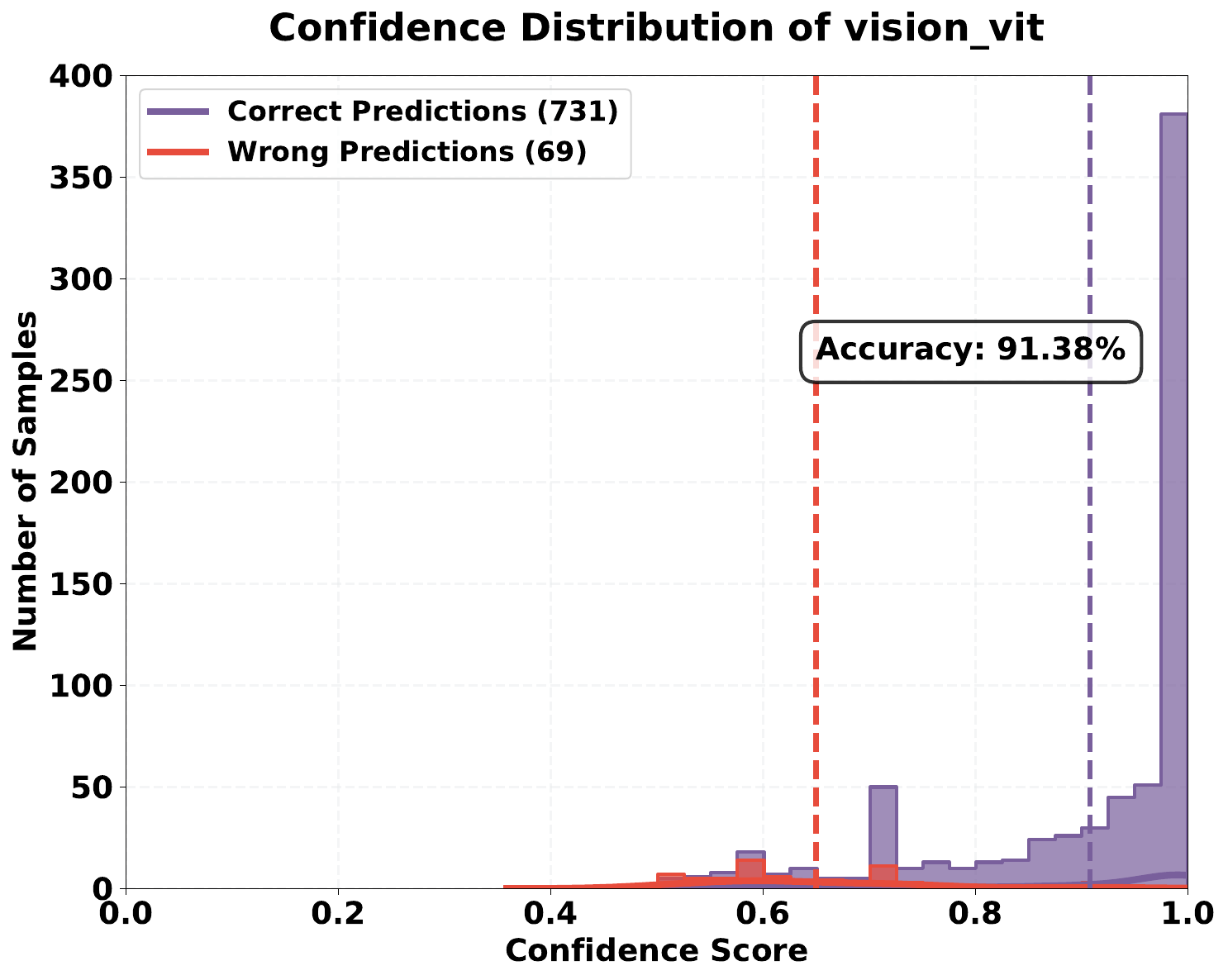}
    \includegraphics[width=0.245\textwidth]{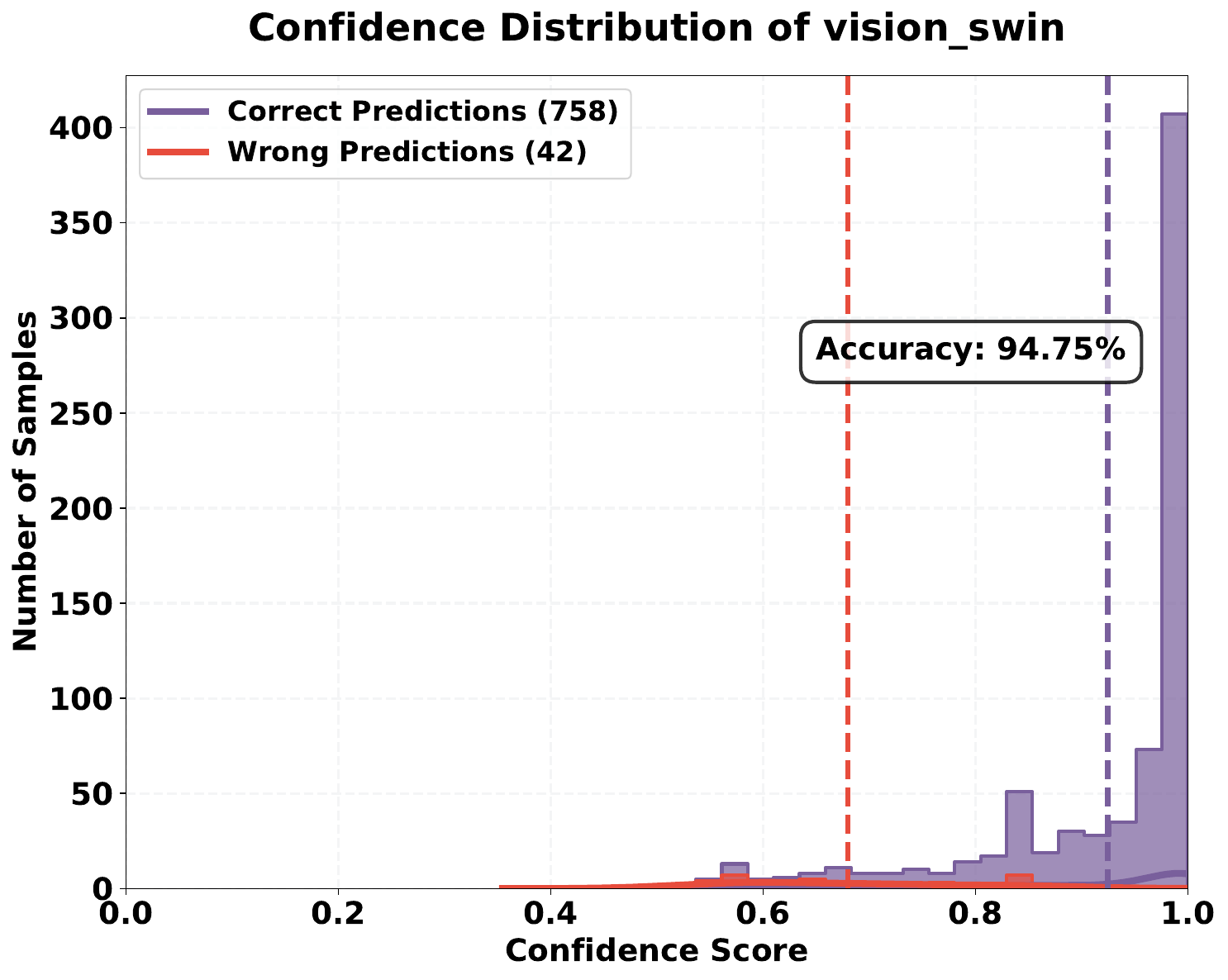}
    
    % IMDB-BINARY test (last two rows)
    \includegraphics[width=0.245\textwidth]{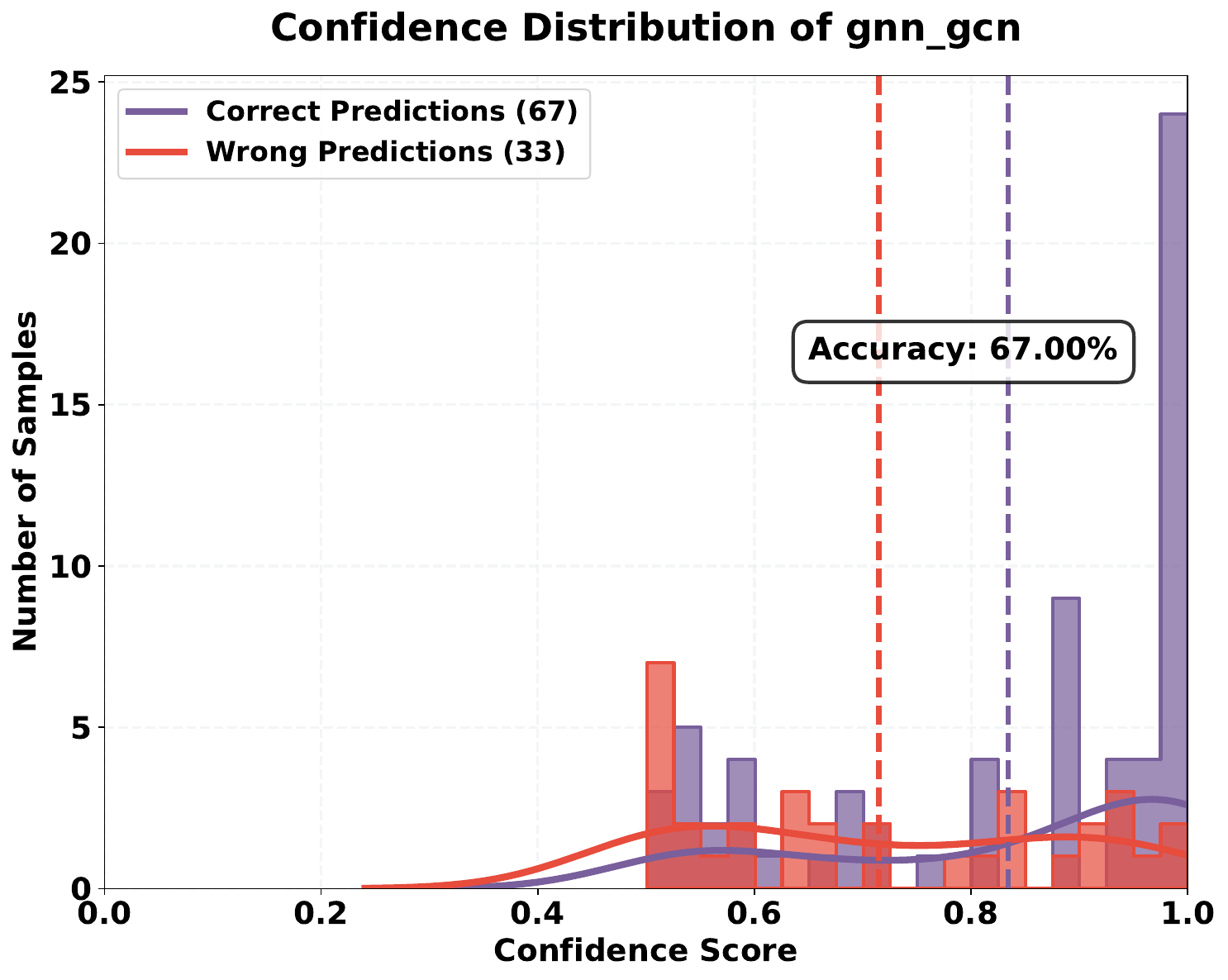}
    \includegraphics[width=0.245\textwidth]{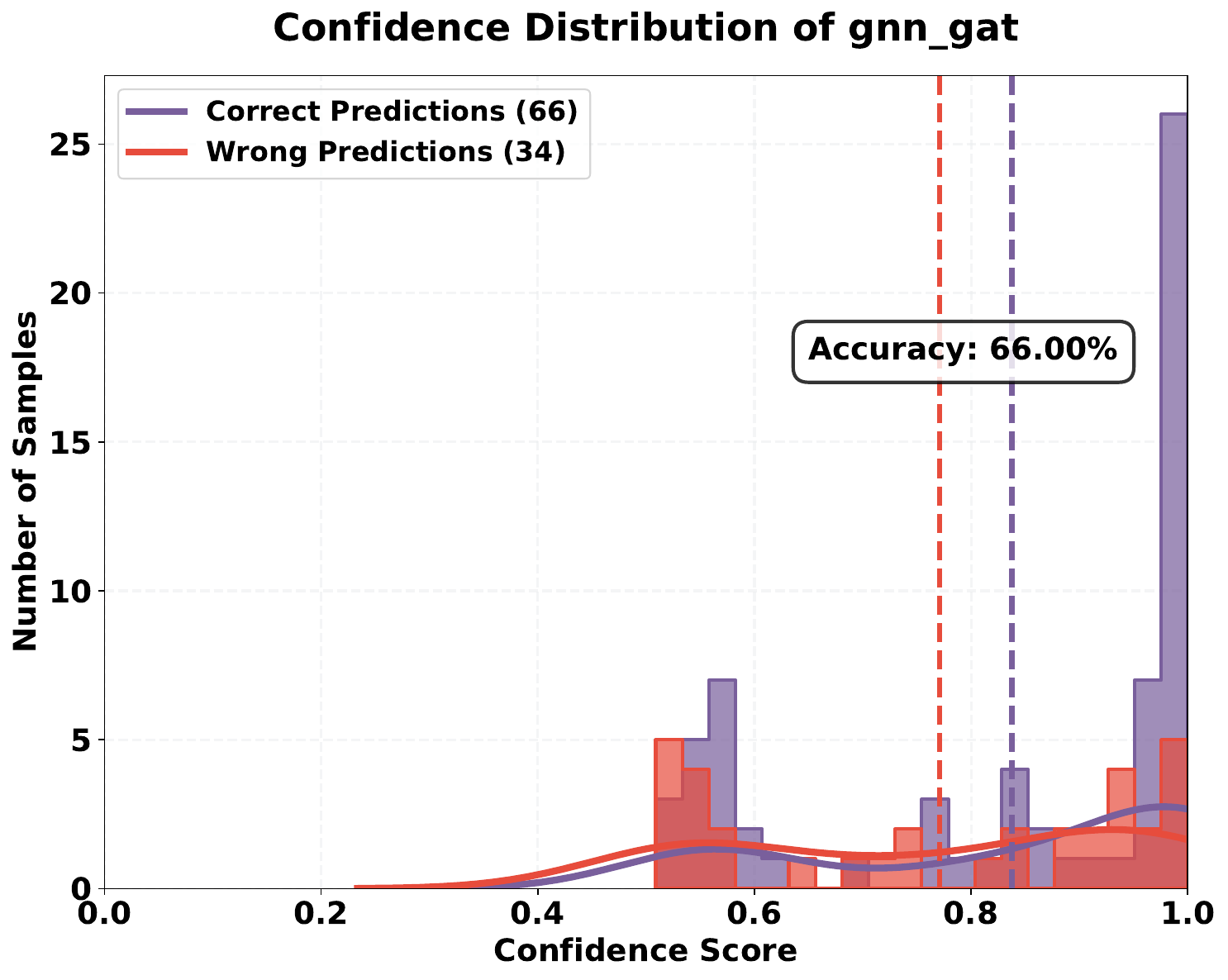}
    \includegraphics[width=0.245\textwidth]{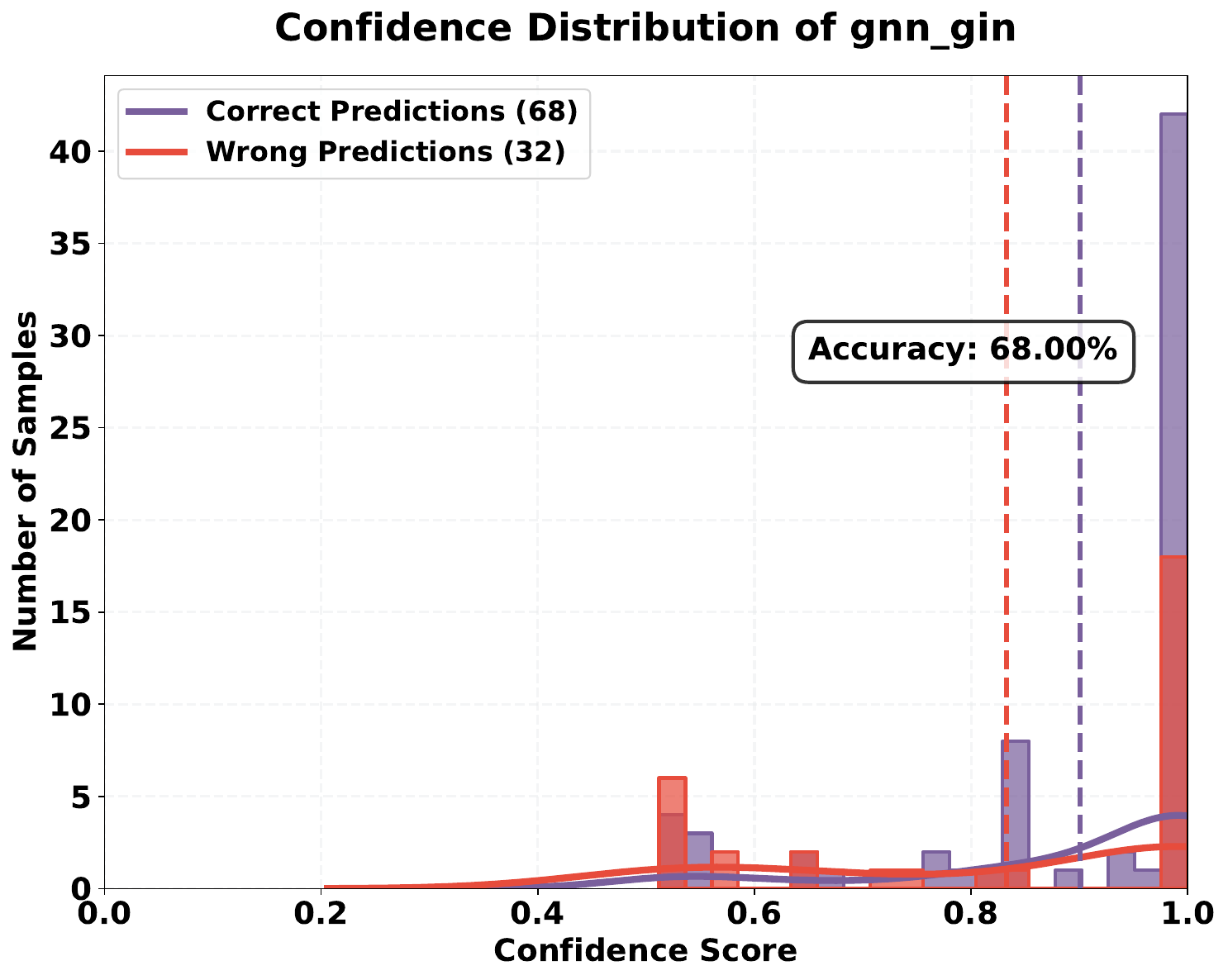}
    \includegraphics[width=0.245\textwidth]{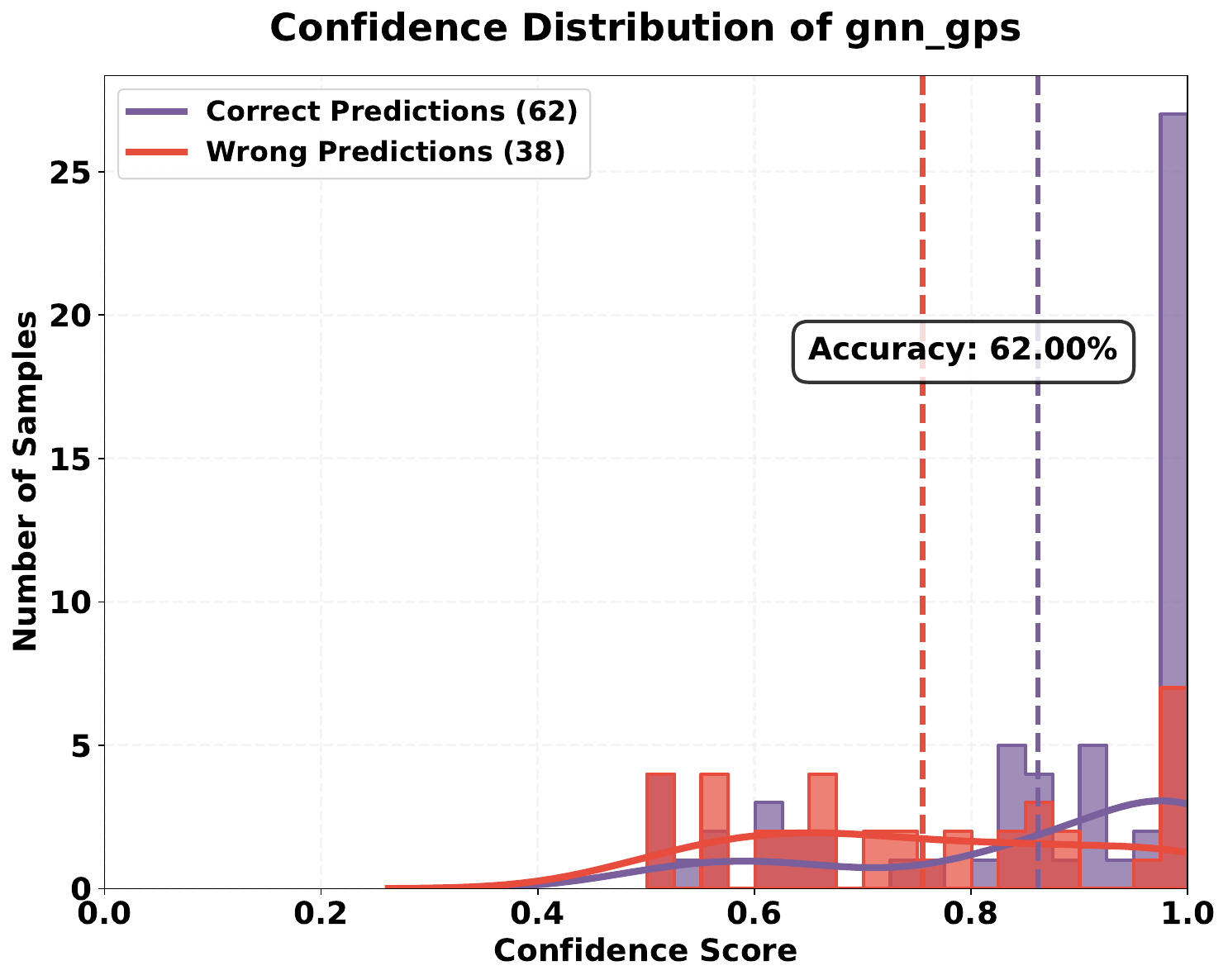}
    
    \includegraphics[width=0.245\textwidth]{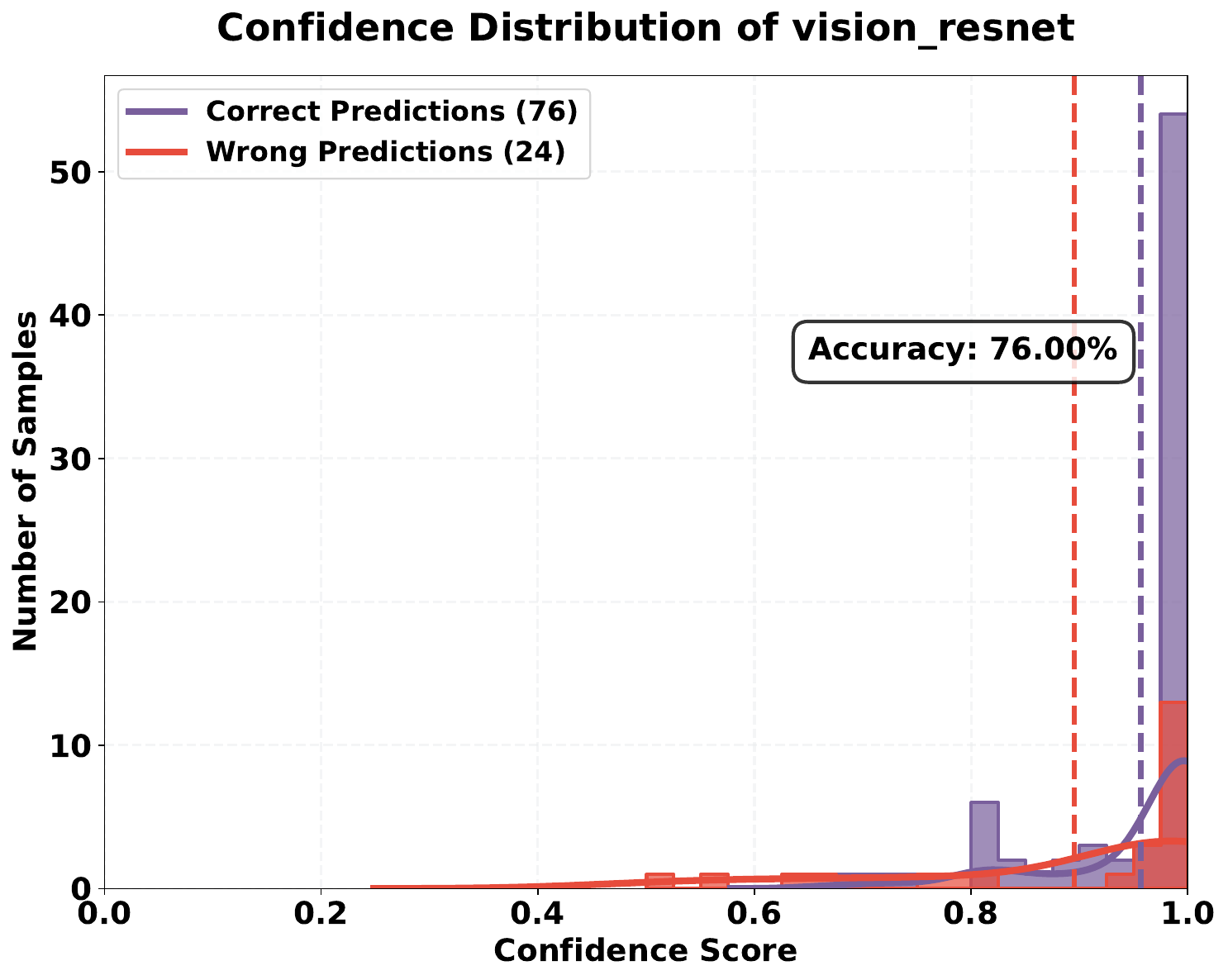}
    \includegraphics[width=0.245\textwidth]{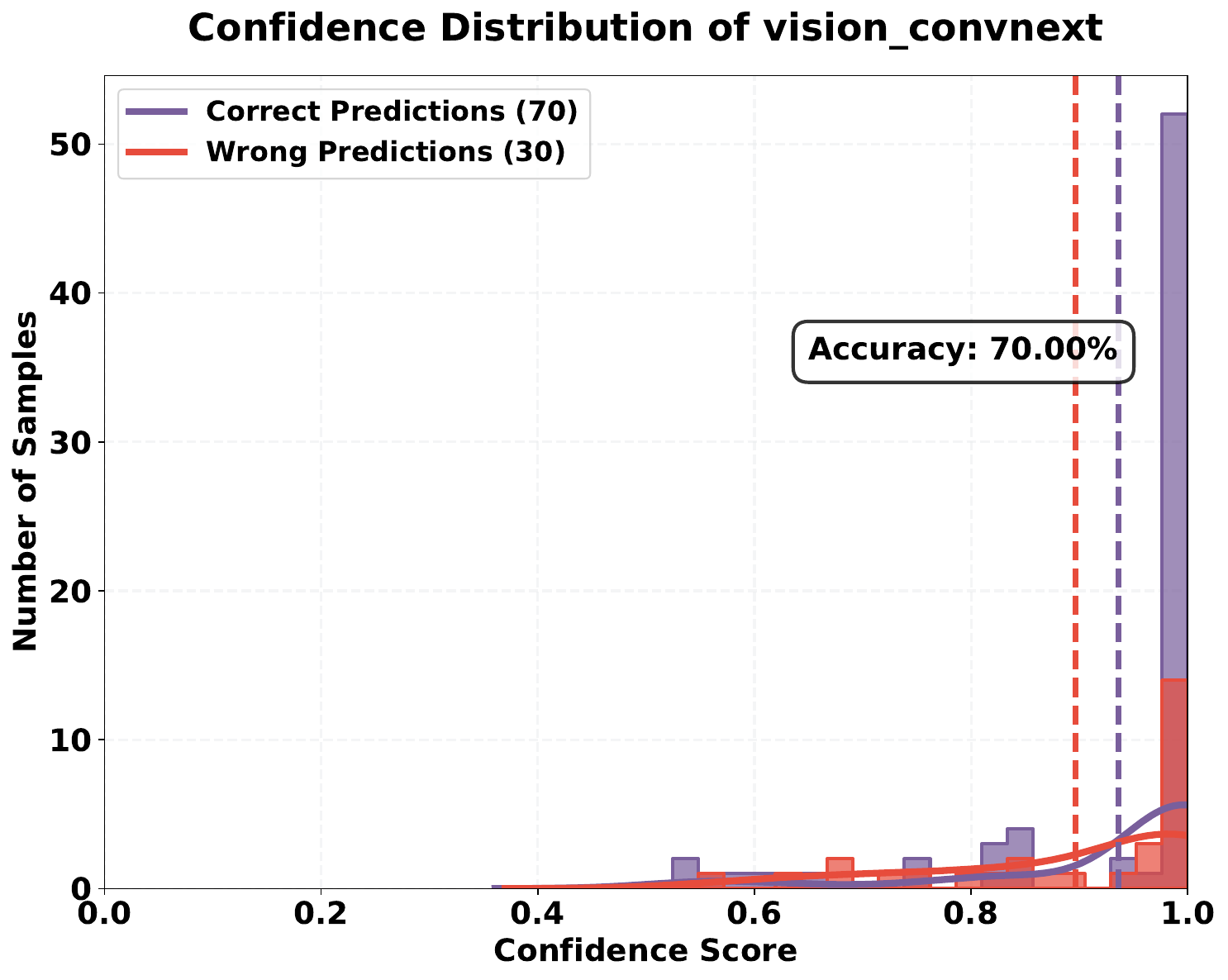}
    \includegraphics[width=0.245\textwidth]{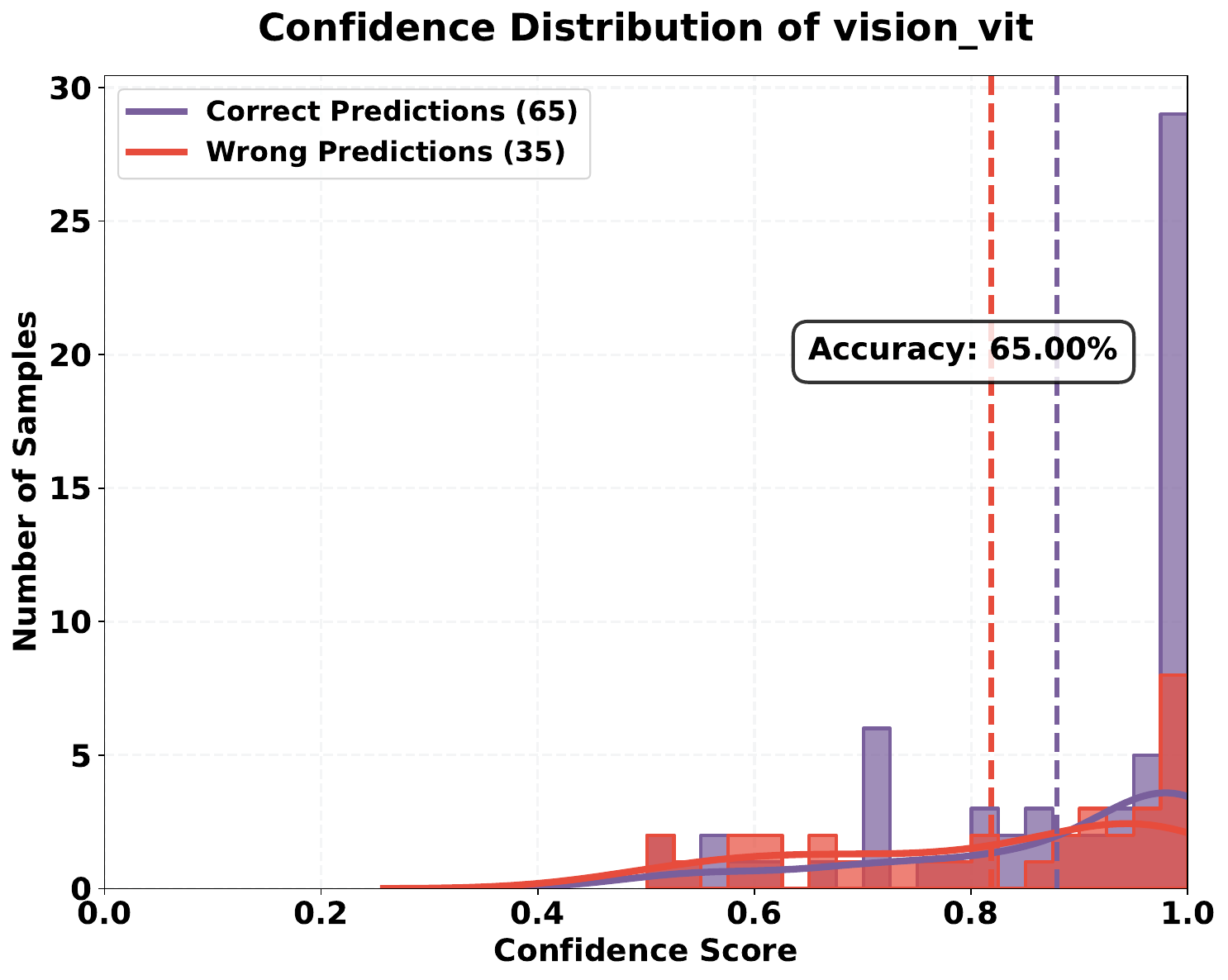}
    \includegraphics[width=0.245\textwidth]{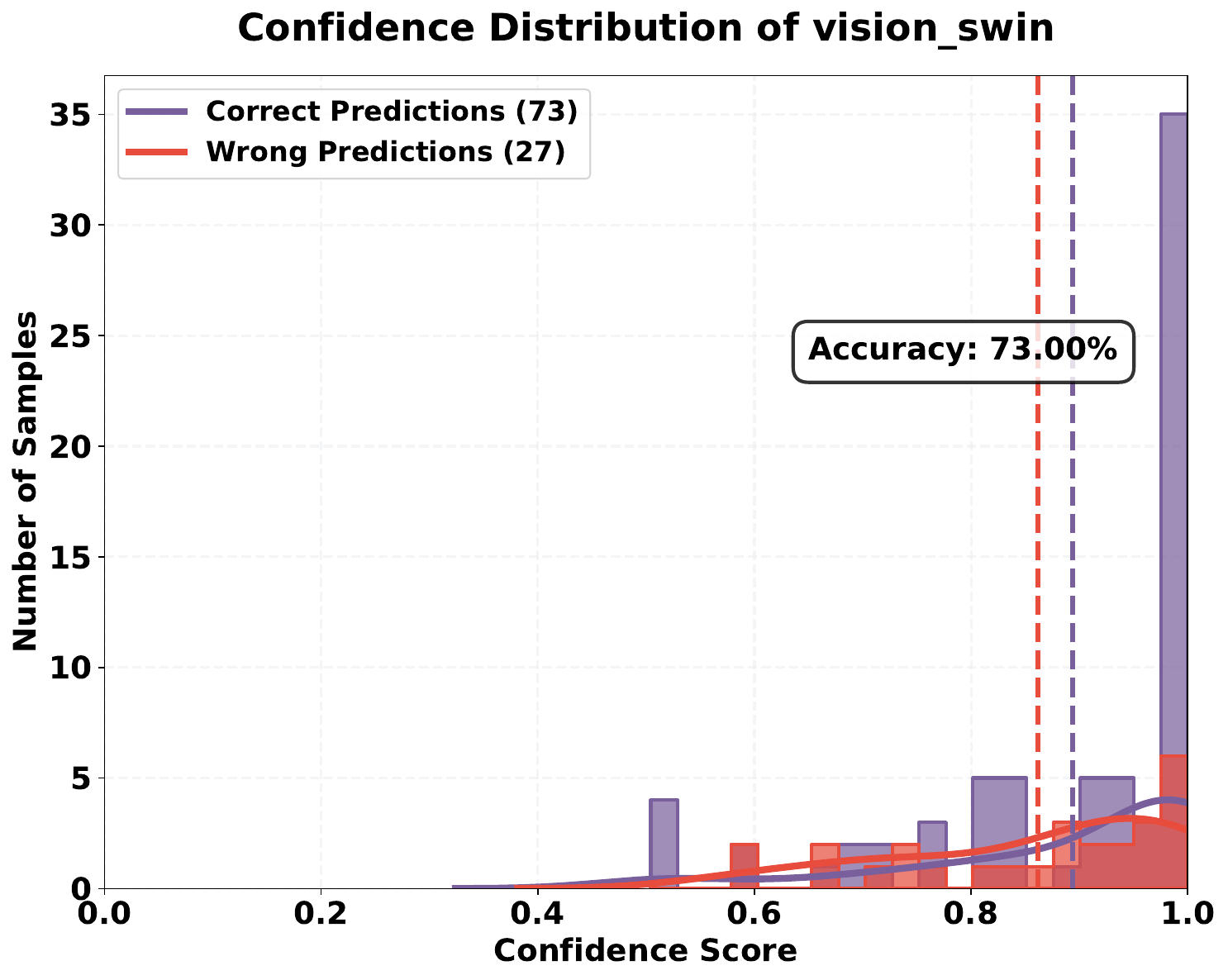}
    
    \caption{Confidence distribution across different model architectures on the \textbf{IMDB-BINARY} dataset. The first two rows show results from the \textbf{training set}, while the last two rows present the \textbf{test set}.}\label{fig:imdb_binary_confidence}
\end{figure}

\clearpage
\subsection{Case Studies}\label{app:case_studies}
To understand how different models make decisions in graph classification, we explore interpretability methods for both GCN and ConvNeXtV2 as examples. For GCN, we used the GNNExplainer to identify important substructures within input graphs. This approach optimizes masks over edges to highlight influential connections for classification decisions, with visualizations created using \texttt{NetworkX} to display node and edge importance. Results across different model depths were compared side by side.
For ConvNeXtV2, we applied a Grad-CAM-based approach by registering hooks on target layers to extract activation maps and gradients. The resulting class-specific heatmaps were overlaid on the input images and uniformly displayed to highlight attention differences between models.
Figures~\ref{fig:x15}–\ref{fig:x18} illustrate additional explanation results for GNNs (via GNNExplainer) and vision models (via Grad-CAM) in the first three samples in the testing set of four datasets. 

To delve deeper into how these models utilize underlying graph structures for prediction, especially in the presence of known important features, we highlight a specific case study on the ENZYMES dataset. Amongst the datasets considered in our study, ENZYMES offers a unique opportunity for this detailed interpretability analysis, as prior work by~\citep{dai2024large} has already identified and characterized \textbf{discriminative pattern} for it. This pre-existing knowledge allows us to assess alignment with established important features. Figure~\ref{fig:x15} presents this analysis:

\begin{figure}[htbp]
    \centering
    \includegraphics[width=0.45\textwidth]{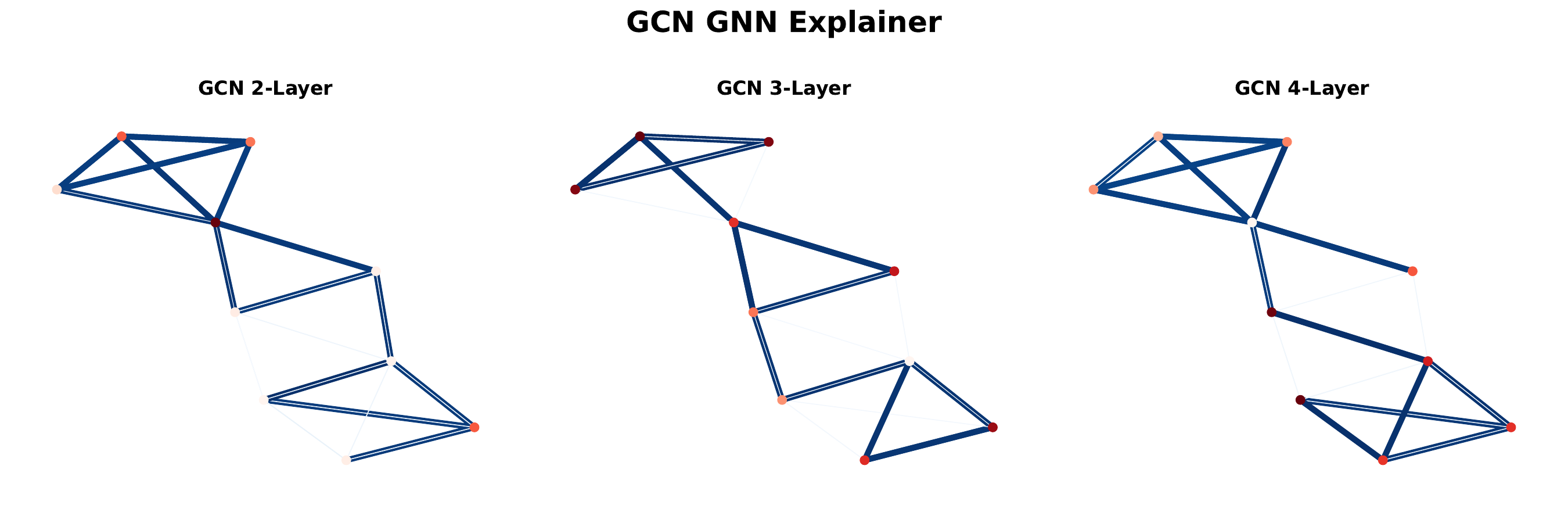}
    \hspace{0.05\textwidth}
    \includegraphics[width=0.45\textwidth]{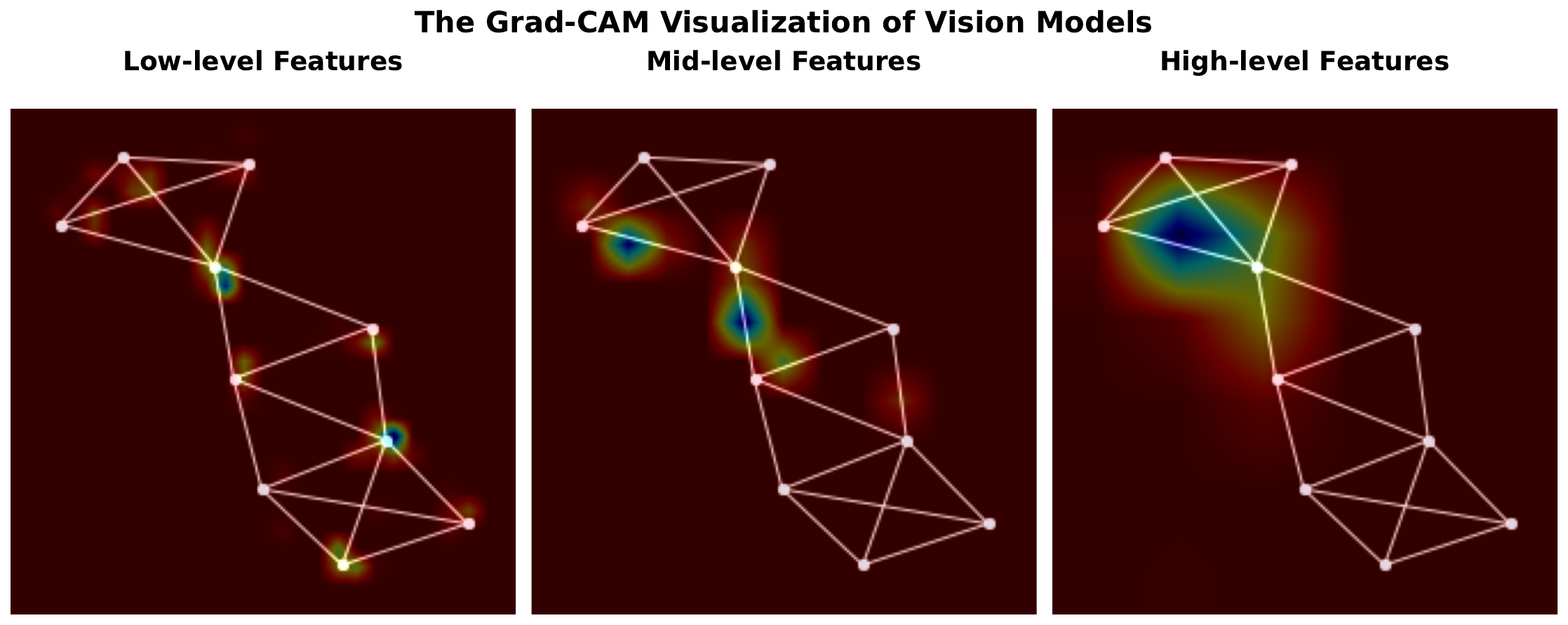}
    \includegraphics[width=0.45\textwidth]{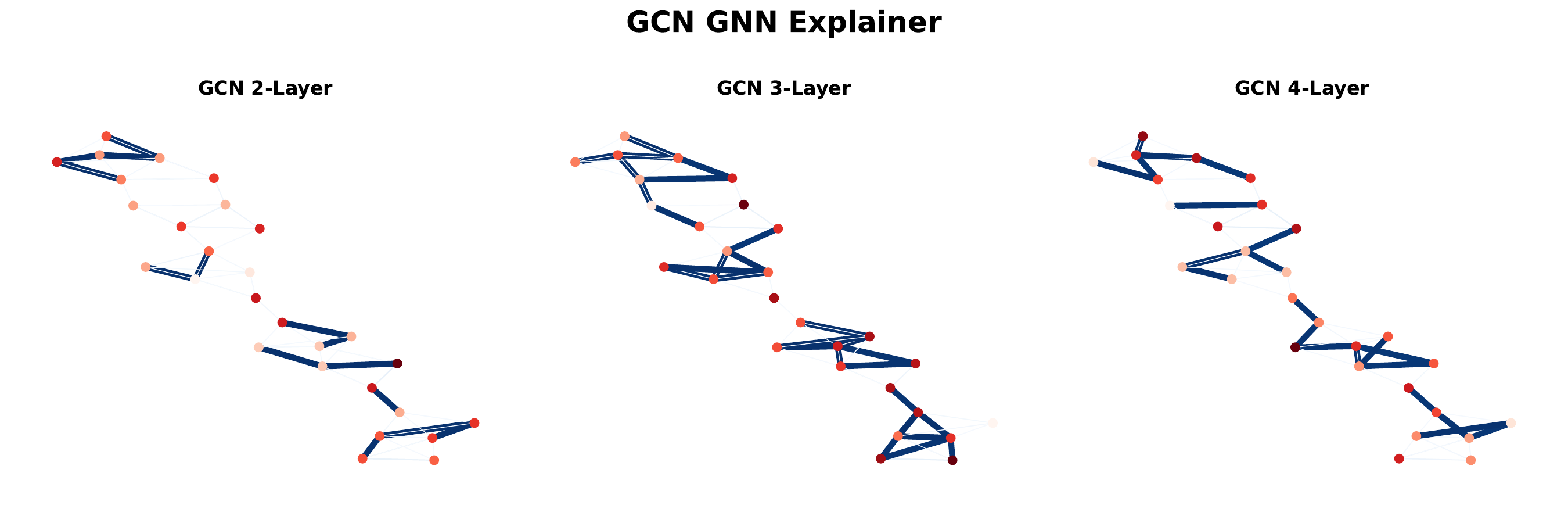}
    \hspace{0.05\textwidth}
    \includegraphics[width=0.45\textwidth]{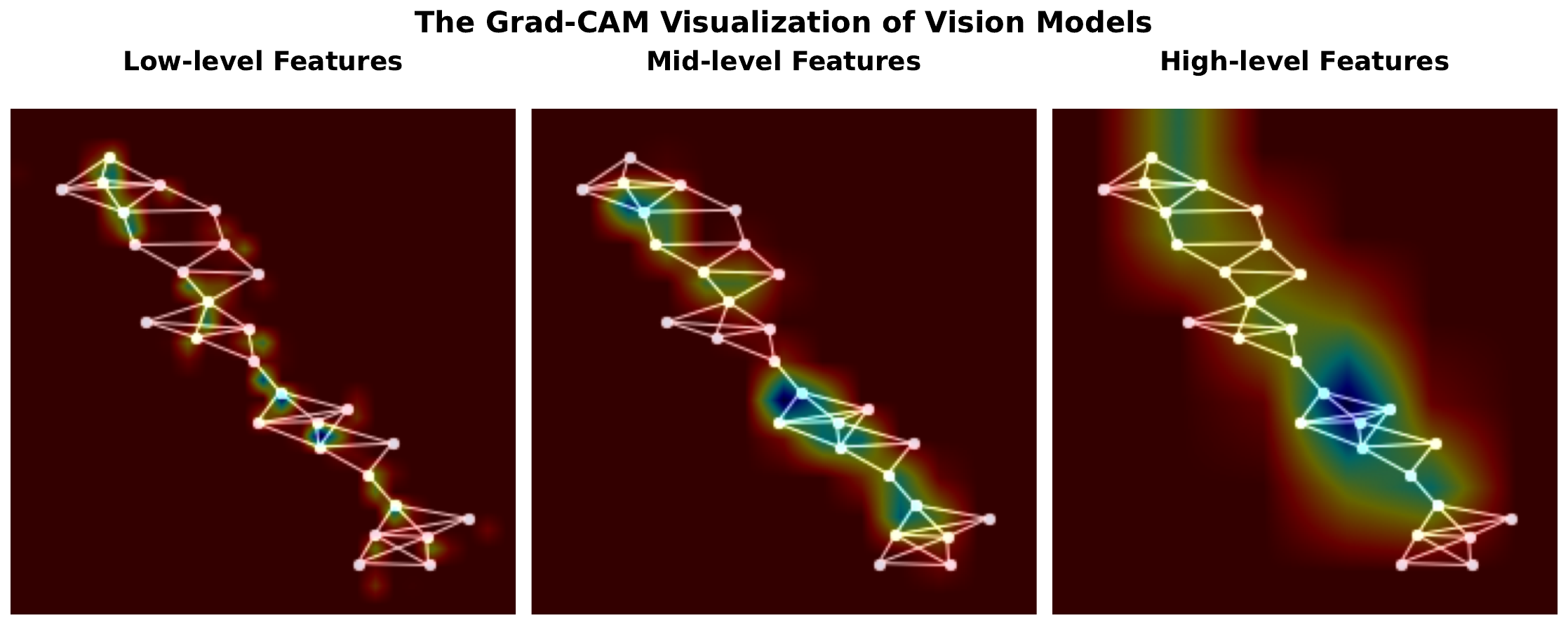}
    \includegraphics[width=0.45\textwidth]{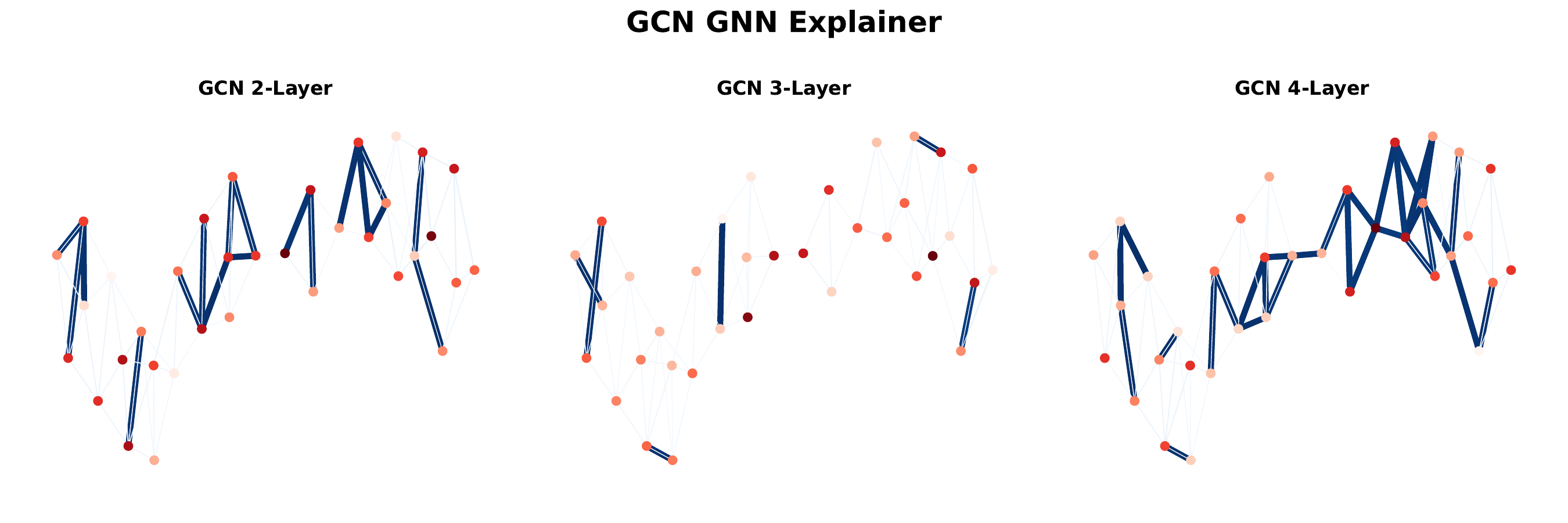}
    \hspace{0.05\textwidth}
    \includegraphics[width=0.45\textwidth]{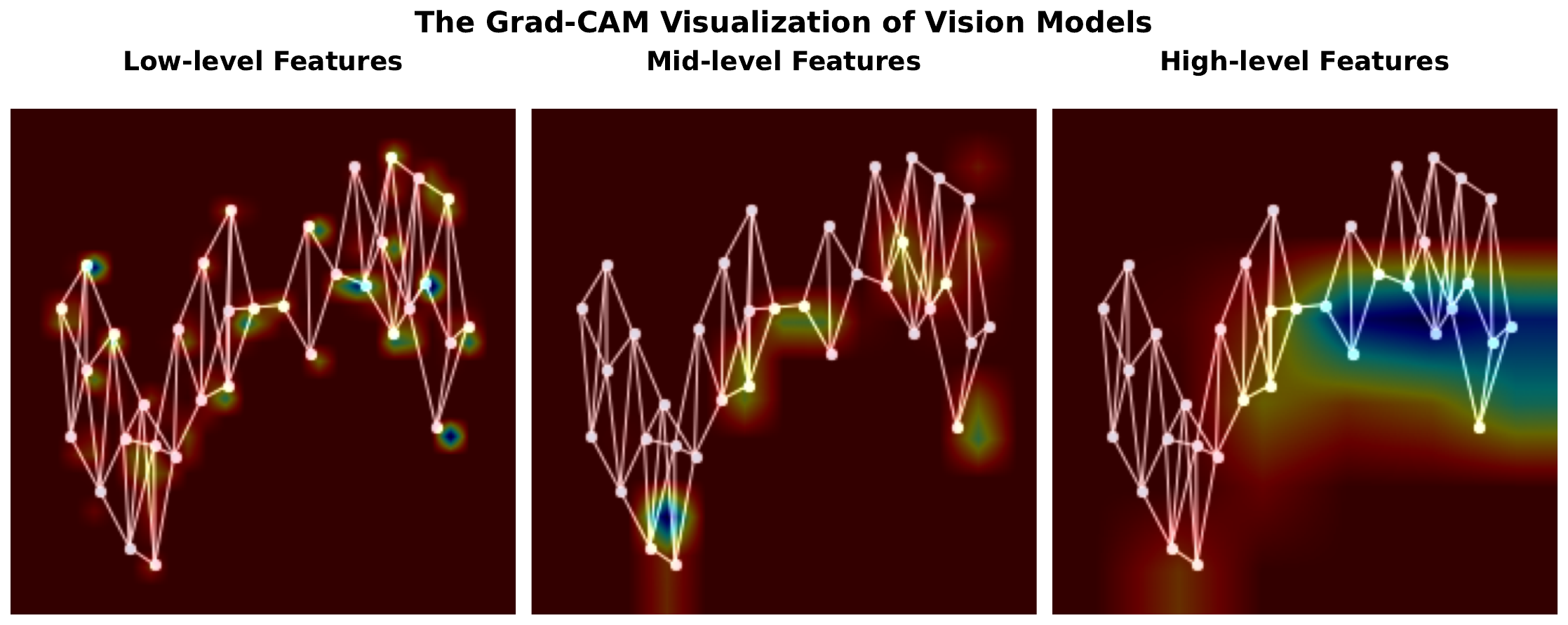}
    \caption{ Comparison of GNN explainer (left) and Vision Model Grad-CAM (right) on ENZYMES dataset. The discriminative pattern, defined in~\citep{dai2024large} as \textit{\textbf{a square with two diagonal connections}} that appears in >$90$\% of graphs within one class but <$10$\% in others, is are key feature for classification. In the top row, where two discriminative patterns exist (one at each end of the graph), while both approaches identify these patterns, the Vision model's Grad-CAM shows particularly sharp focus on them at high-level features, compared to the GNNExplainer's more uniform attribution. In the middle row, the Vision model effectively highlights multiple discriminative patterns near cut-vertices and cut-edges where the graph structure narrows. In contrast, the GNNExplainer shows scattered attention without emphasizing these critical patterns. The bottom row, containing no pre-defined discriminative patterns, demonstrates the different attention patterns of both approaches on non-characteristic structures. These results suggest that Vision models have learned to effectively leverage these discriminative patterns as reliable shortcuts for classification, while GNNExplainers maintain relatively uniform attention distributions.}\label{fig:x15}
\end{figure}

\begin{figure}[htbp]
    \centering
    \includegraphics[width=0.45\textwidth]{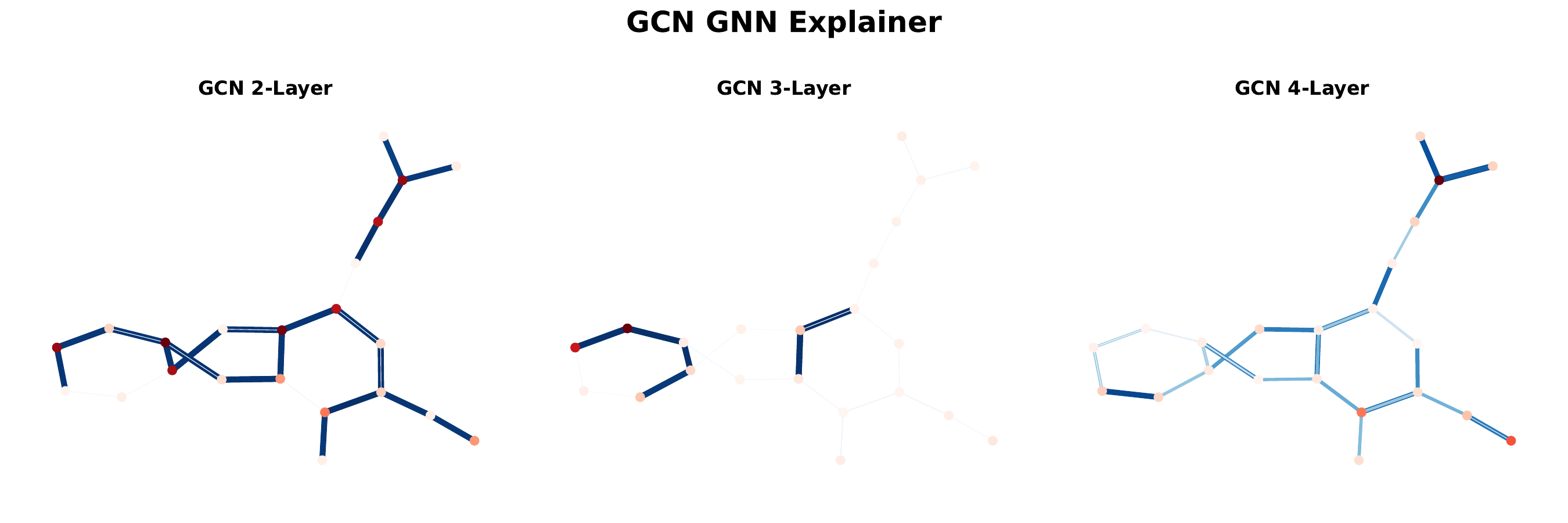}
    \hspace{0.05\textwidth}
    \includegraphics[width=0.45\textwidth]{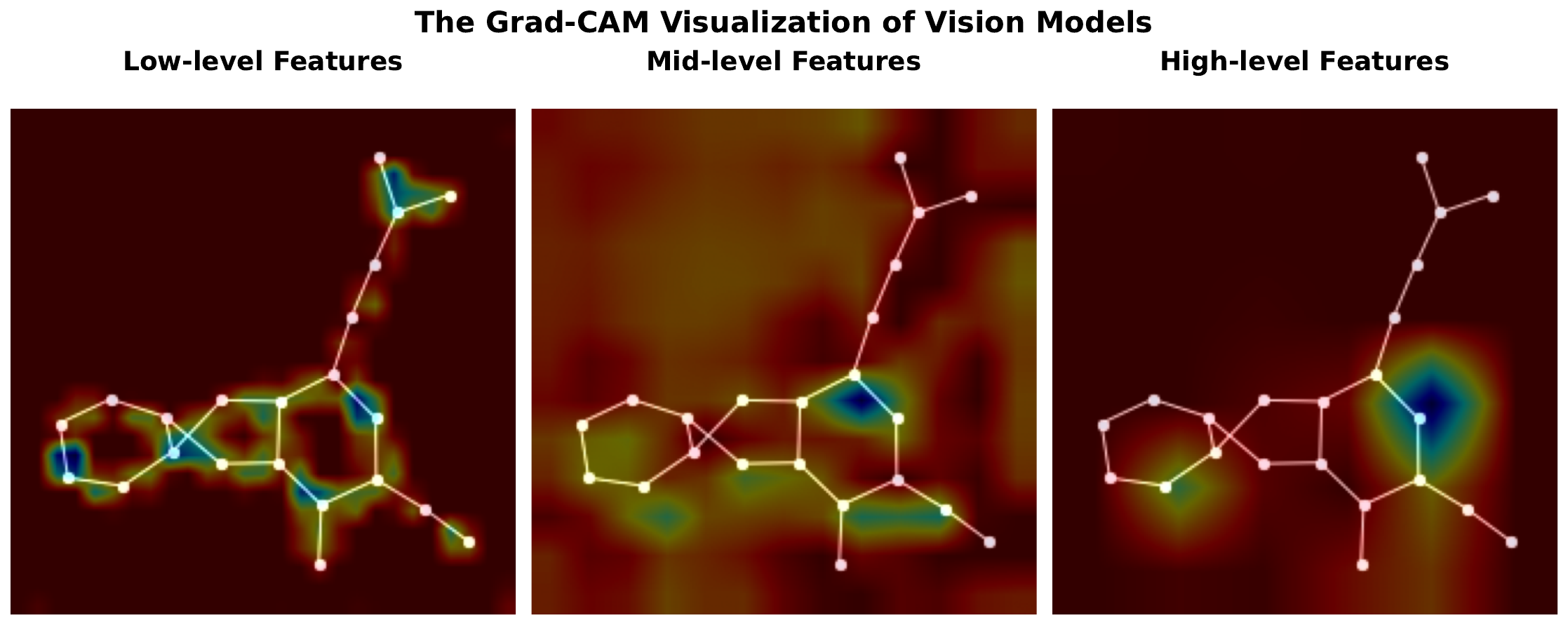}
    \includegraphics[width=0.45\textwidth]{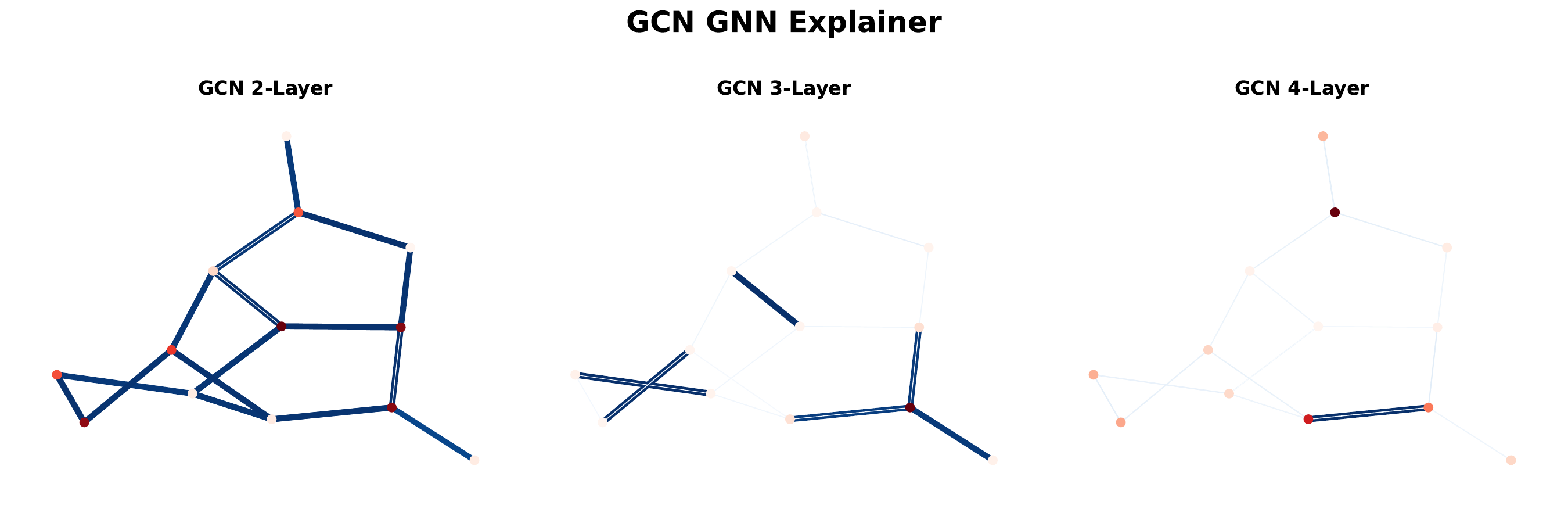}
    \hspace{0.05\textwidth}
    \includegraphics[width=0.45\textwidth]{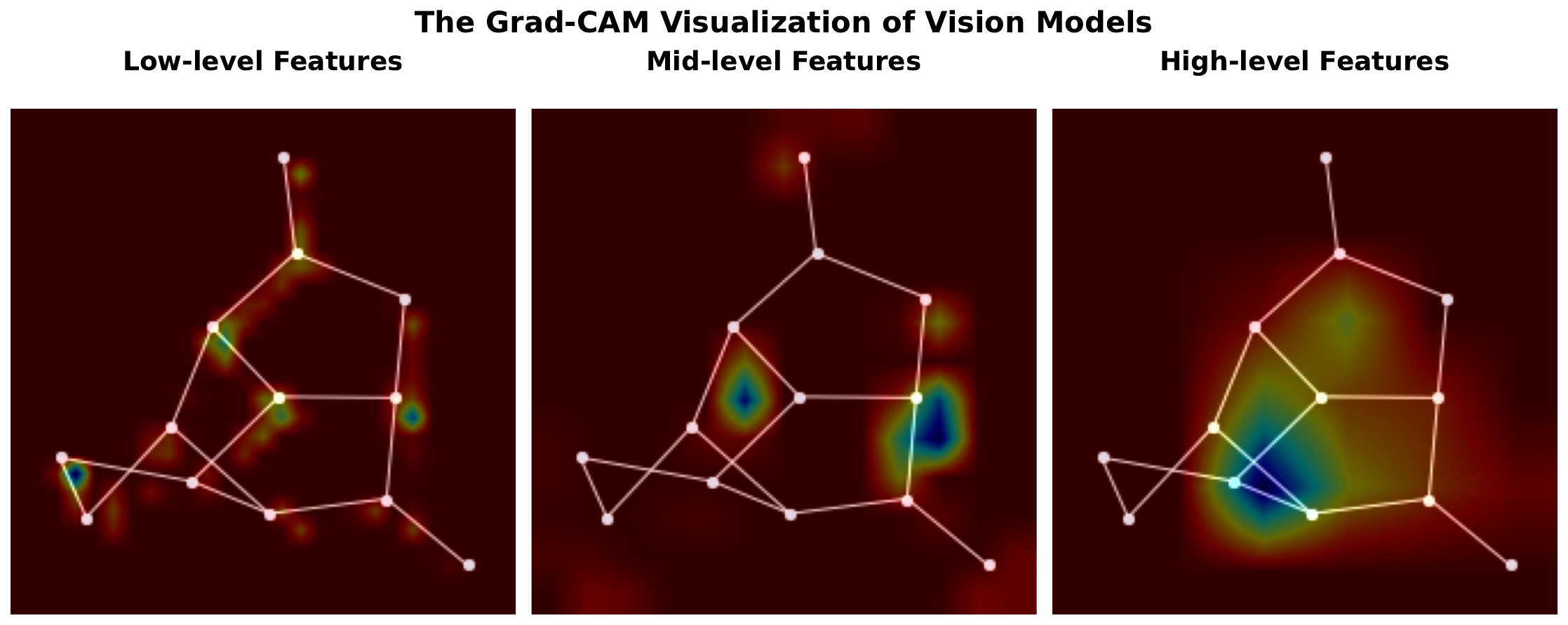}
    \includegraphics[width=0.45\textwidth]{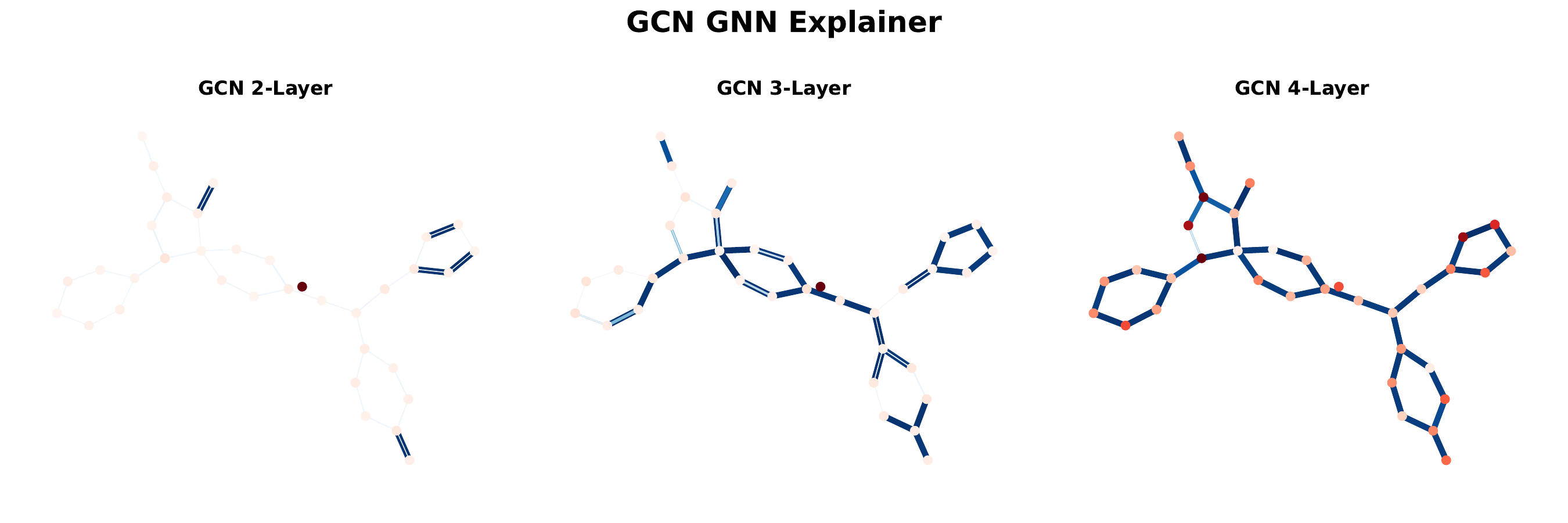}
    \hspace{0.05\textwidth}
    \includegraphics[width=0.45\textwidth]{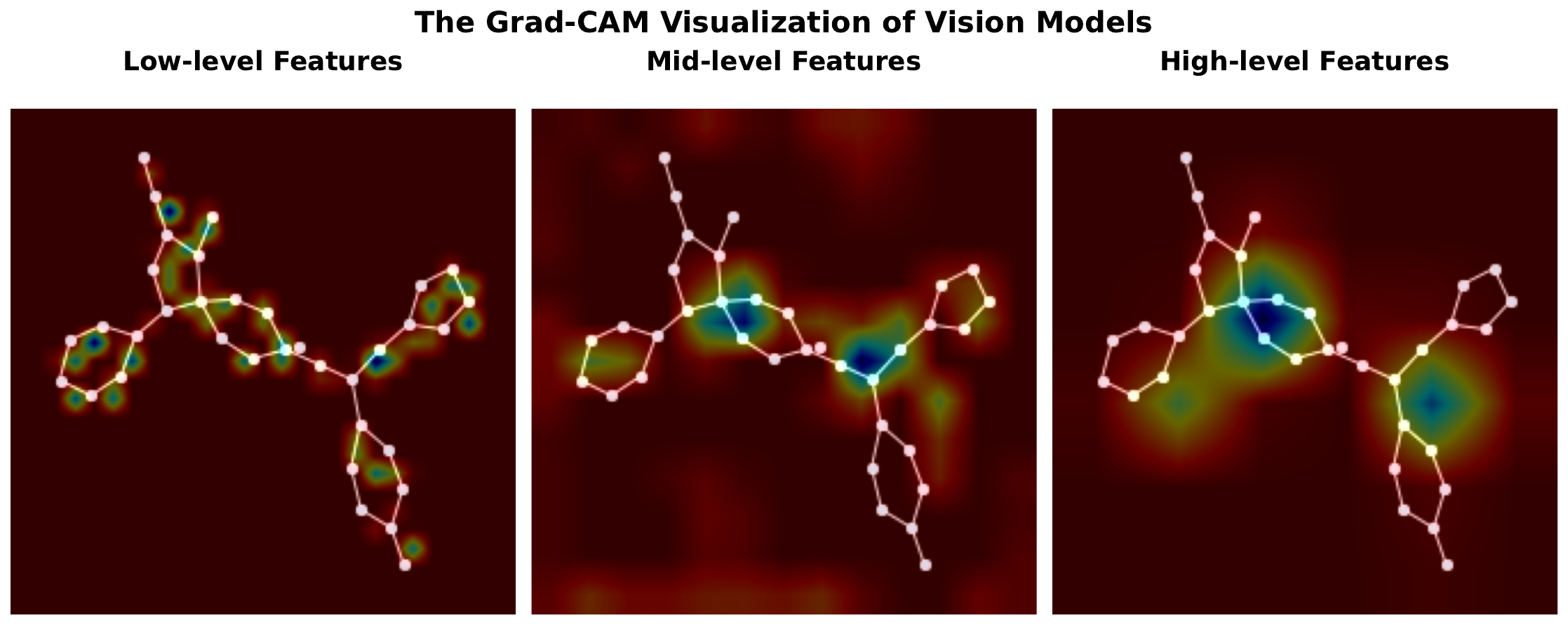}
    \caption{Case Studies for NCI1 dataset.}\label{fig:x16}
\end{figure}

% IMDB-BINARY Dataset
\begin{figure}[htbp]
    \centering
    \includegraphics[width=0.45\textwidth]{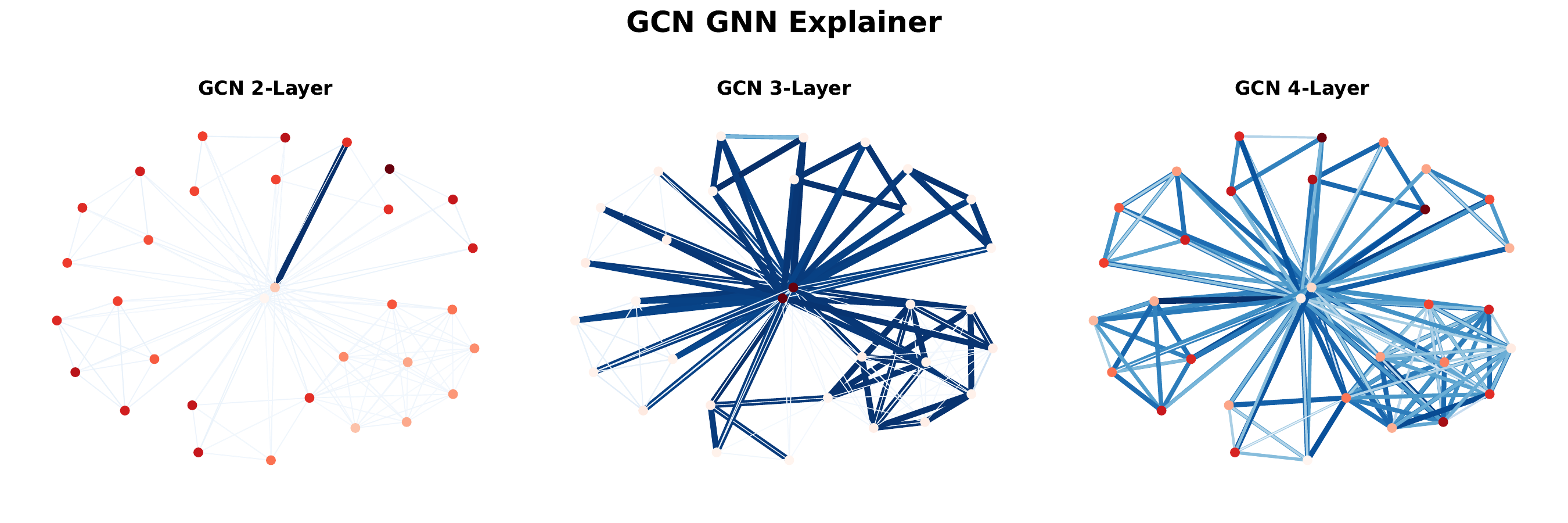}
    \hspace{0.05\textwidth}
    \includegraphics[width=0.45\textwidth]{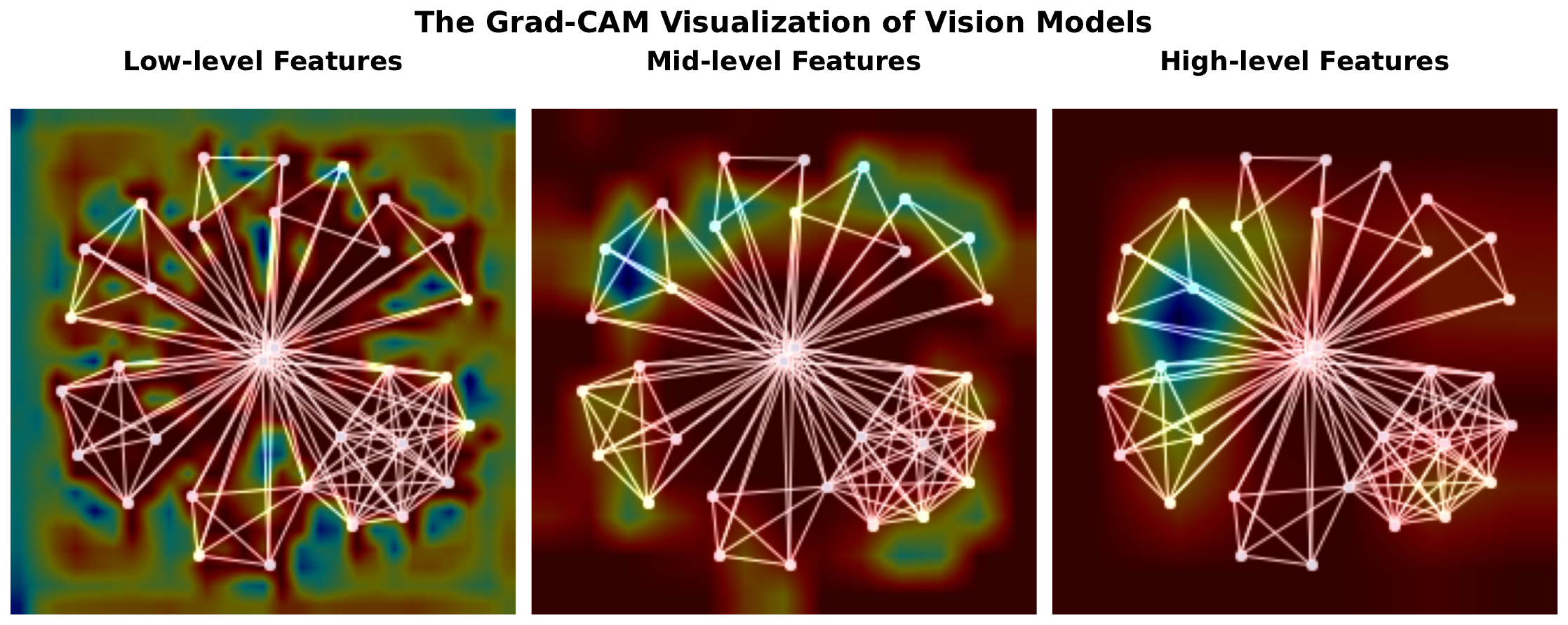}
    \includegraphics[width=0.45\textwidth]{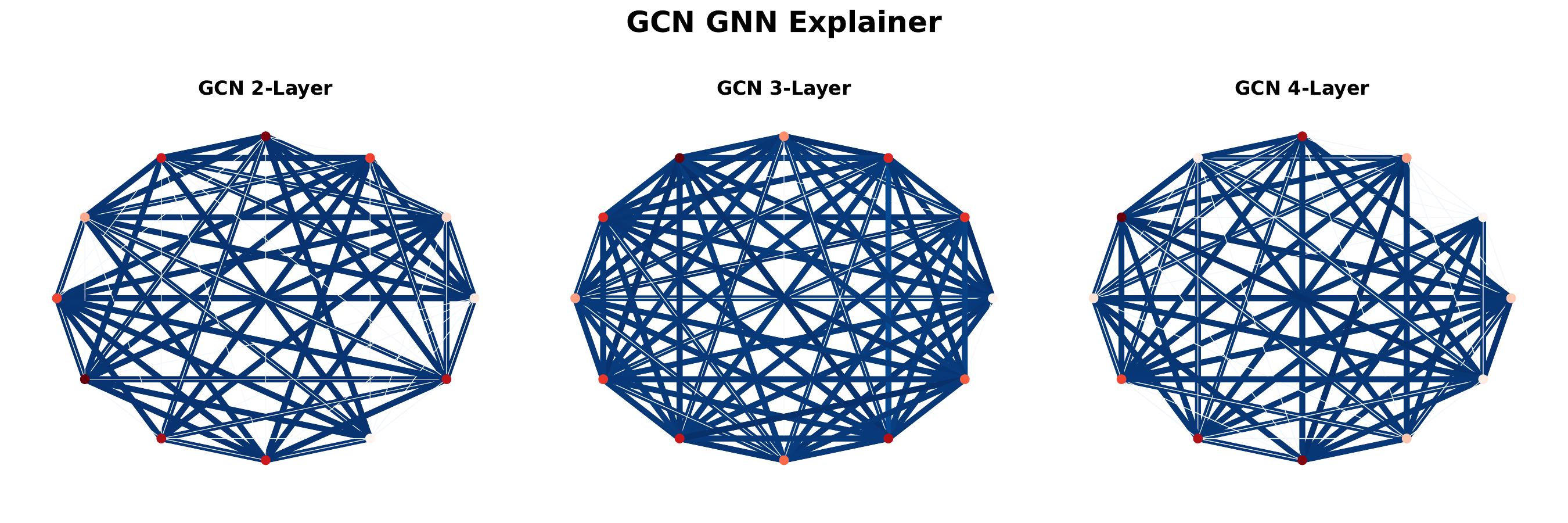}
    \hspace{0.05\textwidth}
    \includegraphics[width=0.45\textwidth]{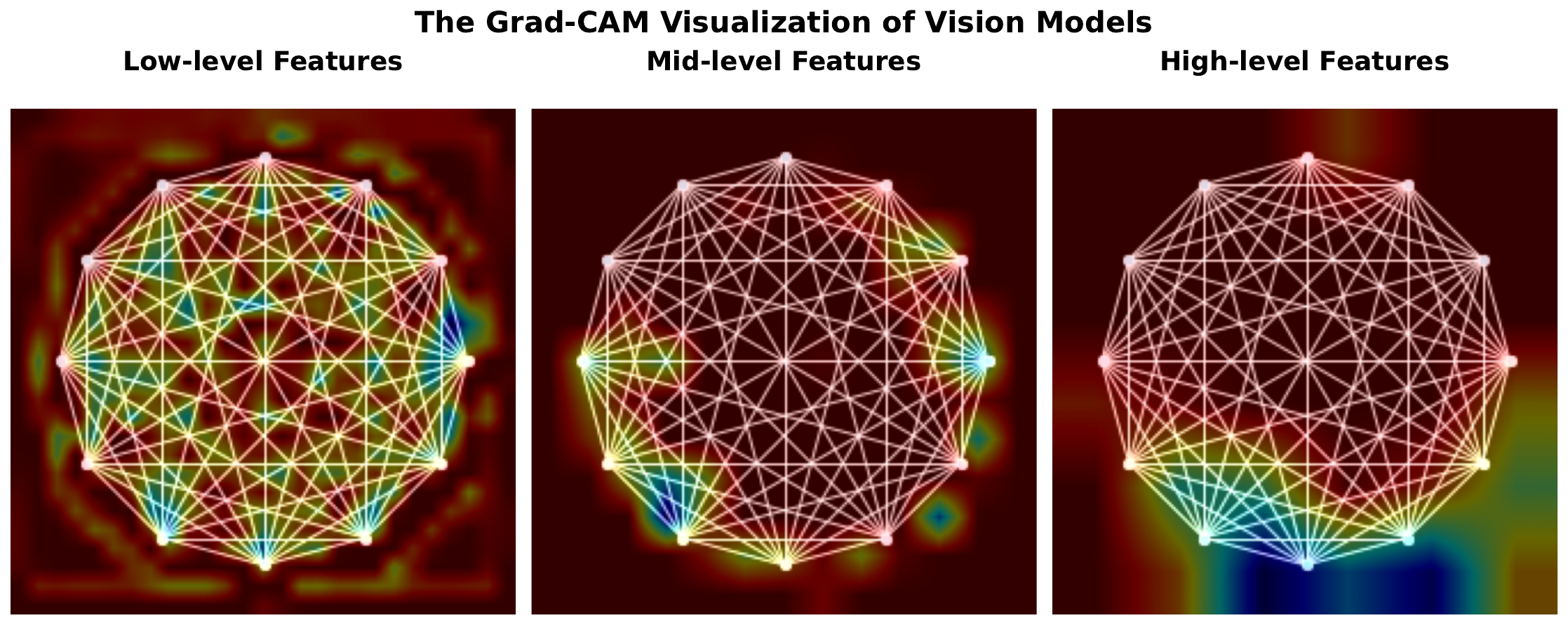}
    \includegraphics[width=0.45\textwidth]{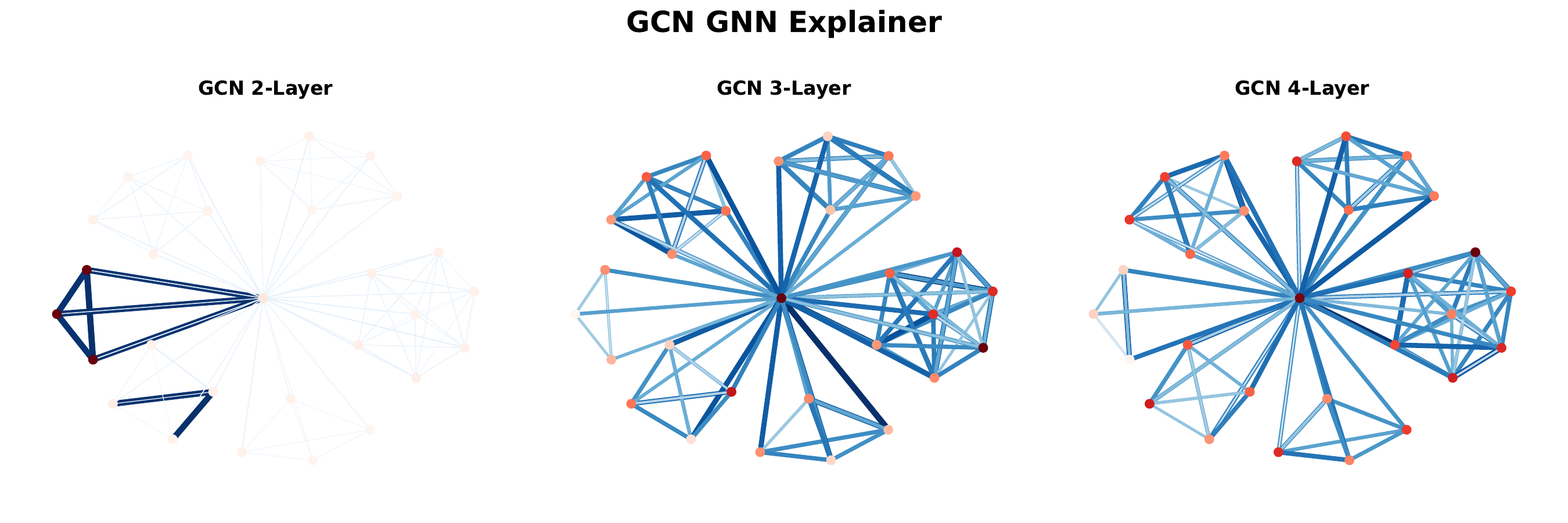}
    \hspace{0.05\textwidth}
    \includegraphics[width=0.45\textwidth]{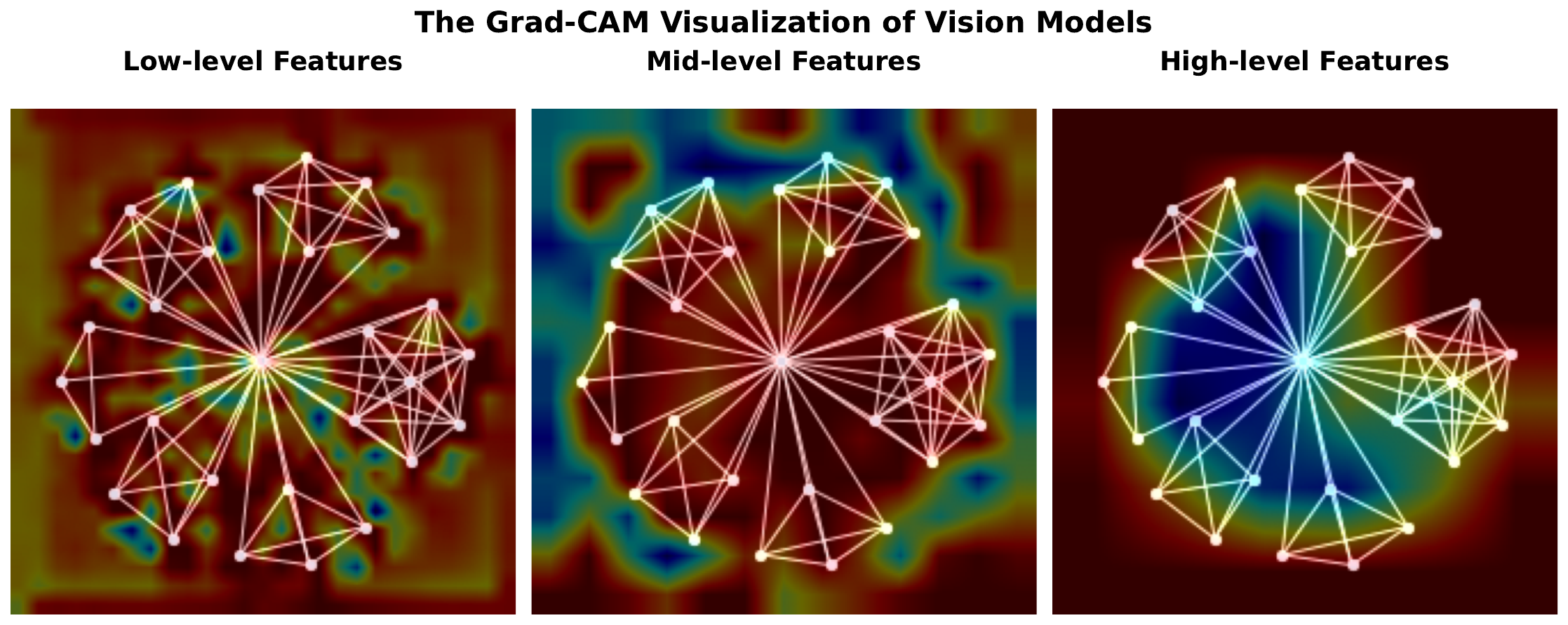}
    \caption{Case Studies for IMDB-BINARY dataset.}\label{fig:x17}
\end{figure}

% IMDB-MULTI Dataset
\begin{figure}[htbp]
    \centering
    \includegraphics[width=0.45\textwidth]{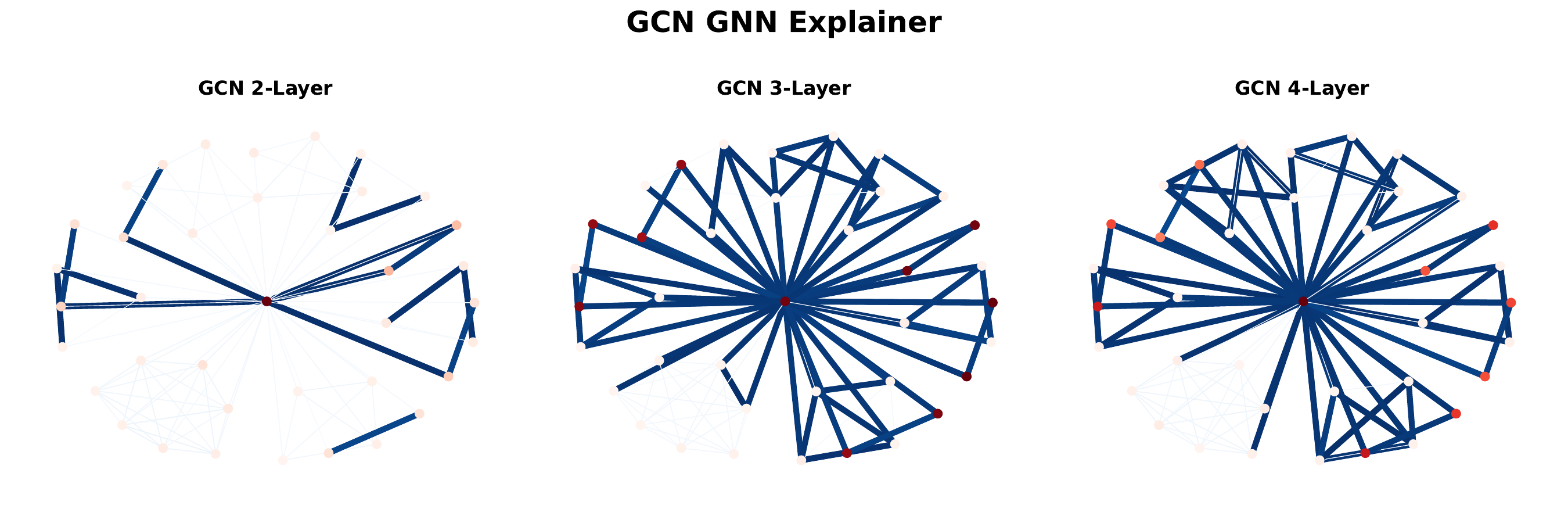}
    \hspace{0.05\textwidth}
    \includegraphics[width=0.45\textwidth]{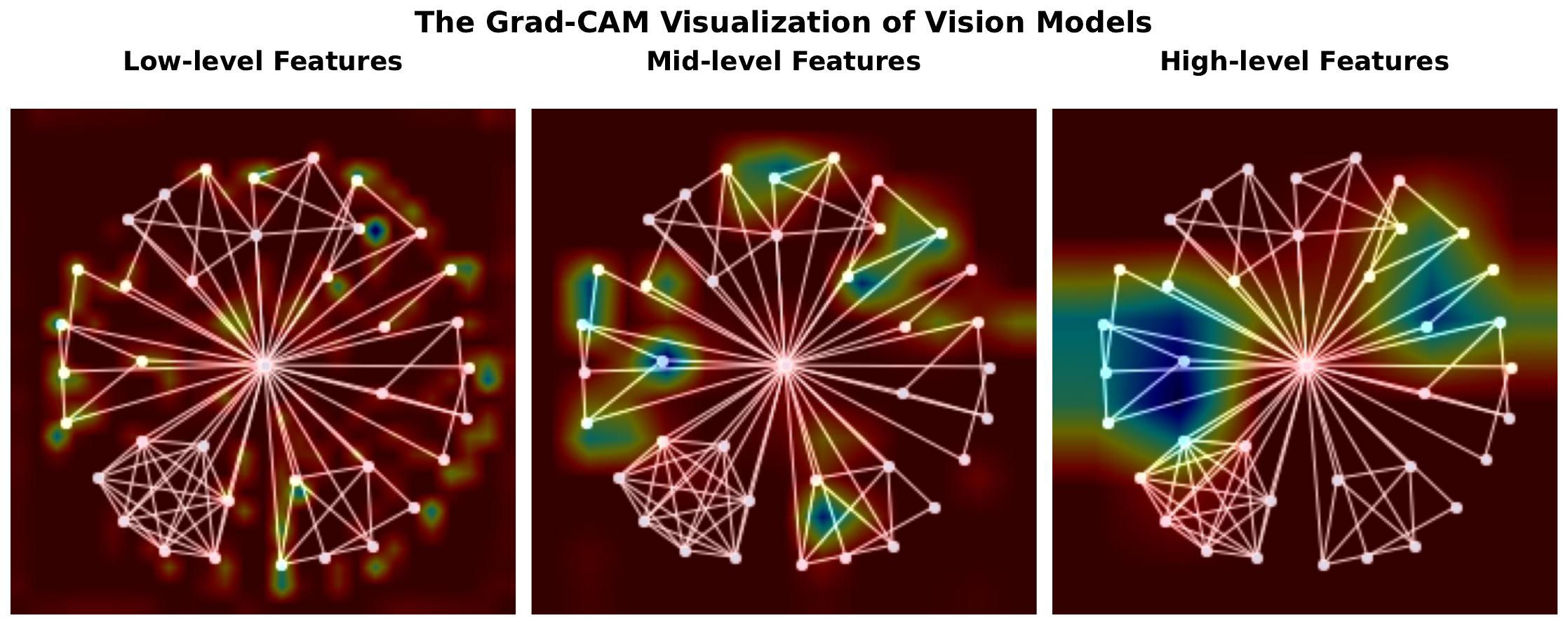}
    \includegraphics[width=0.45\textwidth]{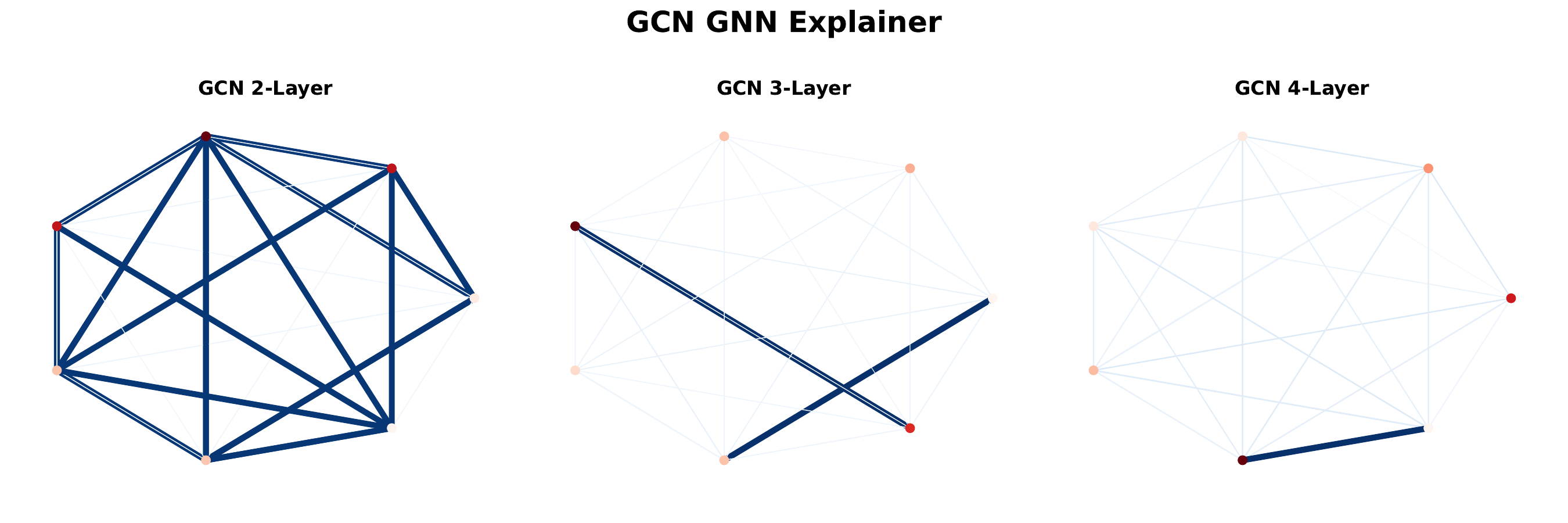}
    \hspace{0.05\textwidth}
    \includegraphics[width=0.45\textwidth]{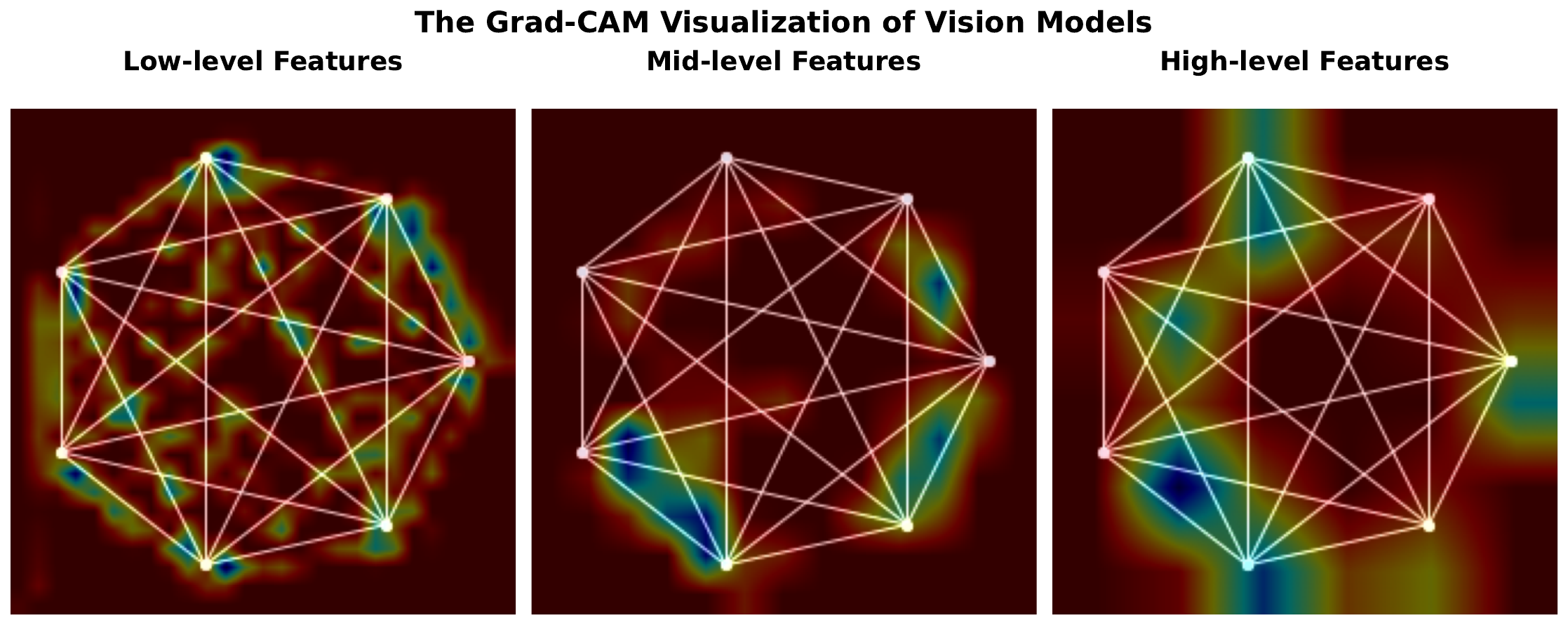}
    \includegraphics[width=0.45\textwidth]{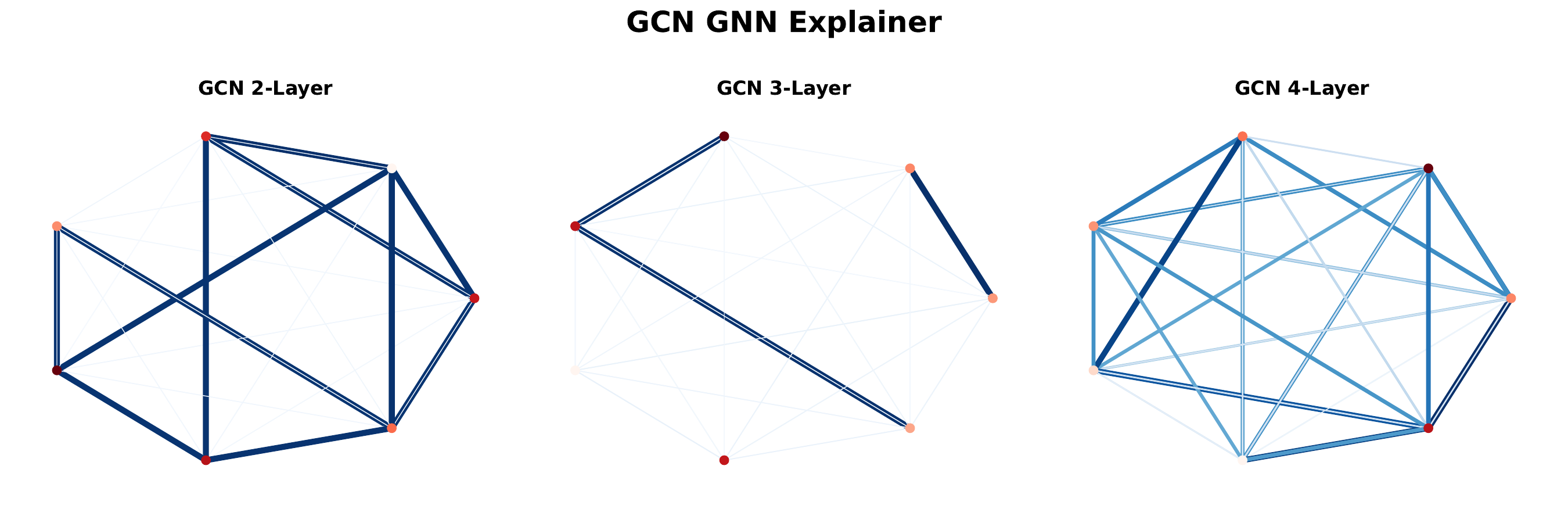}
    \hspace{0.05\textwidth}
    \includegraphics[width=0.45\textwidth]{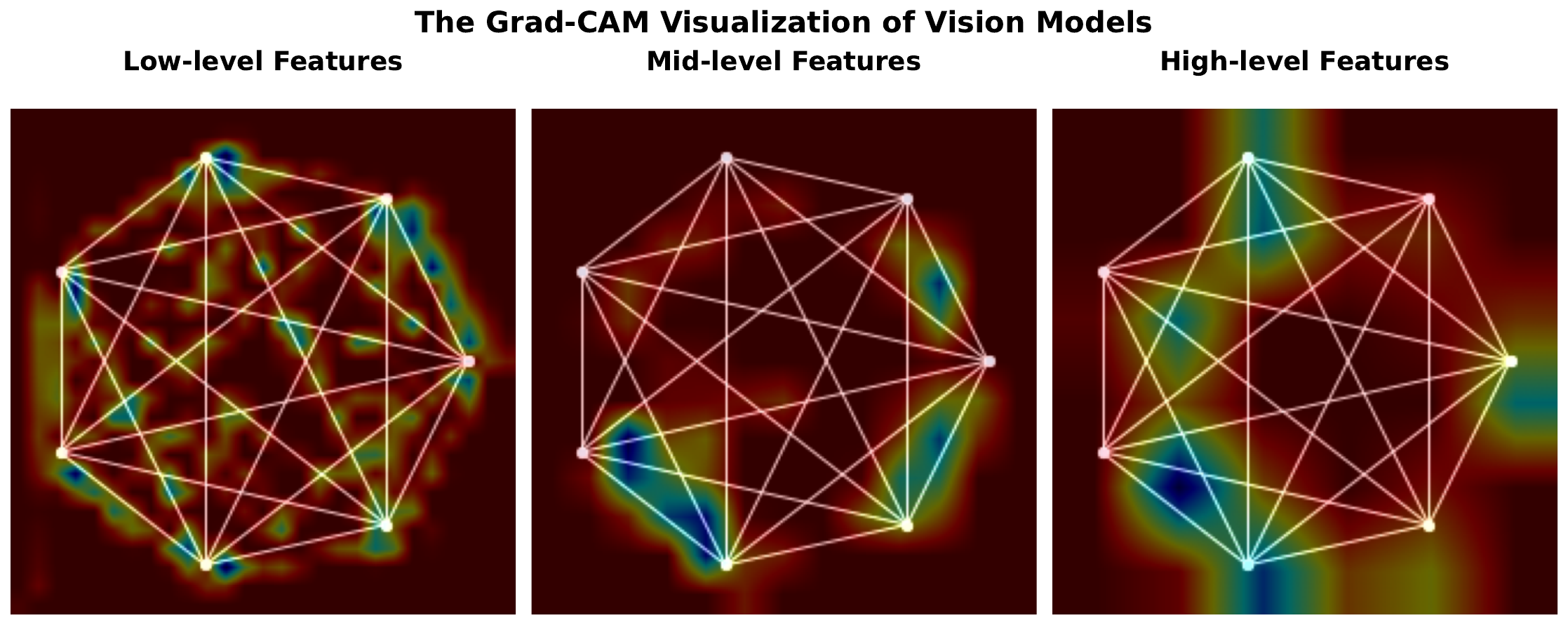}
    \caption{Case Studies for IMDB-MULTI dataset.}\label{fig:x18}
\end{figure}

\clearpage
\subsection{Heatmap}\label{app:heatmap}
In this section, we present detailed prediction overlap analysis for all evaluated models across five benchmark datasets.
Figures~\ref{fig:overlap_enzymes_layers}--\ref{fig:overlap_imdb-multi_layers} illustrate the prediction overlap patterns between GNN models of varying depths (1-6 layers) and vision-based models. Consistent with our main findings, the results show high intra-family similarity among GNNs regardless of layer depth, while maintaining distinctly different prediction patterns compared to vision models. %This is demonstrated in the overlap matrices, where GNN-GNN blocks show consistently higher overlap values compared to GNN-Vision blocks.

\begin{figure}[htbp]
    \centering
    % First row (Correct Prediction Overlap)
    \begin{subfigure}[b]{0.32\textwidth}
        \includegraphics[width=\textwidth]{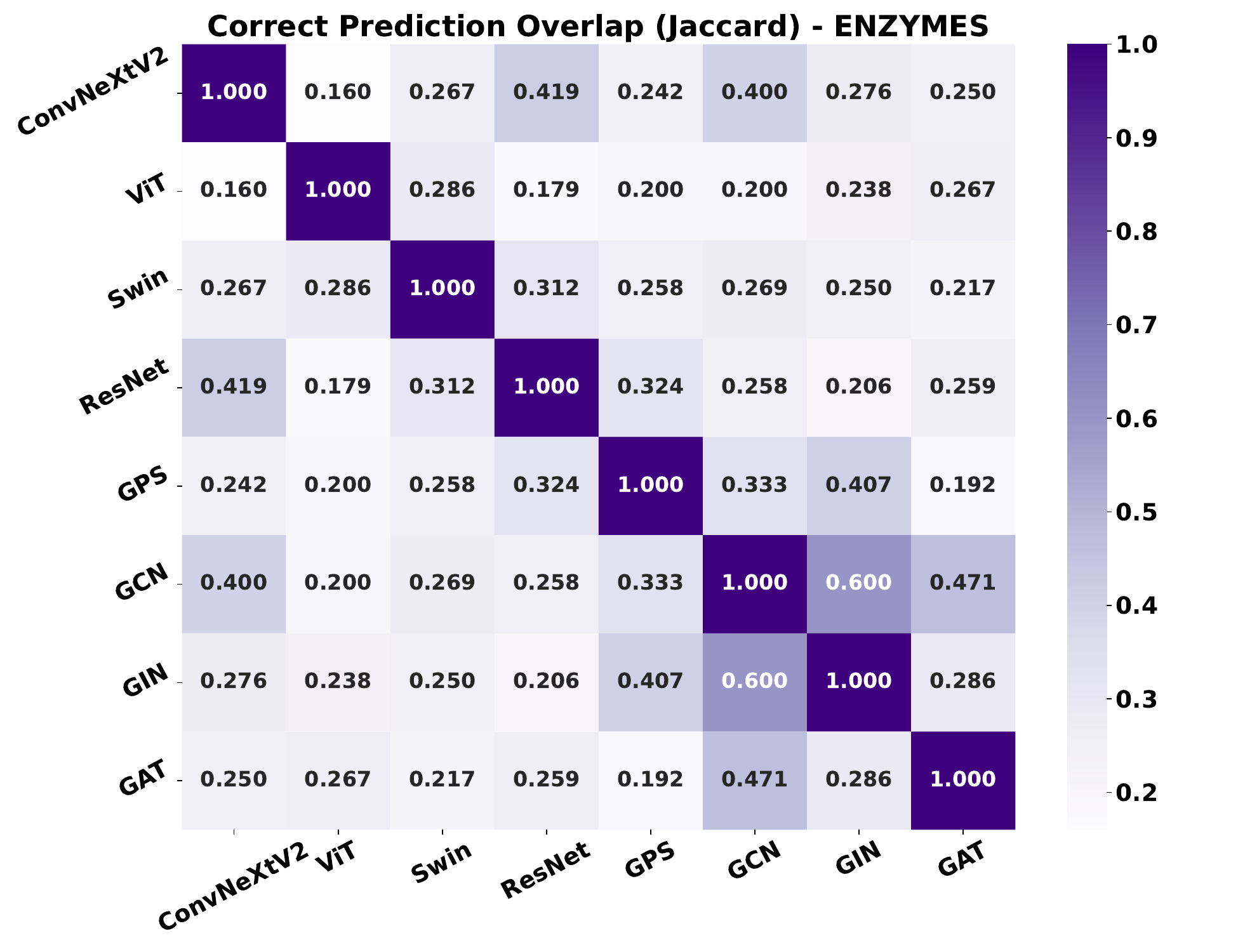}
        \caption{ENZYMES (1 layers) - Correct}
    \end{subfigure}
    \hfill
    \begin{subfigure}[b]{0.32\textwidth}
        \includegraphics[width=\textwidth]{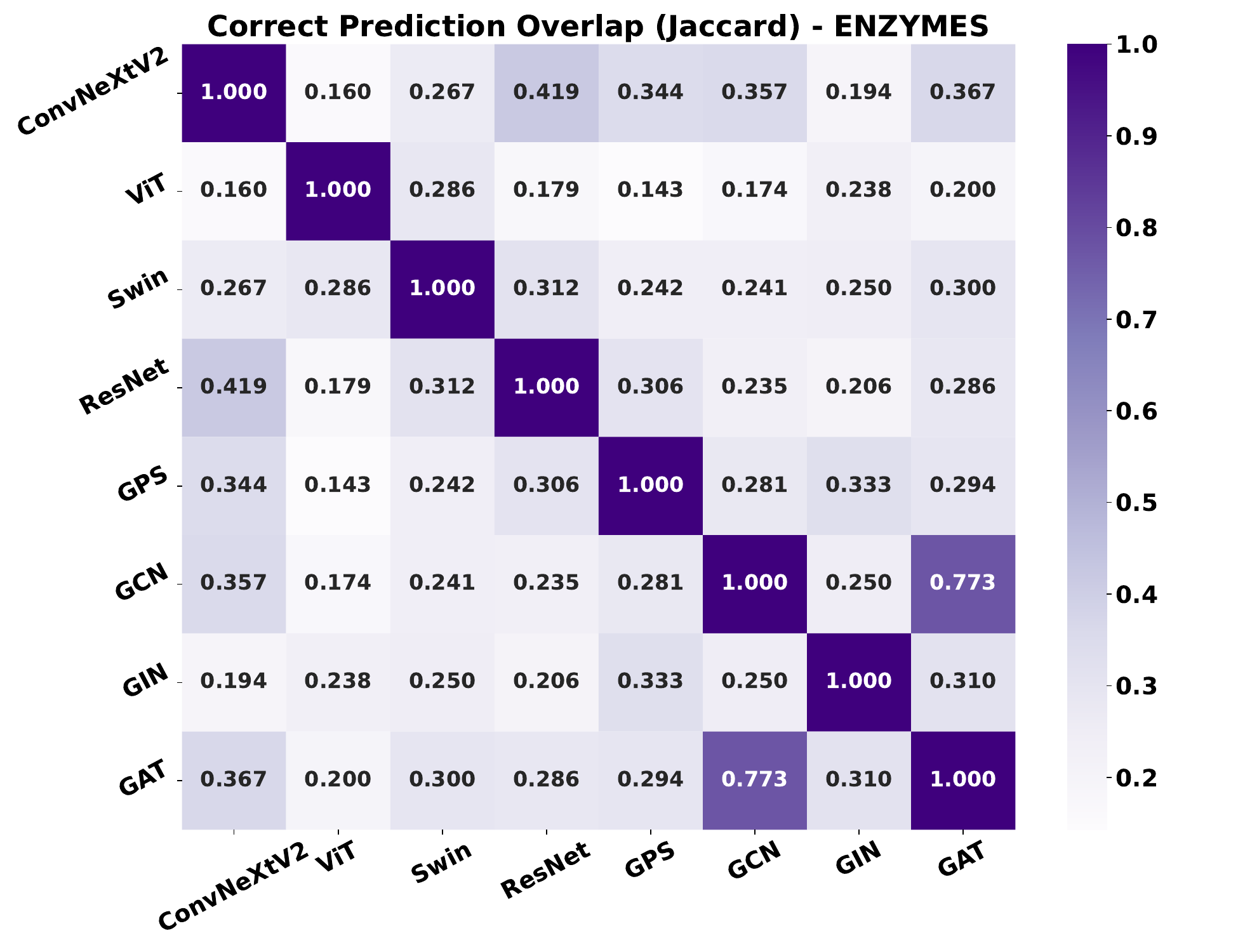}
        \caption{ENZYMES (2 layers) - Correct}
    \end{subfigure}
    \hfill
    \begin{subfigure}[b]{0.32\textwidth}
        \includegraphics[width=\textwidth]{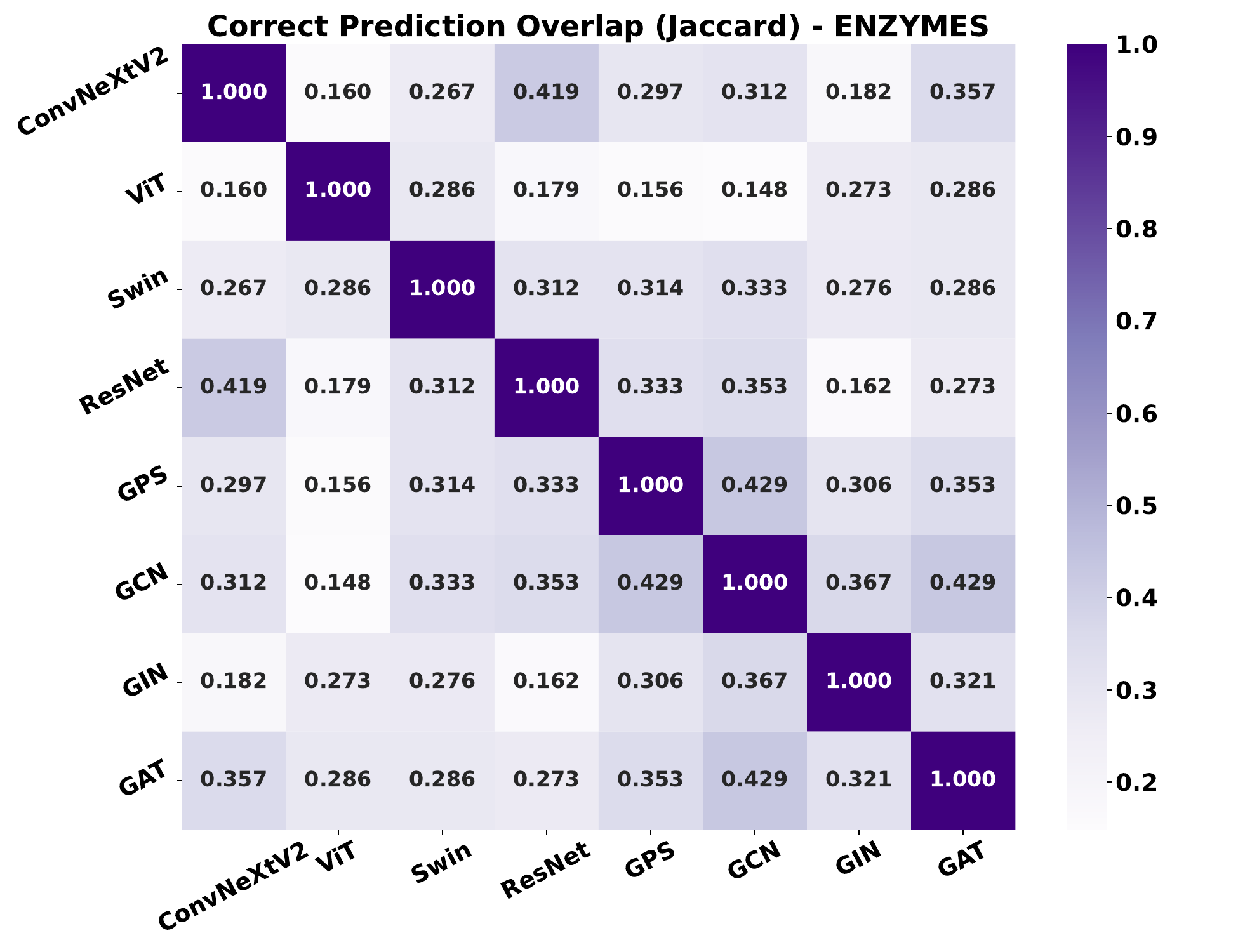}
        \caption{ENZYMES (3 layers) - Correct}
    \end{subfigure}

    \vspace{1em}  % 添加行间距

    \begin{subfigure}[b]{0.32\textwidth}
        \includegraphics[width=\textwidth]{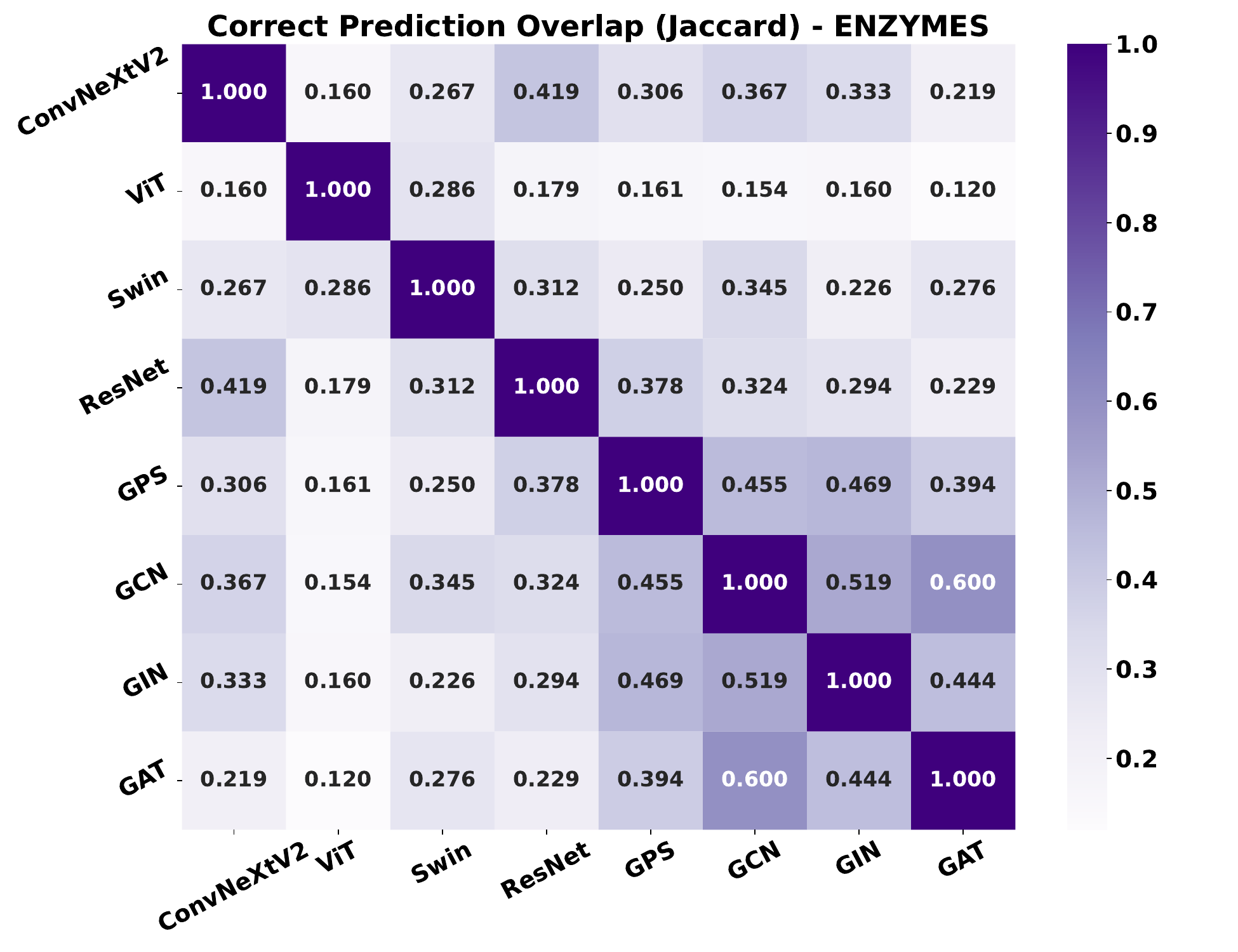}
        \caption{ENZYMES (4 layers) - Correct}
    \end{subfigure}
    \hfill
    \begin{subfigure}[b]{0.32\textwidth}
        \includegraphics[width=\textwidth]{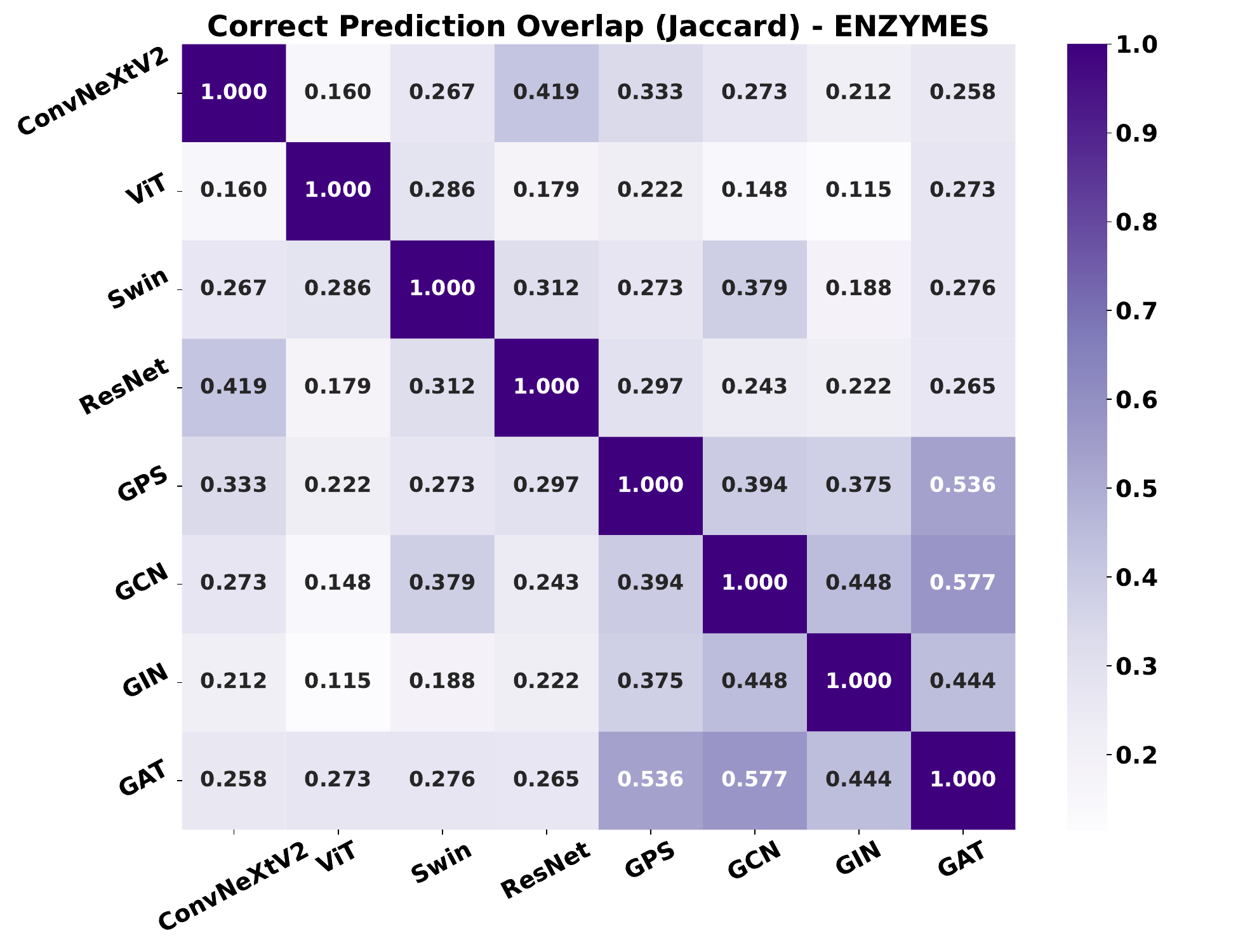}
        \caption{ENZYMES (5 layers) - Correct}
    \end{subfigure}
    \hfill
    \begin{subfigure}[b]{0.32\textwidth}
        \includegraphics[width=\textwidth]{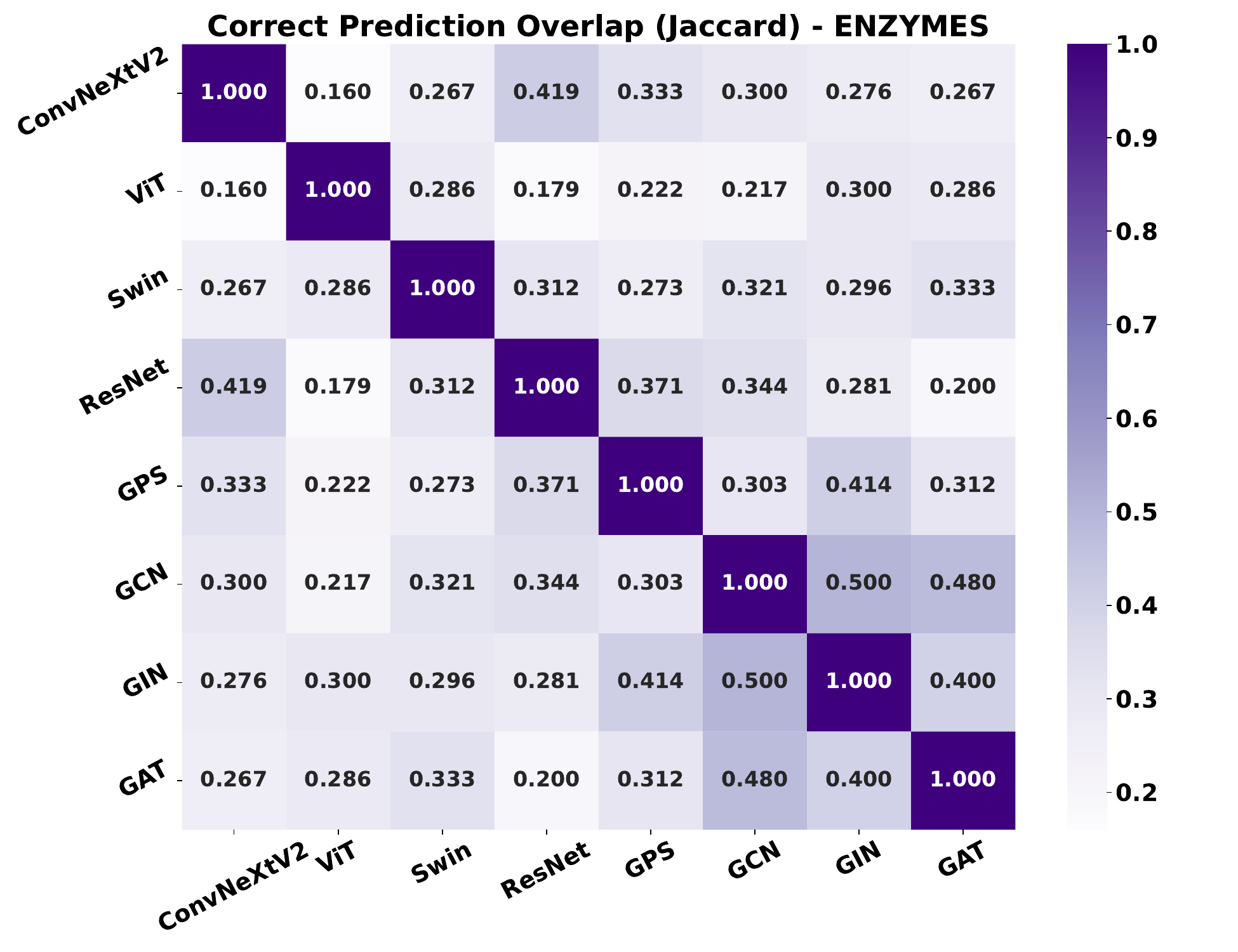}
        \caption{ENZYMES (6 layers) - Correct}
    \end{subfigure}

    \vspace{2em}  % 添加行与行之间的间距

    % Second row (Error Overlap)
    \begin{subfigure}[b]{0.32\textwidth}
        \includegraphics[width=\textwidth]{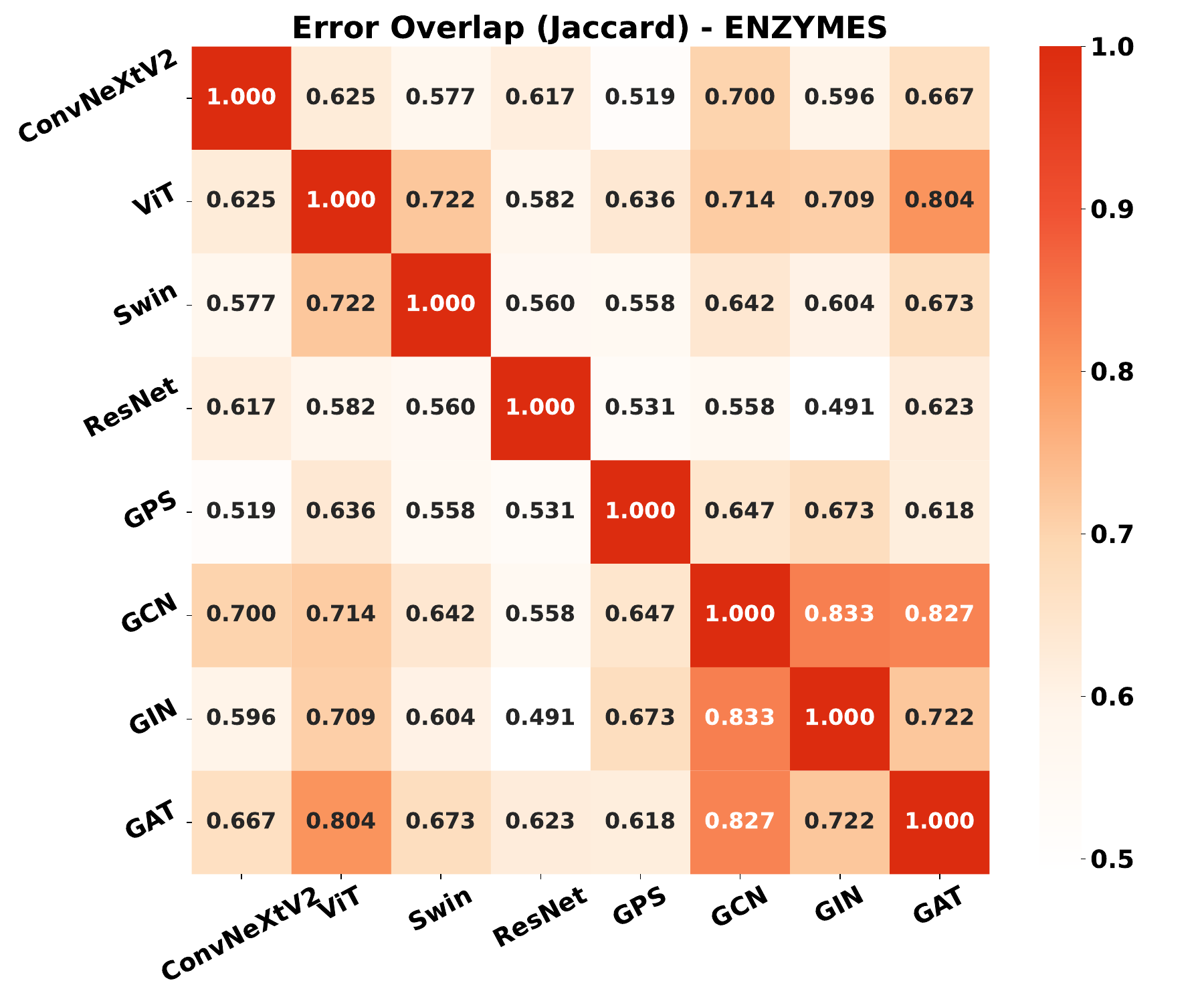}
        \caption{ENZYMES (1 layers) - Error}
    \end{subfigure}
    \hfill
    \begin{subfigure}[b]{0.32\textwidth}
        \includegraphics[width=\textwidth]{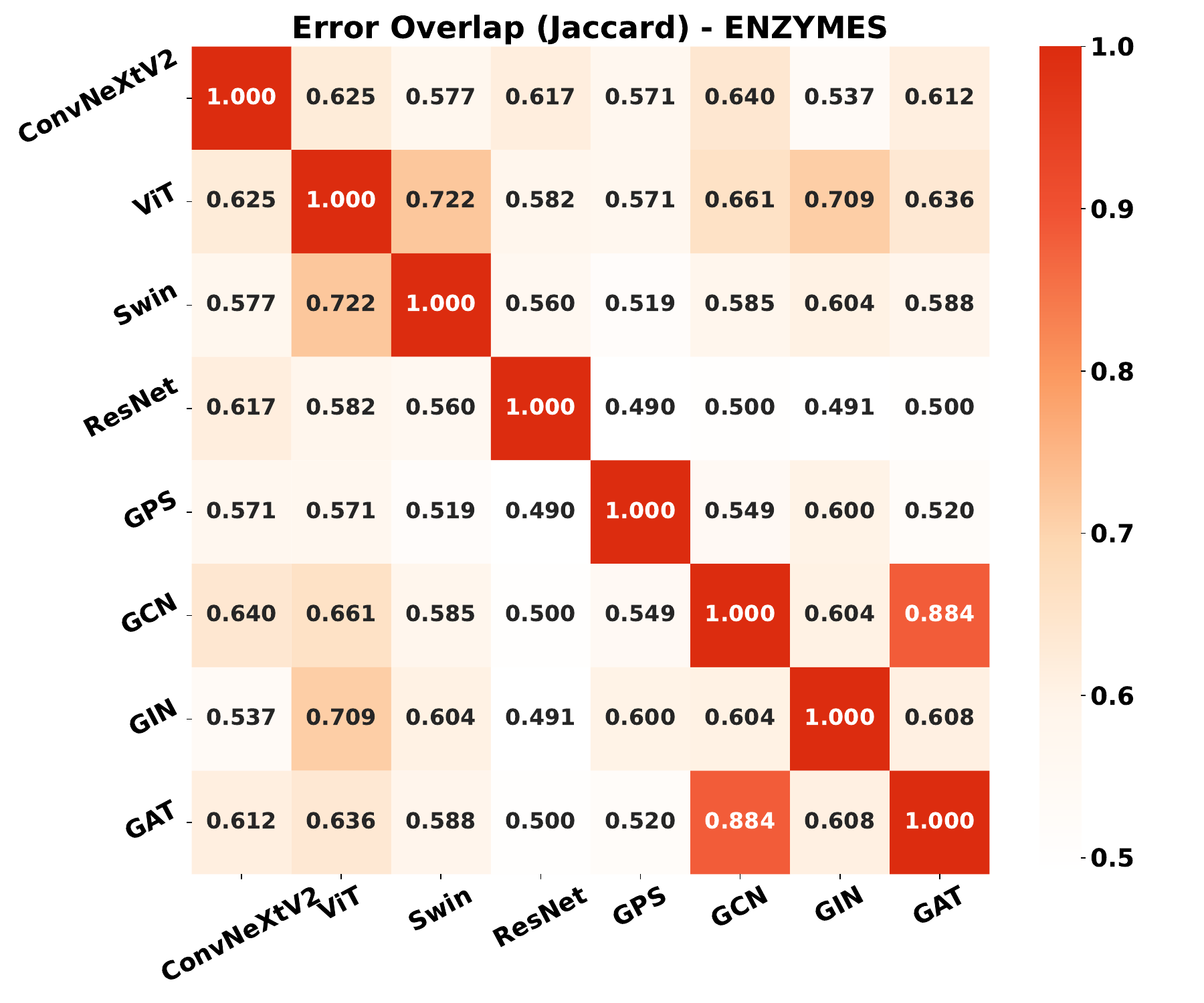}
        \caption{ENZYMES (2 layers) - Error}
    \end{subfigure}
    \hfill
    \begin{subfigure}[b]{0.32\textwidth}
        \includegraphics[width=\textwidth]{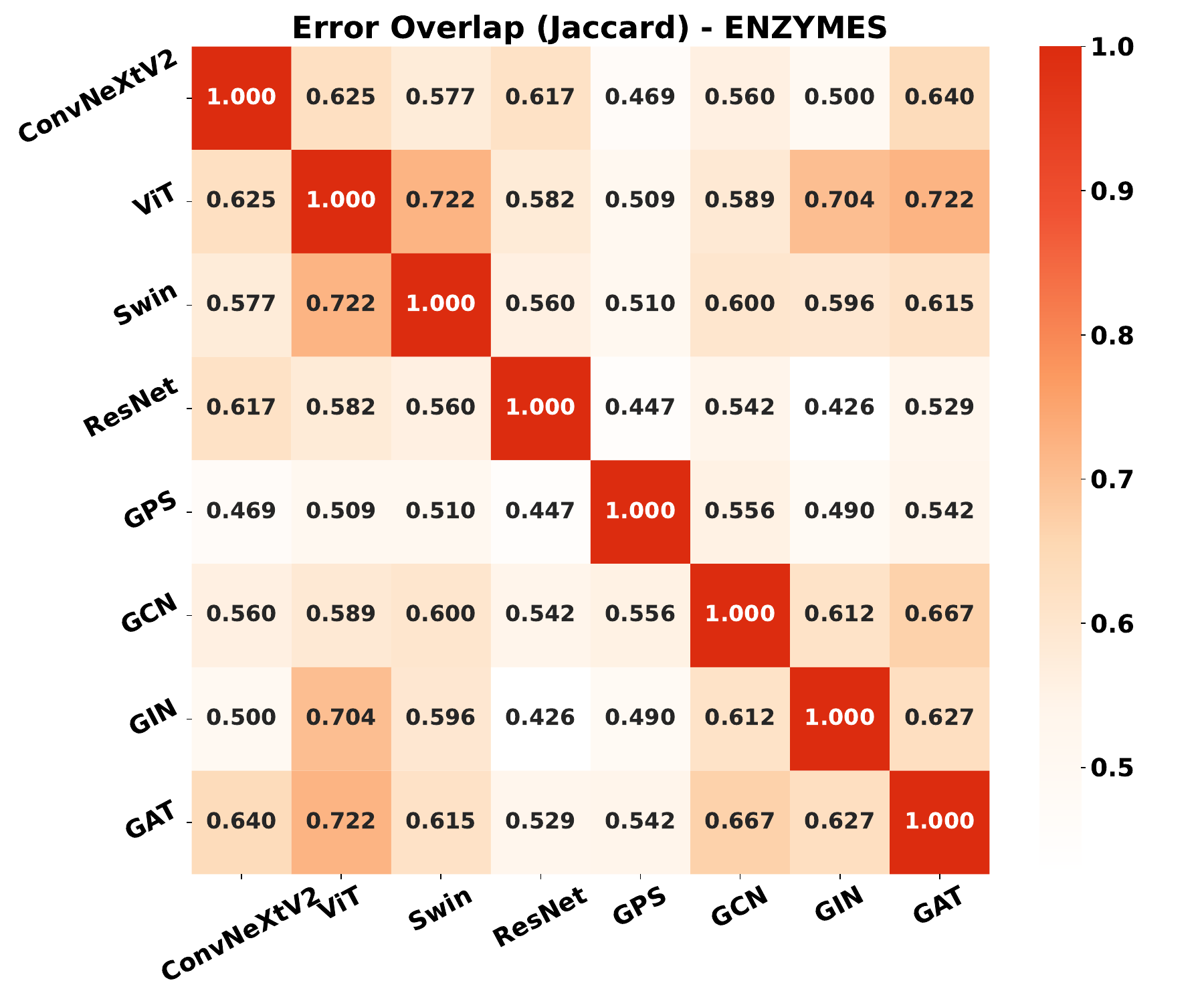}
        \caption{ENZYMES (3 layers) - Error}
    \end{subfigure}

    \vspace{1em}  % 添加行间距

    \begin{subfigure}[b]{0.32\textwidth}
        \includegraphics[width=\textwidth]{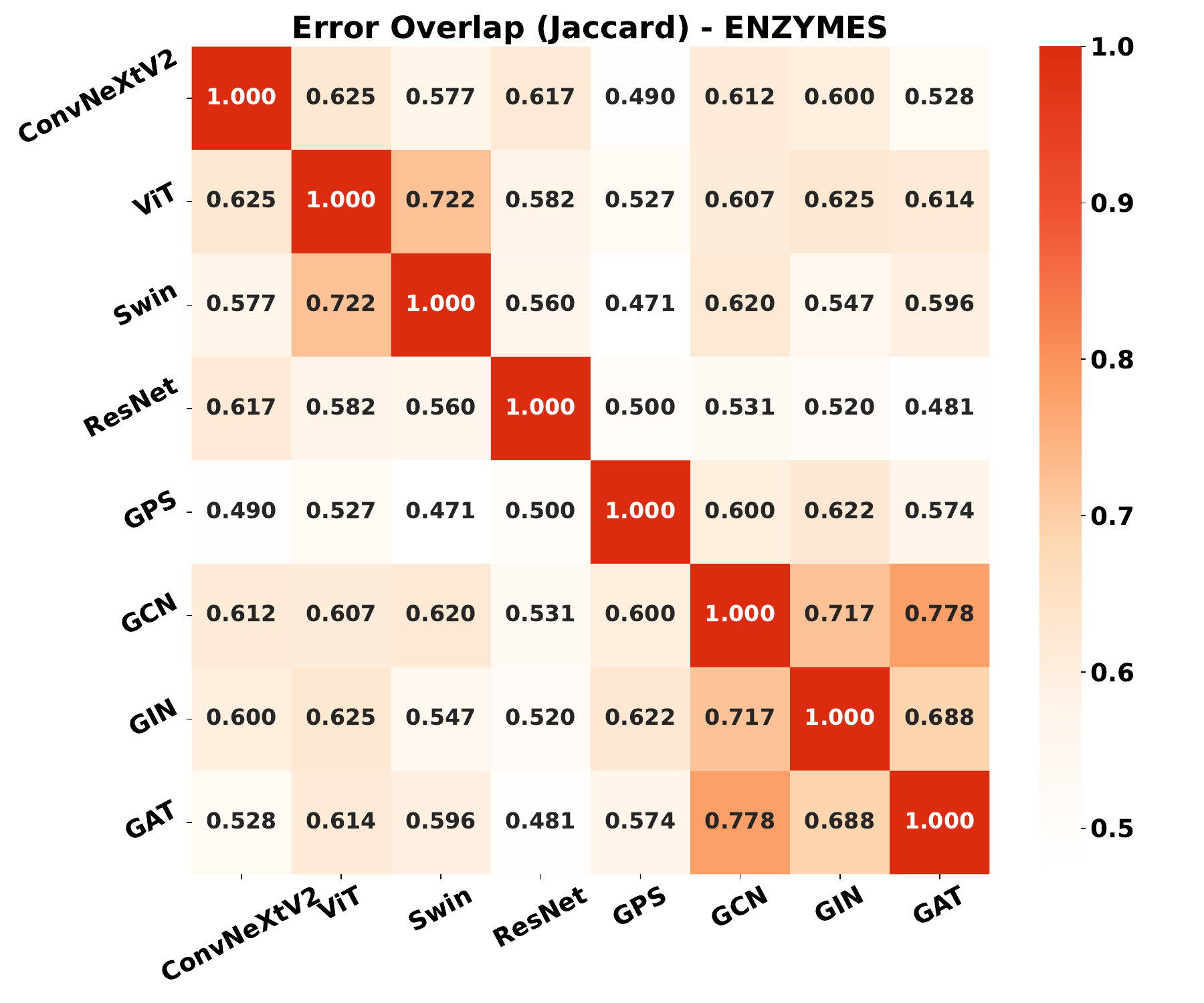}
        \caption{ENZYMES (4 layers) - Error}
    \end{subfigure}
    \hfill
    \begin{subfigure}[b]{0.32\textwidth}
        \includegraphics[width=\textwidth]{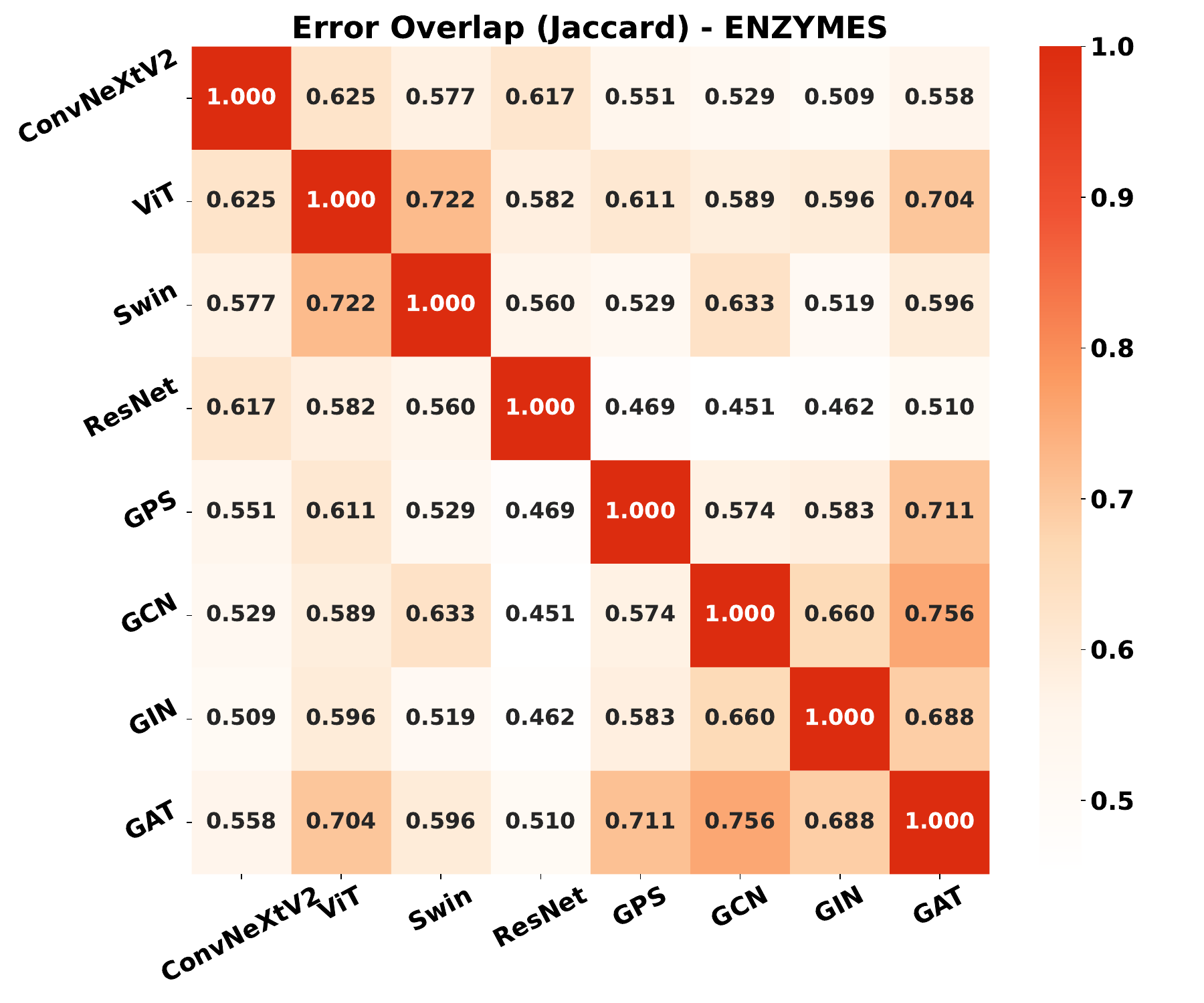}
        \caption{ENZYMES (5 layers) - Error}
    \end{subfigure}
    \hfill
    \begin{subfigure}[b]{0.32\textwidth}
        \includegraphics[width=\textwidth]{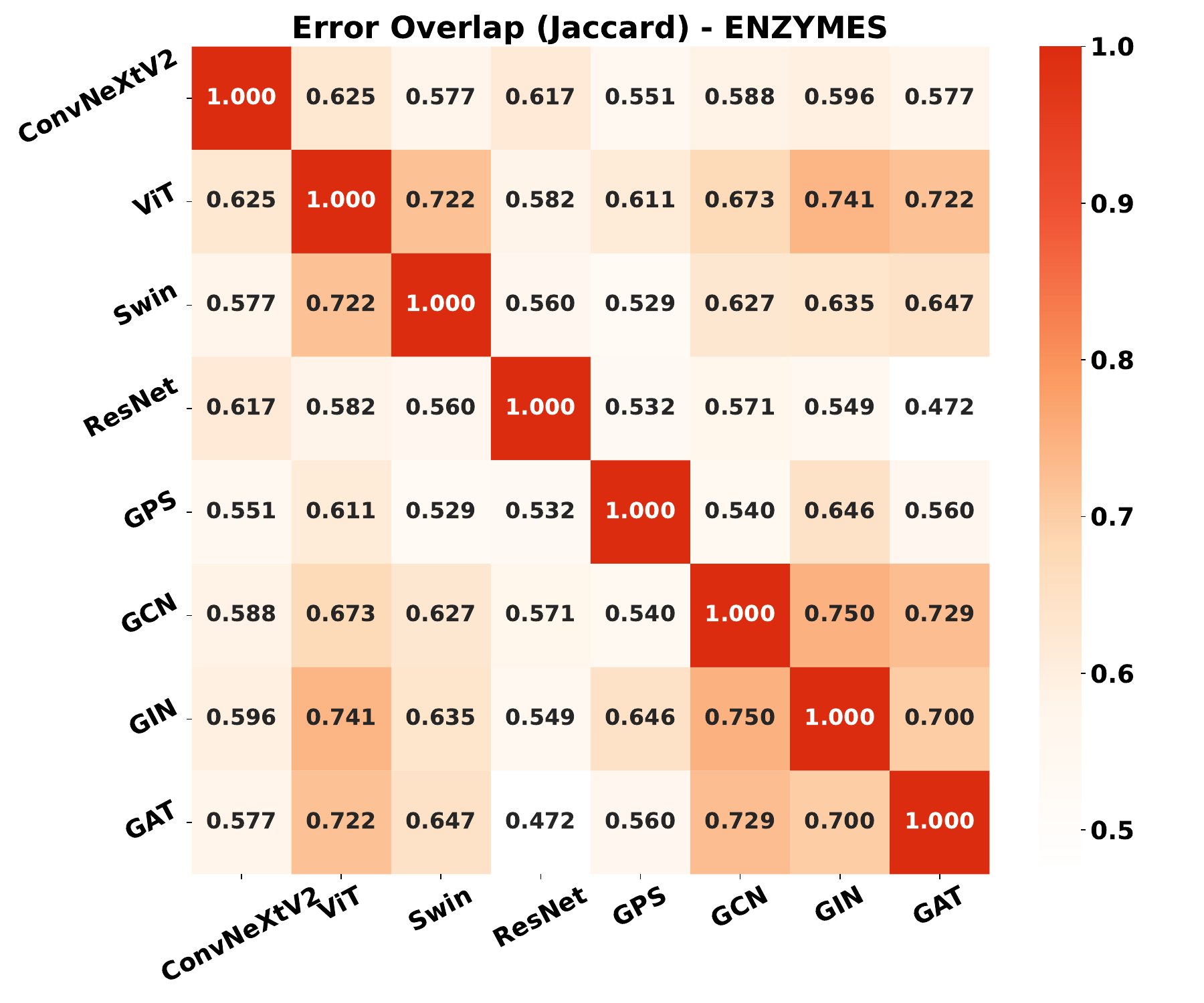}
        \caption{ENZYMES (6 layers) - Error}
    \end{subfigure}
    \caption{Prediction overlap patterns for ENZYMES dataset with varying GNN layer depths. Top row shows correct prediction overlap, while bottom row shows error overlap patterns across different layer configurations.}
    \label{fig:overlap_enzymes_layers}
\end{figure}

\begin{figure}[htbp]
    \centering
    % First row (Correct Prediction Overlap)
    \begin{subfigure}[b]{0.32\textwidth}
        \includegraphics[width=\textwidth]{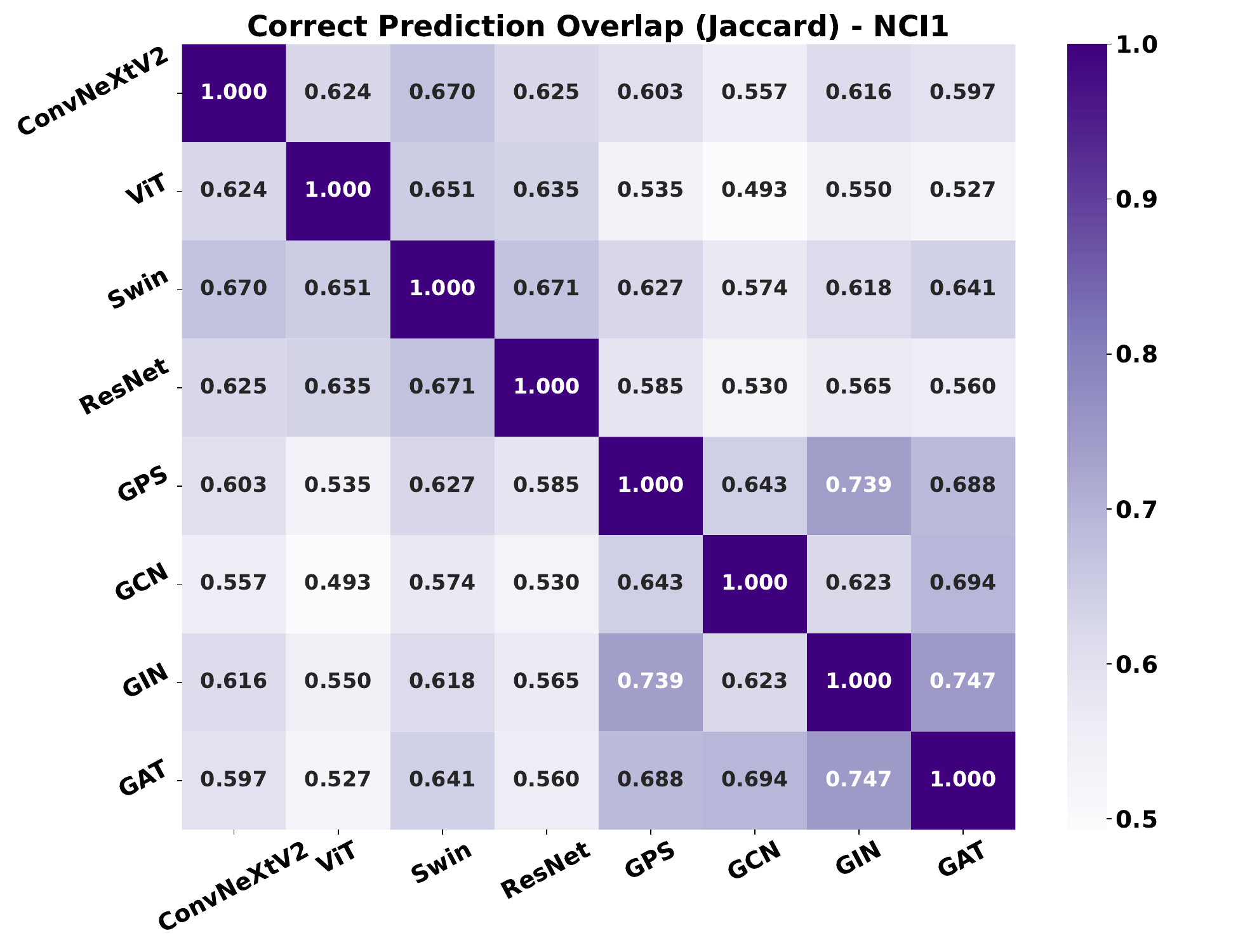}
        \caption{NCI1 (1 layers) - Correct}
    \end{subfigure}
    \hfill
    \begin{subfigure}[b]{0.32\textwidth}
        \includegraphics[width=\textwidth]{heatmaps/NCI1/layers2_hidden128_drop0.5_seed0_layoutkamada_kawai/correct_overlap.pdf}
        \caption{NCI1 (2 layers) - Correct}
    \end{subfigure}
    \hfill
    \begin{subfigure}[b]{0.32\textwidth}
        \includegraphics[width=\textwidth]{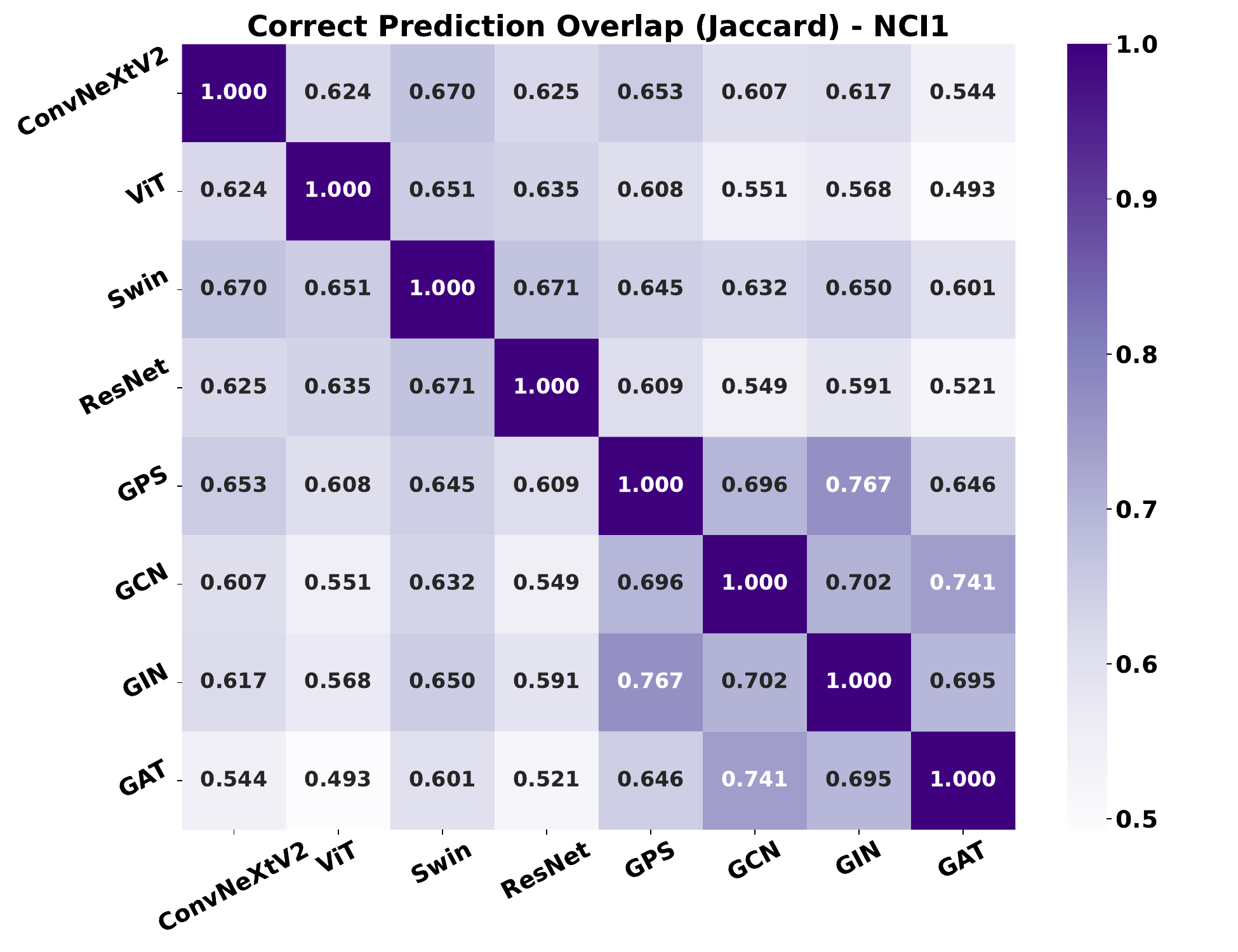}
        \caption{NCI1 (3 layers) - Correct}
    \end{subfigure}

    \vspace{1em}  % 添加行间距

    \begin{subfigure}[b]{0.32\textwidth}
        \includegraphics[width=\textwidth]{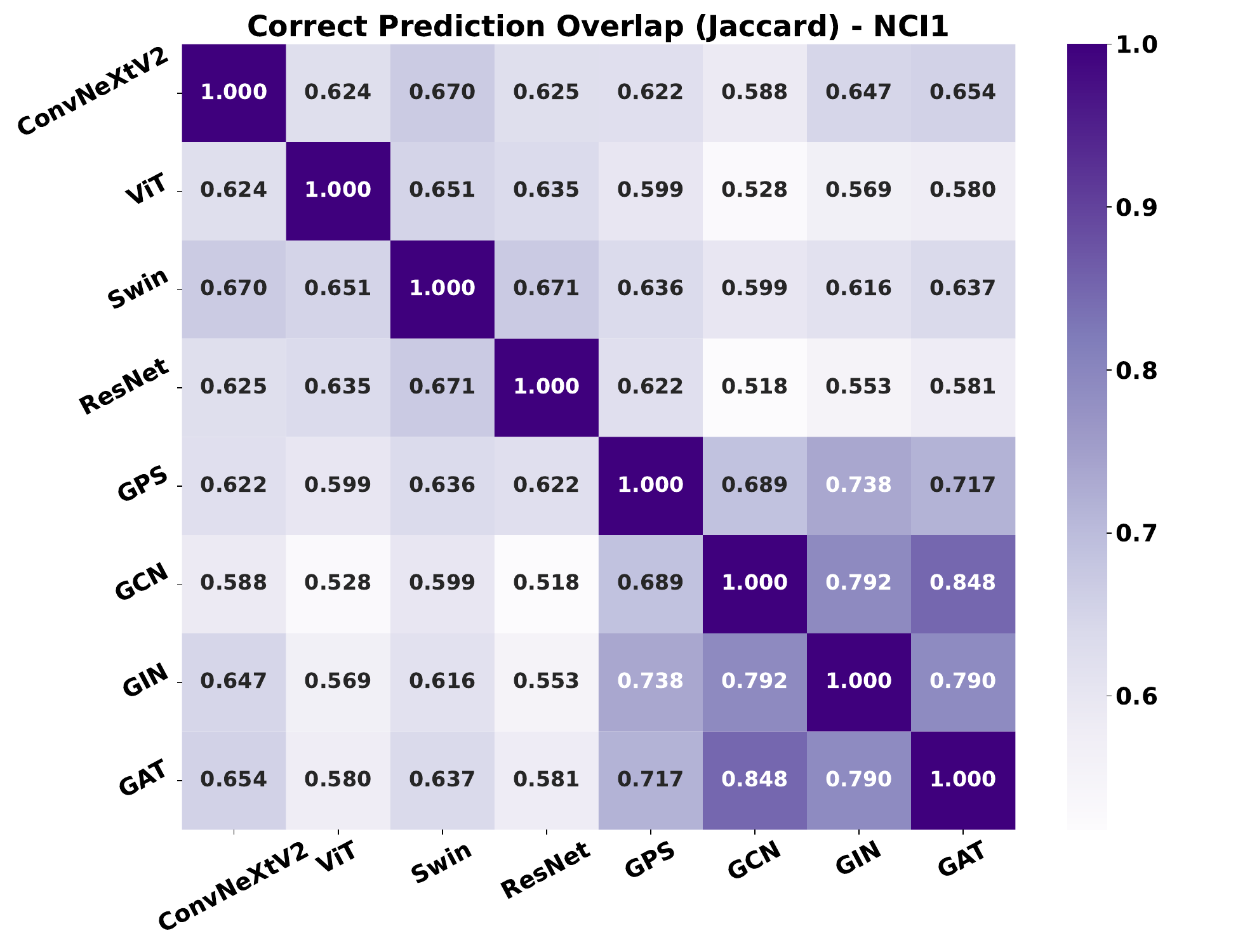}
        \caption{NCI1 (4 layers) - Correct}
    \end{subfigure}
    \hfill
    \begin{subfigure}[b]{0.32\textwidth}
        \includegraphics[width=\textwidth]{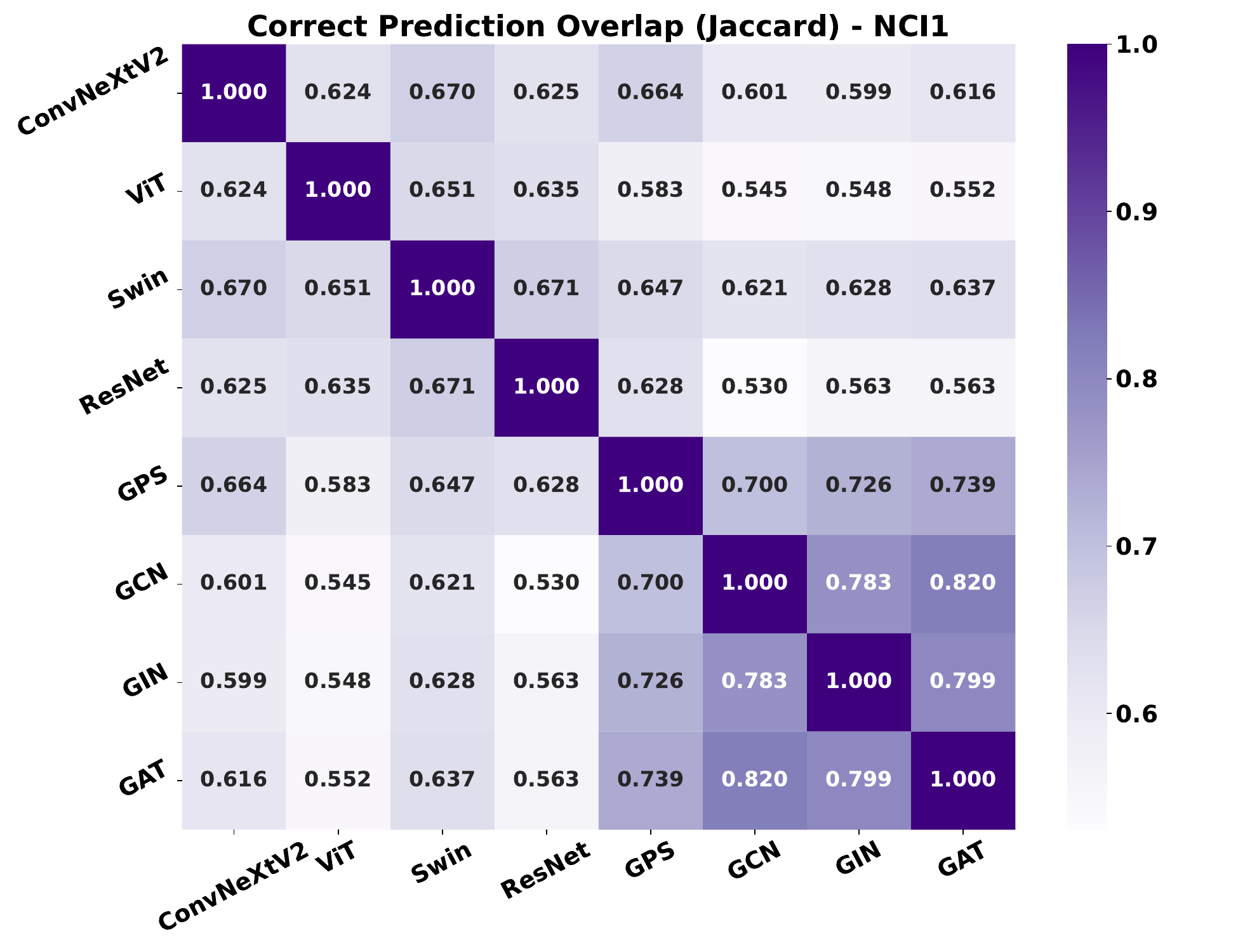}
        \caption{NCI1 (5 layers) - Correct}
    \end{subfigure}
    \hfill
    \begin{subfigure}[b]{0.32\textwidth}
        \includegraphics[width=\textwidth]{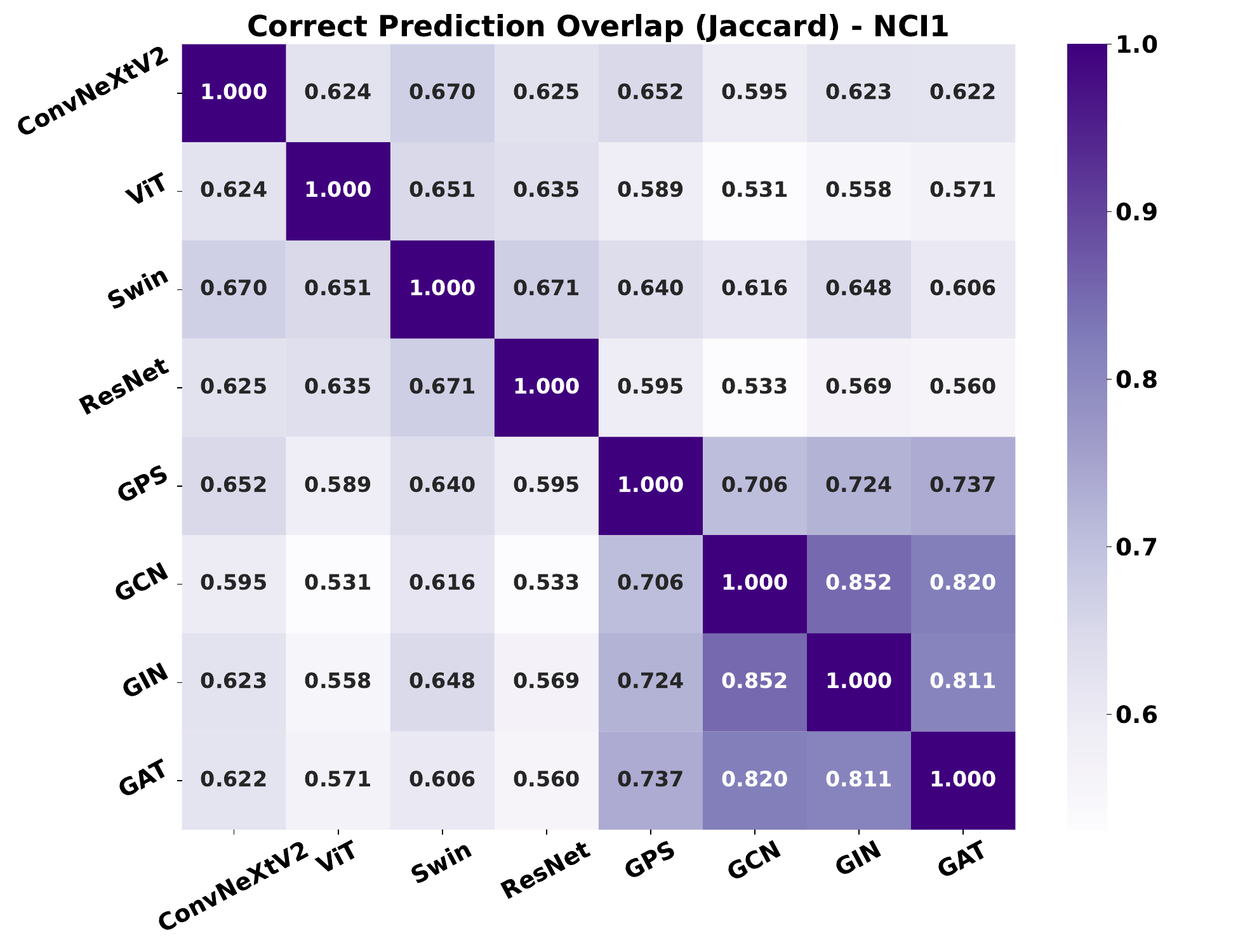}
        \caption{NCI1 (6 layers) - Correct}
    \end{subfigure}

    \vspace{2em}  % 添加行与行之间的间距

    % Second row (Error Overlap)
    \begin{subfigure}[b]{0.32\textwidth}
        \includegraphics[width=\textwidth]{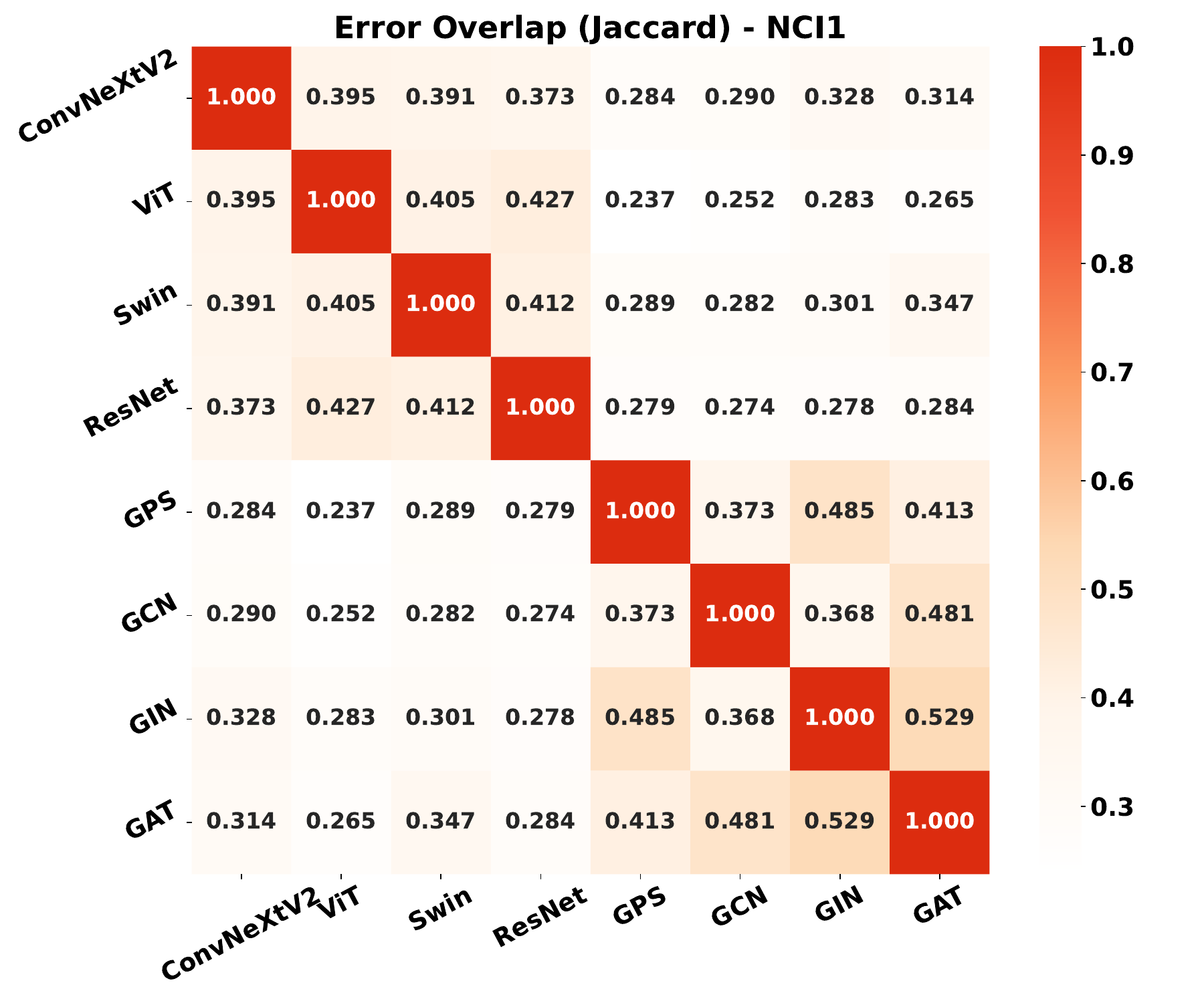}
        \caption{NCI1 (1 layers) - Error}
    \end{subfigure}
    \hfill
    \begin{subfigure}[b]{0.32\textwidth}
        \includegraphics[width=\textwidth]{heatmaps/NCI1/layers2_hidden128_drop0.5_seed0_layoutkamada_kawai/error_overlap.pdf}
        \caption{NCI1 (2 layers) - Error}
    \end{subfigure}
    \hfill
    \begin{subfigure}[b]{0.32\textwidth}
        \includegraphics[width=\textwidth]{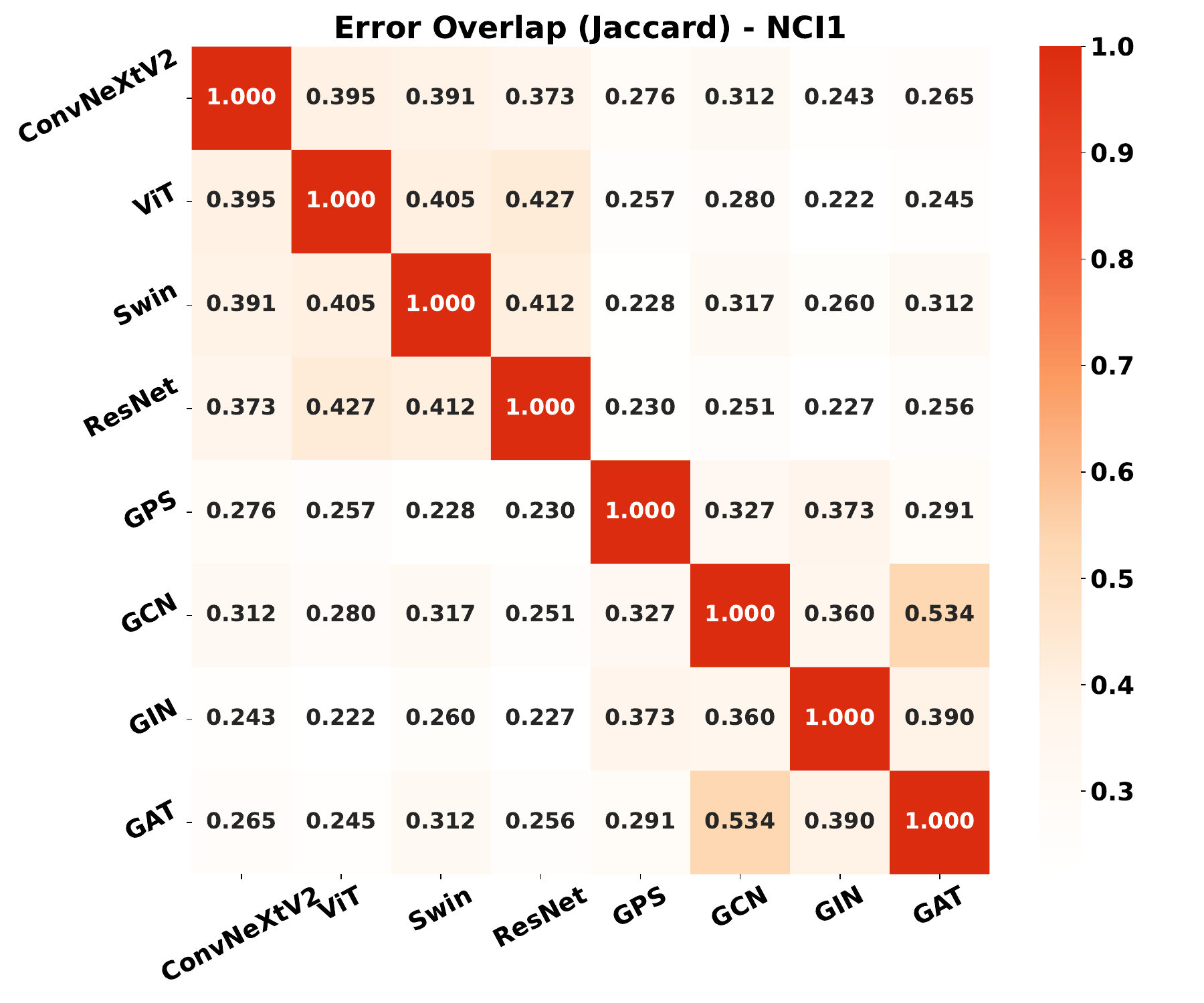}
        \caption{NCI1 (3 layers) - Error}
    \end{subfigure}

    \vspace{1em}  % 添加行间距

    \begin{subfigure}[b]{0.32\textwidth}
        \includegraphics[width=\textwidth]{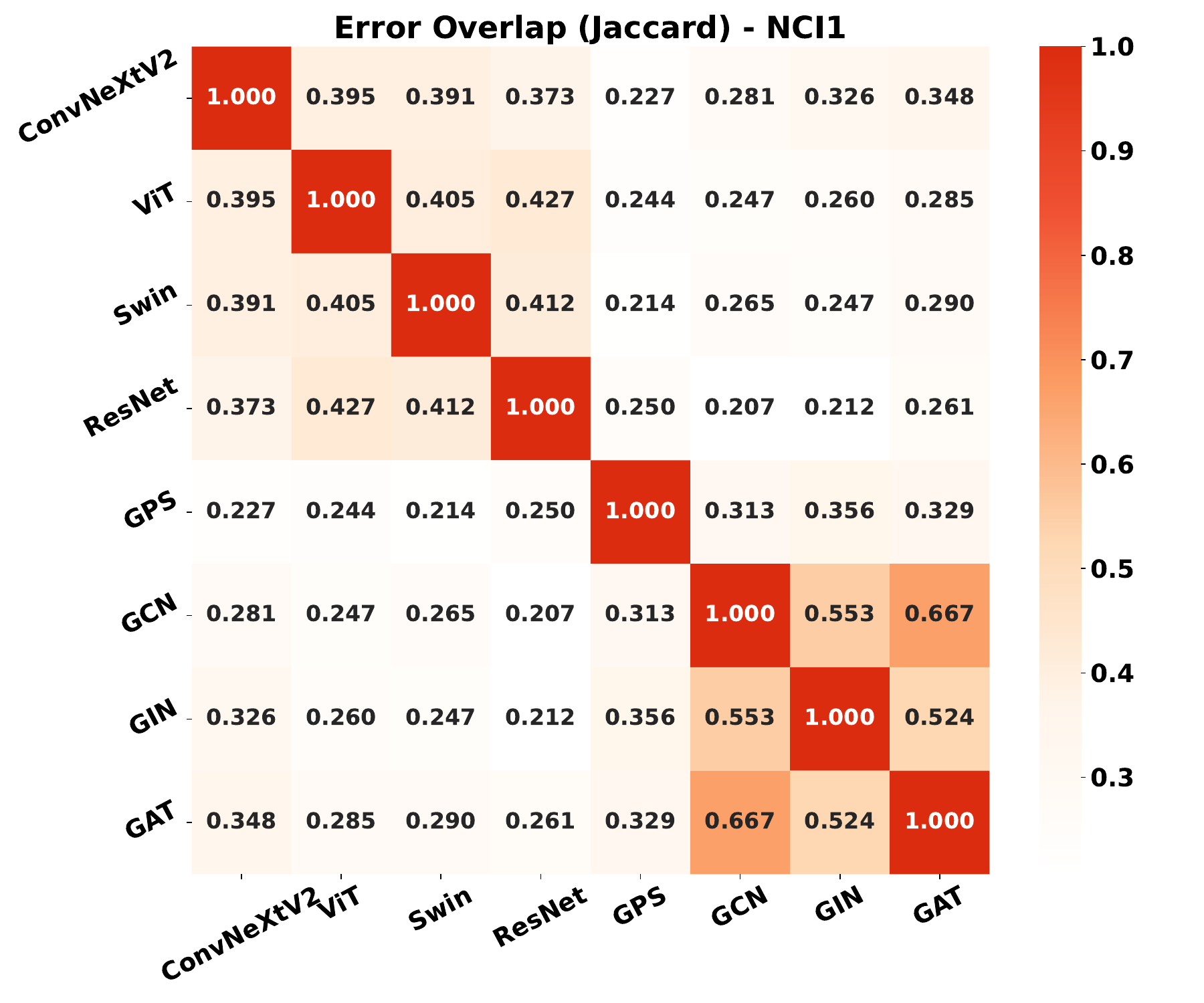}
        \caption{NCI1 (4 layers) - Error}
    \end{subfigure}
    \hfill
    \begin{subfigure}[b]{0.32\textwidth}
        \includegraphics[width=\textwidth]{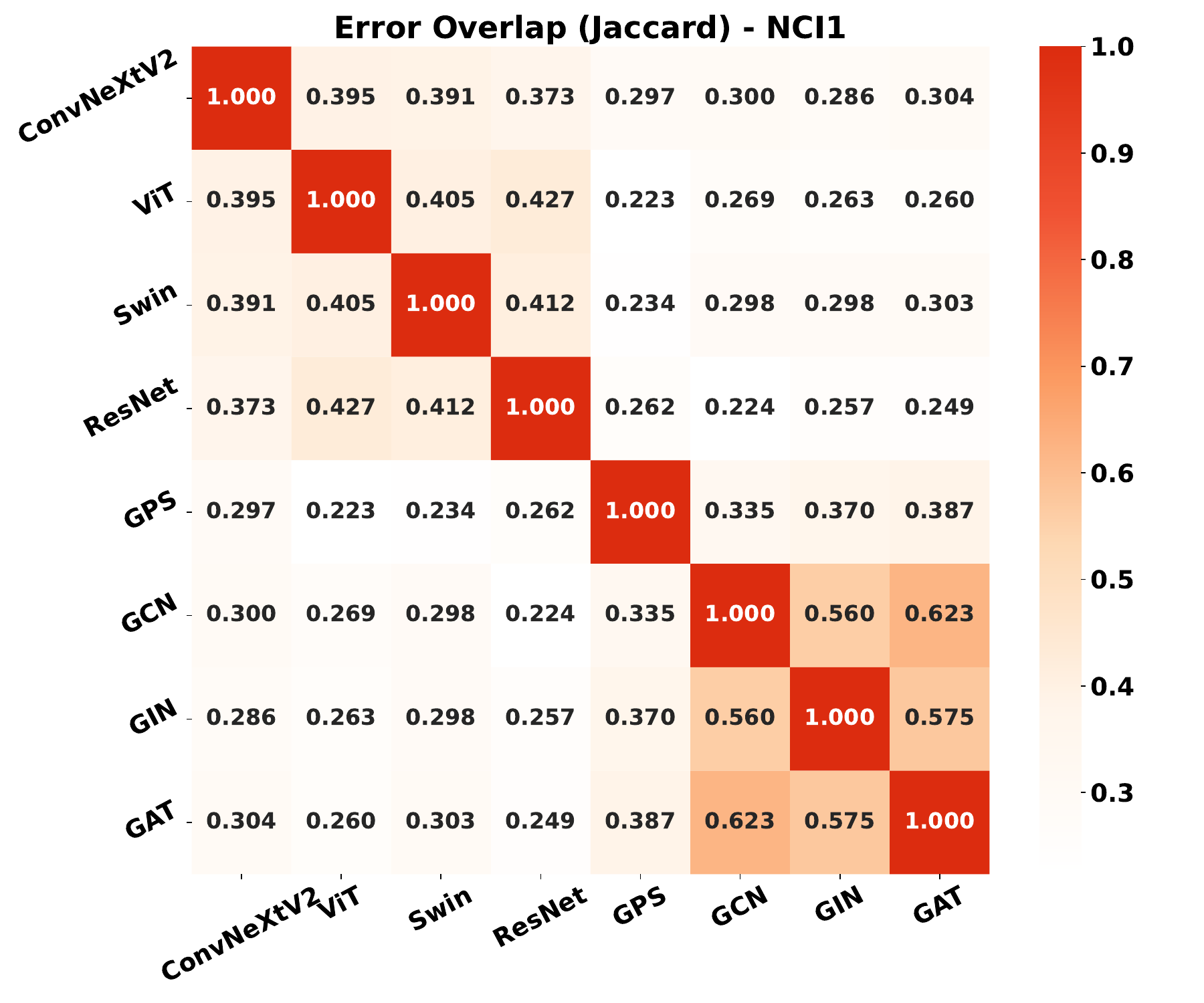}
        \caption{NCI1 (5 layers) - Error}
    \end{subfigure}
    \hfill
    \begin{subfigure}[b]{0.32\textwidth}
        \includegraphics[width=\textwidth]{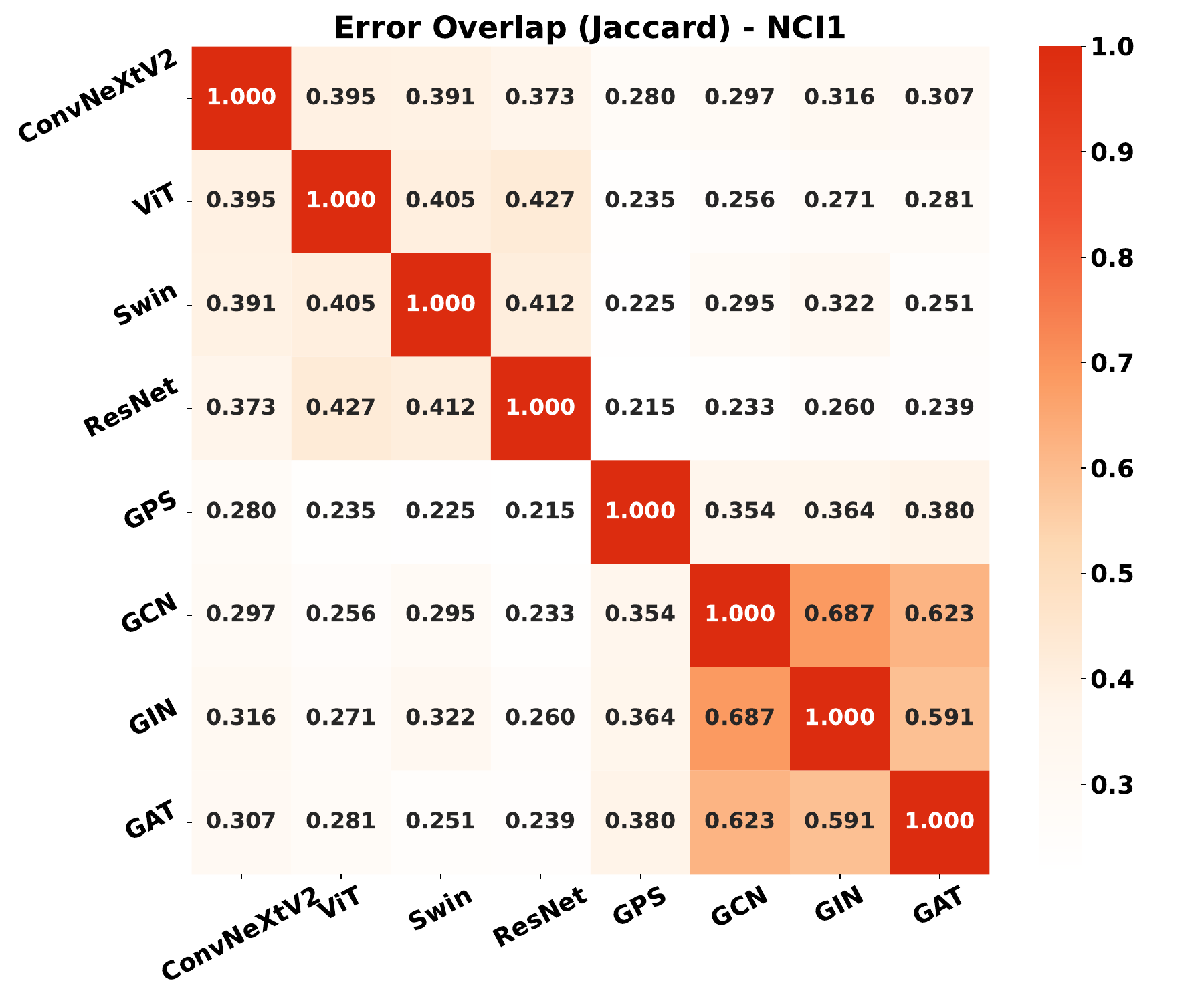}
        \caption{NCI1 (6 layers) - Error}
    \end{subfigure}
    \caption{Prediction overlap patterns for NCI1 dataset with varying GNN layer depths. Top row shows correct prediction overlap, while bottom row shows error overlap patterns across different layer configurations.}
    \label{fig:overlap_nci1_layers}
\end{figure}

\begin{figure}[htbp]
    \centering
    % First row (Correct Prediction Overlap)
    \begin{subfigure}[b]{0.32\textwidth}
        \includegraphics[width=\textwidth]{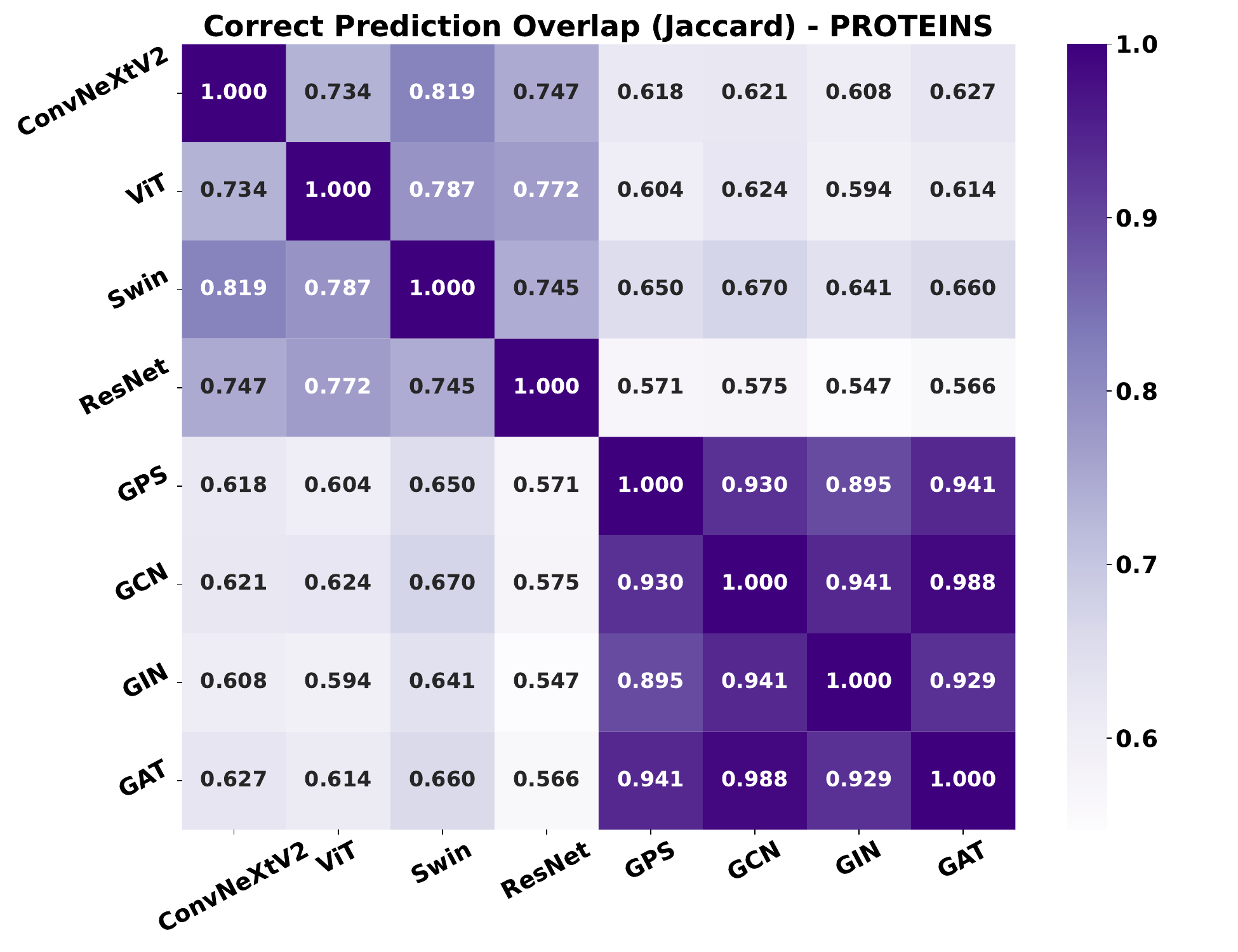}
        \caption{PROTEINS (1 layers) - Correct}
    \end{subfigure}
    \hfill
    \begin{subfigure}[b]{0.32\textwidth}
        \includegraphics[width=\textwidth]{heatmaps/PROTEINS/layers2_hidden128_drop0.5_seed0_layoutkamada_kawai/correct_overlap.pdf}
        \caption{PROTEINS (2 layers) - Correct}
    \end{subfigure}
    \hfill
    \begin{subfigure}[b]{0.32\textwidth}
        \includegraphics[width=\textwidth]{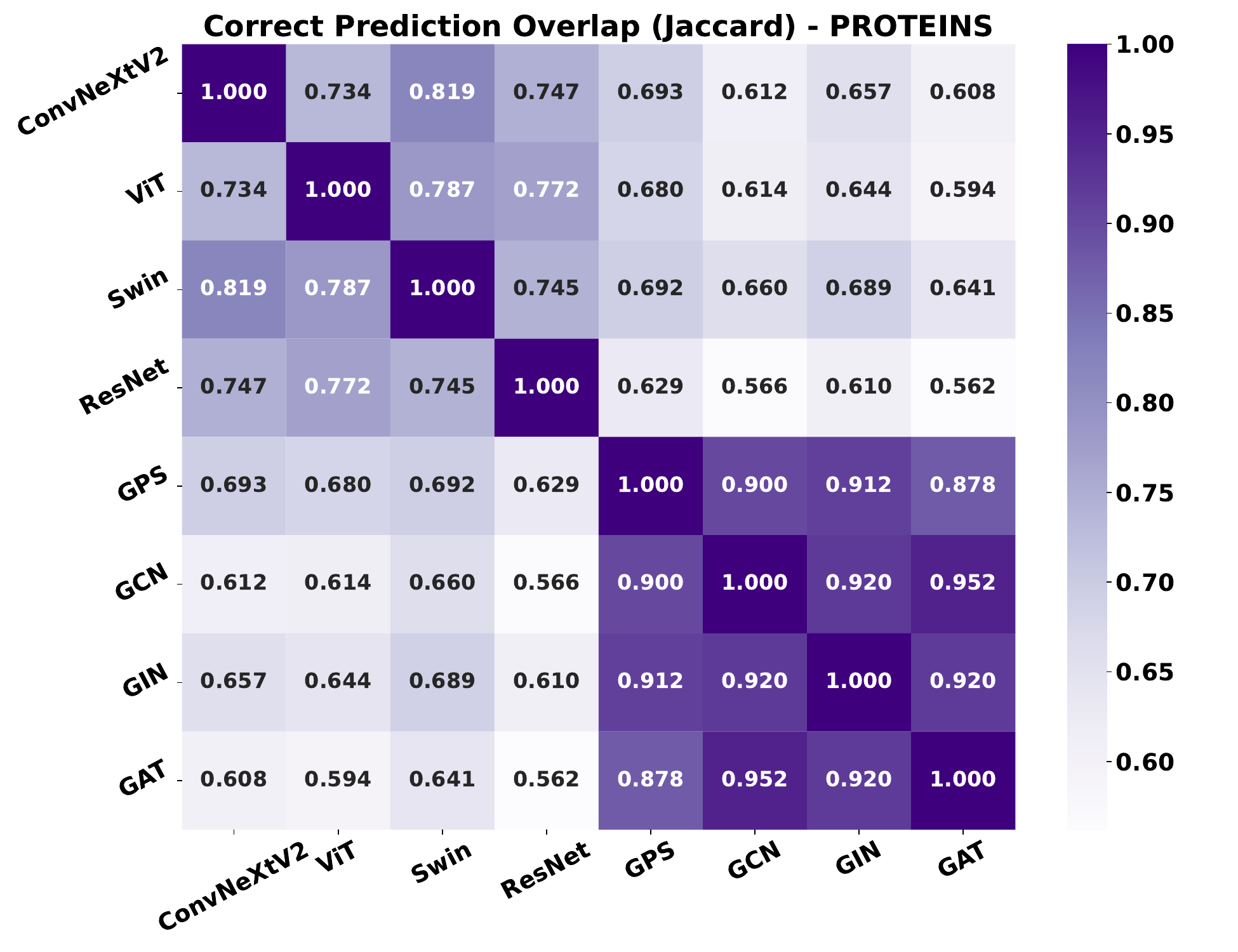}
        \caption{PROTEINS (3 layers) - Correct}
    \end{subfigure}

    \vspace{1em}  % 添加行间距

    \begin{subfigure}[b]{0.32\textwidth}
        \includegraphics[width=\textwidth]{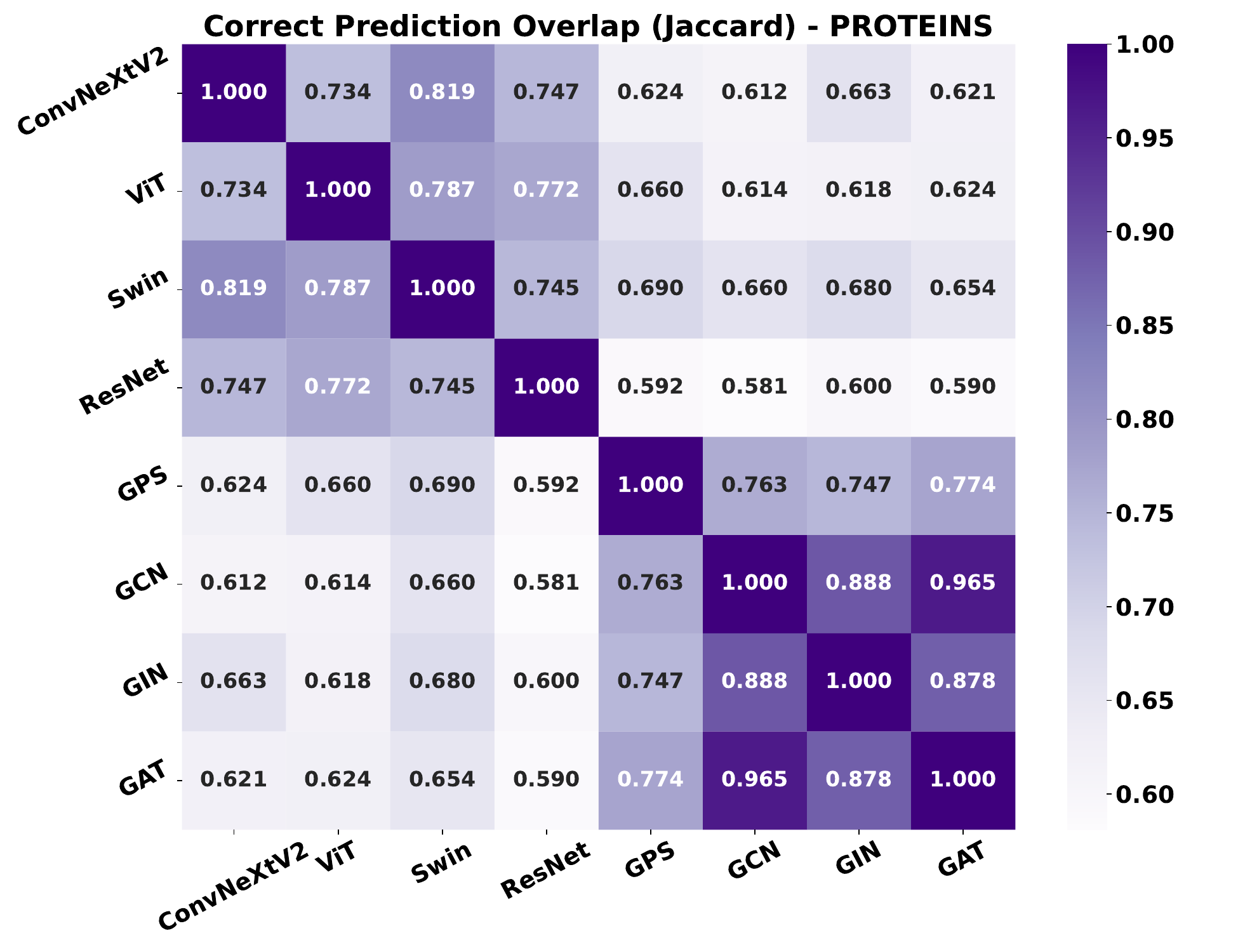}
        \caption{PROTEINS (4 layers) - Correct}
    \end{subfigure}
    \hfill
    \begin{subfigure}[b]{0.32\textwidth}
        \includegraphics[width=\textwidth]{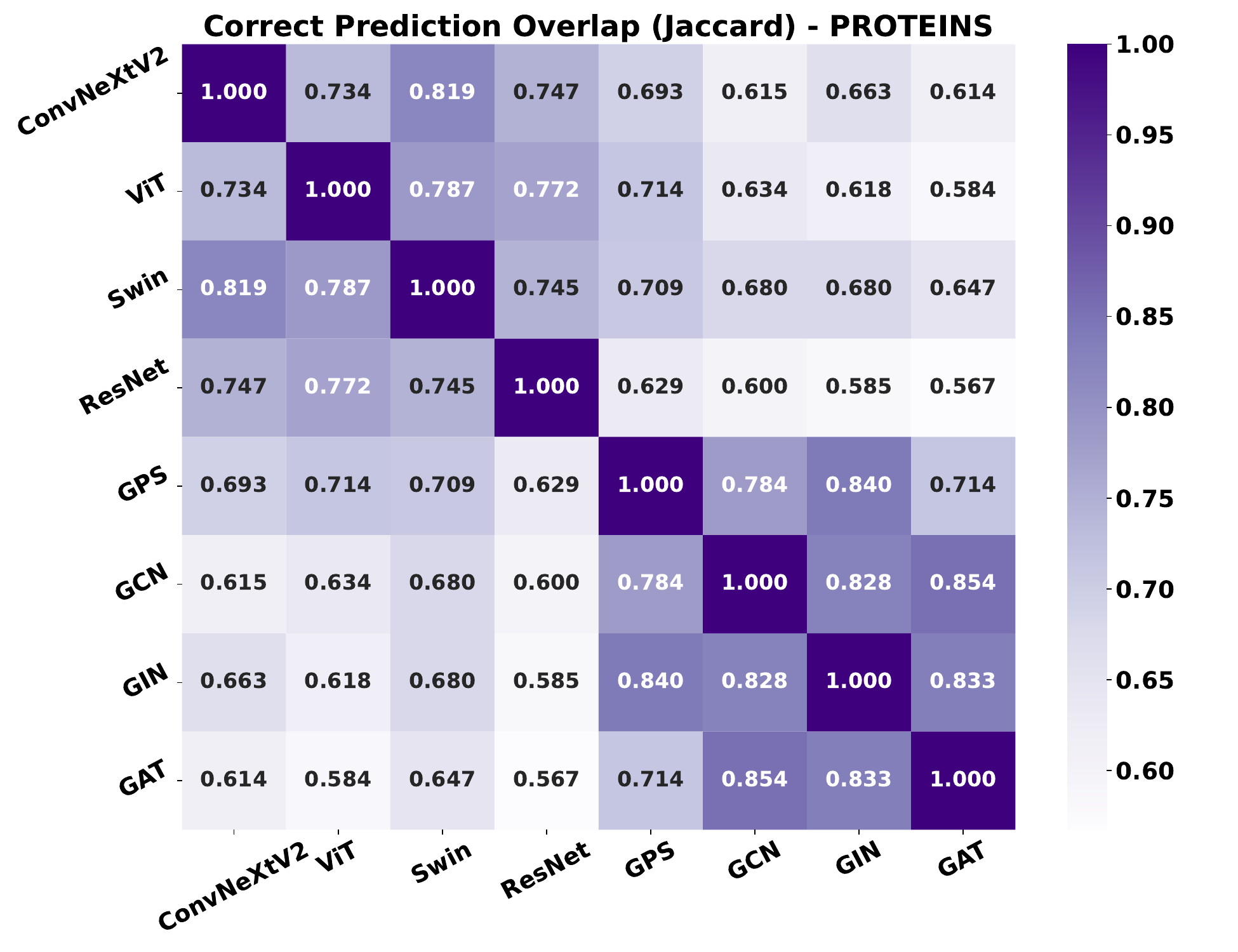}
        \caption{PROTEINS (5 layers) - Correct}
    \end{subfigure}
    \hfill
    \begin{subfigure}[b]{0.32\textwidth}
        \includegraphics[width=\textwidth]{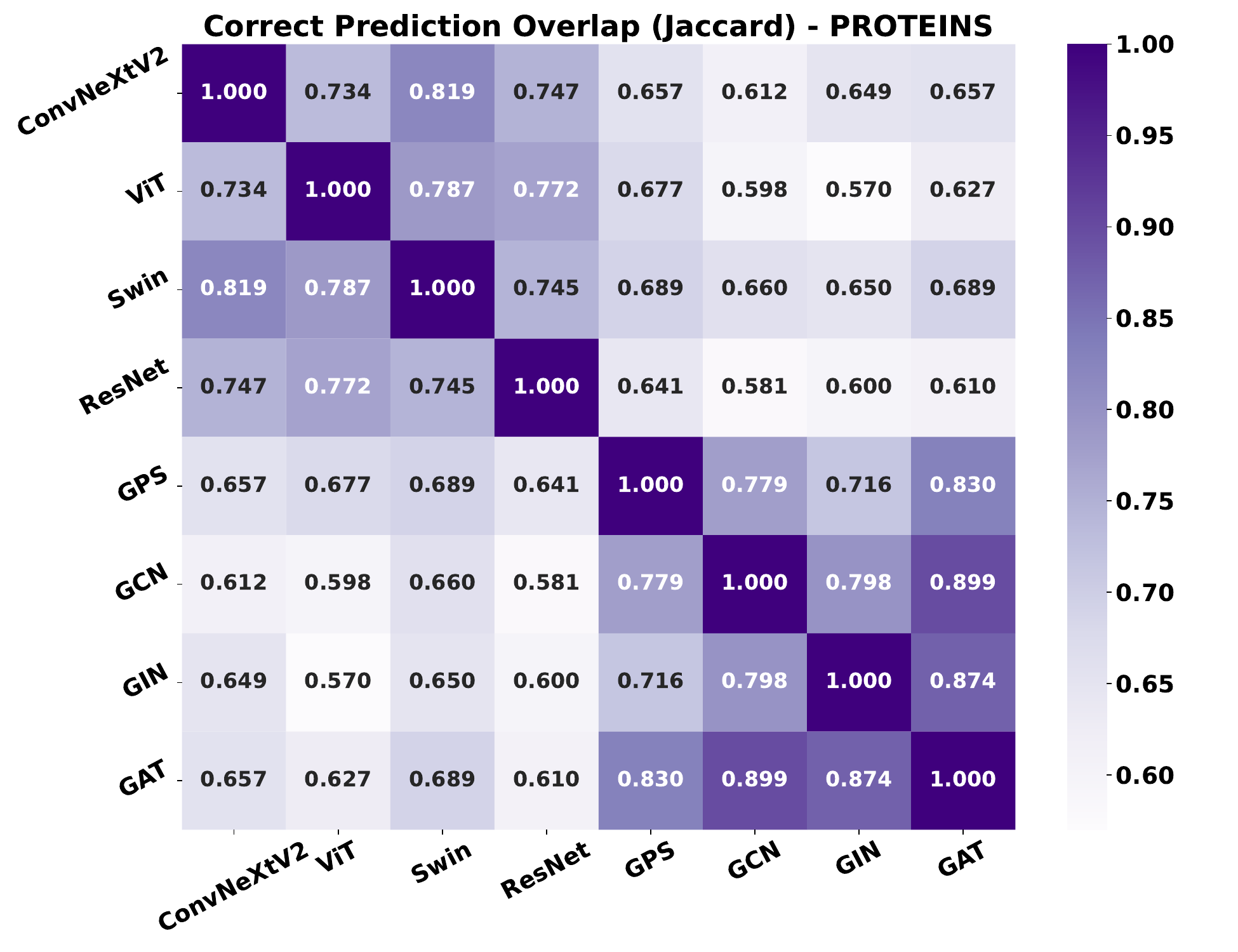}
        \caption{PROTEINS (6 layers) - Correct}
    \end{subfigure}

    \vspace{2em}  % 添加行与行之间的间距

    % Second row (Error Overlap)
    \begin{subfigure}[b]{0.32\textwidth}
        \includegraphics[width=\textwidth]{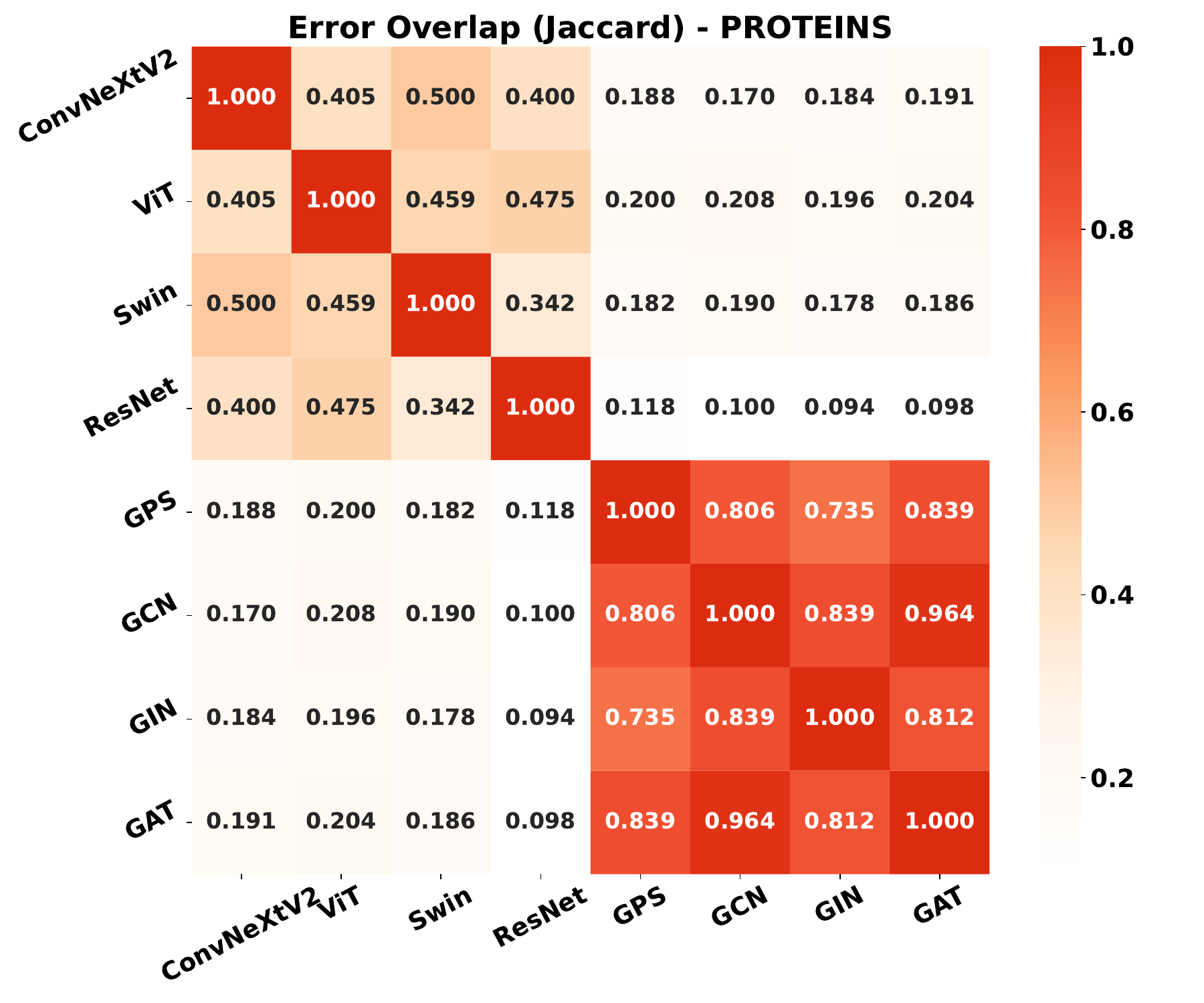}
        \caption{PROTEINS (1 layers) - Error}
    \end{subfigure}
    \hfill
    \begin{subfigure}[b]{0.32\textwidth}
        \includegraphics[width=\textwidth]{heatmaps/PROTEINS/layers2_hidden128_drop0.5_seed0_layoutkamada_kawai/error_overlap.pdf}
        \caption{PROTEINS (2 layers) - Error}
    \end{subfigure}
    \hfill
    \begin{subfigure}[b]{0.32\textwidth}
        \includegraphics[width=\textwidth]{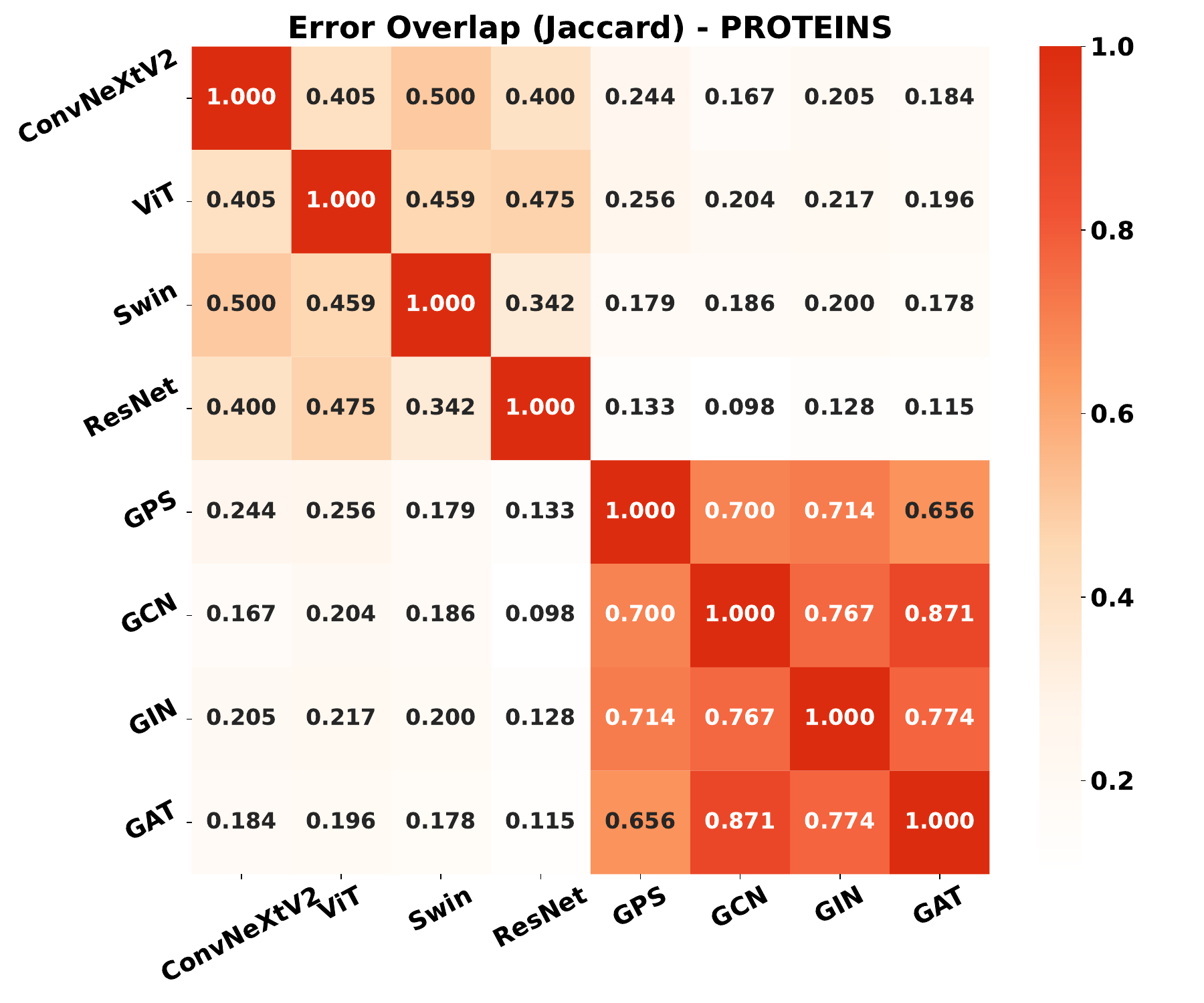}
        \caption{PROTEINS (3 layers) - Error}
    \end{subfigure}

    \vspace{1em}  % 添加行间距

    \begin{subfigure}[b]{0.32\textwidth}
        \includegraphics[width=\textwidth]{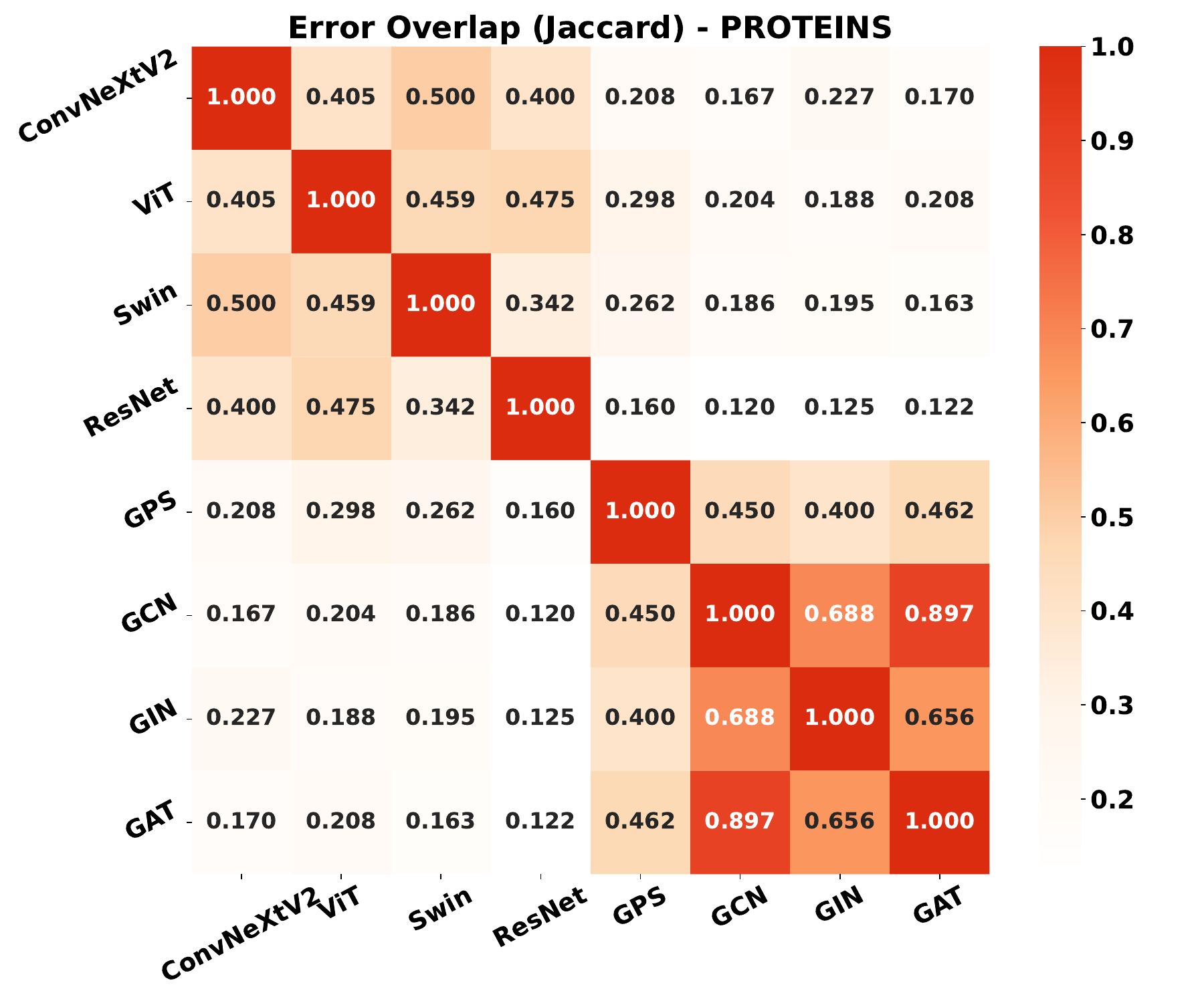}
        \caption{PROTEINS (4 layers) - Error}
    \end{subfigure}
    \hfill
    \begin{subfigure}[b]{0.32\textwidth}
        \includegraphics[width=\textwidth]{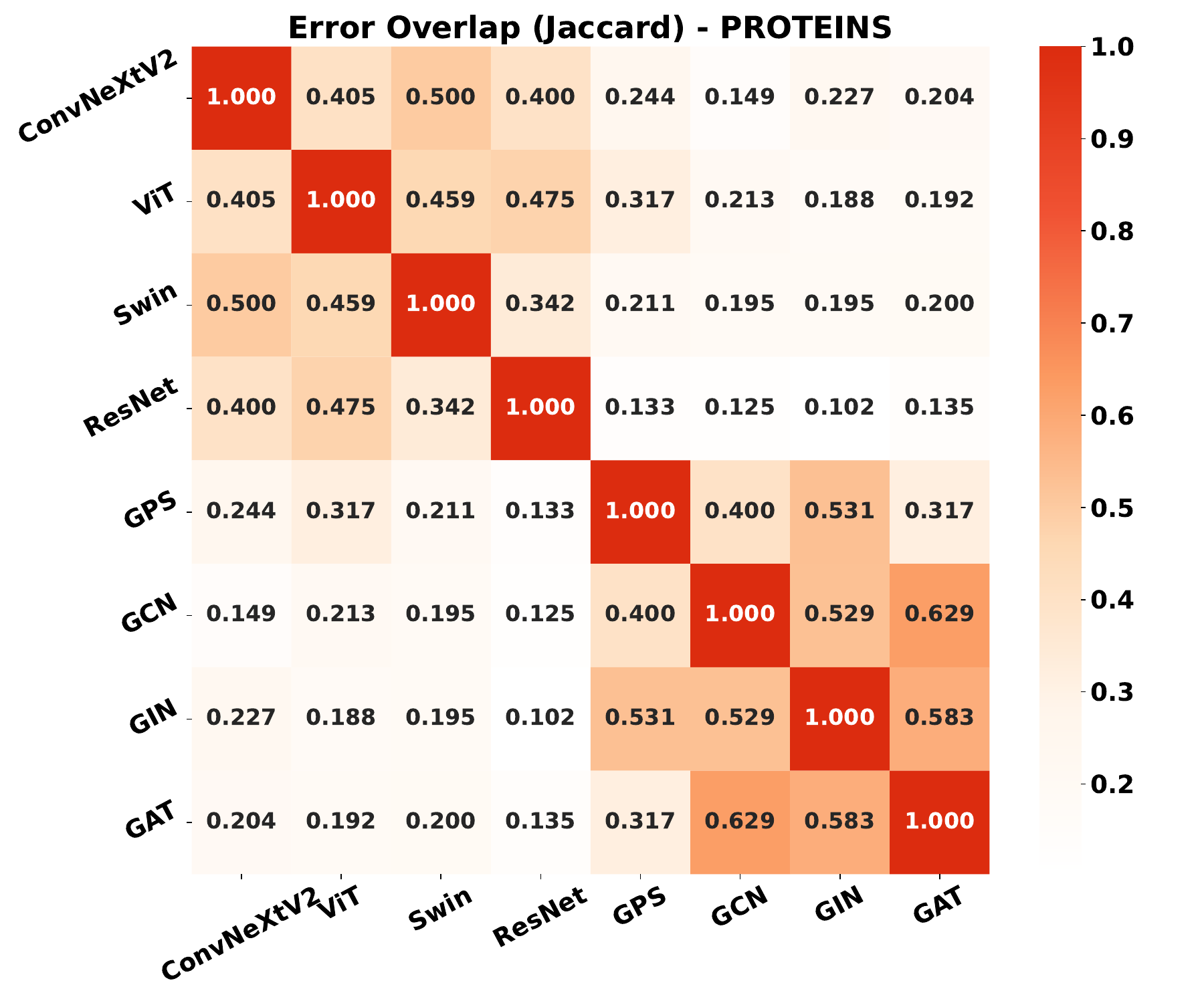}
        \caption{PROTEINS (5 layers) - Error}
    \end{subfigure}
    \hfill
    \begin{subfigure}[b]{0.32\textwidth}
        \includegraphics[width=\textwidth]{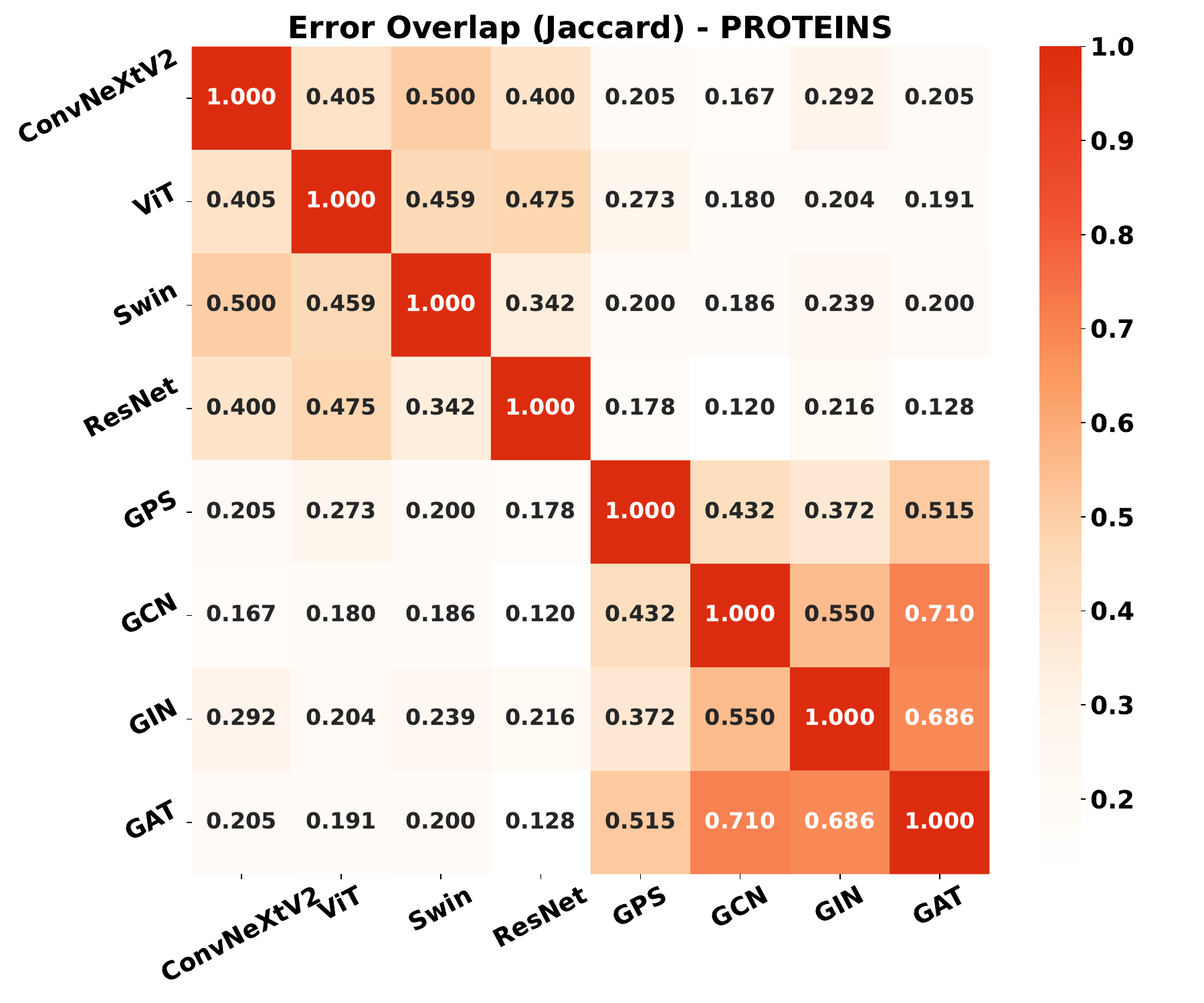}
        \caption{PROTEINS (6 layers) - Error}
    \end{subfigure}
    \caption{Prediction overlap patterns for PROTEINS dataset with varying GNN layer depths. Top row shows correct prediction overlap, while bottom row shows error overlap patterns across different layer configurations.}
    \label{fig:overlap_proteins_layers}
\end{figure}

\begin{figure}[htbp]
    \centering
    % First row (Correct Prediction Overlap)
    \begin{subfigure}[b]{0.32\textwidth}
        \includegraphics[width=\textwidth]{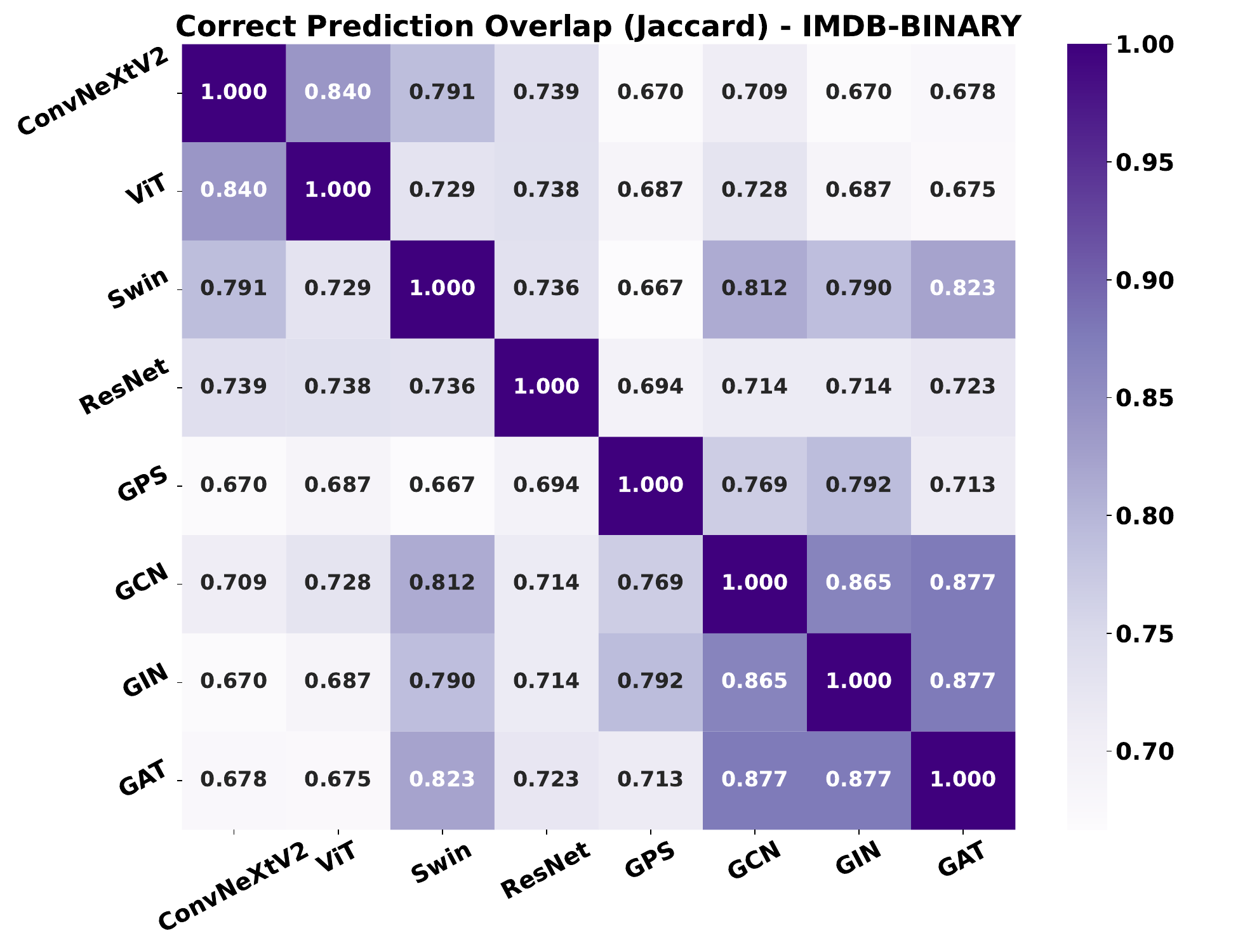}
        \caption{IMDB-BINARY (1 layers) - Correct}
    \end{subfigure}
    \hfill
    \begin{subfigure}[b]{0.32\textwidth}
        \includegraphics[width=\textwidth]{heatmaps/IMDB-BINARY/layers2_hidden128_drop0.5_seed0_layoutkamada_kawai/correct_overlap.pdf}
        \caption{IMDB-BINARY (2 layers) - Correct}
    \end{subfigure}
    \hfill
    \begin{subfigure}[b]{0.32\textwidth}
        \includegraphics[width=\textwidth]{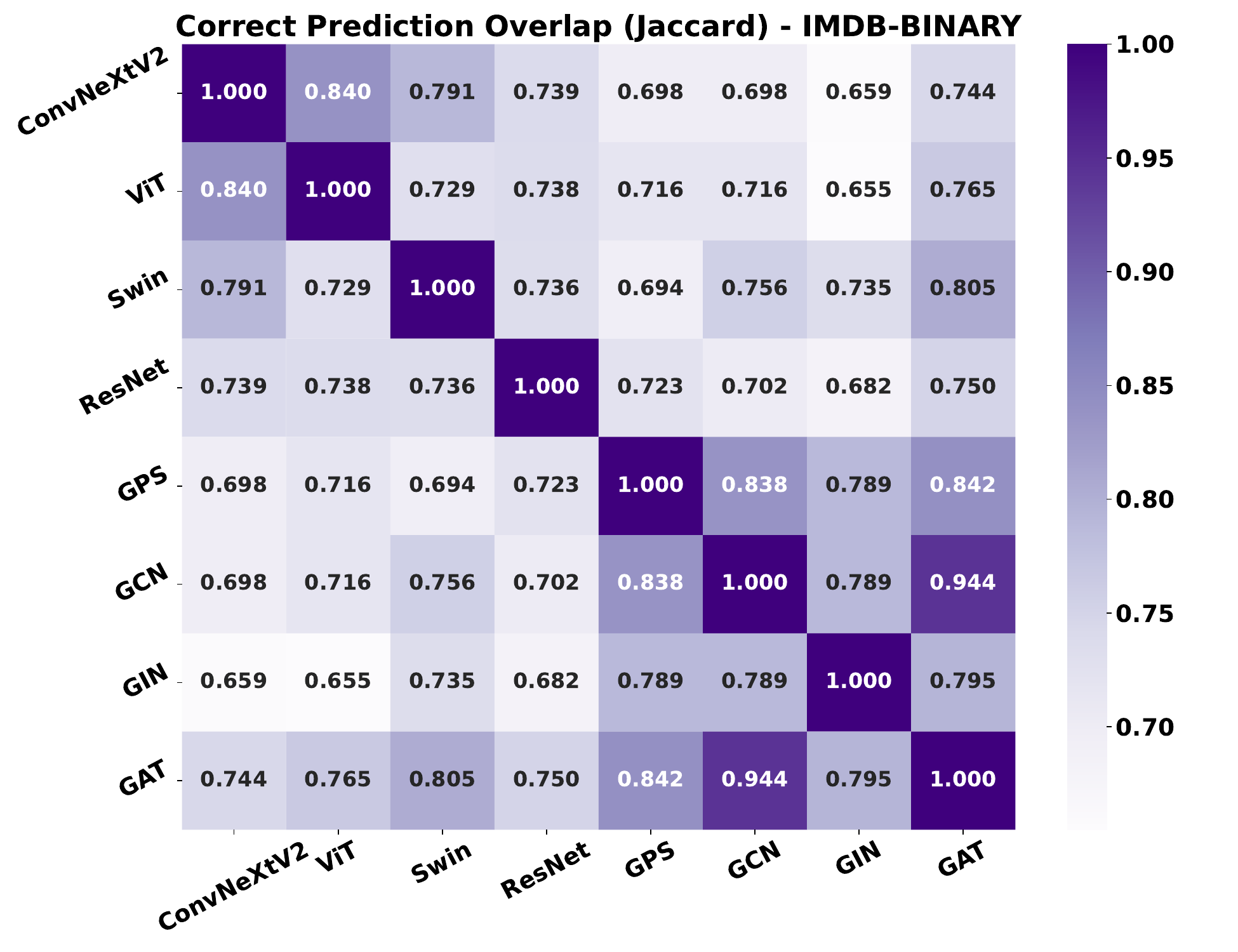}
        \caption{IMDB-BINARY (3 layers) - Correct}
    \end{subfigure}

    \vspace{1em}  % 添加行间距

    \begin{subfigure}[b]{0.32\textwidth}
        \includegraphics[width=\textwidth]{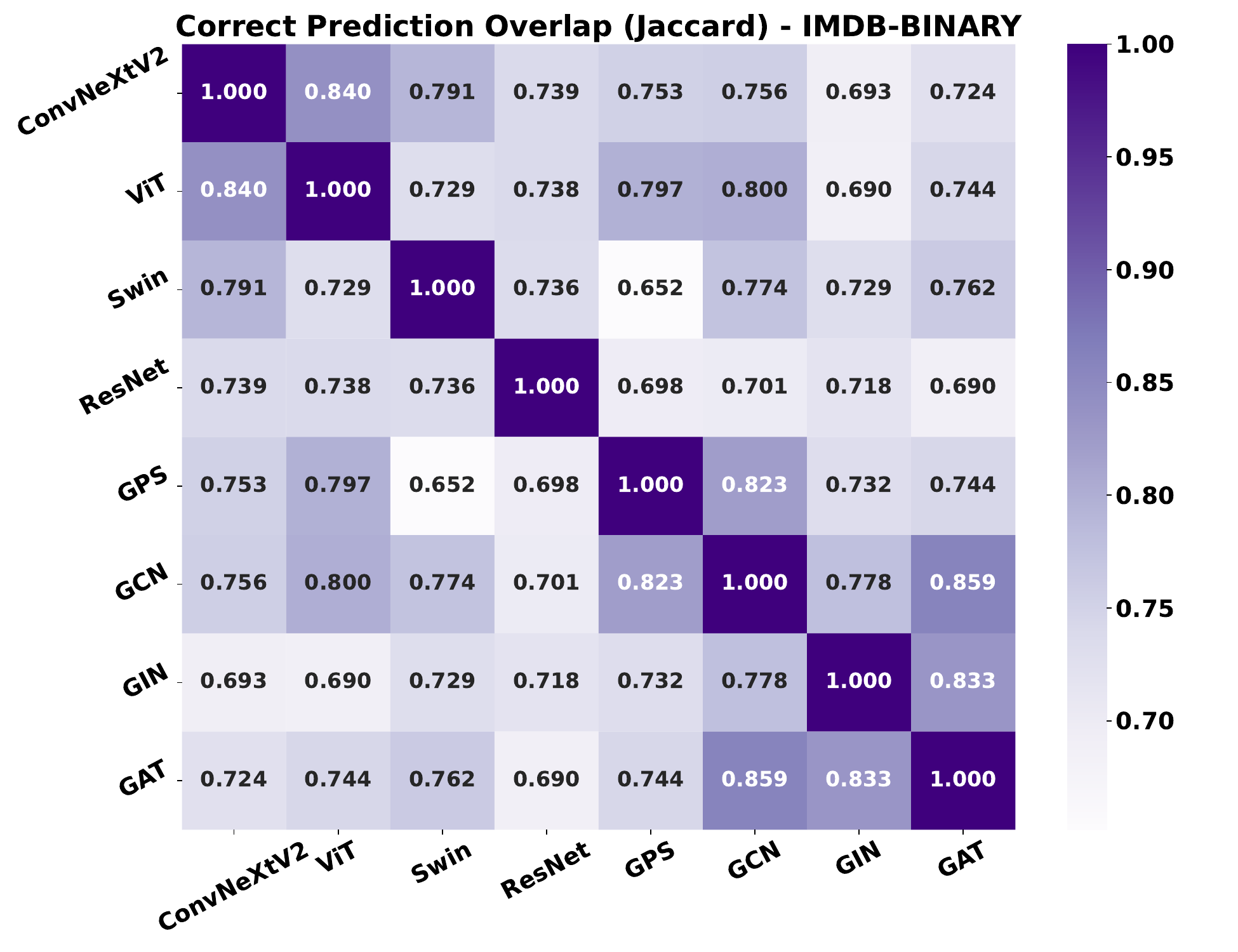}
        \caption{IMDB-BINARY (4 layers) - Correct}
    \end{subfigure}
    \hfill
    \begin{subfigure}[b]{0.32\textwidth}
        \includegraphics[width=\textwidth]{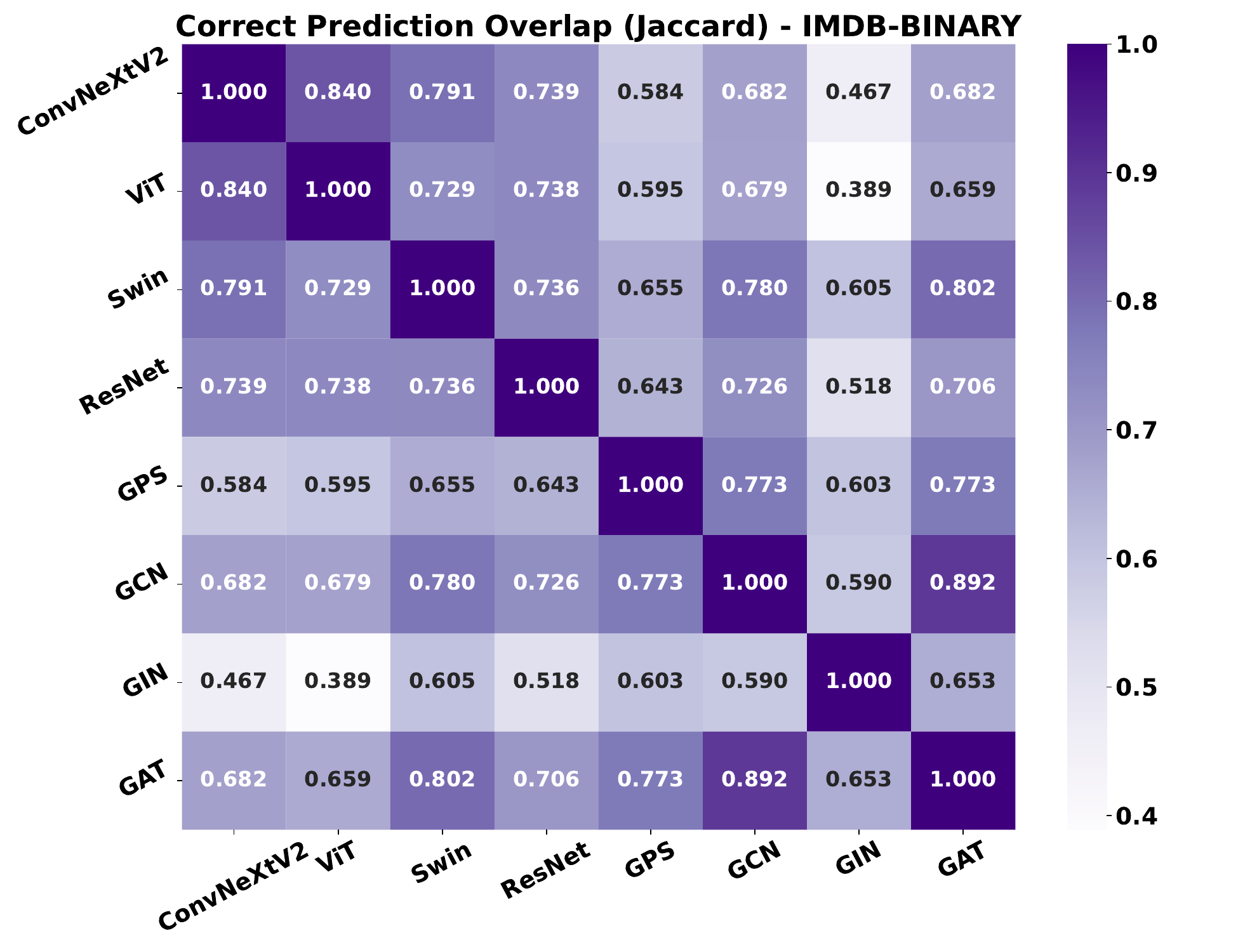}
        \caption{IMDB-BINARY (5 layers) - Correct}
    \end{subfigure}
    \hfill
    \begin{subfigure}[b]{0.32\textwidth}
        \includegraphics[width=\textwidth]{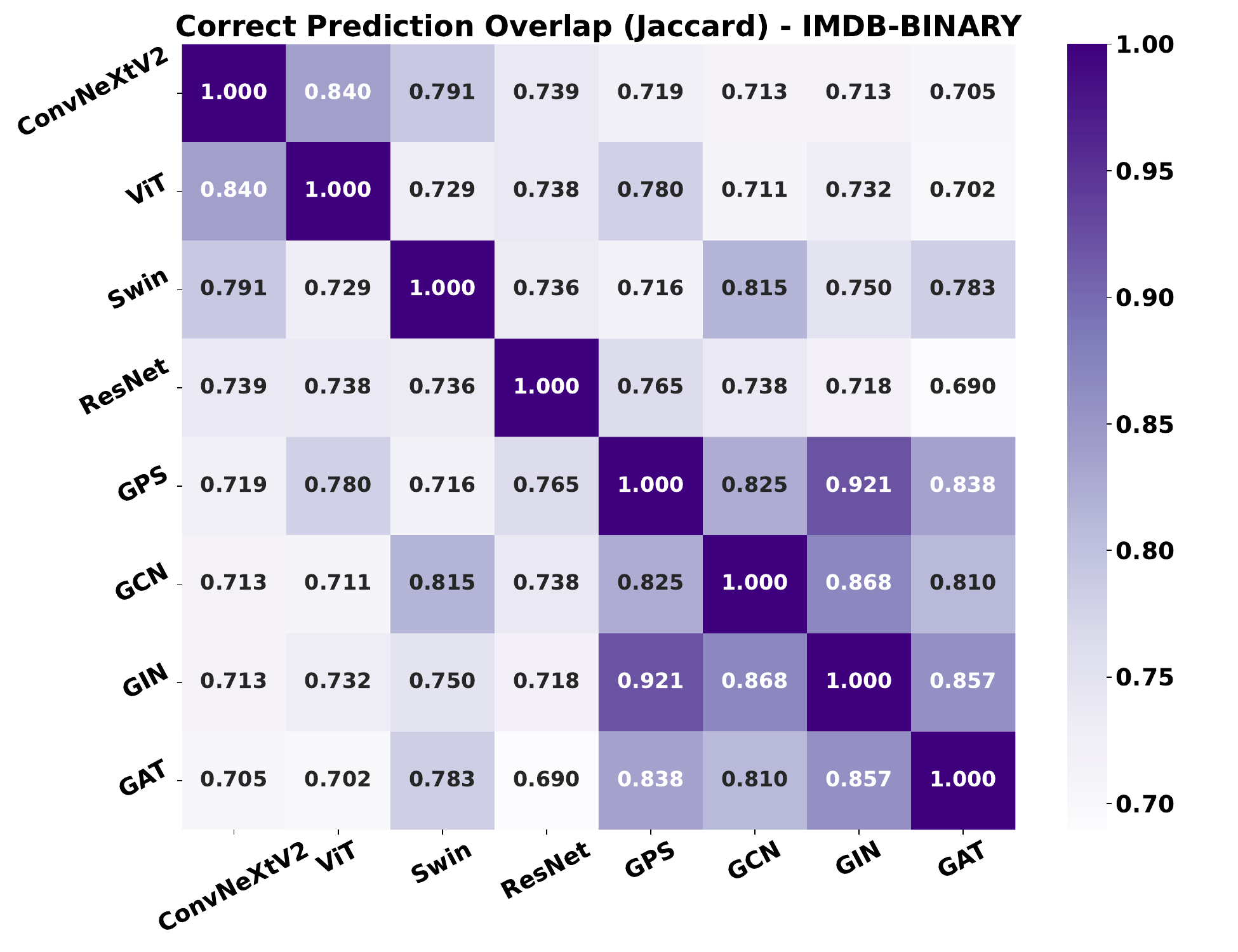}
        \caption{IMDB-BINARY (6 layers) - Correct}
    \end{subfigure}

    \vspace{2em}  % 添加行与行之间的间距

    % Second row (Error Overlap)
    \begin{subfigure}[b]{0.32\textwidth}
        \includegraphics[width=\textwidth]{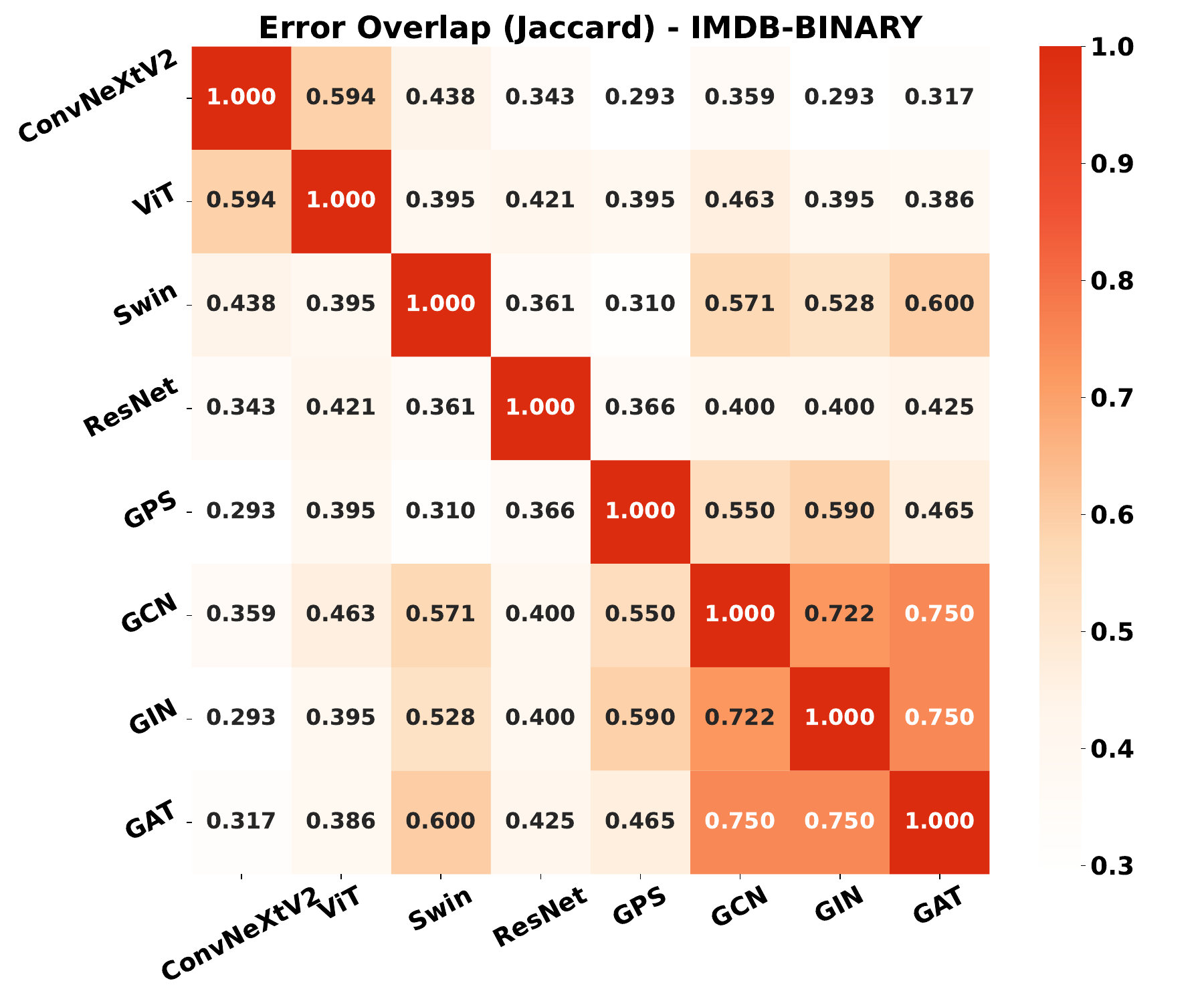}
        \caption{IMDB-BINARY (1 layers) - Error}
    \end{subfigure}
    \hfill
    \begin{subfigure}[b]{0.32\textwidth}
        \includegraphics[width=\textwidth]{heatmaps/IMDB-BINARY/layers2_hidden128_drop0.5_seed0_layoutkamada_kawai/error_overlap.pdf}
        \caption{IMDB-BINARY (2 layers) - Error}
    \end{subfigure}
    \hfill
    \begin{subfigure}[b]{0.32\textwidth}
        \includegraphics[width=\textwidth]{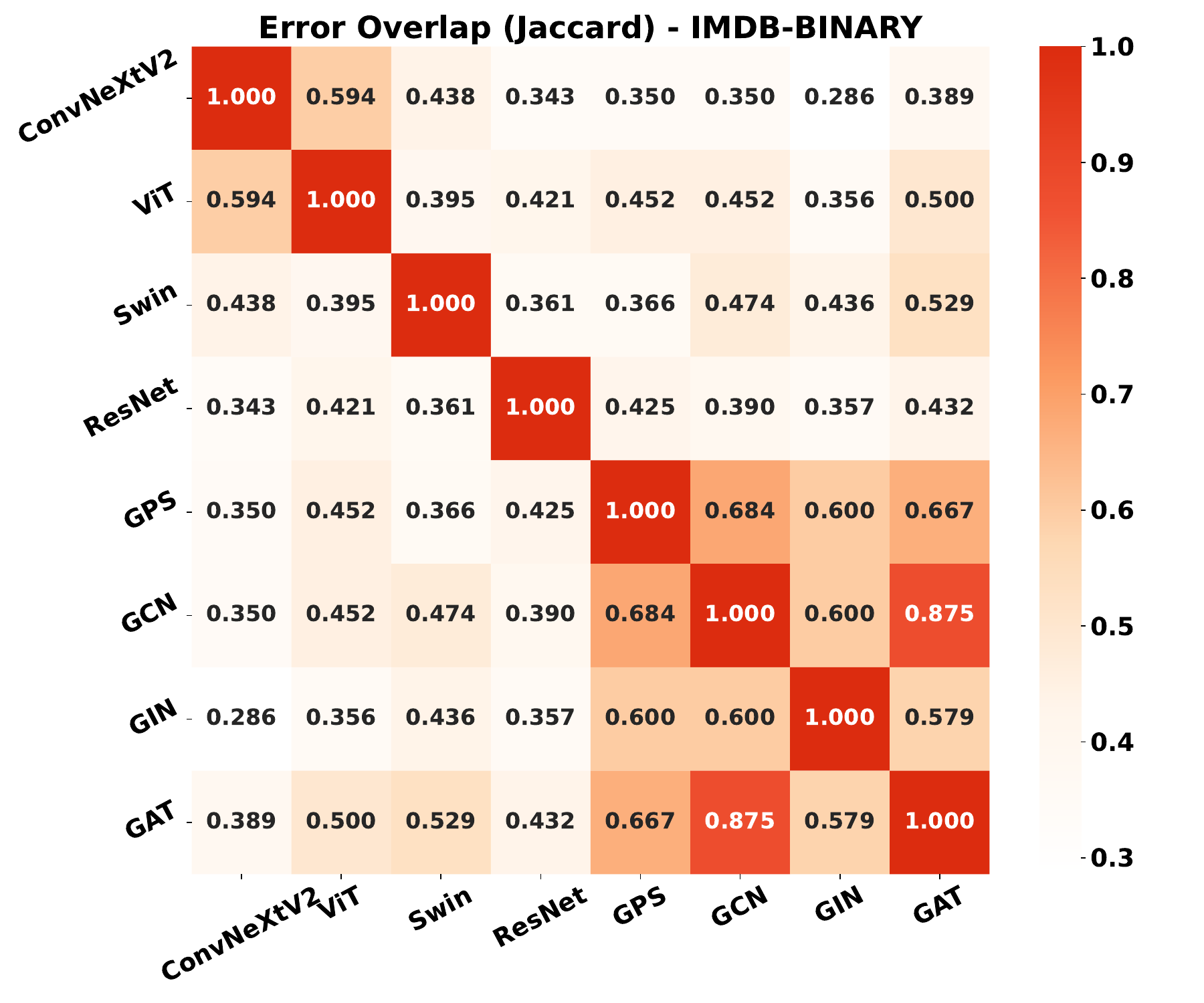}
        \caption{IMDB-BINARY (3 layers) - Error}
    \end{subfigure}

    \vspace{1em}  % 添加行间距

    \begin{subfigure}[b]{0.32\textwidth}
        \includegraphics[width=\textwidth]{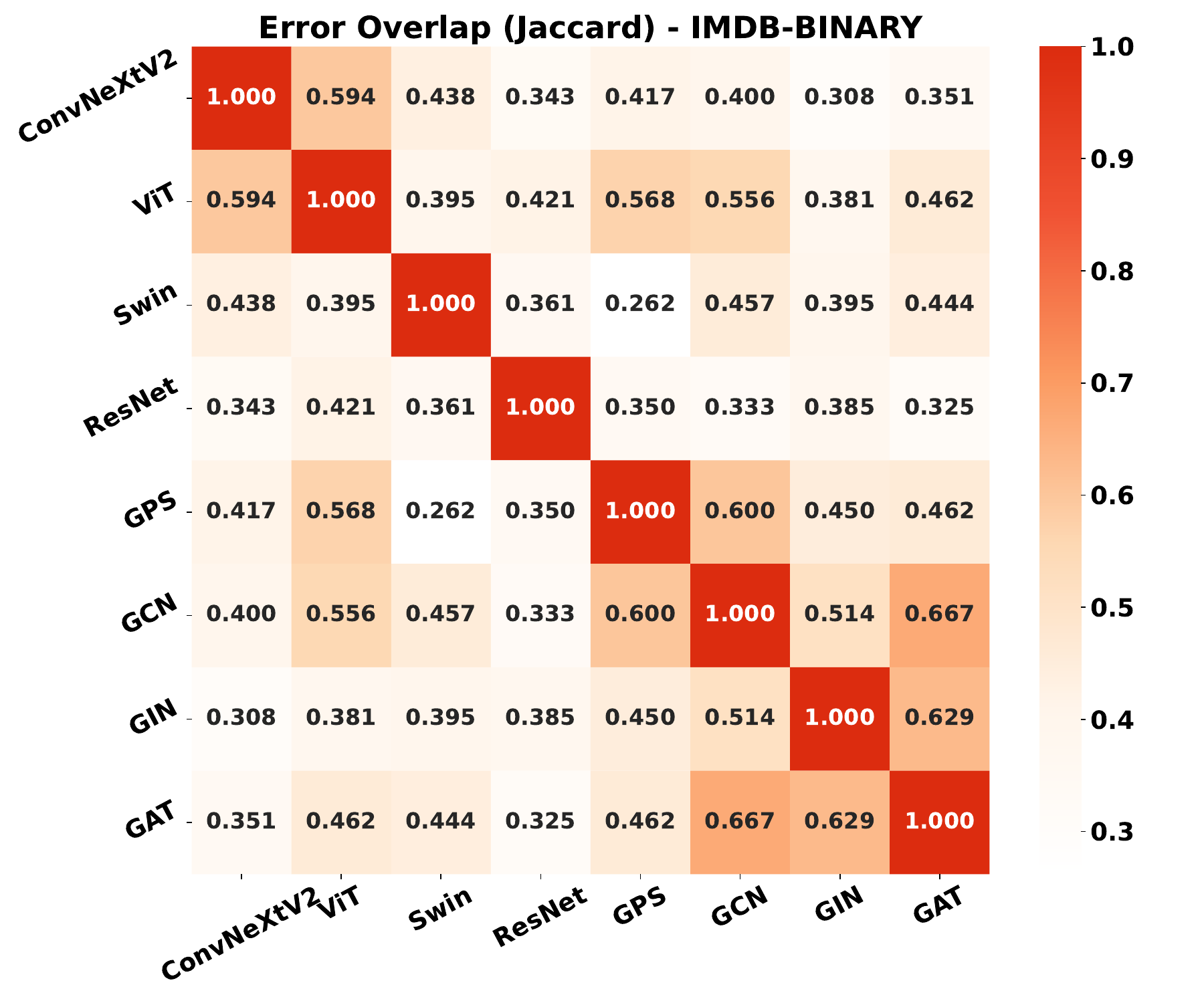}
        \caption{IMDB-BINARY (4 layers) - Error}
    \end{subfigure}
    \hfill
    \begin{subfigure}[b]{0.32\textwidth}
        \includegraphics[width=\textwidth]{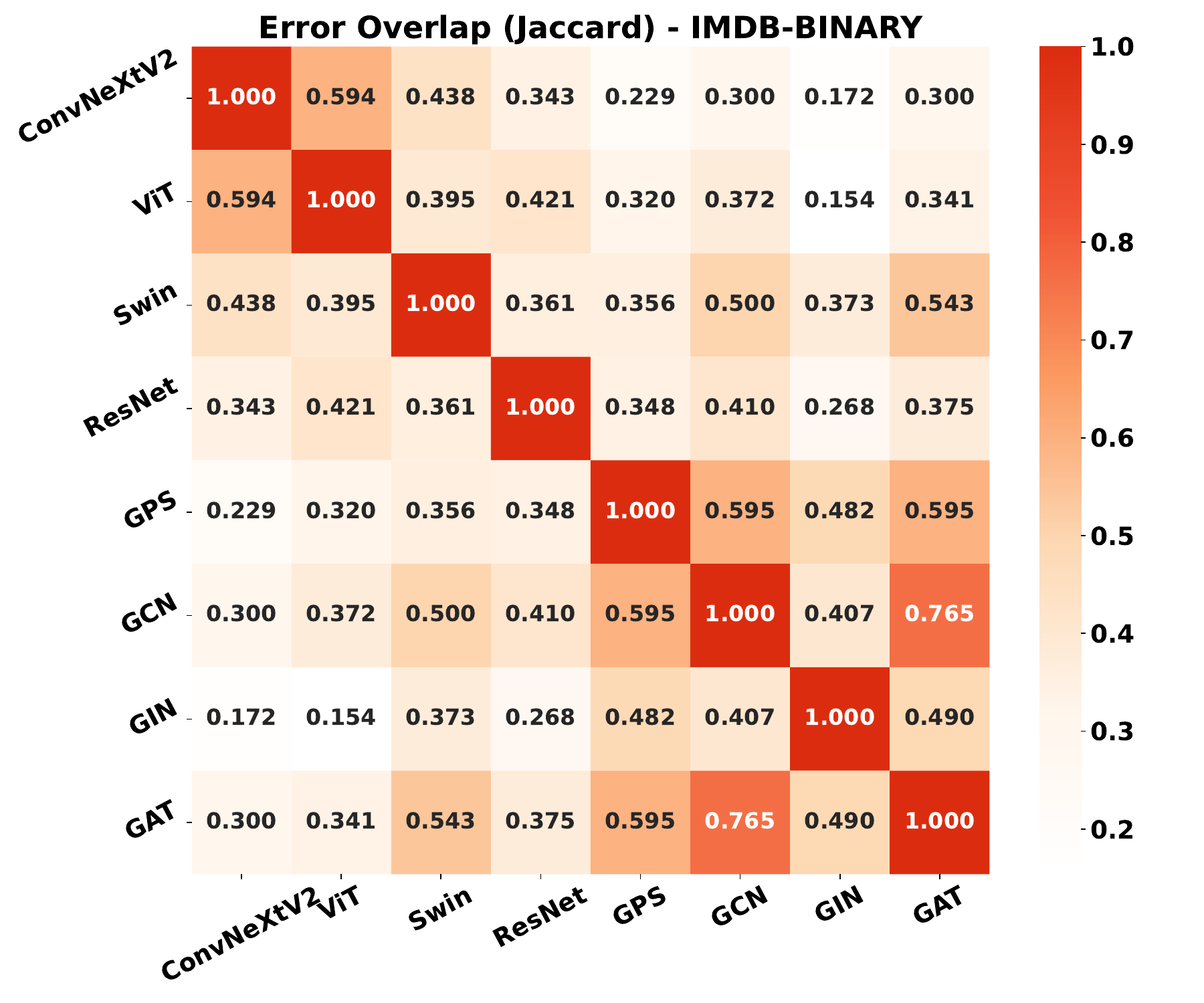}
        \caption{IMDB-BINARY (5 layers) - Error}
    \end{subfigure}
    \hfill
    \begin{subfigure}[b]{0.32\textwidth}
        \includegraphics[width=\textwidth]{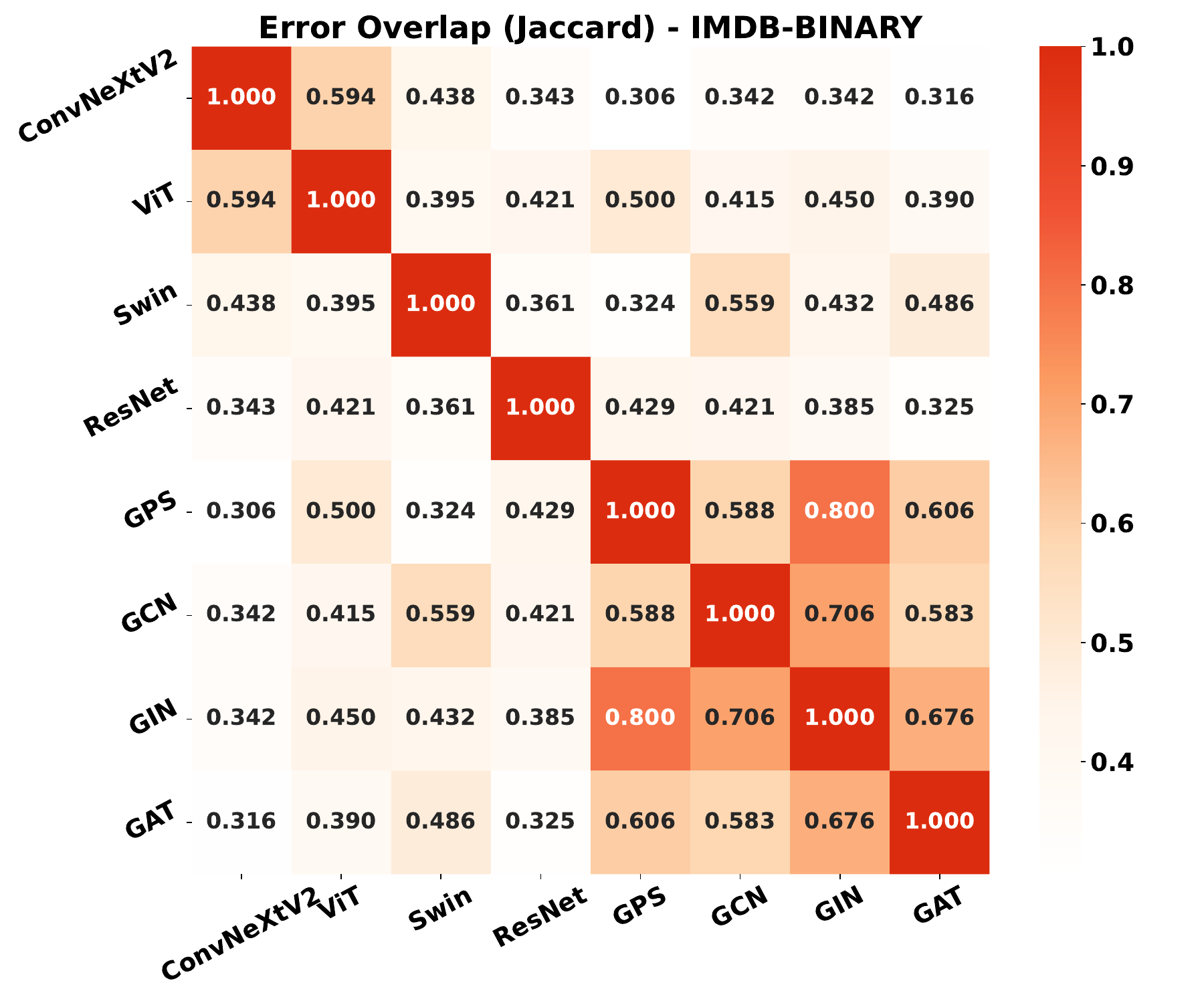}
        \caption{IMDB-BINARY (6 layers) - Error}
    \end{subfigure}
    \caption{Prediction overlap patterns for IMDB-BINARY dataset with varying GNN layer depths. Top row shows correct prediction overlap, while bottom row shows error overlap patterns across different layer configurations.}
    \label{fig:overlap_imdb-binary_layers}
\end{figure}

\begin{figure}[htbp]
    \centering
    % First row (Correct Prediction Overlap)
    \begin{subfigure}[b]{0.32\textwidth}
        \includegraphics[width=\textwidth]{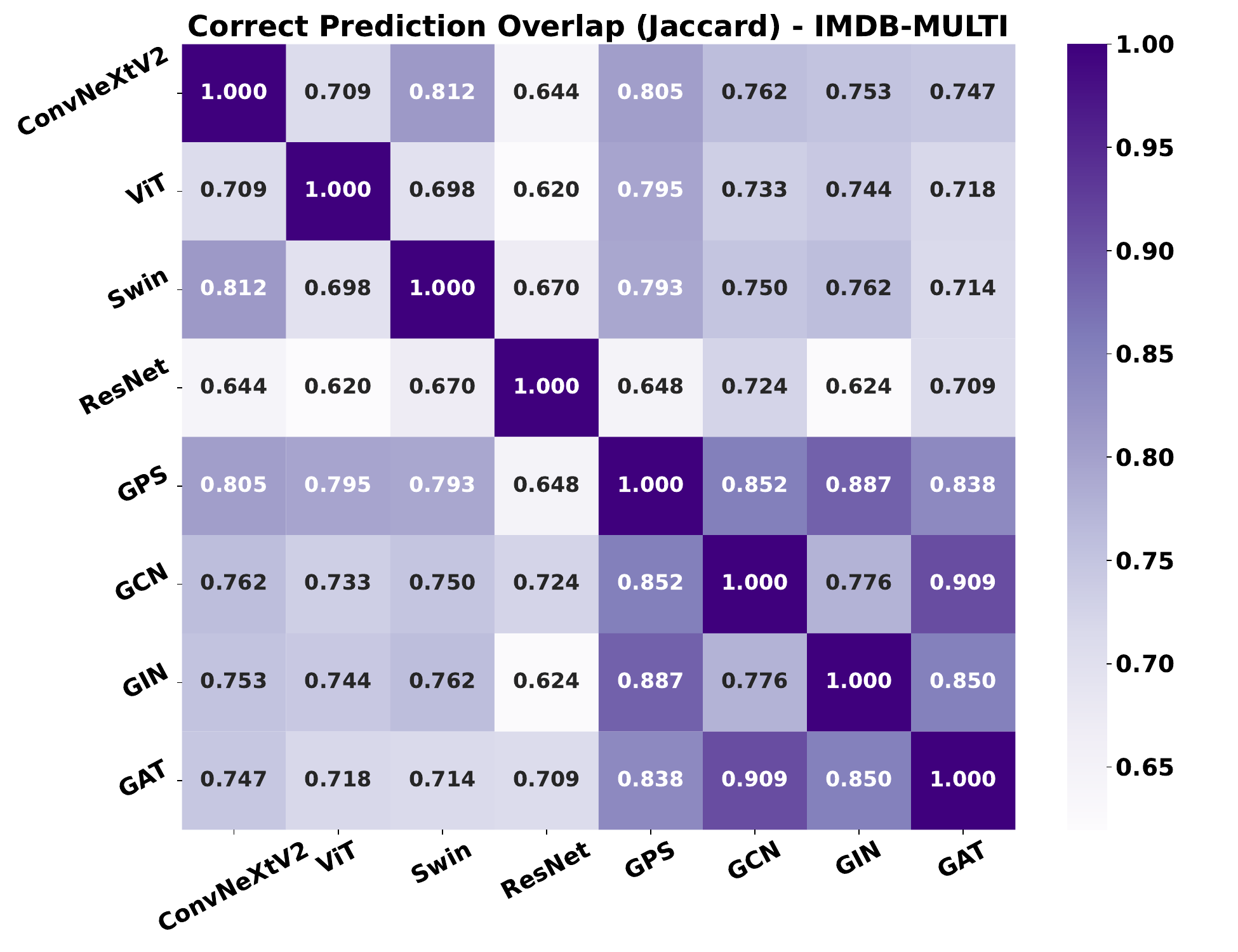}
        \caption{IMDB-MULTI (1 layers) - Correct}
    \end{subfigure}
    \hfill
    \begin{subfigure}[b]{0.32\textwidth}
        \includegraphics[width=\textwidth]{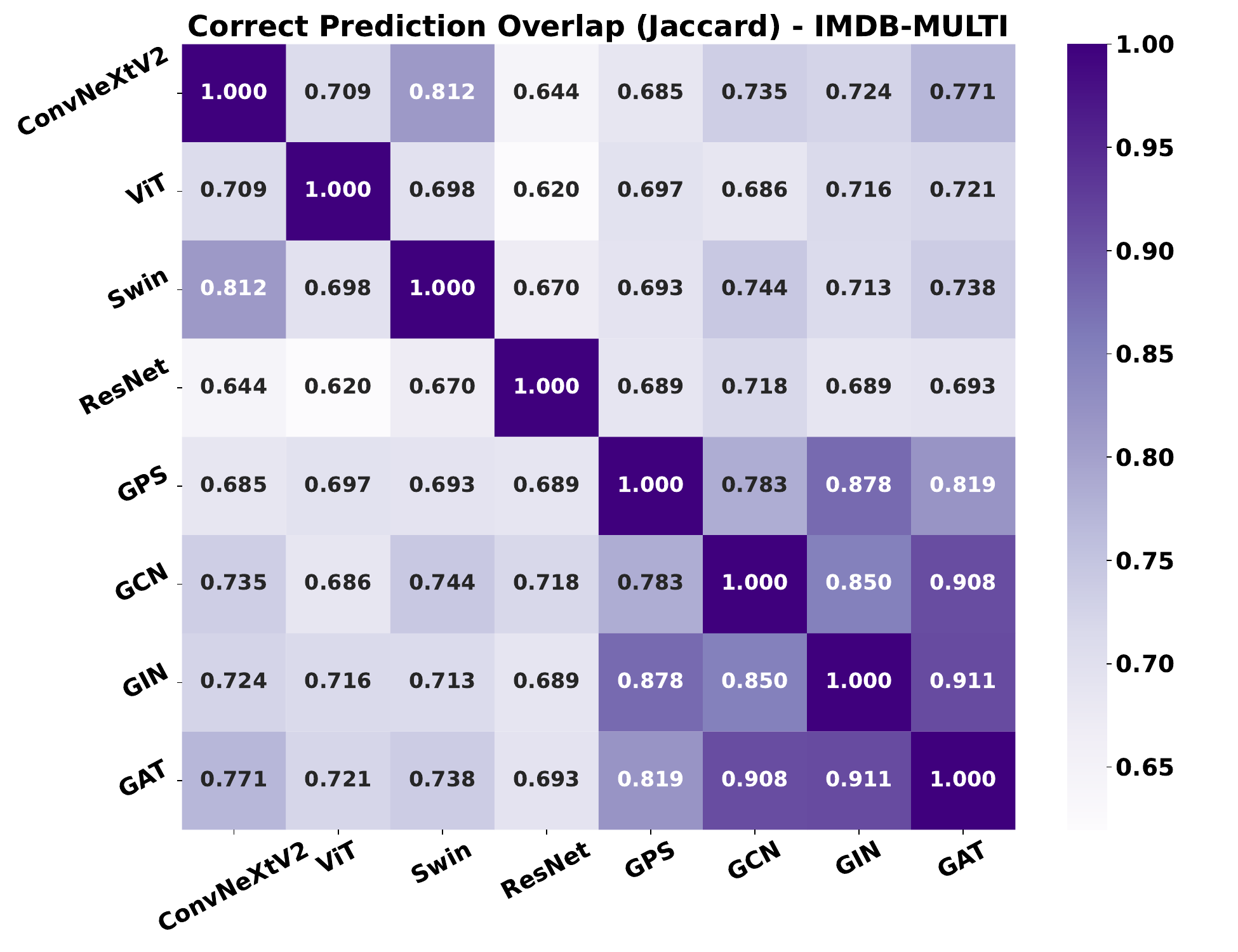}
        \caption{IMDB-MULTI (2 layers) - Correct}
    \end{subfigure}
    \hfill
    \begin{subfigure}[b]{0.32\textwidth}
        \includegraphics[width=\textwidth]{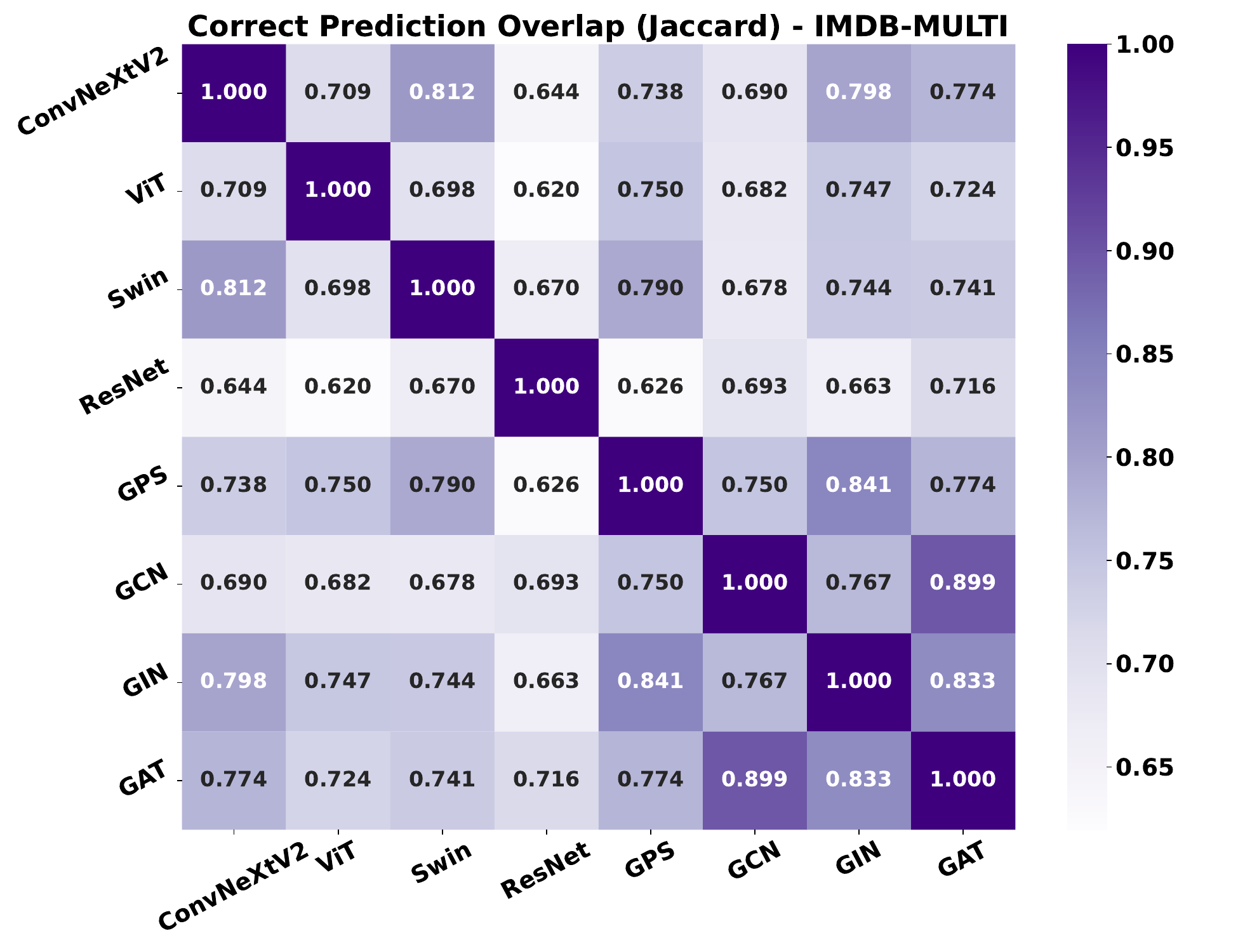}
        \caption{IMDB-MULTI (3 layers) - Correct}
    \end{subfigure}

    \vspace{1em}  % 添加行间距

    \begin{subfigure}[b]{0.32\textwidth}
        \includegraphics[width=\textwidth]{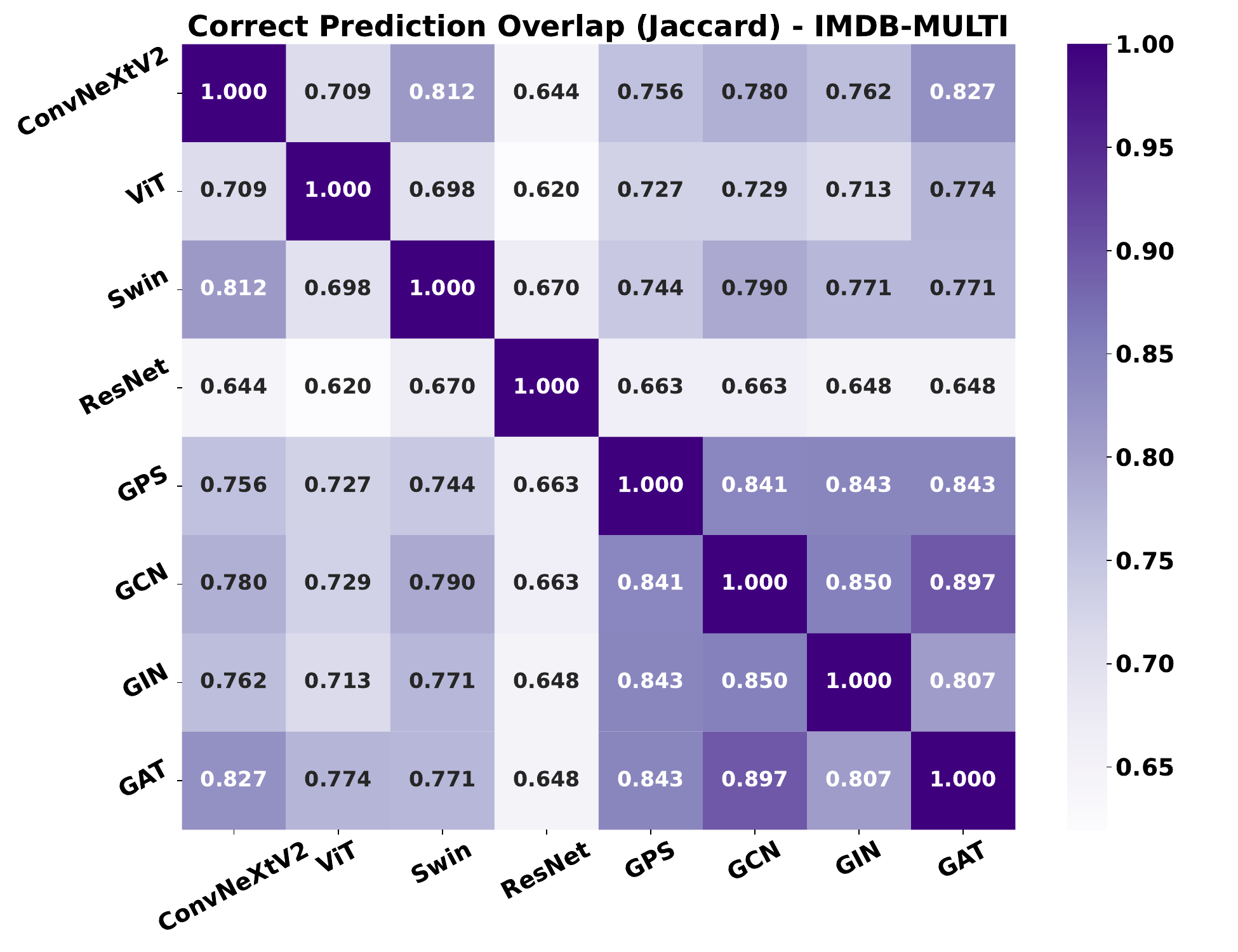}
        \caption{IMDB-MULTI (4 layers) - Correct}
    \end{subfigure}
    \hfill
    \begin{subfigure}[b]{0.32\textwidth}
        \includegraphics[width=\textwidth]{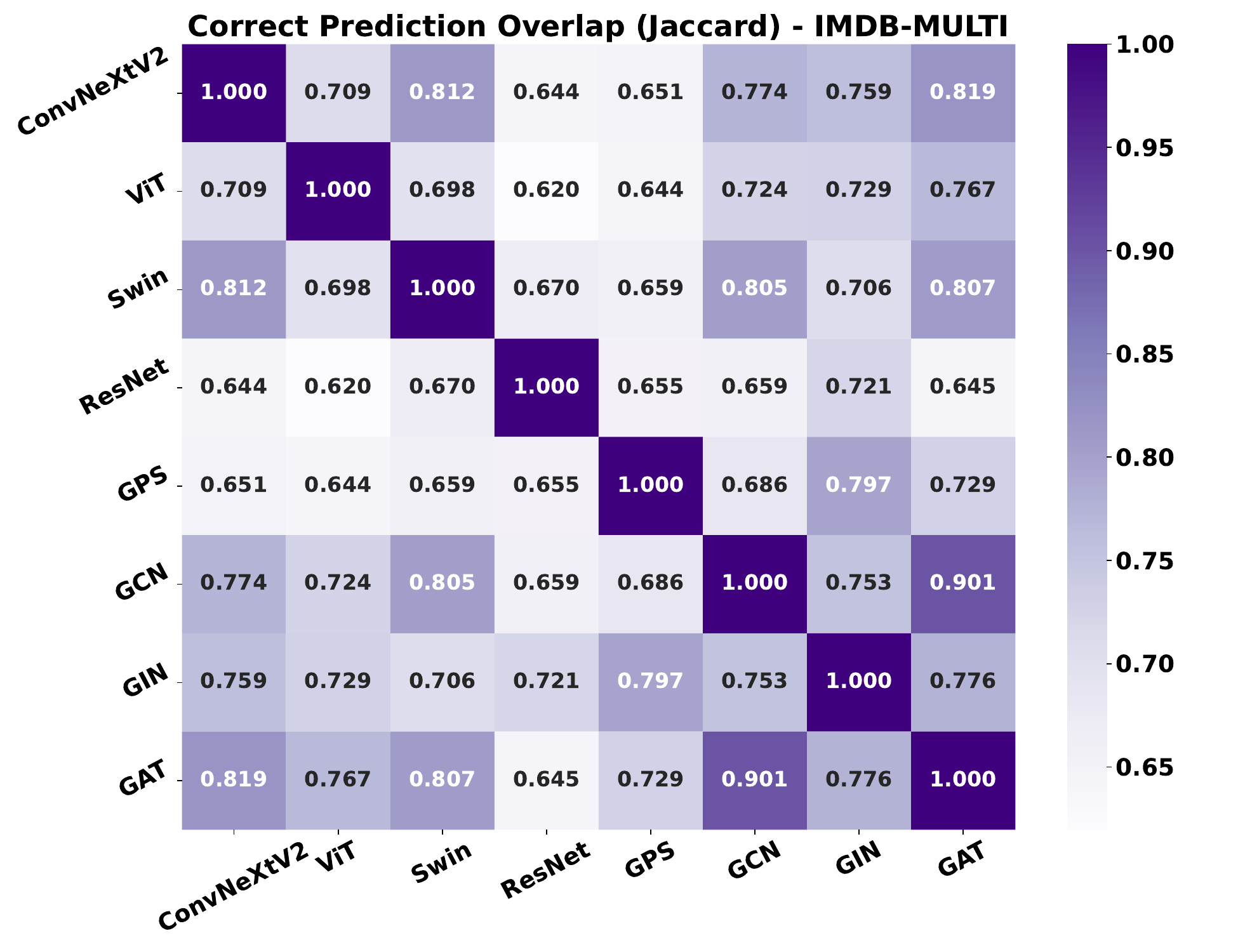}
        \caption{IMDB-MULTI (5 layers) - Correct}
    \end{subfigure}
    \hfill
    \begin{subfigure}[b]{0.32\textwidth}
        \includegraphics[width=\textwidth]{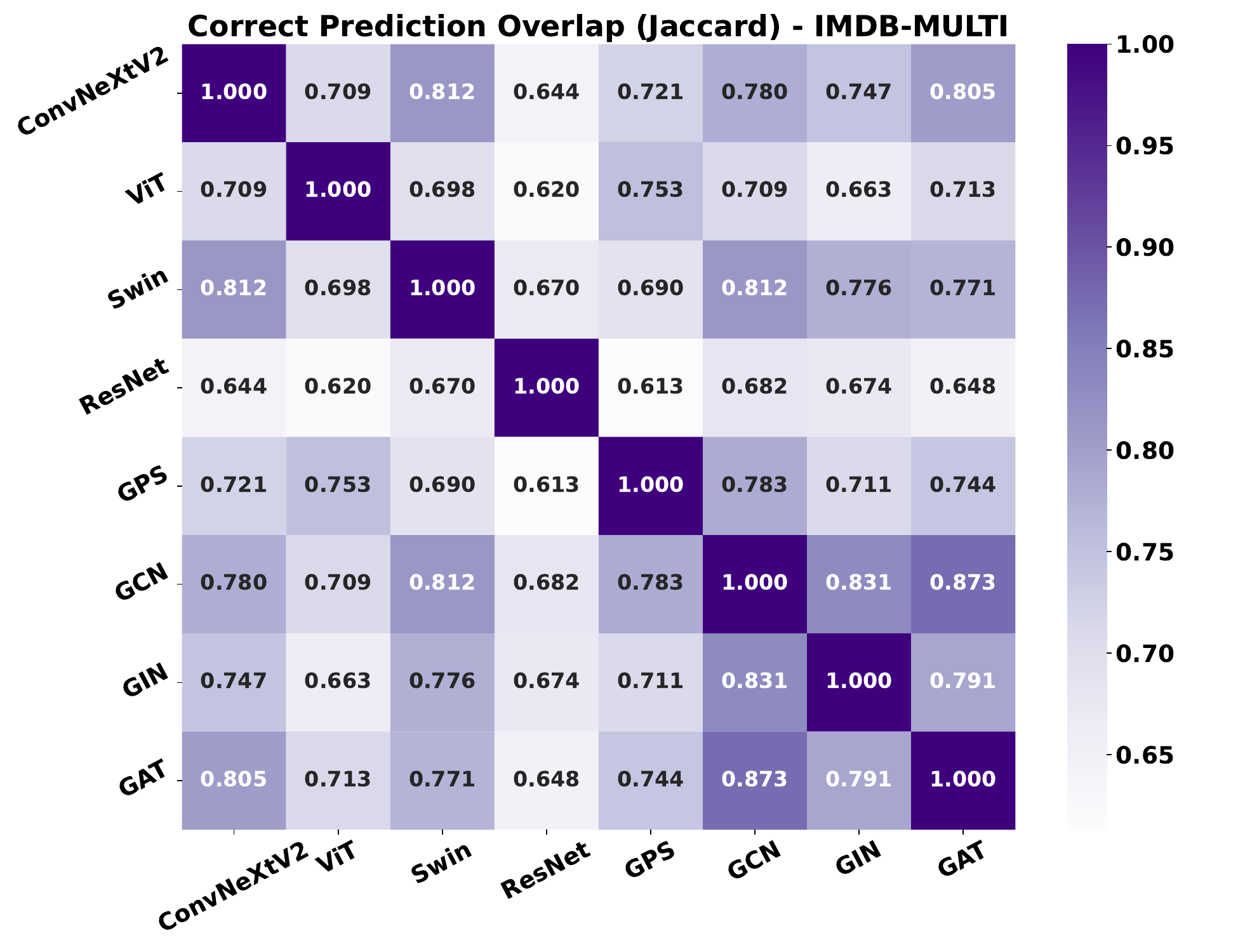}
        \caption{IMDB-MULTI (6 layers) - Correct}
    \end{subfigure}

    \vspace{2em}  % 添加行与行之间的间距

    % Second row (Error Overlap)
    \begin{subfigure}[b]{0.32\textwidth}
        \includegraphics[width=\textwidth]{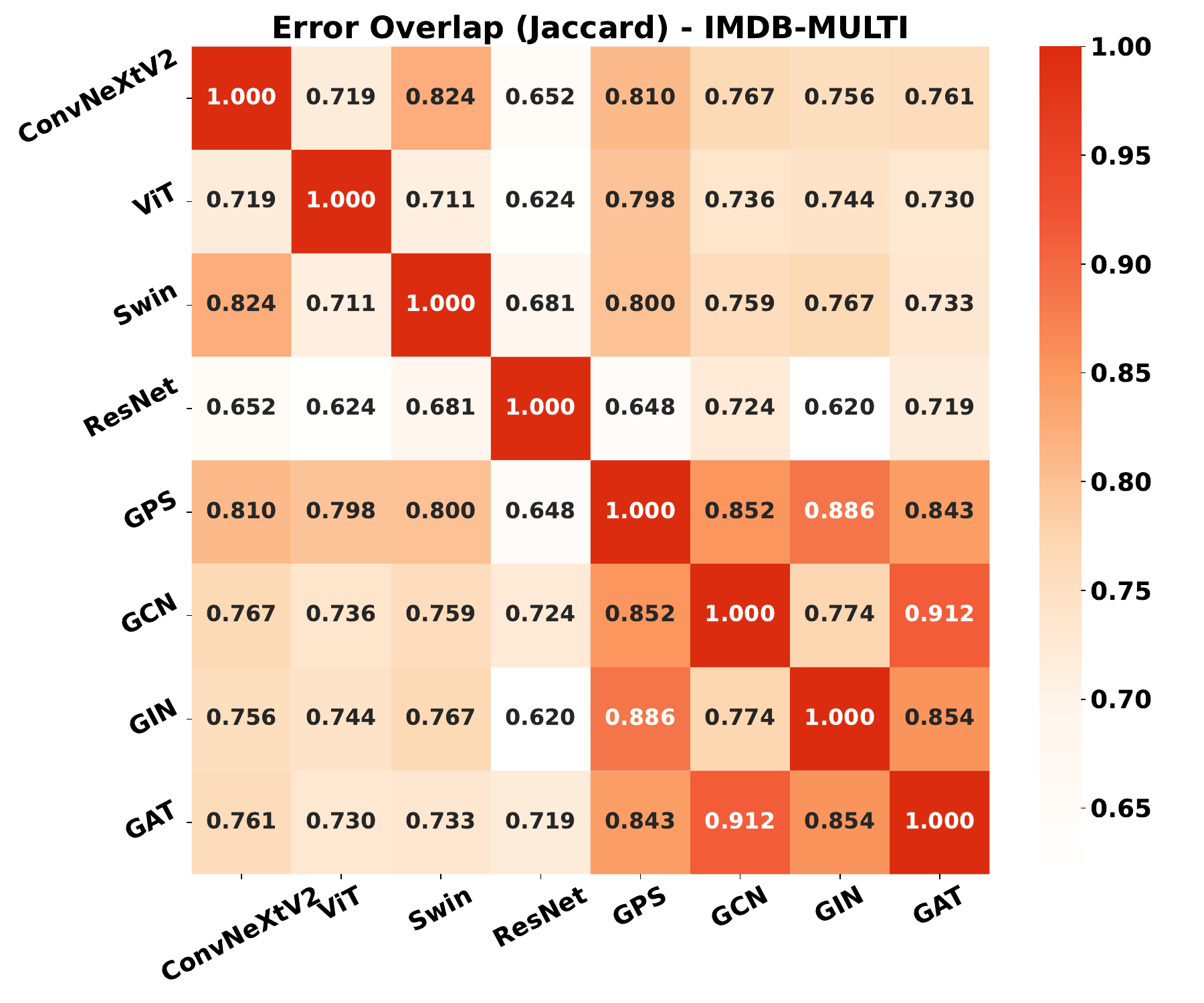}
        \caption{IMDB-MULTI (1 layers) - Error}
    \end{subfigure}
    \hfill
    \begin{subfigure}[b]{0.32\textwidth}
        \includegraphics[width=\textwidth]{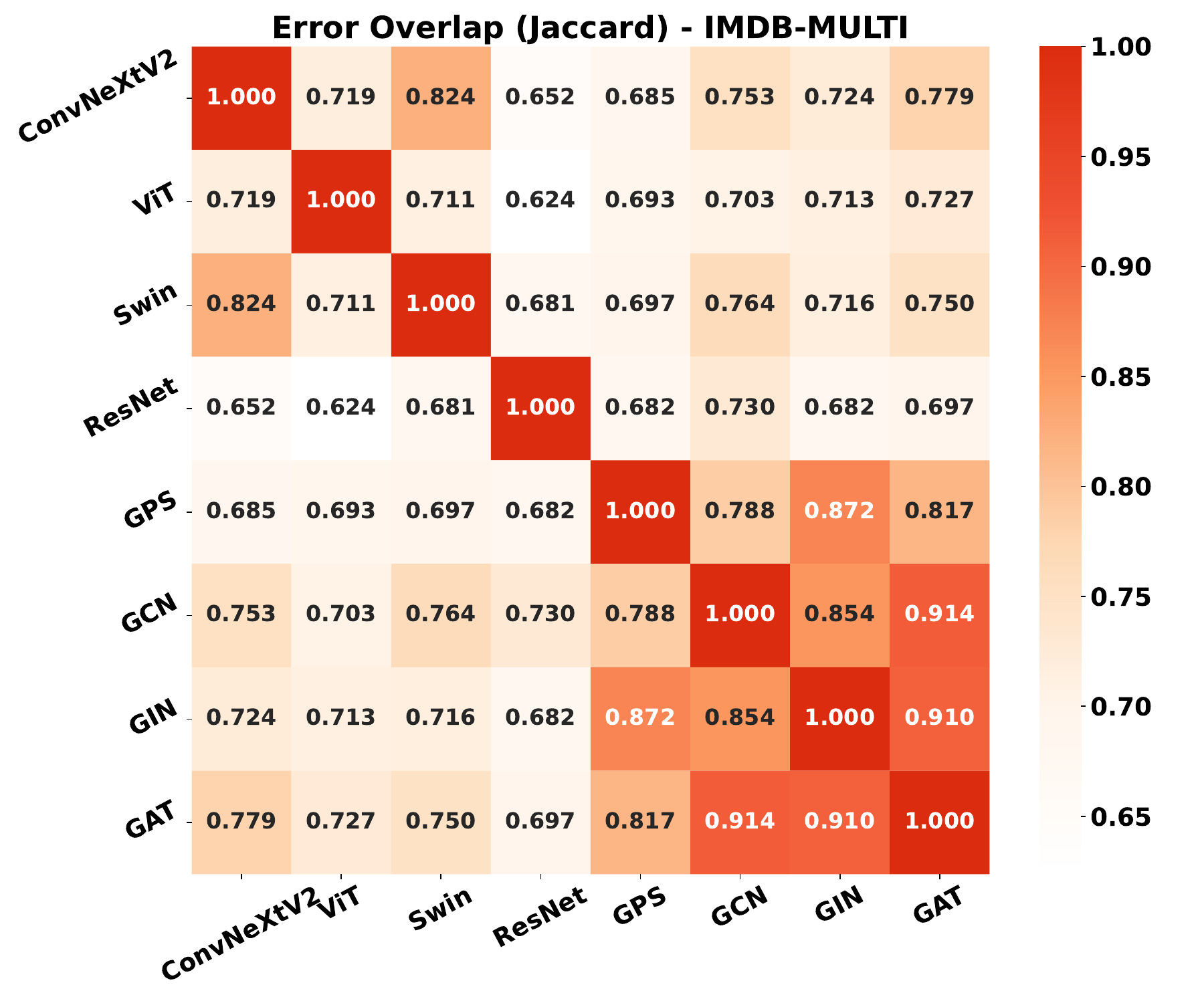}
        \caption{IMDB-MULTI (2 layers) - Error}
    \end{subfigure}
    \hfill
    \begin{subfigure}[b]{0.32\textwidth}
        \includegraphics[width=\textwidth]{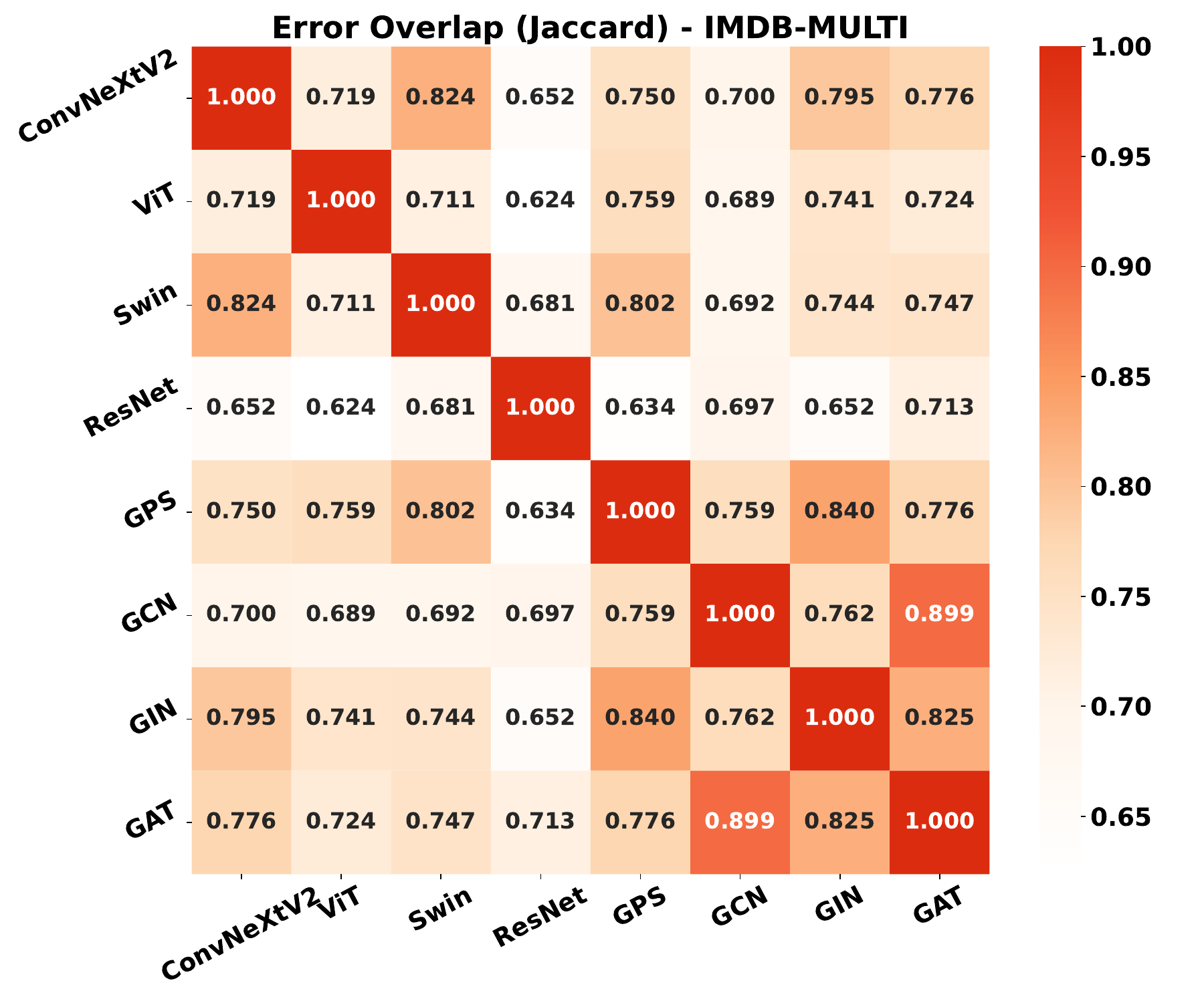}
        \caption{IMDB-MULTI (3 layers) - Error}
    \end{subfigure}

    \vspace{1em}  % 添加行间距

    \begin{subfigure}[b]{0.32\textwidth}
        \includegraphics[width=\textwidth]{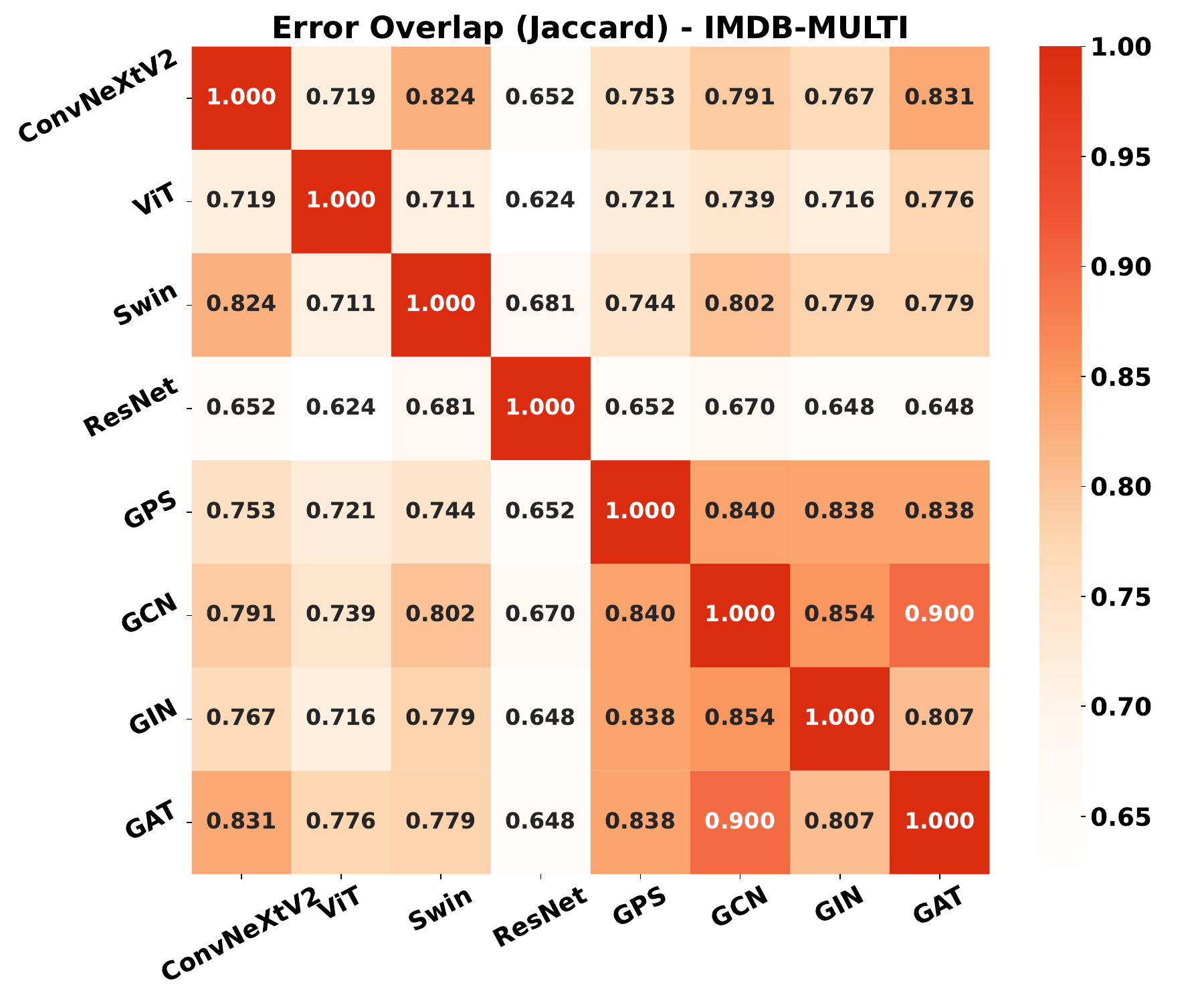}
        \caption{IMDB-MULTI (4 layers) - Error}
    \end{subfigure}
    \hfill
    \begin{subfigure}[b]{0.32\textwidth}
        \includegraphics[width=\textwidth]{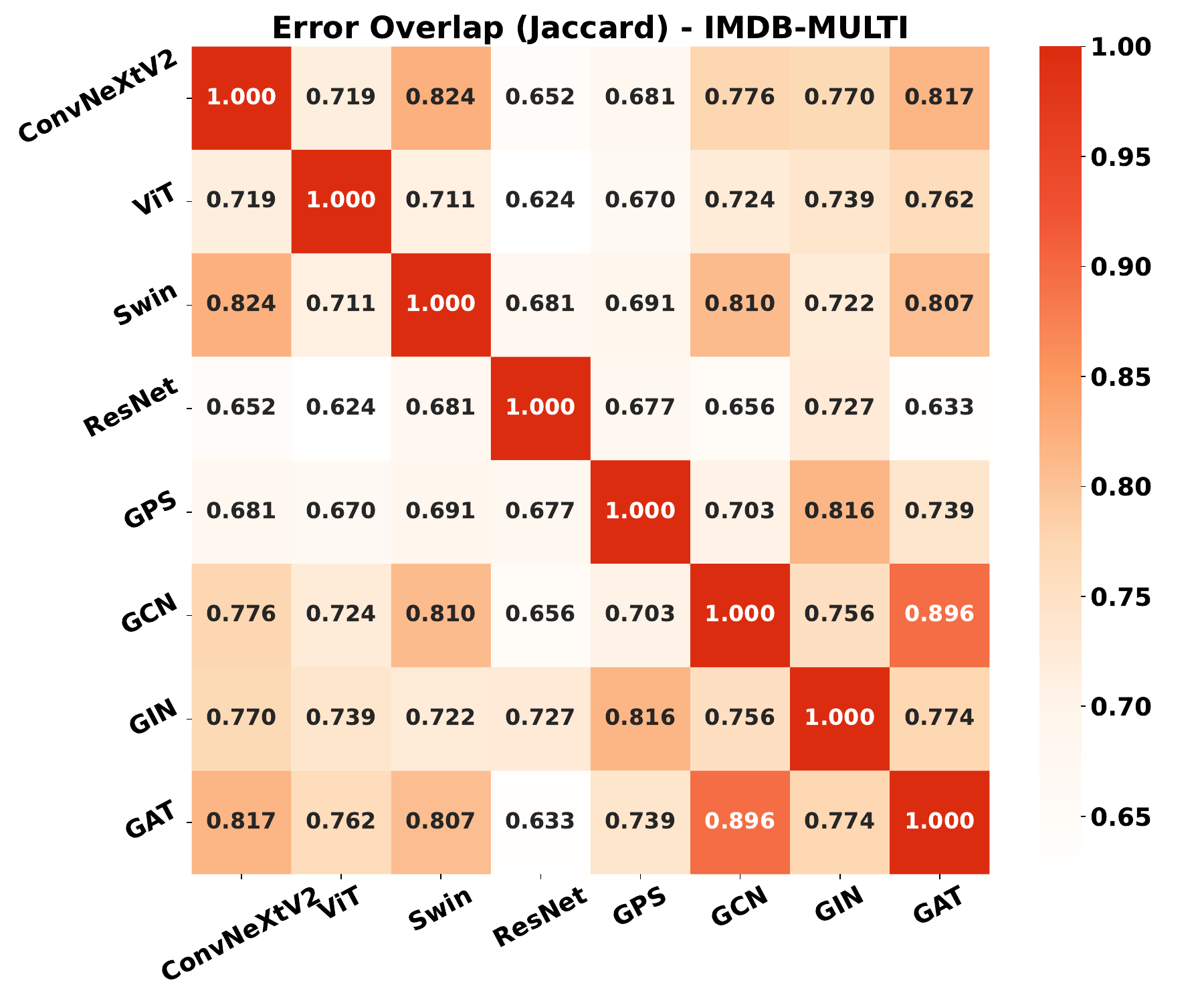}
        \caption{IMDB-MULTI (5 layers) - Error}
    \end{subfigure}
    \hfill
    \begin{subfigure}[b]{0.32\textwidth}
        \includegraphics[width=\textwidth]{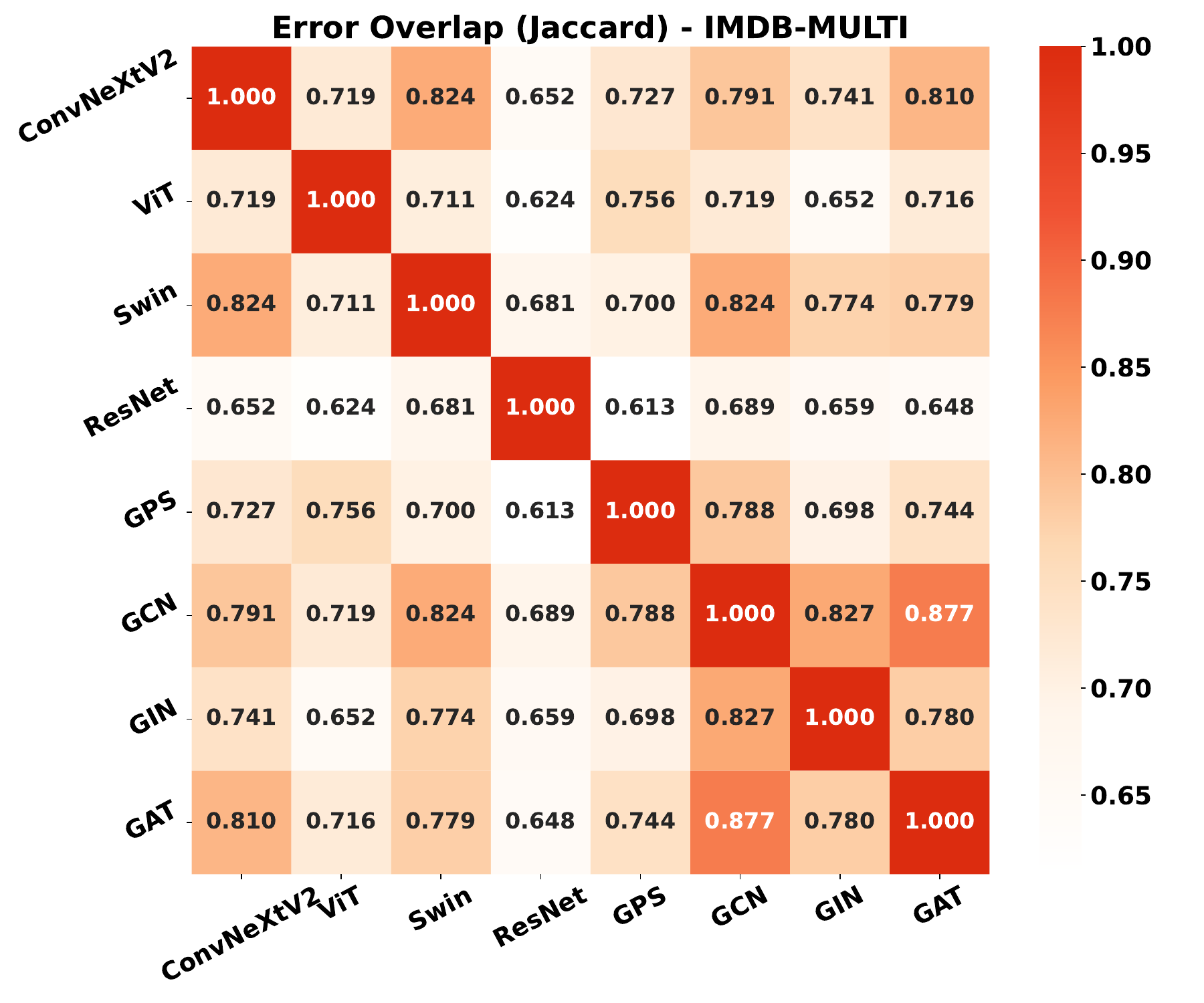}
        \caption{IMDB-MULTI (6 layers) - Error}
    \end{subfigure}
    \caption{Prediction overlap patterns for IMDB-MULTI dataset with varying GNN layer depths. Top row shows correct prediction overlap, while bottom row shows error overlap patterns across different layer configurations.}
    \label{fig:overlap_imdb-multi_layers}
\end{figure}

%%%%%%%%%%%%%%%%%%%%%%%%%%%%%%%%%%%%%%%%%%%%%%%%%%%%%%%%%%%%
\clearpage
\newpage
\section*{NeurIPS Paper Checklist}

\begin{enumerate}

\item {\bf Claims}
    \item[] Question: Do the main claims made in the abstract and introduction accurately reflect the paper's contributions and scope?
    \item[] Answer: \answerYes{}{} % Replace by \answerYes{}, \answerNo{}, or \answerNA{}.
    \item[] Justification: In Section~\ref{sec:preliminary} and~\ref{sec:main_result}, we provide comprehensive experimental evidence and case studies to support our claims.
    \item[] Guidelines:
    \begin{itemize}
        \item The answer NA means that the abstract and introduction do not include the claims made in the paper.
        \item The abstract and/or introduction should clearly state the claims made, including the contributions made in the paper and important assumptions and limitations. A No or NA answer to this question will not be perceived well by the reviewers. 
        \item The claims made should match theoretical and experimental results, and reflect how much the results can be expected to generalize to other settings. 
        \item It is fine to include aspirational goals as motivation as long as it is clear that these goals are not attained by the paper. 
    \end{itemize}

\item {\bf Limitations}
    \item[] Question: Does the paper discuss the limitations of the work performed by the authors?
    \item[] Answer: \answerYes{}{} % Replace by \answerYes{}, \answerNo{}, or \answerNA{}.
    \item[] Justification: In Appendix~\ref{app:limitations}, We discuss the limitations.
    \item[] Guidelines:
    \begin{itemize}
        \item The answer NA means that the paper has no limitation while the answer No means that the paper has limitations, but those are not discussed in the paper. 
        \item The authors are encouraged to create a separate "Limitations" section in their paper.
        \item The paper should point out any strong assumptions and how robust the results are to violations of these assumptions (e.g., independence assumptions, noiseless settings, model well-specification, asymptotic approximations only holding locally). The authors should reflect on how these assumptions might be violated in practice and what the implications would be.
        \item The authors should reflect on the scope of the claims made, e.g., if the approach was only tested on a few datasets or with a few runs. In general, empirical results often depend on implicit assumptions, which should be articulated.
        \item The authors should reflect on the factors that influence the performance of the approach. For example, a facial recognition algorithm may perform poorly when image resolution is low or images are taken in low lighting. Or a speech-to-text system might not be used reliably to provide closed captions for online lectures because it fails to handle technical jargon.
        \item The authors should discuss the computational efficiency of the proposed algorithms and how they scale with dataset size.
        \item If applicable, the authors should discuss possible limitations of their approach to address problems of privacy and fairness.
        \item While the authors might fear that complete honesty about limitations might be used by reviewers as grounds for rejection, a worse outcome might be that reviewers discover limitations that aren't acknowledged in the paper. The authors should use their best judgment and recognize that individual actions in favor of transparency play an important role in developing norms that preserve the integrity of the community. Reviewers will be specifically instructed to not penalize honesty concerning limitations.
    \end{itemize}

\item {\bf Theory assumptions and proofs}
    \item[] Question: For each theoretical result, does the paper provide the full set of assumptions and a complete (and correct) proof?
    \item[] Answer: \answerYes{}{} % Replace by \answerYes{}, \answerNo{}, or \answerNA{}.
\item[] Justification: In Section~\ref{app:symmetry_proof}, we provide the full set of assumptions and a complete proof.
    \item[] Guidelines:
    \begin{itemize}
        \item The answer NA means that the paper does not include theoretical results. 
        \item All the theorems, formulas, and proofs in the paper should be numbered and cross-referenced.
        \item All assumptions should be clearly stated or referenced in the statement of any theorems.
        \item The proofs can either appear in the main paper or the supplemental material, but if they appear in the supplemental material, the authors are encouraged to provide a short proof sketch to provide intuition. 
        \item Inversely, any informal proof provided in the core of the paper should be complemented by formal proofs provided in appendix or supplemental material.
        \item Theorems and Lemmas that the proof relies upon should be properly referenced. 
    \end{itemize}

    \item {\bf Experimental result reproducibility}
    \item[] Question: Does the paper fully disclose all the information needed to reproduce the main experimental results of the paper to the extent that it affects the main claims and/or conclusions of the paper (regardless of whether the code and data are provided or not)?
    \item[] Answer: \answerYes{} % Replace by \answerYes{}, \answerNo{}, or \answerNA{}.
    \item[] Justification: We present the implementation detail in Appendix~\ref{app:implementation} and also submit the code in the supplemental material.
    \item[] Guidelines:
    \begin{itemize}
        \item The answer NA means that the paper does not include experiments.
        \item If the paper includes experiments, a No answer to this question will not be perceived well by the reviewers: Making the paper reproducible is important, regardless of whether the code and data are provided or not.
        \item If the contribution is a dataset and/or model, the authors should describe the steps taken to make their results reproducible or verifiable. 
        \item Depending on the contribution, reproducibility can be accomplished in various ways. For example, if the contribution is a novel architecture, describing the architecture fully might suffice, or if the contribution is a specific model and empirical evaluation, it may be necessary to either make it possible for others to replicate the model with the same dataset, or provide access to the model. In general. releasing code and data is often one good way to accomplish this, but reproducibility can also be provided via detailed instructions for how to replicate the results, access to a hosted model (e.g., in the case of a large language model), releasing of a model checkpoint, or other means that are appropriate to the research performed.
        \item While NeurIPS does not require releasing code, the conference does require all submissions to provide some reasonable avenue for reproducibility, which may depend on the nature of the contribution. For example
        \begin{enumerate}
            \item If the contribution is primarily a new algorithm, the paper should make it clear how to reproduce that algorithm.
            \item If the contribution is primarily a new model architecture, the paper should describe the architecture clearly and fully.
            \item If the contribution is a new model (e.g., a large language model), then there should either be a way to access this model for reproducing the results or a way to reproduce the model (e.g., with an open-source dataset or instructions for how to construct the dataset).
            \item We recognize that reproducibility may be tricky in some cases, in which case authors are welcome to describe the particular way they provide for reproducibility. In the case of closed-source models, it may be that access to the model is limited in some way (e.g., to registered users), but it should be possible for other researchers to have some path to reproducing or verifying the results.
        \end{enumerate}
    \end{itemize}

\item {\bf Open access to data and code}
    \item[] Question: Does the paper provide open access to the data and code, with sufficient instructions to faithfully reproduce the main experimental results, as described in supplemental material?
    \item[] Answer: \answerYes{} % Replace by \answerYes{}, \answerNo{}, or \answerNA{}.
    \item[] Justification: The code for generating the datasets and running the experiments will be submitted via the supplementary material.
    \item[] Guidelines:
    \begin{itemize}
        \item The answer NA means that paper does not include experiments requiring code.
        \item Please see the NeurIPS code and data submission guidelines (\url{https://nips.cc/public/guides/CodeSubmissionPolicy}) for more details.
        \item While we encourage the release of code and data, we understand that this might not be possible, so “No” is an acceptable answer. Papers cannot be rejected simply for not including code, unless this is central to the contribution (e.g., for a new open-source benchmark).
        \item The instructions should contain the exact command and environment needed to run to reproduce the results. See the NeurIPS code and data submission guidelines (\url{https://nips.cc/public/guides/CodeSubmissionPolicy}) for more details.
        \item The authors should provide instructions on data access and preparation, including how to access the raw data, preprocessed data, intermediate data, and generated data, etc.
        \item The authors should provide scripts to reproduce all experimental results for the new proposed method and baselines. If only a subset of experiments are reproducible, they should state which ones are omitted from the script and why.
        \item At submission time, to preserve anonymity, the authors should release anonymized versions (if applicable).
        \item Providing as much information as possible in supplemental material (appended to the paper) is recommended, but including URLs to data and code is permitted.
    \end{itemize}

\item {\bf Experimental setting/details}
    \item[] Question: Does the paper specify all the training and test details (e.g., data splits, hyperparameters, how they were chosen, type of optimizer, etc.) necessary to understand the results?
    \item[] Answer: \answerYes{} % Replace by \answerYes{}, \answerNo{}, or \answerNA{}.
    \item[] Justification: We state the experimental setup in Section~\ref{sec:main_result} and Appendix~\ref{app:implementation}.
    \item[] Guidelines:
    \begin{itemize}
        \item The answer NA means that the paper does not include experiments.
        \item The experimental setting should be presented in the core of the paper to a level of detail that is necessary to appreciate the results and make sense of them.
        \item The full details can be provided either with the code, in appendix, or as supplemental material.
    \end{itemize}

\item {\bf Experiment statistical significance}
    \item[] Question: Does the paper report error bars suitably and correctly defined or other appropriate information about the statistical significance of the experiments?
    \item[] Answer: \answerYes{} % Replace by \answerYes{}, \answerNo{}, or \answerNA{}.
    \item[] Justification: All experiments were run multiple times using multiple random seeds, with the final mean and standard deviation reported.
    \item[] Guidelines:
    \begin{itemize}
        \item The answer NA means that the paper does not include experiments.
        \item The authors should answer "Yes" if the results are accompanied by error bars, confidence intervals, or statistical significance tests, at least for the experiments that support the main claims of the paper.
        \item The factors of variability that the error bars are capturing should be clearly stated (for example, train/test split, initialization, random drawing of some parameter, or overall run with given experimental conditions).
        \item The method for calculating the error bars should be explained (closed form formula, call to a library function, bootstrap, etc.)
        \item The assumptions made should be given (e.g., Normally distributed errors).
        \item It should be clear whether the error bar is the standard deviation or the standard error of the mean.
        \item It is OK to report 1-sigma error bars, but one should state it. The authors should preferably report a 2-sigma error bar than state that they have a 96\% CI, if the hypothesis of Normality of errors is not verified.
        \item For asymmetric distributions, the authors should be careful not to show in tables or figures symmetric error bars that would yield results that are out of range (e.g. negative error rates).
        \item If error bars are reported in tables or plots, The authors should explain in the text how they were calculated and reference the corresponding figures or tables in the text.
    \end{itemize}

\item {\bf Experiments compute resources}
    \item[] Question: For each experiment, does the paper provide sufficient information on the computer resources (type of compute workers, memory, time of execution) needed to reproduce the experiments?
    \item[] Answer: \answerYes{}{} % Replace by \answerYes{}, \answerNo{}, or \answerNA{}.
    \item[] Justification: We provide sufficient information on the computer resources in Appendix~\ref{app:implementation}.
    \item[] Guidelines:
    \begin{itemize}
        \item The answer NA means that the paper does not include experiments.
        \item The paper should indicate the type of compute workers CPU or GPU, internal cluster, or cloud provider, including relevant memory and storage.
        \item The paper should provide the amount of compute required for each of the individual experimental runs as well as estimate the total compute. 
        \item The paper should disclose whether the full research project required more compute than the experiments reported in the paper (e.g., preliminary or failed experiments that didn't make it into the paper). 
    \end{itemize}
    
\item {\bf Code of ethics}
    \item[] Question: Does the research conducted in the paper conform, in every respect, with the NeurIPS Code of Ethics \url{https://neurips.cc/public/EthicsGuidelines}?
    \item[] Answer: \answerYes{} % Replace by \answerYes{}, \answerNo{}, or \answerNA{}.
    \item[] Justification: We make sure that the research conducted in the paper conform, in every respect, with the NeurIPS Code of Ethics.
    \item[] Guidelines:
    \begin{itemize}
        \item The answer NA means that the authors have not reviewed the NeurIPS Code of Ethics.
        \item If the authors answer No, they should explain the special circumstances that require a deviation from the Code of Ethics.
        \item The authors should make sure to preserve anonymity (e.g., if there is a special consideration due to laws or regulations in their jurisdiction).
    \end{itemize}

\item {\bf Broader impacts}
    \item[] Question: Does the paper discuss both potential positive societal impacts and negative societal impacts of the work performed?
    \item[] Answer: \answerNA{} % Replace by \answerYes{}, \answerNo{}, or \answerNA{}.
    \item[] Justification: There is no societal impact on the work performed.
    \item[] Guidelines:
    \begin{itemize}
        \item The answer NA means that there is no societal impact of the work performed.
        \item If the authors answer NA or No, they should explain why their work has no societal impact or why the paper does not address societal impact.
        \item Examples of negative societal impacts include potential malicious or unintended uses (e.g., disinformation, generating fake profiles, surveillance), fairness considerations (e.g., deployment of technologies that could make decisions that unfairly impact specific groups), privacy considerations, and security considerations.
        \item The conference expects that many papers will be foundational research and not tied to particular applications, let alone deployments. However, if there is a direct path to any negative applications, the authors should point it out. For example, it is legitimate to point out that an improvement in the quality of generative models could be used to generate deepfakes for disinformation. On the other hand, it is not needed to point out that a generic algorithm for optimizing neural networks could enable people to train models that generate Deepfakes faster.
        \item The authors should consider possible harms that could arise when the technology is being used as intended and functioning correctly, harms that could arise when the technology is being used as intended but gives incorrect results, and harms following from (intentional or unintentional) misuse of the technology.
        \item If there are negative societal impacts, the authors could also discuss possible mitigation strategies (e.g., gated release of models, providing defenses in addition to attacks, mechanisms for monitoring misuse, mechanisms to monitor how a system learns from feedback over time, improving the efficiency and accessibility of ML).
    \end{itemize}
    
\item {\bf Safeguards}
    \item[] Question: Does the paper describe safeguards that have been put in place for responsible release of data or models that have a high risk for misuse (e.g., pretrained language models, image generators, or scraped datasets)?
    \item[] Answer: \answerNA{} % Replace by \answerYes{}, \answerNo{}, or \answerNA{}.
    \item[] Justification: The paper poses no such risks.
    \item[] Guidelines:
    \begin{itemize}
        \item The answer NA means that the paper poses no such risks.
        \item Released models that have a high risk for misuse or dual-use should be released with necessary safeguards to allow for controlled use of the model, for example by requiring that users adhere to usage guidelines or restrictions to access the model or implementing safety filters. 
        \item Datasets that have been scraped from the Internet could pose safety risks. The authors should describe how they avoided releasing unsafe images.
        \item We recognize that providing effective safeguards is challenging, and many papers do not require this, but we encourage authors to take this into account and make a best faith effort.
    \end{itemize}

\item {\bf Licenses for existing assets}
    \item[] Question: Are the creators or original owners of assets (e.g., code, data, models), used in the paper, properly credited and are the license and terms of use explicitly mentioned and properly respected?
    \item[] Answer: \answerYes{} % Replace by \answerYes{}, \answerNo{}, or \answerNA{}.
    \item[] Justification: Properly credited.
    \item[] Guidelines:
    \begin{itemize}
        \item The answer NA means that the paper does not use existing assets.
        \item The authors should cite the original paper that produced the code package or dataset.
        \item The authors should state which version of the asset is used and, if possible, include a URL.
        \item The name of the license (e.g., CC-BY 4.0) should be included for each asset.
        \item For scraped data from a particular source (e.g., website), the copyright and terms of service of that source should be provided.
        \item If assets are released, the license, copyright information, and terms of use in the package should be provided. For popular datasets, \url{paperswithcode.com/datasets} has curated licenses for some datasets. Their licensing guide can help determine the license of a dataset.
        \item For existing datasets that are re-packaged, both the original license and the license of the derived asset (if it has changed) should be provided.
        \item If this information is not available online, the authors are encouraged to reach out to the asset's creators.
    \end{itemize}

\item {\bf New assets}
    \item[] Question: Are new assets introduced in the paper well documented and is the documentation provided alongside the assets?
    \item[] Answer: \answerYes{} % Replace by \answerYes{}, \answerNo{}, or \answerNA{}.
    \item[] Justification: We will release the code.
    \item[] Guidelines:
    \begin{itemize}
        \item The answer NA means that the paper does not release new assets.
        \item Researchers should communicate the details of the dataset/code/model as part of their submissions via structured templates. This includes details about training, license, limitations, etc. 
        \item The paper should discuss whether and how consent was obtained from people whose asset is used.
        \item At submission time, remember to anonymize your assets (if applicable). You can either create an anonymized URL or include an anonymized zip file.
    \end{itemize}

\item {\bf Crowdsourcing and research with human subjects}
    \item[] Question: For crowdsourcing experiments and research with human subjects, does the paper include the full text of instructions given to participants and screenshots, if applicable, as well as details about compensation (if any)? 
    \item[] Answer: \answerNA{} % Replace by \answerYes{}, \answerNo{}, or \answerNA{}.
    \item[] Justification: The paper does not involve crowdsourcing nor research with human subjects.
    \item[] Guidelines:
    \begin{itemize}
        \item The answer NA means that the paper does not involve crowdsourcing nor research with human subjects.
        \item Including this information in the supplemental material is fine, but if the main contribution of the paper involves human subjects, then as much detail as possible should be included in the main paper. 
        \item According to the NeurIPS Code of Ethics, workers involved in data collection, curation, or other labor should be paid at least the minimum wage in the country of the data collector. 
    \end{itemize}

\item {\bf Institutional review board (IRB) approvals or equivalent for research with human subjects}
    \item[] Question: Does the paper describe potential risks incurred by study participants, whether such risks were disclosed to the subjects, and whether Institutional Review Board (IRB) approvals (or an equivalent approval/review based on the requirements of your country or institution) were obtained?
    \item[] Answer: \answerNA{} % Replace by \answerYes{}, \answerNo{}, or \answerNA{}.
    \item[] Justification: The paper does not involve crowdsourcing nor research with human subjects.
    \item[] Guidelines:
    \begin{itemize}
        \item The answer NA means that the paper does not involve crowdsourcing nor research with human subjects.
        \item Depending on the country in which research is conducted, IRB approval (or equivalent) may be required for any human subjects research. If you obtained IRB approval, you should clearly state this in the paper. 
        \item We recognize that the procedures for this may vary significantly between institutions and locations, and we expect authors to adhere to the NeurIPS Code of Ethics and the guidelines for their institution. 
        \item For initial submissions, do not include any information that would break anonymity (if applicable), such as the institution conducting the review.
    \end{itemize}

\item {\bf Declaration of LLM usage}
    \item[] Question: Does the paper describe the usage of LLMs if it is an important, original, or non-standard component of the core methods in this research? Note that if the LLM is used only for writing, editing, or formatting purposes and does not impact the core methodology, scientific rigorousness, or originality of the research, declaration is not required.
    %this research? 
    \item[] Answer: \answerNA{} % Replace by \answerYes{}, \answerNo{}, or \answerNA{}.
    \item[] Justification: LLM is used only for writing.
    \item[] Guidelines:
    \begin{itemize}
        \item The answer NA means that the core method development in this research does not involve LLMs as any important, original, or non-standard components.
        \item Please refer to our LLM policy (\url{https://neurips.cc/Conferences/2025/LLM}) for what should or should not be described.
    \end{itemize}

\end{enumerate}

\end{document}